\newtheorem{theorem}{Theorem}
\newtheorem{definition}{Definition}
\newtheorem{lemma}{Lemma}
\newtheorem{assum}{Assumption}
\newtheorem{remark}{Remark}
\newcommand{\ie}{\emph{i.e.}}
\newcommand{\ff}{follows from }
\newcommand{\ffe}{follows from Eq.~}
\newcommand{\ffl}{follows from Lemma~}
\title{Secure Bilevel Asynchronous Vertical Federated Learning \\with Backward Updating}
\author{
 Qingsong Zhang\textsuperscript{\rm 1,2}, Bin Gu \textsuperscript{\rm 3,4},
  Cheng Deng\textsuperscript{\rm 1,$\ast$}, \setcounter{footnote}{0}
  and Heng Huang\textsuperscript{\rm 3,5}\footnote{Corresponding Authors}
}
\begin{document}

\maketitle
\begin{abstract}
Vertical federated learning (VFL) attracts increasing attention due to the emerging demands of multi-party collaborative modeling and concerns of privacy leakage. In the real VFL applications, usually only one or partial parties hold labels, which makes it challenging for all parties to collaboratively learn the model without privacy leakage. Meanwhile, most existing VFL algorithms are trapped in the synchronous computations, which leads to
inefficiency in their real-world applications. To address these challenging problems, we propose a novel {\bf VF}L framework integrated with new {\bf b}ackward updating mechanism and {\bf b}ilevel asynchronous parallel architecture (VF{${\textbf{B}}^2$}), under which  three new algorithms, including  VF{${\textbf{B}}^2$}-SGD, -SVRG, and -SAGA, are proposed. We derive the theoretical results of the convergence rates of these three algorithms under both strongly convex and nonconvex conditions. We also prove the security of VF{${\textbf{B}}^2$} under semi-honest threat models. Extensive experiments on benchmark datasets demonstrate that our algorithms are efficient, scalable and lossless.
\end{abstract}
\section{Introduction}
Federated learning \cite{mcmahan2016communication,smith2017federated,kairouz2019advances} has emerged as a paradigm for collaborative modeling with privacy-preserving. A line of recent works \cite{mcmahan2016communication,smith2017federated} focus on the horizontal federated learning, where each party has a subset of samples with complete features. There are also some works \cite{gascon2016secure,yang2019federated,dang2020large} studying the vertical federated learning (VFL), where each party holds a disjoint subset of features for all samples. In this paper, we focus on VFL that has attracted much attention from the academic and industry due to its wide applications to emerging multi-party collaborative modeling with privacy-preserving.

Currently, there are two mainstream methods for VFL, including homomorphic encryption (HE) based methods and exchanging the raw computational results (ERCR) based methods. The HE based methods \cite{hardy2017private,cheng2019secureboost} leverage HE techniques to encrypt the raw data and then use the encrypted data (ciphertext) for training model with privacy-preserving. However, there are two major drawbacks of HE based methods. First, the complexity of homomorphic mathematical operation on ciphertext field is very high,
thus HE is extremely time consuming for modeling \cite{liu2015encrypted,liu2019federated}. Second, approximation is required for HE to support operations of non-linear functions, such as Sigmoid and Logarithmic functions, which inevitably causes loss of the accuracy for various machine learning models using non-linear functions \cite{kim2018secure,yang2019quasi}. Thus, the inefficiency and inaccuracy of HE based methods dramatically limit their wide applications to realistic VFL tasks.

ERCR based methods \cite{zhang2018feature,hu2019fdml,gu2020Privacy} leverage labels and the raw intermediate computational results transmitted from the other parties to compute stochastic gradients, and thus use distributed stochastic gradient descent (SGD) methods to train VFL models efficiently. Although ERCR based methods circumvent aforementioned drawbacks of HE based methods, existing ERCR based methods are designed with only considering that all parties have labels, which is not usually the case in real-world VFL tasks. In realistic VFL applications, usually only one or partial parties (denoted as active parties) have the labels, and the other parties (denoted as passive parties) can only provide extra feature data but do not have labels. When these ERCR based methods are applied to the real situation with both active and passive parties, \textbf{the algorithms even cannot guarantee the convergence} because only active parties can update the gradient of loss function based on labels but the passive parties cannot, \emph{i.e.}  partial model parameters are not optimized during the training process. Thus, it comes to the crux of designing the proper algorithm for solving real-world VFL tasks with only one or partial parties holding labels.

Moreover, algorithms using synchronous computation \cite{gong2016private,zhang2018feature} are inefficient when applied to real-world VFL tasks, especially, when computational resources in the VFL system are unbalanced. Therefore, it is desired to design the efficient asynchronous algorithms for real-world VFL tasks.
Although there have been several works studying asynchronous VFL algorithms
\cite{hu2019fdml,gu2020Privacy}, it is still an open problem to design asynchronous algorithms for solving real-world VFL tasks with only one or partial parties holding labels.

In this paper, we address these challenging problems by proposing a novel framework (VF{${\textbf{B}}^2$}) integrated with the novel backward updating mechanism (BUM) and bilevel asynchronous parallel architecture (BAPA). Specifically, the BUM
enables all parties, rather than only active parties, to collaboratively update the model securely and also makes the final model lossless; the BAPA is designed for efficiently asynchronous backward updating.
Considering the advantages of SGD-type algorithms in optimizing machine learning models, we thus propose three new SGD-type algorithms, \ie, VF{${\textbf{B}}^2$}-SGD, -SVRG and -SAGA, under that framework.
\begin{figure*}[!t]
	\centering
\begin{subfigure}{0.38\linewidth}
		\includegraphics[width=\linewidth]{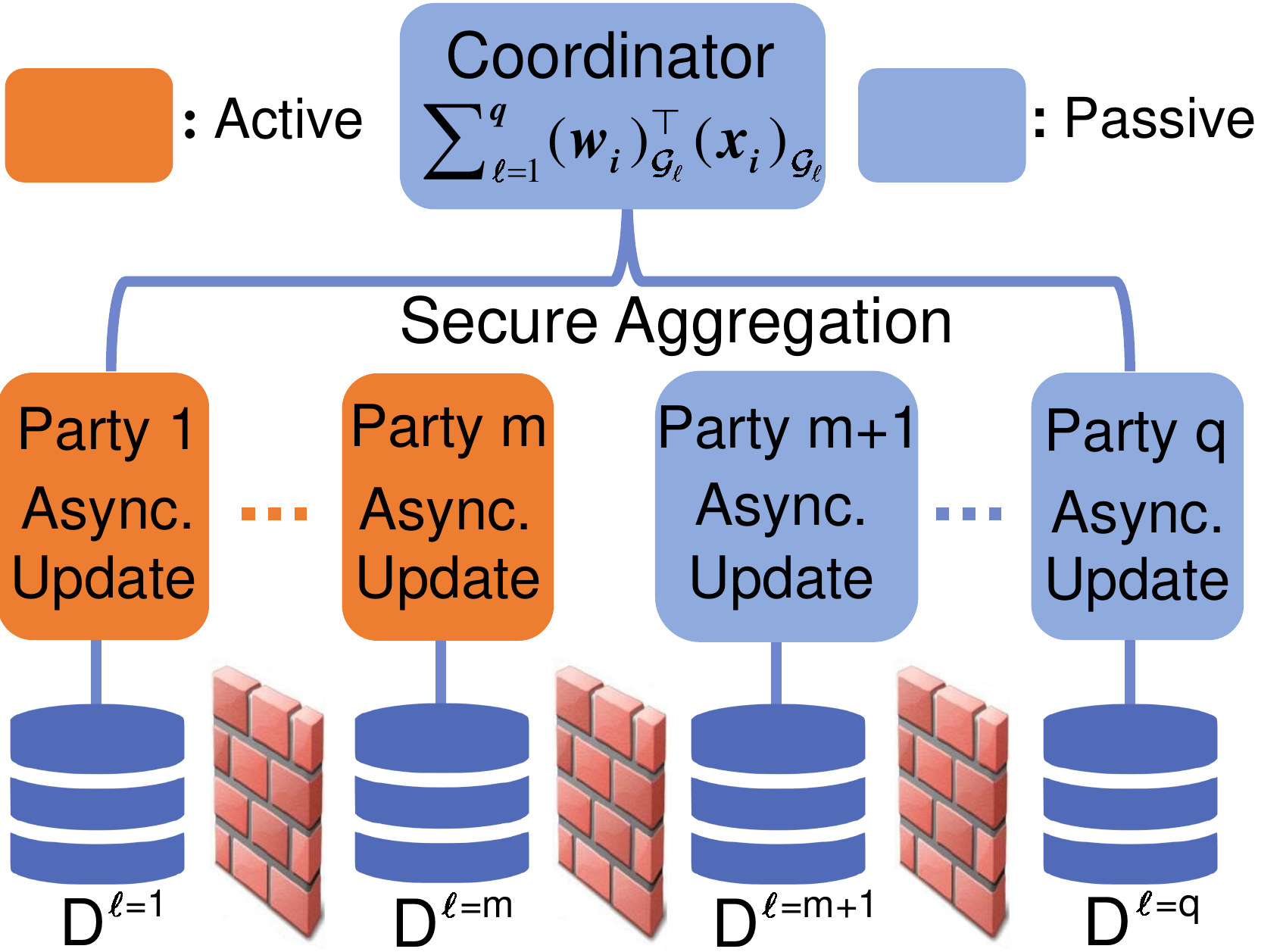}
		\caption{}
\label{struca}
	\end{subfigure}%
	\qquad
	\begin{subfigure}{0.4\linewidth}
		\includegraphics[width=\linewidth]{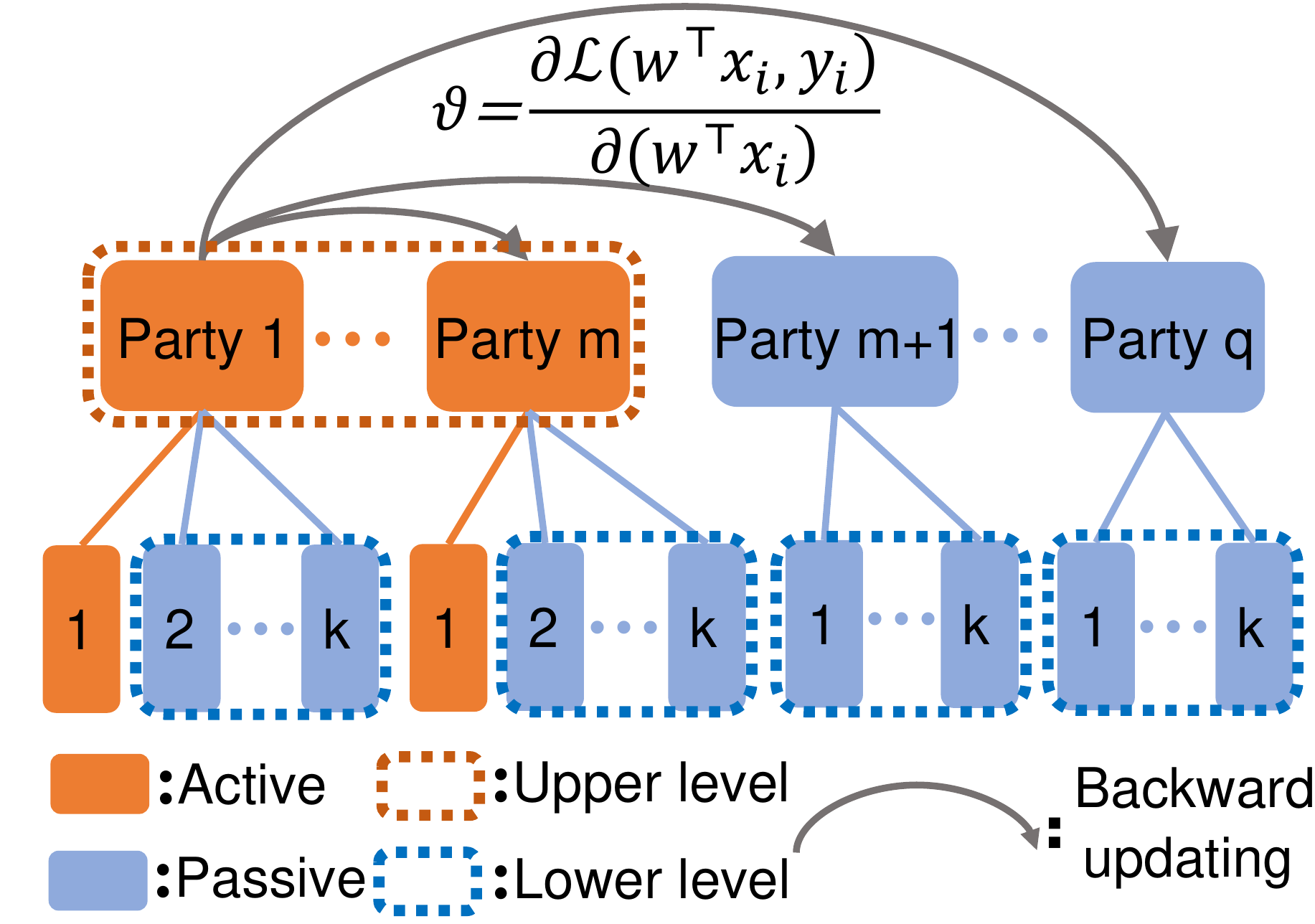}
		\caption{}
\label{strucb}
	\end{subfigure}%
	\caption{(a): System structure of VF{${\textbf{B}}^2$} framework. (b): Illustration of the BUM and BAPA, where $k$ is defined in Section \ref{secexp}.}
\label{struc}
\end{figure*}
We summarize the contributions of this paper as follows.
\begin{itemize}
\item
We are the first to propose the novel backward updating mechanism for ERCR based VFL algorithms, which enables all parties, rather than only parties holding labels, to collaboratively learn the model with privacy-preserving and without hampering the accuracy of final model.
\item
We design a bilevel asynchronous parallel architecture that enables all parties asynchronously update the model through backward updating, which is efficient and scalable.
\item
We propose three new algorithms for VFL, including VF{${\textbf{B}}^2$}-SGD, -SVRG, and -SAGA under VF{${\textbf{B}}^2$}. Moreover, we theoretically prove their convergence rates for both strongly convex and nonconvex problems.
\end{itemize}
\noindent\textbf{Notations.}
$\widehat{w}$ denotes the inconsistent read of $w$.
 $\bar{w}$ denotes $w$ to compute local stochastic gradient of loss function for collaborators, which maybe stale due to communication delay.
 $\psi(t)$ is the corresponding party performing  the $t$-th global iteration.
Given a finite set $S$, $|S|$ denotes its cardinality.

\section{Problem Formulation}
Given a training set $\{x_i,y_i\}_{i=1}^n$, where $y_i \in \{-1, +1\}$ for binary classification task or $y_i \in \mathbb{R}$ for regression problem and $x_i \in \mathbb{R}^d $, we consider the model in a linear form of $w^{\top} x$, where $w \in \mathbb{R}^d $ corresponds to the model parameters. For VFL, $x_i$ is vertically distributed among $q\geq2$ parties, \ie, $x_i=[(x_i)_{\mathcal{G}_1}; \cdots; (x_i)_{\mathcal{G}_q}]$, where $(x_i)_{\mathcal{G}_\ell} \in \mathbb{R}^{d_\ell}$ is stored on the $\ell$-th party and $\sum_{\ell=1}^{q}d_{\ell} = d$. Similarly, there is $w=[w_{\mathcal{G}_1}; \cdots; w_{\mathcal{G}_q}]$. Particularly, we focus on the following regularized empirical risk minimization problem.
\begin{equation}\label{P}
\min_{w\in \mathbb{R}^{d}} f(w) := \frac{1}{n} \sum_{i=1}^{n} \underbrace{\mathcal{L}\left(w^{\top} x_{i}, y_{i}\right)+\lambda \sum_{\ell=1}^{q} g(w_{\mathcal{G}_\ell})}_{f_{i}(w)}, \tag{P}
\end{equation}
where $w^{\top}x_i =\sum_{\ell=1}^{q}{w}_{\mathcal{G}_{\ell }}^{\top}\left(x_{i}\right)_{\mathcal{G}_{\ell}}$, $\mathcal{L}$ denotes the loss function, $\sum_{\ell=1}^{q} g(w_{\mathcal{G}_\ell})$ is the regularization term, and $f_i: \mathbb{R}^d \to \mathbb{R}$ is smooth and possibly nonconvex. Examples of problem~\ref{P} include models  for binary classification tasks \cite{conroy2012fast,wang2017stochastic} and  models for regression tasks \cite{shen2013novel,wang2019spiderboost}.

In this paper, we introduce two types of parties: {\bf{active party}} and {\bf{passive party}}, where the former denotes data provider holding labels  while the latter does not. Particularly, in our problem setting, there are $m$ ($1\leq m \leq q$) active parties. Each active party can play the role of dominator in model updating by actively launching updates.
All parties, including both active and passive parties, passively launching updates play the role of collaborator.
To guarantee the model security, only active parties know the form of the loss function.
Moreover, we assume that the labels can be shared by all parties finally. Note that this does not obey our intention that only active parties hold the labels before training.
The problem studied in this paper is stated as follows:\\
\noindent{\bf{Given}}: Vertically partitioned data $\{x_{\mathcal{G}_\ell}\}_{\ell =1}^{q}$ stored in $q$ parties and the labels only held by active parties. \\
\noindent{\bf{Learn}}: A machine learning model {\bf M} collaboratively learned by both active and passive parties without leaking privacy.\\
\noindent{\bf{Lossless Constraint}}: The accuracy of {\bf M} must be comparable to that of model {\bf M$'$} learned under non-federated learning.

\section{VF${{\text{B}}}^2$ Framework}
In this section, we propose the novel VF${{\text{B}}}^2$ framework. VF${{\text{B}}}^2$ is composed of three components and its systemic structure is illustrated in Fig.~\ref{struca}. The details of these components are presented in the following.

The key of designing the proper algorithm for solving real-world VFL tasks with both active and passive parties  is to make the passive parties utilize the label information for model training. However, it is challenging to achieve this because direct using the labels hold by active parties leads to privacy leakage of the labels without training. To address this challenging problem, we design the BUM with painstaking.\\
{\noindent\bf{Backward Updating Mechanism:}}
The key idea of BUM is to make passive parties indirectly use labels to compute stochastic gradient without directly accessing the raw label data. Specifically, the BUM embeds label $y_i$ into an intermediate value $\vartheta: = \frac{\partial \mathcal{L}\left({w}^{\top} x_{i}, y_{i}\right)}{\partial\left({w}^{\top} x_{i}\right)}$. Then $\vartheta$ and $i$ are distributed backward to the other parties. Consequently, the passive parties can also compute the stochastic gradient and update the model by using the received $\vartheta$ and $i$ (please refer to Algorithms~\ref{AFSGD-A} and \ref{AFSGD-P} for details). Fig.~\ref{strucb} depicts the case where $\vartheta$ is distributed from party $1$ to the rest parties.
In this case, all parties, rather than only active parties, can collaboratively learn the model without privacy leakage.
 \begin{algorithm}[!t]
\caption{Safe algorithm of obtaining $w^Tx_i$.}\label{safer_tree}
\begin{algorithmic}[1]
\REQUIRE {$\{w_{\mathcal{G}_{\ell'}}\}_{{\ell '}=1}^{q}$ and $ \{{(x_i)}_{\mathcal{G}_{\ell '}}\}_{{\ell '}=1}^{q}$ allocating at each party, index $i$.} \\
{ \bf{Do this in parallel}}
\FOR {$\ell '=1, \cdots, q$}
\STATE Generate a  ramdon number $\delta_{\ell '}$ and calculate $w_{\mathcal{G}_{\ell '}}^{\top} {(x_i)}_{\mathcal{G}_{\ell '}}+ \delta_{\ell '}$,
\ENDFOR
\STATE Obtain $\xi_1 = \sum_{\ell '=1}^{q} (w_{\mathcal{G}_{\ell '}}^{\top} {(x_i)}_{\mathcal{G}_{\ell '}} +\delta_{\ell '})$ through tree structure $T_1$.
\STATE Obtain $\xi_2 = \sum_{\ell '=1}^{q} \delta_{\ell '} $ through totally different tree structure $T_2\neq T_1$.
\ENSURE  {${w}^{\top}x_i =\xi_1-\xi_2$}
\end{algorithmic}
\end{algorithm}

For VFL algorithms with BUM, dominated updates in different active parties are performed in distributed-memory parallel, while collaborative updates within a party are performed in shared-memory parallel. The difference of parallelism fashion leads to the challenge of developing a new parallel architecture instead of just directly adopting the existing asynchronous parallel architecture for VFL. To tackle this challenge, we elaborately design a novel BAPA.\\
{\bf{Bilevel Asynchronous Parallel Architecture:}}
The BAPA includes two levels of parallel architectures, where the upper level denotes the inner-party parallel and the lower one is the intra-party parallel. More specifically, the inner-party parallel denotes distributed-memory parallel between active parties, which enables all active parties to asynchronously launch dominated updates; while the intra-party one denotes the shared-memory  parallel of collaborative updates within each party, which enables multiple threads within a specific party
to asynchronously perform the collaborative updates. Fig.~\ref{strucb} illustrates the BAPA with $m$ active parties.

To utilize feature data provided by other parties, a party need obtain $w^Tx_i=\sum_{\ell=1}^{q} w_{\mathcal{G}_{\ell}}^{\top} {(x_i)}_{\mathcal{G}_{\ell}}$.  Many recent works achieved this by aggregating the local intermediate computational results securely \cite{hu2019fdml,gu2020federated}. In this paper, we use the efficient tree-structured communication scheme \cite{zhang2018feature} for secure aggregation, whose security  was proved in \cite{gu2020federated}. \\
{\noindent\bf{Secure Aggregation Strategy:}}  The details are summarized in Algorithm~~\ref{safer_tree}. Specifically, at step~2, $w_{\mathcal{G}_\ell}^{\top} {(x_i)}_{\mathcal{G}_\ell}$ is computed locally on the $\ell$-th party to prevent the direct leakage of $w_{\mathcal{G}_\ell}$ and ${(x_i)}_{\mathcal{G}_\ell}$. Especially, a random number $\delta_{\ell}$ is added to $w_{\mathcal{G}_{\ell }}^{\top} {(x_i)}_{\mathcal{G}_{\ell }}$ to mask the value of $w_{\mathcal{G}_{\ell }}^{\top} {(x_i)}_{\mathcal{G}_{\ell }}$, which can enhance the security during aggregation process. At steps~4 and 5, $\xi_1$ and $\xi_2$ are aggregated through tree structures $T_1$ and $T_2$, respectively. Note that $T_2$ is totally different from $T_1$ that can prevent the random value being removed under threat model 1 (defined in section \ref{securitysec}).
Finally, value of $w^{\top} x_i=\sum_{\ell=1}^{q} (w_{\mathcal{G}_{\ell }}^{\top} {(x_i)}_{\mathcal{G}_{\ell }}$ is recovered by removing term $\sum_{\ell =1}^{q} \delta_{\ell }$ from $\sum_{\ell =1}^{q} (w_{\mathcal{G}_{\ell }}^{\top} {(x_i)}_{\mathcal{G}_{\ell }} +\delta_{\ell})$ at the output step. Using such aggregation strategy, ${(x_i)}_{\mathcal{G}_{\ell }}$ and $w_{\mathcal{G}_{\ell }} $ are prevented from leaking during the aggregation.

\section{Secure Bilevel Asynchronous VFL Algorithms with Backward Updating}
\begin{algorithm}[!t]
\caption{VF{${\textbf{B}}^2$}-SGD for active party $\ell$ to actively launch dominated updates.}\label{AFSGD-A}
\begin{algorithmic}[1]
\REQUIRE {Local data $\{{(x_i)}_{\mathcal{G}_{\ell}},y_i\}_{i=1}^{n}$ stored on the $\ell$-th party, learning rate $\gamma$}.
\STATE Initialize the necessary parameters.\\
{ \bf{Keep doing in parallel (distributed-memory parallel for multiple active parties)}}
  \STATE \quad  Pick up an index $i$ randomly from $\{1,...,n\}$.
\STATE \quad Compute $\widehat{w}^{\top} x_{i}=\sum_{\ell^{\prime}=1}^{q}\widehat{w}_{\mathcal{G}_{\ell '}}^{\top}\left(x_{i}\right)_{\mathcal{G}_{\ell^{\prime}}}$  based on Al\\
\quad gorithm~\ref{safer_tree}.
  \STATE \quad Compute $\vartheta = \frac{\partial \mathcal{L}\left(\widehat{w}^{\top} x_{i}, y_{i}\right)}{\partial\left(\widehat{w}^{\top} x_{i}\right)}$.
   \STATE \quad Send $\vartheta$ and index $i$ to collaborators.
  \STATE  \quad Compute $\widetilde{v}^{\ell}=\nabla_{\mathcal{G}_{\ell}} f_{i}(\widehat{w})$.
  \STATE \quad Update $w_{\mathcal{G}_{\ell}} \leftarrow w_{\mathcal{G}_{\ell}}-\gamma \widetilde{v}^{\ell}$. \\
  { \bf{End parallel }}
\end{algorithmic}
\end{algorithm}
SGD \cite{bottou2010large} is a popular method for learning machine learning (ML) models. However, it has a poor convergence rate due to the intrinsic variance of stochastic gradient. Thus, many popular variance reduction techniques have been proposed, including the SVRG, SAGA, SPIDER \cite{johnson2013accelerating,defazio2014saga,wang2019spiderboost} and their applications to other problems \cite{huang2019faster,huang2020accelerated,zhang2020faster,dang2020large,yang2020learning,yang2020adversarial,li2020towards,wei2019adversarial}. In this section we raise three SGD-type algorithms, \emph{i.e.} the SGD, SVRG and SAGA,
which are the most popular ones among SGD-type  methods for the appealing performance in practice. We summarize the detailed steps of VF$\bf B^2$-SGD in Algorithms \ref{AFSGD-A} and \ref{AFSGD-P}. For VF$\bf B^2$-SVRG and -SAGA, one just needs to replace the update rule with corresponding one.

As shown in Algorithm~\ref{AFSGD-A}, at each dominated update, the dominator (an active party) calculates $\vartheta$ and then distributes $\vartheta$ together with $i$ to the collaborators (the rest $q-1$ parties). As shown in algorithm~\ref{AFSGD-P}, for party $\ell$, once it has received the $\vartheta$ and $i$, it will launch a new collaborative update asynchronously.  As for the dominator, it computes the local stochastic gradient as $\nabla_{\mathcal{G}_{\ell}} f_{i}(\widehat{w}) = \nabla_{\mathcal{G}_\ell} \mathcal{L}(\widehat{w})+ \lambda \nabla g(\widehat{w}_{\mathcal{G}_{\ell }})$. While, for the collaborator, it uses the received $\vartheta$ to compute $\nabla_{\mathcal{G}_\ell} \mathcal{L}$ and local $\widehat{w}$ to compute $\nabla_{\mathcal{G}_{\ell }} g$ as shown at step 3 in Algorithm~\ref{AFSGD-P}. Note that active parties also need perform Algorithm~\ref{AFSGD-P} to collaborate with other dominators to ensure that the model parameters of all parties are updated.
\begin{algorithm}[!t]
\caption{VF{${\textbf{B}}^2$}-SGD for the $\ell$-th party to passively launch collaborative updates.}\label{AFSGD-P}
\begin{algorithmic}[1]
\REQUIRE {Local data $\{{(x_i)}_{\mathcal{G}_{\ell}},y_i\}_{i=1}^{n}$ stored on the $\ell$-th party, learning rate $\gamma$}.
\STATE Initialize the necessary parameters (for passive parties).\\
 { \bf{Keep doing in parallel (shared-memory parallel for multiple threads)}}
\STATE \quad Receive $\vartheta$ and the index $i$ from the dominator.
  \STATE \quad Compute  $\widetilde{v}^{\ell } = \nabla_{\mathcal{G}_{\ell}} \mathcal{L}(\bar{w})+\lambda \nabla_{\mathcal{G}_{\ell }} g(\widehat{w}) = \vartheta \cdot (x_i)_{\mathcal{G}_{\ell }} +$ \\
  \quad $\lambda \nabla g(\widehat{w}_{\mathcal{G}_{\ell }})$.
  \STATE  \quad Update  $w_{\mathcal{G}_{\ell}} \leftarrow w_{\mathcal{G}_{\ell }}-\gamma \widetilde{v}^{\ell}$.
\STATE{ \bf{End parallel}}
\end{algorithmic}
\end{algorithm}
\section{Theoretical Analysis}
In this section, we provide the convergence analyses. Please see the arXiv version for more details.  We first present  preliminaries for strongly convex and nonconvex problems.
\begin{assum}\label{assum1}
For $f_i(w)$ in problem \ref{P}, we assume the following conditions hold:
\begin{enumerate}
  \item {\bf{Lipschitz Gradient:}} Each  function $f_i$, $i=1,\ldots,n$,  there exists  $L>0$ such that for  $ \forall \ w,w'\in \mathbb{R}^d$, there is
\begin{equation}
\|\nabla f_i (w) - \nabla f_i (w')\| \le L\|w-w'\|.
\end{equation}
  \item {\bf{Block-Coordinate Lipschitz Gradient:}} For $i=1,\ldots,n$, there exists an $L_\ell>0$ for the $\ell$-th block $\mathcal{G}_\ell$, where $\ell=1,\cdots,q$ such that
\begin{equation}
\|\nabla_{\mathcal{G}_\ell} f_i (w+U_\ell\Delta_\ell) - \nabla_{\mathcal{G}_\ell} f_i (w)\| \le L_\ell\|\Delta_\ell\|,
\end{equation}
where $\Delta_\ell \in \mathbb{R}^{d_\ell}$, $U_\ell \in \mathbb{R}^{d\times d_\ell}$ and $[U_1, \cdots, U_q] = I_d$.
  \item {\bf{Bounded Block-Coordinate Gradient:}} There exists a constant $G$ such that for $f_i,\ i=1,\cdots,n$ and block $\mathcal{G}_\ell$, $\ell =1,\cdots,q$, it holds that $\|\nabla_{\mathcal{G}_\ell} f_i(w)\|^2\leq G $.
\end{enumerate}
\end{assum}
\begin{assum}\label{assum2}
The regularization term $g$ is $L_g$-smooth, which  means that there exists an $L_g>0$ for $\ell = 1,\dots,q$ such that  $\forall  w_{\mathcal{G}_\ell}, w_{\mathcal{G}_\ell}' \in \mathbb{R}^{d_\ell}$ there is
\begin{equation}
\|\nabla g(w_{\mathcal{G}_\ell}) - \nabla g(w_{\mathcal{G}_\ell}')\| \le L_g\|w_{\mathcal{G}_\ell}-w_{\mathcal{G}_\ell}'\|.
\end{equation}
\end{assum}
Assumption \ref{assum2} imposes the smoothness on $g$, which is necessary for the convergence analyses. Because, as for a specific collaborator, it uses the received $\widehat{w}$ (denoted as $\bar{w}$) to compute $\nabla_{\mathcal{G}_{\ell }} \mathcal{L}$ and local $\widehat{w}$ to compute $\nabla_{\mathcal{G}_{\ell }} g=\nabla g(w_{\mathcal{G}_\ell})$, which makes it necessary  to track the behavior of $g$ individually.  Similar to previous research works \cite{lian2015asynchronous,huo2017asynchronous,leblond2017asaga}, we introduce the bounded delay as follows.
\begin{assum}\label{assum4}{\bf{Bounded Delay:}}
Time delays of inconsistent reading and communication between dominator and its collaborators are upper bounded by $\tau_1$ and $\tau_2$, respectively.
\end{assum}
Given $\widehat{w}$ as the inconsistent read of $w$, which is used to compute the stochastic gradient in dominated updates, following the analysis in \cite{gu2020Privacy}, we have
\begin{equation}\label{Dt1}
\widehat{w}_t-w_t = \gamma \sum_{u\in D(t)}U_{\psi(u)}\widetilde{v}_u^{\psi(u)},
\end{equation}
where  $D(t)=\{t-1,\cdots,t-\tau_0\}$ is a subset of non-overlapped previous iterations with $\tau_0\leq \tau_1$. Given $\bar{w}$ as the parameter used to compute the $\nabla_{\mathcal{G}_{\ell }} \mathcal{L}$ in collaborative updates, which is the steal state of $\widehat{w}$ due to the communication delay between the specific dominator and its corresponding collaborators. Then, following the analyses in \cite{huo2017asynchronous}, there is
\begin{equation}\label{Dt2}
   \bar{w}_t = \widehat{w}_{t-\tau_0} = \widehat{w}_t + \gamma \sum_{t'\in D^\prime(t)}U_{\psi(t')}\widetilde{v}_{t'}^{\psi(t')},
\end{equation}
where $D'(t)=\{t-1,\cdots,t-\tau_0\}$ is a subset of previous iterations performed during the communication and $\tau_0\leq \tau_2$.
\subsection{Convergence Analysis for Strongly Convex Problem}
\begin{assum}\label{assumc1}
Each  function $f_i$, $i=1,\ldots,n$, is $\mu$-strongly convex, i.e., $\forall \ w,\  w'\in \mathbb{R}^d$ there exists a $\mu>0$ such that
\begin{equation}
f_i(w)\geq f_i(w') +  \langle \nabla f_i(w'), w- w' \rangle + \frac{\mu}{2}\|w-w'\|^2.
\end{equation}
\end{assum}
For strongly convex problem, we introduce notation $K(t)$ that denotes a minimum set of successive iterations fully visiting all coordinates from global iteration number $t$. Note that this is necessary for the asynchronous convergence analyses of the global model. Moreover, we assume that the size of $K(t)$ is upper bounded by $\eta_1$, \ie, $|K(t)|\leq \eta_1$. Based on $K(t)$, we introduce the epoch number $v(t)$ as follow.
\begin{definition}\label{definc2}
Let $P(t)$ be a partition of $\{0,1,\cdots, t-\sigma'\}$, where $\sigma'\geq0$. For any $\kappa\subseteq P(t)$ we have that there exists $t'\leq t$ such that $K(t')=\kappa$, and $\kappa_1 \subseteq P(t)$ such that $K(0)=\kappa_1$. The epoch number for the $t$-th global iteration, i.e., $v(t)$ is defined as the maximum cardinality of $P(t)$.
\end{definition}
Given the definition of epoch number $v(t)$, we have the following theoretical results for $\mu$-strongly convex problem.
\begin{theorem}\label{thm-sgdconvex}
Under Assumptions~\ref{assum1}-\ref{assum4} and \ref{assumc1}, to achieve the accuracy $\epsilon$ of problem~\ref{P} for VF{${\textbf{B}}^2$}-SGD, i.e., $\mathbb{E}(f(w_t)-f(w^*))\leq \epsilon$, let $\gamma\leq \frac{\epsilon\mu^{1/3}}{(G{96L_*^2})^{1/3}}$, if $\tau\leq {\text {min}}\{\epsilon^{-4/3}, \frac{(GL_*^2)^{2/3}}{\epsilon^2\mu^{2/3}}\}$
, the epoch number $v(t)$ should satisfy
$
v(t) {\geq} \frac{44(GL^2_*)^{1/3}}{ \mu^{4/3}\epsilon} log (\frac{2(f(w_0)-f(w^*))}{\epsilon})
$
, where $L_{*}=\text{max}\{L, \{L_{\ell}\}_{\ell=1}^{q}, L_g\}$, $\tau={\text{max}}\{\tau_1^2,\tau_2^2,\eta_1^2\}$, $w^0$ and $w^*$ denote the initial point and optimal point, respectively.
\end{theorem}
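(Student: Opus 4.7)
The plan is to first derive a one-step descent inequality on $\mathbb{E}[f(w_{t+1})]$, then absorb the asynchrony errors (from both the inconsistent read $\widehat w_t-w_t$ and the stale read $\bar w_t-\widehat w_t$) into higher-order noise terms, and finally aggregate the per-iteration decrease over one full epoch so that $\mu$-strong convexity on the whole vector $w$ can be invoked. Routing the argument through the epoch number $v(t)$ of Definition~\ref{definc2} is essential because each global iteration only touches a single block $\mathcal{G}_{\psi(t)}$, so a per-iteration bound only controls $\|\nabla_{\mathcal{G}_{\psi(t)}}f(w_t)\|^2$ rather than $\|\nabla f(w_t)\|^2$.

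First I would expand $f$ at $w_{t+1}=w_t-\gamma U_{\psi(t)}\widetilde v_t^{\psi(t)}$ using the block-coordinate Lipschitz property of Assumption~\ref{assum1}(2), which gives a bound of the form
\begin{align*}
f(w_{t+1})\le f(w_t)-\gamma\langle\nabla_{\mathcal{G}_{\psi(t)}}f(w_t),\widetilde v_t^{\psi(t)}\rangle+\tfrac{L_{\psi(t)}\gamma^2}{2}\|\widetilde v_t^{\psi(t)}\|^2.
\end{align*}
Conditioning on the history, $\widetilde v_t^{\psi(t)}$ is unbiased for $\nabla_{\mathcal{G}_{\psi(t)}}f(\widehat w_t)$ on dominator steps and, on collaborator steps, for the mismatched vector that uses $\bar w_t$ inside $\nabla_{\mathcal{G}_{\psi(t)}}\mathcal{L}$ and $\widehat w_t$ inside $\nabla g$. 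Splitting off $\nabla_{\mathcal{G}_{\psi(t)}}f(w_t)$ from the inner product and applying Young's inequality turns the cross term into a sum of $\|\nabla_{\mathcal{G}_{\psi(t)}}f(w_t)\|^2$ plus an error controlled by $L_*^2\|\widehat w_t-w_t\|^2$ (via Assumption~\ref{assum1}(1)) and $L_*^2\|\bar w_t-\widehat w_t\|^2$ (via Assumption~\ref{assum2}, which is precisely what is needed to track the collaborator's loss/regularizer mismatch in $g$).

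The main obstacle is bounding these two delay terms tightly without double-counting the gradient-norm noise. Plugging \eqref{Dt1} and \eqref{Dt2} into Cauchy--Schwarz together with the bounded-block-gradient assumption~\ref{assum1}(3) yields $\mathbb{E}\|\widehat w_t-w_t\|^2\le\gamma^2\tau_1^2 G$ and $\mathbb{E}\|\bar w_t-\widehat w_t\|^2\le\gamma^2\tau_2^2 G$; combining Assumptions~\ref{assum1}(3) and \ref{assum2} also gives $\mathbb{E}\|\widetilde v_t^{\psi(t)}\|^2\le c_0 G$. Substituting these into the expansion above produces a per-iteration inequality of the shape
\begin{align*}
\mathbb{E}[f(w_{t+1})]\le\mathbb{E}[f(w_t)]-\tfrac{\gamma}{2}\mathbb{E}\|\nabla_{\mathcal{G}_{\psi(t)}}f(w_t)\|^2+c_1\gamma^3 GL_*^2\tau,
\end{align*}
with $\tau=\max\{\tau_1^2,\tau_2^2,\eta_1^2\}$ and $c_1$ an absolute constant.

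Finally I would sum this inequality across one epoch $K(t)$ of length at most $\eta_1$. Since $K(t)$ visits every coordinate block at least once, $\sum_{s\in K(t)}\|\nabla_{\mathcal{G}_{\psi(s)}}f(w_s)\|^2\ge\|\nabla f(w_t)\|^2$ up to a further $\gamma^2$ delay correction of the same flavor; then $\mu$-strong convexity (Assumption~\ref{assumc1}) gives $\|\nabla f(w_t)\|^2\ge 2\mu(f(w_t)-f(w^*))$, yielding a per-epoch contraction
\begin{align*}
\mathbb{E}[f(w_{t+|K(t)|})-f(w^*)]\le(1-c_2\gamma\mu)\,\mathbb{E}[f(w_t)-f(w^*)]+c_3\gamma^3GL_*^2\tau\eta_1.
\end{align*}
Unrolling this $v(t)$ times, the noise floor $c_3\gamma^3GL_*^2\tau\eta_1/(c_2\gamma\mu)$ is forced below $\epsilon/2$ by the stated choice of $\gamma$ together with the two upper bounds on $\tau$, while the factor $(1-c_2\gamma\mu)^{v(t)}$ must drive $f(w_0)-f(w^*)$ below $\epsilon/2$; inverting the latter after substituting the chosen $\gamma$ produces exactly the logarithmic lower bound on $v(t)$ asserted in the theorem.
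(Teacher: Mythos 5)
Your proposal matches the paper's proof in all essentials: the same one-step descent expansion followed by a Young's-inequality split of the cross term into $-\tfrac{\gamma}{2}\|\nabla_{\mathcal{G}_{\psi(t)}}f(w_t)\|^2$ plus delay errors, the same reduction of those errors via Eqs.~\ref{Dt1} and \ref{Dt2} and the bounded-block-gradient assumption to an $O(\gamma^3 GL_*^2\tau)$ noise floor, and the same epoch aggregation over $K(t)$ followed by strong convexity, unrolling, and inversion to get the lower bound on $v(t)$. The one step you state too quickly is the uniform bound $\mathbb{E}\|\widetilde v_t^{\psi(t)}\|^2\le c_0G$: on collaborator iterations $\widetilde v_t$ contains the mismatch $\nabla g(\widehat w)-\nabla g(\bar w)$, whose magnitude depends on \emph{past} $\widetilde v_{t'}$'s, so the bound does not follow directly from Assumptions~\ref{assum1}(3) and \ref{assum2} but needs the short induction (a geometric-series recursion valid under $2L_*^2\gamma^2\tau<1$) that the paper carries out in Lemma~\ref{lem-csgd-1}.
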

\begin{theorem}\label{thm-svrgconvex}
Under Assumptions~\ref{assum1}-\ref{assum4} and \ref{assumc1}, to achieve the accuracy $\epsilon$ of problem~\ref{P} for VF{${\textbf{B}}^2$}-SVRG, let $C=(L_*^2 \gamma+L_*)\frac{\gamma^2}{2}$ and $\rho=\frac{\gamma\mu}{2}- \frac{16L_*^2\eta_1C}{\mu}$, we can carefully choose $\gamma$  such that
\begin{eqnarray}
\nonumber
  && 1)  \ 1-2L_*^2\gamma^2\tau>0;
  \ \ 2)\ \rho>0;
  \ \ 3)\ \frac{8L_*^2\tau^{1/2}C}{\rho\mu} \leq 0.05;\\
  && 4)\ L_{*}^{2} \gamma^{2}\tau^{3/2}(28 C + 5 {\gamma})  \frac{2\lambda_{\gamma}G}{\rho} \leq \frac{\epsilon}{8},
\end{eqnarray}
where $\lambda_{\gamma}=\frac{18}{1-2L_{*}^2\gamma^2\tau}$, the inner epoch number $v(t)$ should satisfy $v(t)\geq \frac{{\text {log}} 0.25}{{\text {log}}(1-\rho)}$ and the outer loop number $S$ should satisfy $S\geq \frac{1}{{\text {log}}\frac{4}{3}}{{\text {log}}\frac{2f(w_0)-f(w^*)}{\epsilon}}$.
\end{theorem}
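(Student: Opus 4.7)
I would adapt the classical SVRG analysis \cite{johnson2013accelerating} to the bilevel asynchronous, block-coordinate setting by following three movements: a per-iteration descent with a variance bound, an inner-loop contraction over the epoch defined by $K(t)$, and an outer-loop recursion over the SVRG snapshots. Let $\widetilde{w}$ denote the snapshot at the current outer round. The stochastic direction used at global iteration $t$ by the acting party $\psi(t)$ is, in effect,
\begin{equation*}
\widetilde{v}_t^{\psi(t)} = \nabla_{\mathcal{G}_{\psi(t)}}\bigl(f_{i_t}(\star_t)-f_{i_t}(\widetilde{w})\bigr) + \nabla_{\mathcal{G}_{\psi(t)}} f(\widetilde{w}),
\end{equation*}
where $\star_t=\widehat{w}_t$ at a dominator but a mixture of $\bar{w}_t$ (for the loss part) and $\widehat{w}_t$ (for the regularizer part) at a collaborator. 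First I would write down the single-block descent inequality coming from Assumption~\ref{assum1}, together with the standard SVRG-style variance bound $\mathbb{E}\|\widetilde{v}_t^{\psi(t)}\|^2\lesssim L_*(f(w_t)-f(w^*))+L_*(f(\widetilde{w})-f(w^*))$ plus explicit perturbation terms coming from $\widehat{w}_t-w_t$ and $\bar{w}_t-w_t$.

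Next I would unroll the two delay errors via Eqs.~\eqref{Dt1}--\eqref{Dt2}: each of $\widehat{w}_t-w_t$ and $\bar{w}_t-w_t$ is a $\gamma$-weighted sum over at most $\tau_1$ (resp.\ $\tau_2$) past block directions, so by Cauchy--Schwarz
\begin{equation*}
\mathbb{E}\|\widehat{w}_t-w_t\|^2 + \mathbb{E}\|\bar{w}_t-w_t\|^2 \;\le\; 2\gamma^2\tau\!\!\sum_{u\in D(t)\cup D'(t)}\!\!\mathbb{E}\|\widetilde{v}_u^{\psi(u)}\|^2.
\end{equation*}
Smoothness of $f$ and of $g$ (Assumption~\ref{assum2}, needed precisely because the collaborator evaluates $\nabla g$ at $\widehat{w}_t$ rather than at $\bar{w}_t$) converts these into additional variance contributions of order $L_*^2\gamma^2\tau\cdot\mathbb{E}\|\widetilde{v}\|^2$. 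Condition~(1), $1-2L_*^2\gamma^2\tau>0$, is exactly what is required to absorb these self-referential terms back into the left-hand side, producing the amplified constant $\lambda_\gamma=18/(1-2L_*^2\gamma^2\tau)$ that appears in condition~(4).

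I would then aggregate single-block updates into a genuine full-gradient step by summing over one window $K(t)$: since each coordinate block is touched at least once in any such window of length $|K(t)|\le\eta_1$, combining per-block descents with $\mu$-strong convexity (Assumption~\ref{assumc1}) yields
\begin{equation*}
\mathbb{E}[f(w_{t+|K(t)|})-f(w^*)]\le(1-\rho)\,\mathbb{E}[f(w_t)-f(w^*)]+\mathrm{err}_t,
\end{equation*}
with exactly the stated $\rho=\gamma\mu/2-16L_*^2\eta_1 C/\mu$ (positivity is condition~(2)). Iterating over $v(t)$ such windows contracts the inner suboptimality by $(1-\rho)^{v(t)}$; requiring $(1-\rho)^{v(t)}\le 1/4$ gives the bound $v(t)\ge\log 0.25/\log(1-\rho)$. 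For the outer loop I would then push the residual perturbations through the snapshot: condition~(3) keeps the cross term linking the snapshot gap to the current gap below $1/20$, and condition~(4) keeps the bounded-gradient contribution (from $\|\nabla_{\mathcal{G}_\ell}f_i\|^2\le G$) below $\epsilon/8$, together leaving a per-outer-round contraction factor of $3/4$, so that $S\ge\log(2(f(w_0)-f(w^*))/\epsilon)/\log(4/3)$ suffices.

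The main obstacle will be the coupled bookkeeping between the two delay types: the stale read $\bar{w}_t$ is itself a stale version of an already-inconsistent read $\widehat{w}_{t-\tau_0}$, so the two perturbation sums do not decouple, and controlling them while preserving the block-coordinate structure that splits $\nabla\mathcal{L}$ (evaluated at $\bar{w}$) from $\nabla g$ (evaluated at $\widehat{w}$) is where most of the technical effort lies. This is also what forces the conservative choice $\tau=\max\{\tau_1^2,\tau_2^2,\eta_1^2\}$ and the smoothness Assumption~\ref{assum2} on $g$.
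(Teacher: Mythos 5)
Your proposal follows essentially the same route as the paper's proof: a smoothness-based per-iteration descent, a $G$-and-delay-controlled variance bound absorbed via condition (1) into the $\lambda_\gamma$ constant (the paper's Lemmas~\ref{lem-csvrg-1} and \ref{lem-csvrg-2}), aggregation over a $K(t)$ window with strong convexity to get the $(1-\rho)$ contraction, and then the inner/outer recursion in which conditions (3) and (4) bound the snapshot cross term by $0.05$ and the residual by $\epsilon/8$ respectively. The identification of each condition's role and of the coupled $\widehat{w}/\bar{w}$ delay bookkeeping as the main technical burden matches the paper's argument.
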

\begin{theorem}\label{thm-sagaconvex}
Under Assumptions~\ref{assum1}-\ref{assum4} and \ref{assumc1}, to achieve the accuracy $\epsilon$ of problem~\ref{P} for VF{${\textbf{B}}^2$}-SAGA, let
$c_0=\left(2 {  \gamma^3 \tau^{3/2}} + ({L_*^2 \gamma^3 \tau} + {L_*\gamma^2}) 180 \gamma^2\tau^{3/2} + 8 \gamma^{2} \tau\right)\frac{18GL_{*}^2}{1 - 72L_{*}^2\gamma^2\tau }$,
$c_1=2L_*^2\tau({L_*^2 \gamma^3 \tau}+ {L_*\gamma^2})$,
$c_2=4 ({L_*^2 \gamma^3 \tau}+ {L_*\gamma^2})\frac{L_*^{2} \tau}{n} $, and $\rho\in (1-\frac{1}{n},1)$, we can choose $\gamma$  such that
\begin{eqnarray}
   &&1) \  1-72L_*^2\gamma^2\tau>0;\ 2)\ 0<1-\frac{\gamma \mu}{4}<1;
  \nonumber \\
   &&3)\ \frac{4 c_{0}}{\gamma \mu(1-\rho)\left(\frac{\gamma \mu^{2}}{4}-2 c_{1}-c_{2}\right)} \leq \frac{\epsilon}{2};
  \nonumber \\
   &&4)\ -\frac{\gamma \mu^{2}}{4}+2 c_{1}+c_{2}\left(1+(1-\frac{1-\frac{1}{n}}{\rho})^{-1}\right) \leq 0;
  \nonumber \\
  &&5)\ -\frac{\gamma \mu^{2}}{4}+c_{2}+c_{1}\left(2+(1-\frac{1-\frac{1}{n}}{\rho})^{-1}\right) \leq 0,
\end{eqnarray}
the epoch number $v(t)$ should satisfy $v(t) \geq \frac{1}{\log \frac{1}{\rho}}\log \frac{2\left(2 \rho-1+\frac{\gamma \mu}{4}\right) \left(f(w_0)-f(w^*)\right)}{\epsilon\left(\rho-1+\frac{\gamma \mu}{4}\right)\left(\frac{\gamma \mu^{2}}{4}-2 c_{1}-c_{2}\right)}$.
\end{theorem}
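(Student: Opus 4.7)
My plan is to mirror the strongly convex SAGA analysis used in asynchronous variance-reduced methods (e.g., \cite{leblond2017asaga,huo2017asynchronous}), but carry the extra bookkeeping required by the backward updating mechanism, in which the dominator uses the inconsistent read $\widehat{w}_t$ for the full gradient while each collaborator uses the stale $\bar{w}_t$ for $\nabla_{\mathcal{G}_\ell}\mathcal{L}$ and the local $\widehat{w}_t$ for $\nabla_{\mathcal{G}_\ell} g$. The starting point will be the $L_*$-smooth descent inequality
\begin{equation*}
  \mathbb{E}f(w_{t+1}) \leq \mathbb{E}f(w_t) - \gamma\,\mathbb{E}\langle \nabla f(w_t), \widetilde v_t\rangle + \tfrac{L_*\gamma^2}{2}\,\mathbb{E}\|\widetilde v_t\|^2,
\end{equation*}
combined with strong convexity to push in a $-\tfrac{\gamma\mu}{2}\|w_t-w^*\|^2$ term via $-\langle\nabla f(w_t),w_t-w^*\rangle\leq -(f(w_t)-f(w^*))-\tfrac{\mu}{2}\|w_t-w^*\|^2$.

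The first real work is to control the two delay-induced biases. Using \eqref{Dt1}--\eqref{Dt2} I will expand $\widehat w_t - w_t$ and $\bar w_t-w_t$ as $\gamma$-scaled partial sums of at most $\tau_0\leq\max\{\tau_1,\tau_2,\eta_1\}$ previous block-updates $U_{\psi(u)}\widetilde v_u^{\psi(u)}$, apply Assumption~\ref{assum1} (Block-Lipschitz) and Assumption~\ref{assum2} together with Cauchy--Schwarz to produce bounds of the form
\begin{equation*}
  \mathbb{E}\|\nabla f_i(\bar w_t)-\nabla f_i(w_t)\|^2 \;\leq\; 2L_*^2\gamma^2\tau\sum_{u\in D(t)\cup D'(t)}\mathbb{E}\|\widetilde v_u\|^2,
\end{equation*}
which is where the coefficient $72L_*^2\gamma^2\tau$ (and condition (1)) emerges after being threaded through the SAGA variance decomposition. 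This is where the SAGA-specific step comes in: bounding $\mathbb{E}\|\widetilde v_t\|^2$ in terms of $\|w_t-w^*\|^2$ and the memory discrepancy $T_t := \tfrac{1}{n}\sum_i \mathbb{E}\|\nabla f_i(\alpha_i^t)-\nabla f_i(w^*)\|^2$, by the standard identity $\mathrm{Var}(X-Y+\mathbb{E}Y)\leq \mathbb{E}\|X-Y\|^2$.

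Next I will assemble a Lyapunov function $R_t = \mathbb{E}[f(w_t)-f(w^*)] + \beta\, T_t + \gamma_0\sum_{u<t}\rho^{t-u}\mathbb{E}\|\widetilde v_u\|^2$, where the exponentially weighted tail absorbs the delay sums (the reason a free parameter $\rho\in(1-1/n,1)$ is introduced in the statement). Plugging everything into the descent inequality and taking a telescoping combination, the coefficient in front of $\mathbb{E}\|w_t-w^*\|^2$ becomes $-\tfrac{\gamma\mu^2}{4}+2c_1+c_2(1+(1-\tfrac{1-1/n}{\rho})^{-1})$ and the coefficient in front of $T_t$ becomes $-\tfrac{\gamma\mu^2}{4}+c_2+c_1(2+(1-\tfrac{1-1/n}{\rho})^{-1})$; conditions (4) and (5) exactly force both of these to be non-positive, and condition (1) makes the delay-variance inversion well-defined. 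Under these four conditions I can conclude
\begin{equation*}
  R_{t+1} \leq \bigl(1-\tfrac{\gamma\mu}{4}\bigr) R_t + c_0,
\end{equation*}
with $c_0$ matching the statement because the residual constant collects precisely the $\gamma^2\tau^{3/2}$ and $\gamma^2\tau$ terms from the bounded-gradient Assumption~\ref{assum1}(3) $\|\nabla_{\mathcal G_\ell} f_i\|^2\leq G$.

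Unrolling the recursion gives $R_t \leq (1-\tfrac{\gamma\mu}{4})^t R_0 + \tfrac{4c_0}{\gamma\mu}$, and using $R_0 \leq \tfrac{2\rho-1+\gamma\mu/4}{(\rho-1+\gamma\mu/4)(\gamma\mu^2/4-2c_1-c_2)}(f(w_0)-f(w^*))$ (which comes from bounding the initial Lyapunov tail with the same geometric factor) together with condition~(3) to bound the fixed point $\tfrac{4c_0}{\gamma\mu(1-\rho)(\gamma\mu^2/4-2c_1-c_2)}\leq \epsilon/2$, the stated lower bound on $v(t)$ falls out of requiring $(1-\tfrac{\gamma\mu}{4})^{v(t)}R_0\leq \epsilon/2$ and $\log(1-\tfrac{\gamma\mu}{4})\leq -\log(1/\rho)$ when $\rho$ is chosen as in the assumptions. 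The hardest step, and the one I expect to require the most care, is the bookkeeping in the Lyapunov combination: one has to juggle the SAGA memory variance, the inner delay of the dominator, and the outer delay of the collaborator simultaneously, and show that the $(1-\tfrac{1-1/n}{\rho})^{-1}$ blow-up from the exponentially weighted tail is controlled by the same $\rho$ appearing in the final rate; conditions (4) and (5) are precisely the two separate Cauchy--Schwarz splits needed to keep both the $\|w_t-w^*\|^2$ and $T_t$ coefficients simultaneously non-positive.
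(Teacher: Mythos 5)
Your plan captures several of the right ingredients (the smoothness descent step, the delay decompositions from Eqs.~\ref{Dt1}--\ref{Dt2} that produce the $1-72L_*^2\gamma^2\tau$ denominator, and the idea that a geometric weight $\rho>1-\tfrac{1}{n}$ must absorb the SAGA memory decay), but your Lyapunov bookkeeping is genuinely different from the paper's and your attribution of conditions (4) and (5) does not match how they actually arise. The paper never tracks a memory-discrepancy term $T_t=\tfrac{1}{n}\sum_i\mathbb{E}\|\nabla f_i(\alpha_i^t)-\nabla f_i(w^*)\|^2$ as a Lyapunov component. Instead, Lemma~\ref{lem-csaga-1} unrolls the SAGA table directly into $(1-\tfrac{1}{n})^{\phi(t)-t'-1}$-weighted sums of $\mathbb{E}\|w_{\xi(t',\psi(t))}-w^*\|^2$ over past local iterates, and the Lyapunov function is $\mathcal{L}_t=\sum_{k=0}^{v(t)}\rho^{v(t)-k}e(w_{\bar u_k})$, a $\rho$-weighted sum of function-value gaps at epoch starts. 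Conditions (4) and (5) are then the requirements that the coefficients of $\sigma(w_{\bar u_k})$ for $k\geq 1$ and of $\sigma(w_0)$, respectively, be non-positive after convolving the $\rho$-weights with the $(1-\tfrac{1}{n})$-weights (Eqs.~\ref{csaga-22}--\ref{csaga-23}); they are not a split between a distance term and a memory term as you claim. Your route could in principle be made to work (it is closer to the classical SAGA Lyapunov analysis), but you would have to establish a clean recursion for $T_t$ under inconsistent reads and stale collaborator states, and there is no reason the resulting constants would coincide with the stated $c_0,c_1,c_2$; as written, the claim that "the coefficients become exactly" the stated expressions is asserted rather than derived.

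The more substantive gap is in the contraction step. A single global iteration updates only the block $\mathcal{G}_{\psi(t)}$, so the descent inequality yields only $-\tfrac{\gamma}{2}\mathbb{E}\|\nabla_{\mathcal{G}_{\psi(t)}}f(w_t)\|^2$, and you cannot "push in" a $-\tfrac{\gamma\mu}{2}\|w_t-w^*\|^2$ term per iteration the way you propose. The paper recovers the full-gradient contraction only after summing over a coordinate-covering set $K(t)$ and invoking Lemma~\ref{lem-csgd-3} to convert $\sum_{u\in K(t)}\mathbb{E}\|\nabla_{\mathcal{G}_{\psi(u)}}f(w_u)\|^2$ into $\tfrac{1}{2}\sum_{u\in K(t)}\mathbb{E}\|\nabla_{\mathcal{G}_{\psi(u)}}f(w_t)\|^2$ minus delay corrections, at which point strong convexity gives $-\tfrac{\gamma\mu}{2}(f(w_t)-f(w^*))$. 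This epoch-level aggregation is exactly why the theorem is stated in terms of $v(t)$ and why the delay bound $\eta_1$ enters $\tau$; your plan skips it, and without it the per-step recursion $R_{t+1}\leq(1-\tfrac{\gamma\mu}{4})R_t+c_0$ is not justified for a block-asynchronous method.
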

\begin{remark}
For strongly convex problems, given the assumptions and  parameters in corresponding theorems, the convergence rate of VF{${\textbf{B}}^2$}-SGD is $\mathcal{O} (\frac{1}{{\epsilon}}\text{log}(\frac{1}{\epsilon}))$, and those of VF{${\textbf{B}}^2$}-SVRG and VF{${\textbf{B}}^2$}-SAGA are $\mathcal{O} (\text{log}(\frac{1}{\epsilon}))$.
\end{remark}
\subsection{Convergence Analysis for Nonconvex Problem}
\begin{assum}\label{assumnc1}
Nonconvex function $f(w)$ is bounded below,
\begin{equation}\label{inf}
f^*:=\inf_{w\in \mathbb{R}^d} f(w) > -\infty.
\end{equation}
\end{assum}
Assumption \ref{assumnc1} guarantees the feasibility of nonconvex problem (P).
For nonconvex problem, we introduce the notation $K'(t)$  that denotes a set of $q$ iterations fully visiting all coordinates, \ie, $K'(t) = \{\{t, t+\bar{t}_1, \cdots, t+\bar{t}_{q-1}\}: \psi(\{t, t+\bar{t}_1,\cdots, t+\bar{t}_{q-1}\}) = \{1,\cdots,q\}\}$, where the $t$-th global iteration denotes a dominated update. Moreover, these  iterations are performed respectively on a dominator and $q-1$ different collaborators receiving $\vartheta$ calculated at the $t$-th global iteration. Moreover, we assume that $K'(t)$ can be completed in $\eta_2$ global iterations, \ie, for $\forall t' \in \mathcal{A}(t)$, there is $\eta_2 \geq {\text{max}}\{u|u\in K'(t')\}-t'$. Note that, different from $K(t)$, there is $|K'(t)|=q$ and the definition of $K'(t)$ does not emphasize on ``successive iterations'' due to the difference of analysis techniques between strongly convex and nonconvex problems.  Based on $K'(t)$, we introduce the epoch number $v'(t)$ as follow.
\begin{definition}\label{definnc2}
$\mathcal{A}(t)$ denotes a set of global iterations, where for $\forall$ $t' \in \mathcal{A}(t)$ there is the $t'$-th global iteration denoting a dominated update and  $\cup_{\forall t' \in \mathcal{A}(t)} K'(t')=\{0,1,\cdots,t\}$. The epoch number $v'(t)$ is defined as $|\mathcal{A}(t)|$.
\end{definition}
Give the definition of epoch number $v'(t)$, we have the following theoretical results for nonconvex problem.
\begin{theorem}\label{thm-sgdnonconvex}
Under Assumptions~\ref{assum1}-\ref{assum4} and \ref{assumnc1}, to achieve the $\epsilon$-first-order stationary point of problem~\ref{P}, i.e. $\mathbb{E}\|\nabla f(w)\| \le \epsilon$ for stochastic variable $w$, for VF{${\textbf{B}}^2$}-SGD, let
 $\gamma = \frac{ \epsilon}{{L_{*}qG}}$, if $\tau\leq\frac{512qG}{\epsilon^2}$, the total epoch number $T$ should satisfy
 \begin{equation}
   T \geq  {\frac{ {\mathbb{E}\left[ f (w^0) - f^* \right]L_{*}qG }}{\epsilon^2}},
 \end{equation}
 where $L_{*}=\text{max}\{L, \{L_{\ell}\}_{\ell=1}^{q}, L_g\}$, $\tau={\text{max}}\{\tau_1^2,\tau_2^2,\eta_2^2\}$, $f(w^0)$ is the initial function value and $f^*$ is defined in Eq.~\ref{inf}.
\end{theorem}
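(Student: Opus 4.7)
The plan is to follow the standard asynchronous nonconvex SGD descent template, adapted to the bilevel staleness of VF${\textbf{B}}^2$. I would start from the block-coordinate $L_*$-smoothness (Assumption~\ref{assum1}) applied to the single block $\mathcal{G}_{\psi(t)}$ updated at iteration $t$:
\[
\mathbb{E}[f(w_{t+1})] \le \mathbb{E}[f(w_t)] - \gamma\,\mathbb{E}\bigl\langle \nabla_{\mathcal{G}_{\psi(t)}} f(w_t),\,\widetilde{v}_t^{\psi(t)}\bigr\rangle + \tfrac{L_*\gamma^2}{2}\,\mathbb{E}\|\widetilde{v}_t^{\psi(t)}\|^2.
\]
Because $i_t$ is sampled uniformly, the conditional expectation of $\widetilde{v}_t^{\psi(t)}$ equals $\nabla_{\mathcal{G}_{\psi(t)}} f(\widehat{w}_t)$ in a dominated update and the mixed gradient (with $\bar{w}_t$ inside $\nabla_{\mathcal{G}_\ell}\mathcal{L}$ and $\widehat{w}_t$ inside $\lambda\nabla g$) in a collaborative update. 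Using the Lipschitz property of Assumption~\ref{assum1} to swap those delayed arguments for $w_t$ produces the ``clean'' descent term $-\tfrac{\gamma}{2}\,\mathbb{E}\|\nabla_{\mathcal{G}_{\psi(t)}} f(w_t)\|^2$ together with error terms proportional to $\mathbb{E}\|\widehat{w}_t - w_t\|^2$ and $\mathbb{E}\|\bar{w}_t - w_t\|^2$.

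I would next bound the two staleness norms by plugging in the explicit delay expansions of Eqs.~\ref{Dt1} and~\ref{Dt2} and invoking the bounded block-coordinate gradient assumption together with Assumption~\ref{assum4}; both collapse into a single $\mathcal{O}(\gamma^2 \tau G)$ envelope with $\tau=\max\{\tau_1^2,\tau_2^2,\eta_2^2\}$. Combining this with the variance bound $\mathbb{E}\|\widetilde{v}_t^{\psi(t)}\|^2\le G$ produces a one-step inequality of the schematic form
\[
\tfrac{\gamma}{2}\,\mathbb{E}\|\nabla_{\mathcal{G}_{\psi(t)}} f(w_t)\|^2 \le \mathbb{E}[f(w_t)-f(w_{t+1})] + \mathcal{O}(\gamma^2 L_* G) + \mathcal{O}(\gamma^3 \tau G L_*^2).
\]

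I would then aggregate these inequalities across the $q$ iterations of one epoch $K'(t)$, which by Definition~\ref{definnc2} visits every coordinate exactly once, so that $\sum_{\ell=1}^{q}\|\nabla_{\mathcal{G}_\ell} f(w_t)\|^2 = \|\nabla f(w_t)\|^2$; the drift between the base points $w_{t'}\in K'(t)$ and the reference $w_t$ is at most $\gamma\eta_2\sqrt{G}$ in norm by the bounded gradient assumption and is absorbed into the existing $\gamma^3 \tau G L_*^2$ budget. Telescoping across the $T=v'(t)$ epochs of $\mathcal{A}(t)$ and dividing by $\gamma T$ produces $\tfrac{1}{T}\sum_{t'\in\mathcal{A}(t)}\mathbb{E}\|\nabla f(w_{t'})\|^2 \le \tfrac{2\mathbb{E}[f(w^0)-f^*]}{\gamma T} + \mathcal{O}(\gamma L_* G) + \mathcal{O}(\gamma^2 \tau G L_*^2)$. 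Setting $\gamma=\epsilon/(L_* q G)$ balances the middle term, the hypothesis $\tau\le 512qG/\epsilon^2$ suppresses the staleness term, and the leading $1/(\gamma T)$ term then dictates $T \ge \mathbb{E}[f(w^0)-f^*]L_* q G/\epsilon^2$; applying Jensen to the uniformly-drawn iterate $w$ converts the expected square-norm bound into the claimed $\mathbb{E}\|\nabla f(w)\|\le \epsilon$.

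The main obstacle will be handling the bilevel mixed staleness cleanly: in a collaborative update $\nabla_{\mathcal{G}_\ell}\mathcal{L}$ is evaluated at $\bar{w}_t$ while $\lambda\nabla g$ is evaluated at $\widehat{w}_t$, so the two pieces must be separated, bounded via Assumption~\ref{assum2} with $L_g\le L_*$, and recombined without loosening the delay bound. A secondary subtlety is that $K'(t)$ is not required to consist of consecutive iterations (unlike $K(t)$ in the strongly convex analysis), so I rely on the bounded-gradient assumption and the cap $\eta_2$ to control base-point drift within a single epoch; once these two pieces are in place the remaining telescoping is mechanical.
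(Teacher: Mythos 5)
Your overall architecture is the same as the paper's: a block-coordinate descent inequality from $L_*$-smoothness, a decomposition of the update direction into the ``true'' stochastic gradient at $w_t$ plus staleness errors controlled through the delay sets of Eqs.~\ref{Dt1}--\ref{Dt2} and Assumption~\ref{assum4}, aggregation over the epoch $K'(t)$ using orthogonality of the coordinate blocks, and a final telescoping with the stated choices of $\gamma$ and $\tau$. The paper packages these steps as Lemmas~\ref{lem-ncsgd-2} and \ref{lem-ncsgd-3} and then the main telescoping argument, which is what you describe.

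There is one genuine gap: you invoke ``the variance bound $\mathbb{E}\|\widetilde{v}_t^{\psi(t)}\|^2\le G$'' as if it followed from Assumption~\ref{assum1}(3). It does not. The bounded block-coordinate gradient assumption controls $\|\nabla_{\mathcal{G}_\ell} f_i(w)\|^2$ for a gradient evaluated at a \emph{single} point, hence it bounds $\widehat{v}_t^{\psi(t)}=\nabla_{\mathcal{G}_{\psi(t)}}f_{i_t}(\widehat{w}_t)$; but in a collaborative update $\widetilde{v}_t^{\psi(t)}=\vartheta\cdot(x_i)_{\mathcal{G}_{\psi(t)}}+\lambda\nabla g(\widehat{w}_{\mathcal{G}_{\psi(t)}})$ mixes the loss gradient at the stale $\bar{w}_t$ with the regularizer gradient at the locally read $\widehat{w}_t$, so it is not of the form $\nabla_{\mathcal{G}_\ell}f_i(w')$ for any single $w'$. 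Bounding it requires writing $\|\widetilde{v}_t\|^2\le 2\|\widehat{v}_t\|^2 + 2\|\widetilde{v}_t-\widehat{v}_t\|^2$, and the second term is itself a sum of $\|\widetilde{v}_{t'}\|^2$ over earlier delayed iterations --- the bound is self-referential. The paper closes this recursion with Lemma~\ref{lem-ncsgd-1}, which sums over all iterations and rearranges to obtain $\sum_{t}\mathbb{E}\|\widetilde{v}_t^{\psi(t)}\|^2\le \frac{4}{1-6L_*^2\gamma^2\tau}\sum_t\mathbb{E}\|\widehat{v}_t^{\psi(t)}\|^2\le\frac{4GS}{1-6L_*^2\gamma^2\tau}$, valid only when $6L_*^2\gamma^2\tau<1$. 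Your sketch separates the $\bar{w}$ and $\widehat{w}$ pieces (good) but never acknowledges that the resulting inequality cannot be applied pointwise without first resolving this circularity; every subsequent $\mathcal{O}(\gamma^2\tau G)$ and $\mathcal{O}(\gamma^3\tau GL_*^2)$ envelope in your argument silently relies on the unproven $\mathbb{E}\|\widetilde{v}_t\|^2\lesssim G$. Adding the summed self-bounding lemma (or an induction over $t$ as in the strongly convex case, Lemma~\ref{lem-csgd-1}) is the one missing ingredient; with it, the rest of your plan goes through as in the paper.
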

\begin{theorem}\label{thm-svrgnonconvex}
Under Assumptions~\ref{assum1}-\ref{assum4} and \ref{assumnc1},
to solve problem~\ref{P} with VF{${\textbf{B}}^2$}-SVRG, let $\gamma = \frac{m_0}{L_{*}n^\alpha}$, where $0<m_0<\frac{1}{8}$, $0<\alpha \leq 1$, if epoch number $N$ in an outer loop satisfies  $ N \leq \lfloor  \frac{n^{{\alpha}}}{2m_0}  \rfloor$, and $\tau < \text{min} \{\frac{n^{2\alpha}}{20m_0^2},\frac{1-8m_0}{40m_0^2} \} $, there is
{\begin{equation}
\small{\frac{1}{T}\sum\limits_{s=1}^{S}\sum\limits_{t=0}^{N-1}\mathbb{E}  ||\nabla f(w^s_{t_0})||^2  \leq \frac{L_{*}n^{\alpha}\mathbb{E}\left[  f( w_{0})  -  f( w^{*}) \right] }{T \sigma }},
\end{equation}
}
where $T$ is the total number of epoches, $t_0$ is the start iteration of epoch $t$, $\sigma$ is a small value independent of $n$.
\end{theorem}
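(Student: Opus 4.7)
The plan is to adapt the classical nonconvex SVRG analysis of \cite{johnson2013accelerating} style to the bilevel asynchronous VFL setting, carrying along the two delay representations \eqref{Dt1}--\eqref{Dt2} throughout. First I would apply the $L_{*}$-smoothness descent lemma to consecutive global iterates $w_{t}^{s}$ and $w_{t+1}^{s} = w_{t}^{s} - \gamma U_{\psi(t)}\widetilde{v}_{t}^{\psi(t)}$, obtaining
\begin{equation*}
\mathbb{E}[f(w_{t+1}^{s})] \le \mathbb{E}[f(w_{t}^{s})] - \gamma\,\mathbb{E}\langle \nabla_{\mathcal{G}_{\psi(t)}} f(w_{t}^{s}),\widetilde{v}_{t}^{\psi(t)}\rangle + \tfrac{L_{*}\gamma^{2}}{2}\mathbb{E}\|\widetilde{v}_{t}^{\psi(t)}\|^{2}.
\end{equation*}
The cross term is then split by adding and subtracting $\nabla_{\mathcal{G}_{\psi(t)}} f(\widehat{w}_{t}^{s})$ and $\nabla_{\mathcal{G}_{\psi(t)}} f(\bar{w}_{t}^{s})$ so that the inconsistency error and the communication staleness can be handled separately using the block-coordinate Lipschitz property together with Assumption~\ref{assum2} for the regularizer portion $\lambda\nabla g(\widehat{w}_{\mathcal{G}_{\ell}})$.

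Next I would control the stochastic-gradient second moment. Because VF${\textbf{B}}^{2}$-SVRG uses $\widetilde{v}_{t}^{\psi(t)} = \nabla_{\mathcal{G}_{\psi(t)}} f_{i_{t}}(\bar{w}_{t}^{s},\widehat{w}_{t}^{s}) - \nabla_{\mathcal{G}_{\psi(t)}} f_{i_{t}}(\widetilde{w}^{s}) + \nabla_{\mathcal{G}_{\psi(t)}} f(\widetilde{w}^{s})$ where $\widetilde{w}^{s}$ is the outer reference point, I would apply the standard variance-reduction bound $\mathbb{E}\|\widetilde{v}_{t}\|^{2} \le 2\|\nabla f(\widehat{w}_{t}^{s})\|^{2} + 2L_{*}^{2}\|\widehat{w}_{t}^{s} - \widetilde{w}^{s}\|^{2} + \text{delay residuals}$, then invoke \eqref{Dt1} and \eqref{Dt2} to bound $\|\widehat{w}_{t}-w_{t}\|^{2}$ and $\|\bar{w}_{t}-\widehat{w}_{t}\|^{2}$ by $\gamma^{2}\tau\sum_{u\in D(t)\cup D'(t)}\|\widetilde{v}_{u}^{\psi(u)}\|^{2}$ via Cauchy--Schwarz, with $\tau=\max\{\tau_{1}^{2},\tau_{2}^{2},\eta_{2}^{2}\}$. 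Absorbing these delay residuals into the left-hand side is what forces the condition $1-2L_{*}^{2}\gamma^{2}\tau>0$ implicit in $\tau < (1-8m_{0})/(40m_{0}^{2})$.

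After these local bounds, I would introduce a Lyapunov-type quantity of the form $R_{t}^{s} = \mathbb{E}[f(w_{t}^{s})] + c_{t}\mathbb{E}\|w_{t}^{s}-\widetilde{w}^{s}\|^{2}$ with a backward recursion $c_{t} = c_{t+1}(1+\beta\gamma) + L_{*}^{2}\gamma^{2}(\text{variance-reduction constant})$ and $c_{N}=0$, so that after telescoping over one inner epoch the negative drift isolates a term of the form $\sigma\cdot\mathbb{E}\|\nabla f(w_{t_{0}}^{s})\|^{2}$ with
\begin{equation*}
\sigma \;=\; \gamma - L_{*}\gamma^{2}/2 - \text{(delay/variance terms of order }L_{*}\gamma^{2}N/n^{\alpha}\text{)}.
\end{equation*}
Plugging in $\gamma = m_{0}/(L_{*}n^{\alpha})$ and $N \le \lfloor n^{\alpha}/(2m_{0})\rfloor$ makes each of the parasitic terms a small fraction of $\gamma$, so that $\sigma$ is a strictly positive constant independent of $n$. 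Summing $s=1,\ldots,S$ and dividing by $T=SN$ then yields the claimed bound.

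The main obstacle will be the delay-versus-variance bookkeeping in step two: two \emph{different} staleness sources (inconsistent read $\tau_{1}$ and communication delay $\tau_{2}$) both enter the upper bound on $\mathbb{E}\|\widetilde{v}_{t}\|^{2}$, and each creates a sum over past $\|\widetilde{v}_{u}\|^{2}$. When this is fed back into the descent recursion, one obtains a double sum $\sum_{t}\sum_{u\in D(t)\cup D'(t)}\|\widetilde{v}_{u}\|^{2}$ which must be swapped and re-bounded by a single sum with an extra factor $\tau$; the inequality $\tau \le n^{2\alpha}/(20m_{0}^{2})$ is precisely what this swap demands. A secondary subtlety is that $\widetilde{v}_{t}$ is computed with mixed arguments---$\bar{w}_{t}$ for the loss and $\widehat{w}_{t}$ for the regularizer---so the variance-reduction identity must be applied term-wise and glued using Assumption~\ref{assum2}, which is the reason $L_{*} = \max\{L,\{L_{\ell}\},L_{g}\}$ appears instead of $L$ alone.
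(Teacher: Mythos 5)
Your proposal follows essentially the same route as the paper's proof: the descent lemma combined with the two delay decompositions, the standard SVRG variance bound $\mathbb{E}\|v_t^{\psi(t)}\|^2 \le 2\mathbb{E}\|\nabla_{\mathcal{G}_{\psi(t)}}f(w_t^s)\|^2 + 2L_*^2\mathbb{E}\|w_t^s-w^s\|^2$, a Lyapunov function $R_t^s=\mathbb{E}[f(w_t^s)]+c_t\mathbb{E}\|w_t^s-w^s\|^2$ with the backward recursion on $c_t$ and terminal condition zero, and the delay double-sum swap that the paper packages as Lemma~\ref{lem-ncsvrg-1}. The only slip is cosmetic: the absorption condition $1-20L_*^2\gamma^2\tau>0$ is what $\tau<\frac{n^{2\alpha}}{20m_0^2}$ guarantees, whereas $\tau<\frac{1-8m_0}{40m_0^2}$ is used at the end to keep $\Gamma_*\geq \frac{\sigma}{L_*n^\alpha}$ strictly positive rather than for the absorption step.
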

\begin{theorem}\label{thm-saganonconvex}
Under Assumptions~\ref{assum1}-\ref{assum4} and \ref{assumnc1},
to solve problem~\ref{P} with VF{${\textbf{B}}^2$}-SAGA, let
$\gamma = \frac{m_0}{L_{*}n^\alpha}$, where $0<m_0<\frac{1}{20}$, $0<\alpha \leq 1$, if total  epoch number $T$ satisfies $T \leq \lfloor  \frac{n^{{\alpha}}}{4m_0}  \rfloor$
and  $\tau < \text{min} \{\frac{n^{2\alpha}}{180m_0^2},\frac{1-20m_0}{40m_0^2} \}$, there is
\begin{equation}
  \frac{1}{T}\sum\limits_{t=0}^{T-1}\mathbb{E}  ||\nabla f(w_{t_0})||^2  \leq \frac{L_{*}n^{\alpha}\mathbb{E}\left[  f( w_{0})  -  f( w^{*}) \right] }{T \sigma }.
\end{equation}
\end{theorem}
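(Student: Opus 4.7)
The plan is a Lyapunov-function argument in the spirit of asynchronous SAGA analyses, adapted to the two delay sources introduced by VF$\textbf{B}^2$. Without strong convexity there is no contraction toward an optimum, so I would work with
\begin{equation*}
R_t = \mathbb{E}[f(w_t)] + c\,H_t,\qquad H_t = \tfrac{1}{n}\sum_{i=1}^{n}\mathbb{E}\|\nabla f_i(\phi_i^t)-\nabla f_i(w_t)\|^2,
\end{equation*}
where $\phi_i^t$ is the point at which the SAGA memory entry $\alpha_i^t$ was last refreshed, and $c$ is a constant to be tuned against the error coefficients that play the structural role of $c_0,c_1,c_2$ in Theorem~\ref{thm-sagaconvex}. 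The aim is a one-step inequality $R_{t+1}-R_t\le -\sigma\,\mathbb{E}\|\nabla f(w_{t_0})\|^2 + (\text{terms absorbable by }H_t)$ that telescopes cleanly over $T$ epochs.

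First I would use $L_*$-smoothness of $f$ to obtain a descent bound on $\mathbb{E}[f(w_{t+1})]$ in terms of $\mathbb{E}\langle\nabla f(w_{t_0}),\widetilde{v}_t\rangle$ and $\gamma^2\mathbb{E}\|\widetilde{v}_t\|^2$. I would then decompose $\widetilde{v}_t$ into the canonical SAGA estimator plus two error pieces: the inconsistent-read gap $\nabla f(\widehat{w}_t)-\nabla f(w_t)$, controlled through \eqref{Dt1}, and the communication-delay gap that arises because collaborators evaluate $\nabla_{\mathcal{G}_\ell}\mathcal{L}$ at $\bar{w}_t$ but $\nabla g$ at $\widehat{w}_{\mathcal{G}_\ell}$, controlled through \eqref{Dt2} together with the $L_g$-smoothness of $g$. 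Applying Young's inequality, the bounded block-coordinate gradient assumption, and unrolling over the past $\tau=\max\{\tau_1^2,\tau_2^2,\eta_2^2\}$ iterations converts the error pieces into $\gamma^2\tau G$-sized quantities. Coupled with a SAGA memory recursion of the form $H_{t+1}\le(1-\tfrac{1}{n})H_t + \tfrac{L_*^2}{n}\mathbb{E}\|\widehat{w}_t-w_t\|^2 + \tfrac{1}{n}\mathbb{E}\|\nabla f(w_t)\|^2$, regrouping produces an inequality of the shape
\begin{equation*}
R_{t+1}-R_t \le -\bigl(\tfrac{\gamma}{2} - c_0\gamma^2 - c_1\gamma^2\tau\bigr)\mathbb{E}\|\nabla f(w_{t_0})\|^2 + (c_2-c)H_t.
\end{equation*}
Choosing $c\ge c_2$ so the second term is non-positive, and substituting $\gamma=m_0/(L_*n^\alpha)$, $T\le\lfloor n^\alpha/(4m_0)\rfloor$, $\tau<\min\{n^{2\alpha}/(180m_0^2),(1-20m_0)/(40m_0^2)\}$, forces the parenthesized coefficient to be at least a constant $\sigma>0$ independent of $n$; telescoping from $t=0$ to $T-1$ and dividing by $T$ then delivers the stated bound.

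The main obstacle is that, without the strong-convexity inequality $\mu\|w_t-w^*\|^2\le 2(f(w_t)-f^*)$ available in Theorem~\ref{thm-sagaconvex}, the balance among the gradient-norm, SAGA-memory and delay terms must be engineered entirely through $\gamma$, $n^\alpha$, and the $\tau$ thresholds. The bilevel architecture compounds this: the hybrid collaborator update forces a block-wise split of the bias in $\widetilde{v}_t$ whose blocks carry $\tau_1$ versus $\tau_2$ delay separately, and the SAGA memory itself is read asynchronously so $H_t$ has to be defined against the logical iterate $w_t$ rather than the physical read, which injects additional $\gamma^2\tau$-sized error into the memory recursion. Tuning constants so that $1-180L_*^2\gamma^2\tau>0$ and $1-20m_0>0$ hold simultaneously while still producing a $\sigma$ that does not vanish with $n$---precisely the two conditions encoded in the stated $\tau$ bound---is where the bulk of the technical work lies.
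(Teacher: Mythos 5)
Your overall plan points in a workable direction, and your choice of Lyapunov function is genuinely different from the paper's: you track the SAGA memory variance $H_t=\tfrac1n\sum_i\mathbb{E}\|\nabla f_i(\phi_i^t)-\nabla f_i(w_t)\|^2$ with a fixed coefficient $c$, whereas the paper uses $R_t=\mathbb{E}f(w_t)+c_t\mathbb{E}\|w_0-w_t\|^2$ with backward-recursive coefficients $c_t=c_{t+1}(1+\gamma\beta_t)+\tfrac52\gamma L_*^2$ and terminal condition $c_{\bar S}=0$; the constraint $T\le\lfloor n^\alpha/(4m_0)\rfloor$ exists precisely to keep $c_0$ bounded in that recursion, and the memory staleness is handled separately through Lemma~\ref{lem-csaga-1}, which expresses each table entry via the geometric weights $(1-\tfrac1n)^{\phi(t)-t'-1}$ over the party-local counter $\phi(t)$. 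Either Lyapunov choice can in principle be made to work, but yours has to confront the $(1-\tfrac1n)$-contraction versus $(1+\epsilon)$-drift tradeoff in the $H_t$ recursion, which reintroduces a factor of $n$ on the $\gamma^2\|\widetilde v_t\|^2$ term and is exactly what the $T$-bound sidesteps in the paper's formulation; your proposal does not explain how that factor is absorbed for general $\alpha\le 1$.

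Two concrete gaps remain. First, a single global iteration updates only the block $\mathcal{G}_{\psi(t)}$, so the descent step yields $-\tfrac\gamma2\mathbb{E}\|\nabla_{\mathcal{G}_{\psi(t)}}f(w_t)\|^2$, not $-\sigma\mathbb{E}\|\nabla f(w_{t_0})\|^2$ as your one-step inequality asserts. Recovering the full gradient norm requires summing over the $q$ iterations in $K'(t_0)$ (one dominated update plus the $q-1$ collaborative updates triggered by the same $\vartheta$, per Definition~\ref{definnc2}) and relating each $\nabla_{\mathcal{G}_{\psi(u)}}f(w_u)$ back to $\nabla_{\mathcal{G}_{\psi(u)}}f(w_{t_0})$ via the $\eta_2$ bound; this is the content of Lemma~\ref{lem-ncsgd-3} and is not a cosmetic step, since it is where the extra factor $q$ in the step size and the $\eta_2^2$ term inside $\tau$ come from. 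Second, the asynchronous SAGA estimator here involves three distinct versions of the memory table, the consistent $\alpha_i$, the dominator's inconsistent read $\widehat\alpha_i$, and the collaborator's stale read $\widetilde\alpha_i$, and the averaged term $\tfrac1n\sum_j\alpha_j$ is itself read inconsistently; the paper needs the separate bounds on $\|\alpha-\widehat\alpha\|^2$ and $\|\widetilde\alpha-\widehat\alpha\|^2$ in Lemma~\ref{lem-csaga-1} (carrying $\tau_1$ and $\tau_2$ respectively) to establish the summed estimator bound of Lemma~\ref{lem-ncsaga-2} with the constant $180$ that appears in the stated $\tau$ threshold. Your single quantity $H_t$, defined against the logical iterate $w_t$, collapses these three error sources into one and cannot by itself produce the $\tau_1$-versus-$\tau_2$ split you yourself identify as necessary; you would need to introduce the analogous per-read error decomposition before the constants in the theorem statement could be recovered.
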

\begin{remark}
For  nonconvex problems, given conditions in the theorems, the convergence rate of VF{${\textbf{B}}^2$}-SGD is $\mathcal{O} (1/{\sqrt{T}})$, and those of VF{${\textbf{B}}^2$}-SVRG and VF{${\textbf{B}}^2$}-SAGA are  $\mathcal{O} (1/{T})$.
\end{remark}
\vspace{-0.4cm}
\section{Security Analysis}\label{securitysec}
 We discuss the data security and model security of VF{${\textbf{B}}^2$} under two semi-honest threat models commonly used in security analysis \cite{cheng2019secureboost,xu2019hybridalpha,gu2020federated}. Specially, these two threat models have different threat abilities, where threat model 2 allows collusion between parties while threat model 1 does not.
\begin{figure}[!t]
	\centering
	\begin{subfigure}{0.3\linewidth}
		\includegraphics[width=\linewidth]{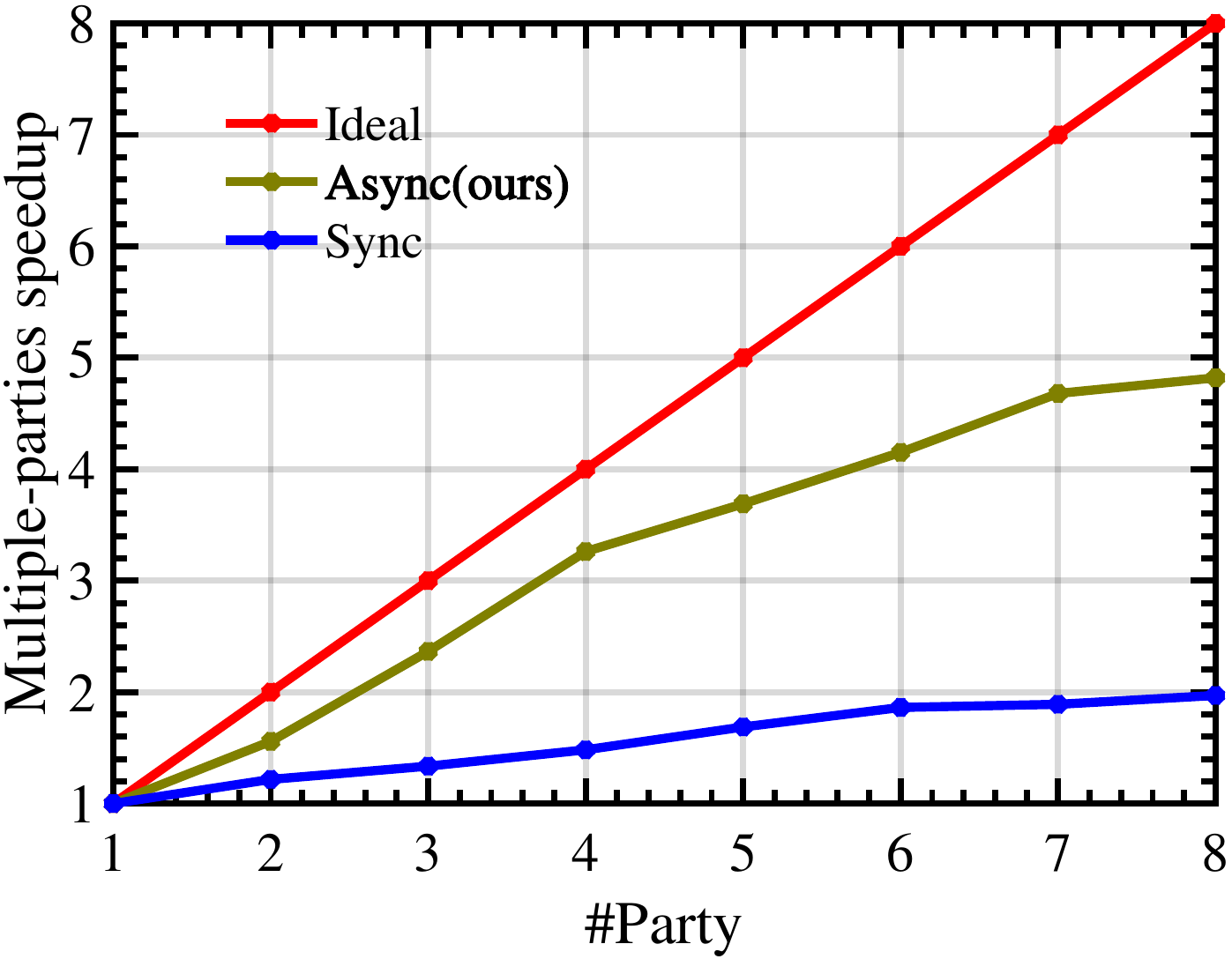}
		\caption{SGD-based}
	\end{subfigure}
\begin{subfigure}{0.3\linewidth}
		\includegraphics[width=\linewidth]{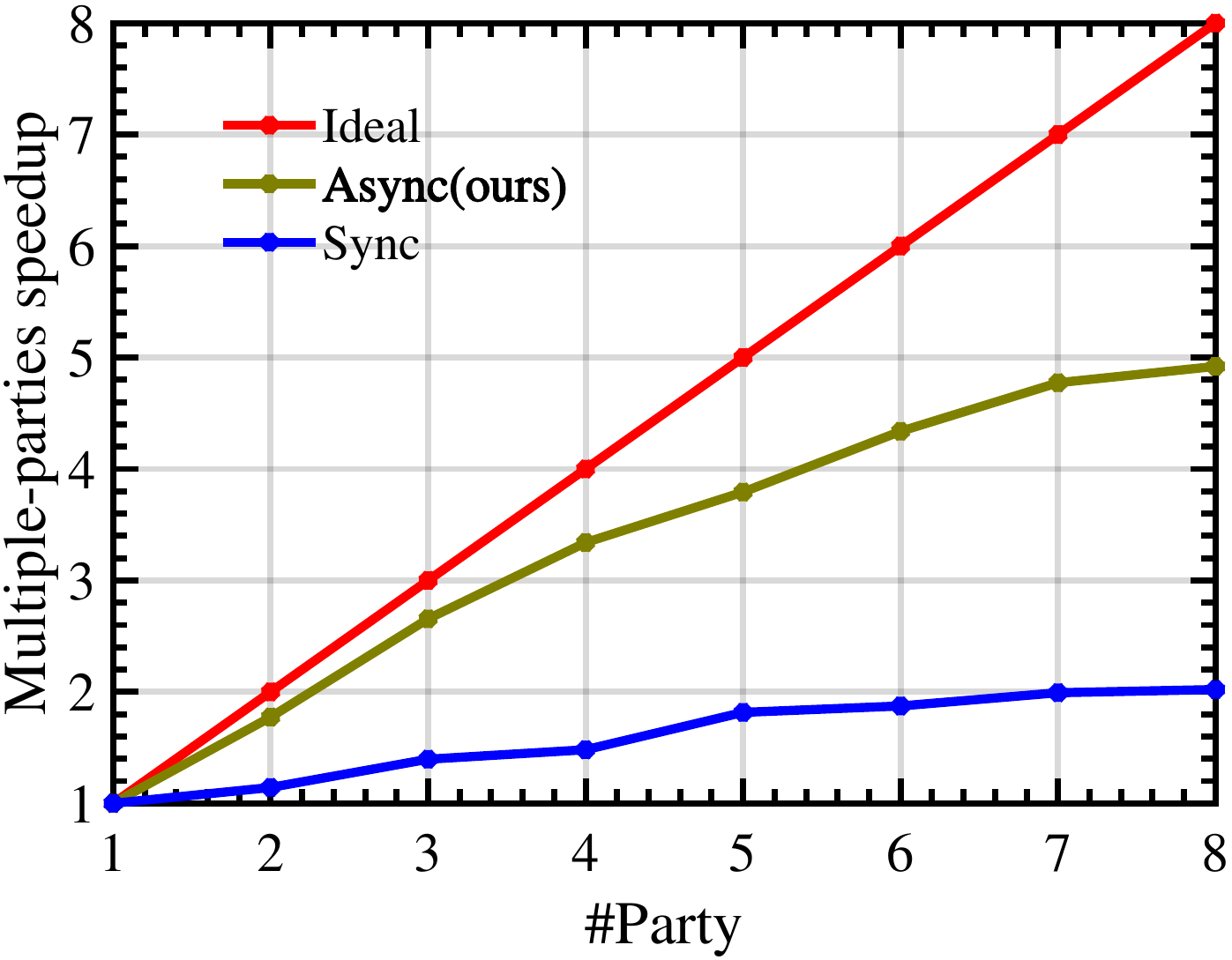}
		\caption{SVRG-based}
	\end{subfigure}
\begin{subfigure}{0.3\linewidth}
		\includegraphics[width=\linewidth]{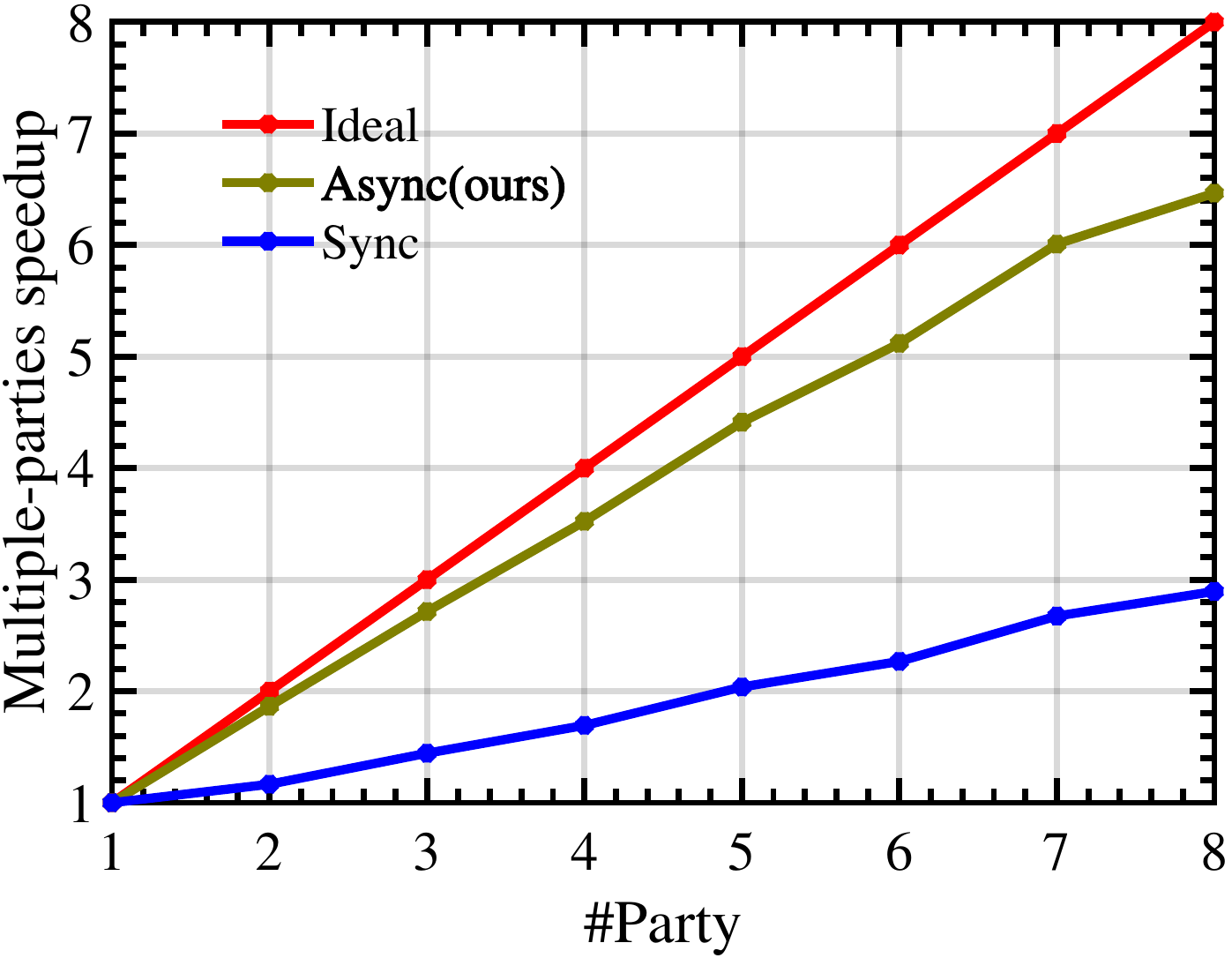}
		\caption{SAGA-based}
	\end{subfigure}%
	\caption{ $q$-parties speedup scalability  with $m=2$ on  $D_4$.}
\label{Exp-sca}
\end{figure}
\begin{itemize}
  \item {\bf{Honest-but-curious}} (threat model 1): All workers will follow the algorithm to perform the correct computations. However, they may use their own retained records of the intermediate computation result to infer other worker's data and model.
  \item {\bf{Honest-but-colluding}} (threat model 2): All workers will follow the algorithm to perform the correct computations. However, some workers may collude to infer other worker's data and model by sharing their retained records of the intermediate computation result.
\end{itemize}
Similar to \cite{gu2020federated}, we prove the security of VF{${\textbf{B}}^2$} by analyzing and proving its ability to prevent inference attack defined as follows.
\begin{definition}[Inference attack]\label{definatt1}
 An inference attack on the $\ell$-th party is to infer $(x_i)_{\mathcal{G}_\ell}$ (or $w_{\mathcal{G}_\ell}$) belonging to other parties or $y_i$ hold by active parties without directly accessing them.
\end{definition}
\begin{lemma}\label{infinite}
Given an equation $o_i = w_{\mathcal{G}_{\ell }}^{\top} {(x_i)}_{\mathcal{G}_{\ell }}$ or $o_i = \frac{\partial \mathcal{L}\left(\widehat{w}^{\top} x_{i}, y_{i}\right)}{\partial\left(\widehat{w}^{\top} x_{i}\right)}$  with only $o_i$ being known, there are infinite different solutions to this equation.
\end{lemma}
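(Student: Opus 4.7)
The plan is to exploit the gross under-determination of a single scalar equation in many unknowns: both equations in the statement relate a single known scalar $o_i$ to vector-valued (and in the second case, also label-valued) unknowns, so the solution set must be at least one-parameter. I would treat each of the two cases separately and in each produce an explicit infinite parametric family of solutions, which suffices to prove the lemma.

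For the first equation $o_i = w_{\mathcal{G}_\ell}^\top (x_i)_{\mathcal{G}_\ell}$, I would regard the $2d_\ell$ real numbers making up $w_{\mathcal{G}_\ell}$ and $(x_i)_{\mathcal{G}_\ell}$ as the unknowns, coupled by one scalar linear constraint. When $d_\ell \geq 2$, fix any nonzero candidate $w_{\mathcal{G}_\ell}$; then the set of admissible $(x_i)_{\mathcal{G}_\ell}$ is the affine hyperplane
\begin{equation}
(x_i)_{\mathcal{G}_\ell} \;=\; \frac{o_i}{\|w_{\mathcal{G}_\ell}\|^2}\, w_{\mathcal{G}_\ell} + z, \qquad z \perp w_{\mathcal{G}_\ell},
\end{equation}
which is infinite. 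When $d_\ell = 1$, the family $(w_{\mathcal{G}_\ell},(x_i)_{\mathcal{G}_\ell}) = (t, o_i/t)$ for $t\neq 0$ already provides infinitely many solutions. In either case the freedom is explicit.

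For the second equation $o_i = \partial \mathcal{L}(\widehat{w}^\top x_i, y_i)/\partial(\widehat{w}^\top x_i)$, the unknowns are $\widehat{w}, x_i \in \mathbb{R}^d$ and $y_i$. The strategy is to reduce to the first case. Fix any admissible label $y_i$ and find a scalar $u$ with $\partial_u \mathcal{L}(u, y_i) = o_i$; for the standard losses relevant to problem~\ref{P} (logistic, least-squares, etc.) such a preimage exists because the range of the derivative covers the values $o_i$ that actually arise during training. Having fixed such $u$, the constraint on $(\widehat{w}, x_i)$ collapses to the linear equation $\widehat{w}^\top x_i = u$, to which the argument of the first case applies verbatim and yields an infinite family of $(\widehat{w}, x_i)$ pairs.

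The main obstacle is the second case, since it requires a minimal regularity hypothesis on $\mathcal{L}$ to guarantee that the equation $\partial_u \mathcal{L}(u, y_i) = o_i$ is solvable in $u$; without this, one cannot reduce to the linear setting. I would handle it by briefly noting that the lemma is invoked for those $o_i$ that actually appear in Algorithm~\ref{AFSGD-A}, so $o_i$ lies in the image of the derivative by construction, and thus the preimage is non-empty. Everything else is elementary linear algebra and the conclusion of infinitely many solutions follows.
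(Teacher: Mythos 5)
Your proposal is correct, and it establishes the lemma by a route that overlaps with but is not identical to the paper's. For the bilinear equation $o_i = w_{\mathcal{G}_\ell}^\top (x_i)_{\mathcal{G}_\ell}$, the paper exhibits an infinite family by conjugating with an arbitrary non-identity orthogonal matrix $U$, replacing the pair by $(w_{\mathcal{G}_\ell}^\top U^\top,\, U (x_i)_{\mathcal{G}_\ell})$ when $d_\ell \geq 2$, and by the scaling $(w u, x/u)$ when $d_\ell = 1$; your $d_\ell = 1$ family is the same, and your $d_\ell \geq 2$ argument (fix a nonzero $w_{\mathcal{G}_\ell}$ and sweep $(x_i)_{\mathcal{G}_\ell}$ over the affine hyperplane $\{o_i w_{\mathcal{G}_\ell}/\|w_{\mathcal{G}_\ell}\|^2 + z : z \perp w_{\mathcal{G}_\ell}\}$) is an equally valid, arguably more transparent, parametrization of the non-uniqueness. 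The real divergence is in the second equation: the paper disposes of it in one sentence by observing that both $\widehat{w}^\top x_i$ and the form of $\mathcal{L}$ are unknown to a passive party, hence $y_i$ cannot be pinned down; you instead assume $\mathcal{L}$ known, solve $\partial_u \mathcal{L}(u, y_i) = o_i$ for $u$, and reduce to the first case. Your version is stronger (it covers an adversary who knows the loss) and more explicit, at the cost of the extra solvability hypothesis, which you correctly discharge by noting that $o_i$ is produced by Algorithm~\ref{AFSGD-A} and therefore lies in the range of the derivative by construction. One caveat worth flagging: your family for the second equation keeps $y_i$ fixed and varies $(\widehat{w}, x_i)$, which proves the lemma as literally stated but, for the label-security application in Section~\ref{securitysec}, you would additionally want to remark that the construction goes through for each admissible value of $y_i$, so that solutions with distinct labels coexist.
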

The proof of  lemma~\ref{infinite} is shown in the arXiv version. Based on lemma~\ref{infinite}, we obtain the following theorem.
\begin{theorem}\label{security}
Under two semi-honest threat models, VF{${\textbf{B}}^2$} can prevent the inference attack.
\end{theorem}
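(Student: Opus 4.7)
The plan is to enumerate, for each stage of the protocol, exactly what a given party observes, and to argue that from this view the private quantities $(x_i)_{\mathcal{G}_\ell}$, $w_{\mathcal{G}_\ell}$, and $y_i$ of other parties remain under-determined, invoking Lemma~\ref{infinite} wherever a bilinear or derivative relation appears. The messages observed by any non-owner party are: (i) partial sums of the masked terms $w_{\mathcal{G}_{\ell'}}^\top (x_i)_{\mathcal{G}_{\ell'}} + \delta_{\ell'}$ propagated along the tree $T_1$; (ii) partial sums of the standalone masks $\delta_{\ell'}$ propagated along $T_2$; (iii) the recovered scalar $w^\top x_i$; and (iv) the backward-broadcast scalar $\vartheta$ together with the index $i$. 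Everything else is kept local.

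For threat model~1 (honest-but-curious, no collusion), a single curious party $\ell^\star$ only sees (i) and (ii) along its own branches of $T_1$ and $T_2$. Since $T_1\neq T_2$ by construction of Algorithm~\ref{safer_tree}, no party ever simultaneously obtains both the masked partial product of a specific other party and the matching unmasked $\delta_{\ell'}$ needed to cancel it, so no individual $w_{\mathcal{G}_{\ell'}}^\top (x_i)_{\mathcal{G}_{\ell'}}$ can be recovered. The only scalars every party knows in the clear are $w^\top x_i$ and $\vartheta$: the first is a single bilinear equation in the unknown pair $(w_{\mathcal{G}_{\ell'}},(x_i)_{\mathcal{G}_{\ell'}})$ of any other party, and the second is $\partial \mathcal{L}(\widehat w^\top x_i, y_i)/\partial(\widehat w^\top x_i)$. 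Both match the hypotheses of Lemma~\ref{infinite}, so infinitely many solutions exist for $(w_{\mathcal{G}_{\ell'}},(x_i)_{\mathcal{G}_{\ell'}})$ and for $y_i$. Moreover, only active parties know the form of $\mathcal{L}$, which further prevents passive parties from even writing down a solvable equation for $y_i$.

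For threat model~2 (honest-but-colluding), I would treat an arbitrary colluding subset $\mathcal{C}$ and consider a targeted honest party $\ell'\notin\mathcal{C}$. The key observation is that, because $T_1$ and $T_2$ are totally different trees, in order for $\mathcal{C}$ to algebraically isolate $w_{\mathcal{G}_{\ell'}}^\top (x_i)_{\mathcal{G}_{\ell'}}$ by subtracting matching partial sums out of the two aggregations, $\mathcal{C}$ must already cover the internal aggregators along both tree paths surrounding $\ell'$. In that worst case, the only remaining honest contribution to $\xi_1-\xi_2$ is exactly $w_{\mathcal{G}_{\ell'}}^\top (x_i)_{\mathcal{G}_{\ell'}}$ itself, leaving the colluders with a single bilinear equation to which Lemma~\ref{infinite} again applies. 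The backward scalar $\vartheta$ continues to yield an underdetermined equation in $y_i$ by the same lemma, and colluding passive parties still lack $\mathcal{L}$.

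The main obstacle I anticipate is cleanly formalizing the colluding view, in particular showing that the requirement $T_1\neq T_2$ is precisely what blocks $\mathcal{C}$ from stripping off the masks of an outside party while still leaving enough ambiguity so that only a single bilinear or implicit-derivative equation survives. Once that bookkeeping over the two tree aggregations is in place, the remainder is a direct application of Lemma~\ref{infinite} to each candidate inference target $(x_i)_{\mathcal{G}_{\ell'}}$, $w_{\mathcal{G}_{\ell'}}$, and $y_i$, giving the theorem for both threat models simultaneously.
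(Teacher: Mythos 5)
Your proposal is correct and follows essentially the same route as the paper: reduce every adversarial view to a single under-determined equation (the masked/unmasked bilinear form $w_{\mathcal{G}_{\ell'}}^{\top}(x_i)_{\mathcal{G}_{\ell'}}$ for features and model, and the scalar $\vartheta$ for labels) and invoke Lemma~\ref{infinite}, conceding in the colluding case that the mask may be stripped but that the surviving equation still has infinitely many solutions. The extra bookkeeping you add about which internal aggregators a colluding set must cover is a finer-grained version of the paper's one-line concession, not a different argument.
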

\noindent{\bf Feature and model security:}
 During the aggregation, the value of $o_i = w_{\mathcal{G}_{\ell }}^{\top} {(x_i)}_{\mathcal{G}_{\ell }}$ is masked by $\delta_{\ell }$ and  just the value of  $w_{\mathcal{G}_{\ell }}^{\top} {(x_i)}_{\mathcal{G}_{\ell }}+ \delta_{\ell }$ is transmitted. Under threat model 1, one even can not access the true value of $o_i$, let alone using relation $o_i = w_{\mathcal{G}_{\ell }}^{\top} {(x_i)}_{\mathcal{G}_{\ell}}$ to refer $w_{\mathcal{G}_{\ell }}^{\top}$ and ${(x_i)}_{\mathcal{G}_{\ell}}$.  Under threat model 2, the random value $\delta_{\ell }$ has risk of being removed from  term $w_{\mathcal{G}_{\ell }}^{\top} {(x_i)}_{\mathcal{G}_{\ell }}+ \delta_{\ell }$ by colluding with other parties. Applying lemma~\ref{infinite} to this circumstance, and we have that even if the random value is removed  it is still impossible to exactly refer $w_{\mathcal{G}_{\ell }}^{\top}$ and ${(x_i)}_{\mathcal{G}_{\ell }}$. Thus, the aggregation process can prevent inference attack under two semi-honest threat models.\\
{\bf Label security}: When analyze the security of label, we do not consider the collusion between active parties and passive parties, which will make preventing labels from leaking meaningless. In the backward updating process, if a passive party $\ell$ wants to infer $y_i$ through the received $\vartheta$, it must solve the equation
 $\vartheta = \frac{\partial \mathcal{L}\left(\widehat{w}^{\top} x_{i}, y_{i}\right)}{\partial\left(\widehat{w}^{\top} x_{i}\right)}$. However, only $\vartheta$ is known to party $\ell$. Thus, following from lemma~\ref{infinite}, we have that it is impossible to exactly infer the labels. Moreover, the collusion between passive parties has no threats to the security of labels.  Therefore, the backward updating can prevent inference attack under two semi-honest threat models.

 From above analyses, we have that the feature security, label security and model security are guaranteed in VFB$^{2}$.
\section{Experiments}\label{secexp}
In this section, extensive experiments are conducted to demonstrate the efficiency, scalability and losslessness of our algorithms.  More experiments are presented in the arXiv version.\\
\noindent {\bf Experiment Settings:} All experiments are implemented on a machine with four sockets, and each sockets has 12 cores. To simulate the environment with multiple machines (or parties), we arrange an extra thread for each party to schedule its $k$ threads and support communication with (threads of) the other parties. We use MPI to implement the communication scheme. The data are partitioned  vertically and randomly  into $q$  non-overlapped parts with nearly equal number of features. The number of threads within each parties, \ie \ $k$, is set as $m$.  We use the training dataset or randomly select 80\% samples  as the training data, and the testing dataset or the rest as the testing data. An optimal learning rate $\gamma$ is chosen  from $\{5e^{-1},1e^{-1},5e^{-2},1e^{-2},\cdots\}$  with regularization coefficient  $\lambda=1e^{-4}$ for all experiments.
\begin{table}[!t]
\centering
\begin{tabular}{@{}ccccc@{}}
\toprule
\multirow{2}{*}{}
 & \multicolumn{2}{c}{Financial} & \multicolumn{2}{c}{Large-Scale}  \\ \cmidrule(l){2-3} \cmidrule(l){4-5}
 & $D_1$ & $D_2$ & $D_3$ & $D_4$ \\
 \midrule
\#Samples & 24,000 & 96,257 & 17,996 & 175,000  \\
\#Features & 90 & 92 & 1,355,191 & 16,609,143 \\ \bottomrule
\end{tabular}
\caption{Dataset Descriptions.}
\label{dataset}
\end{table}

\begin{figure*}[!t]
	\centering
	\begin{subfigure}{0.24\linewidth}
		\includegraphics[width=\linewidth]{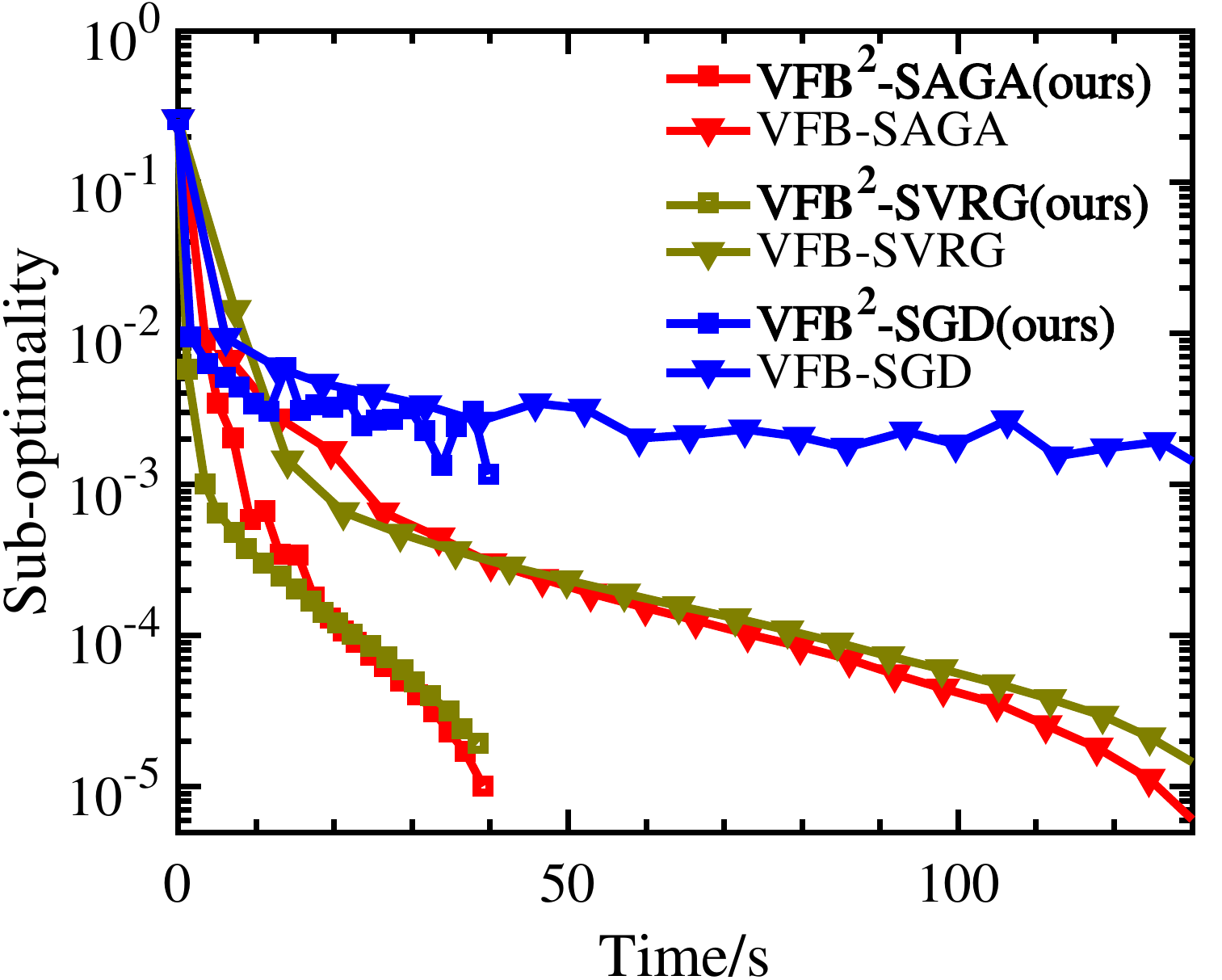}
		\caption{Data: $D_1$}
	\end{subfigure}
\begin{subfigure}{0.24\linewidth}
		\includegraphics[width=\linewidth]{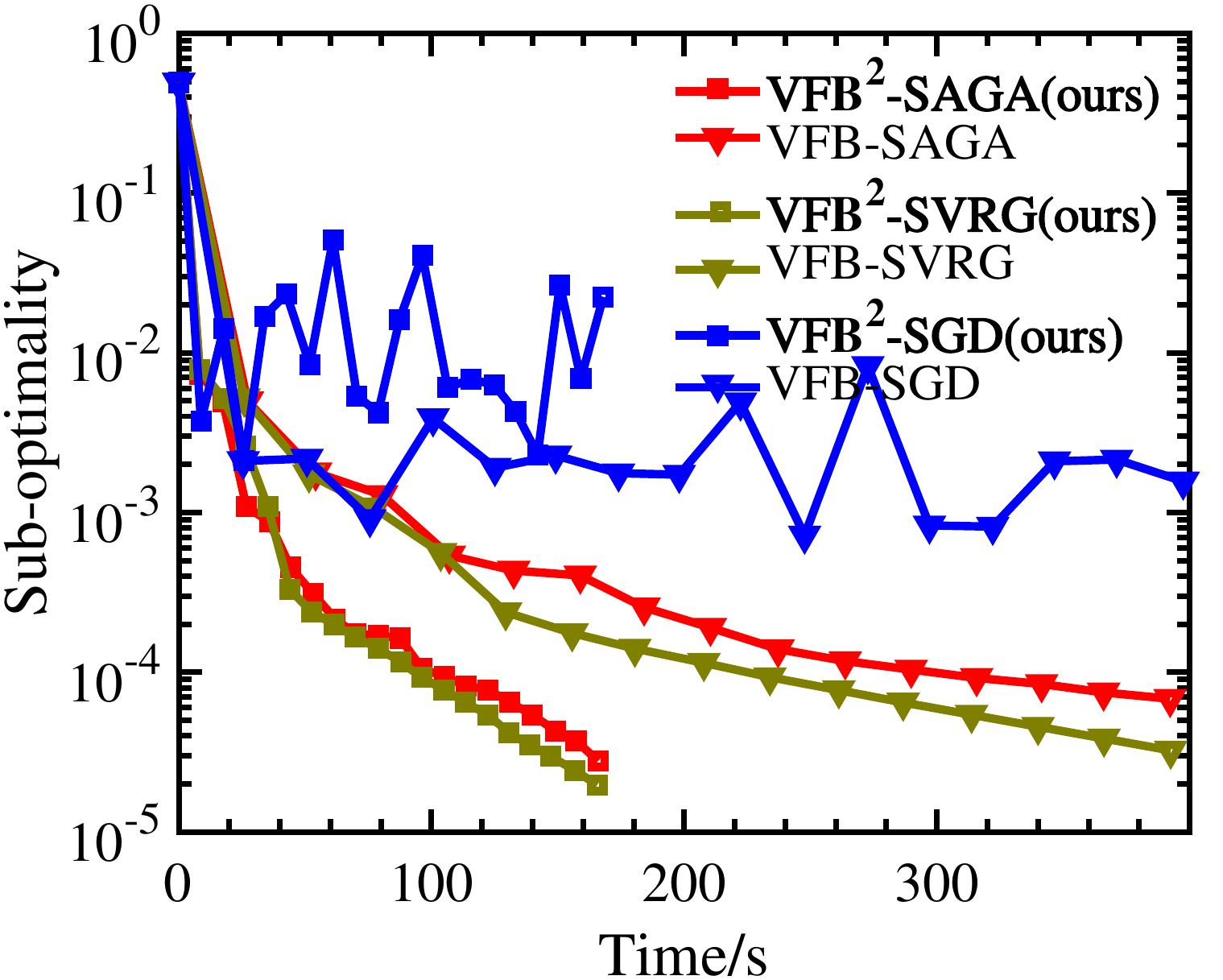}
		\caption{Data: $D_2$}
	\end{subfigure}
\begin{subfigure}{0.24\linewidth}
		\includegraphics[width=\linewidth]{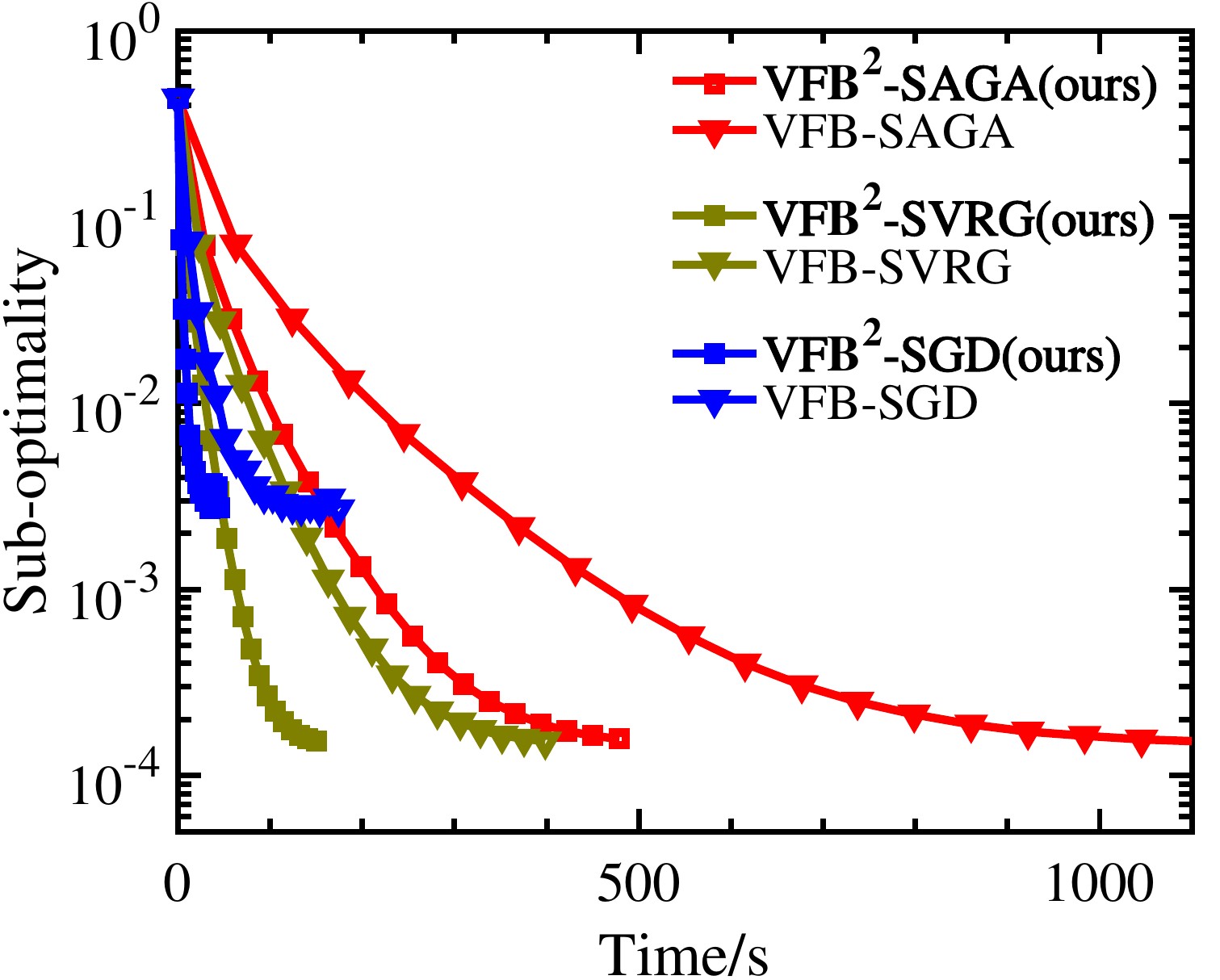}
		\caption{Data: $D_3$}
	\end{subfigure}%
	\begin{subfigure}{0.24\linewidth}
		\includegraphics[width=\linewidth]{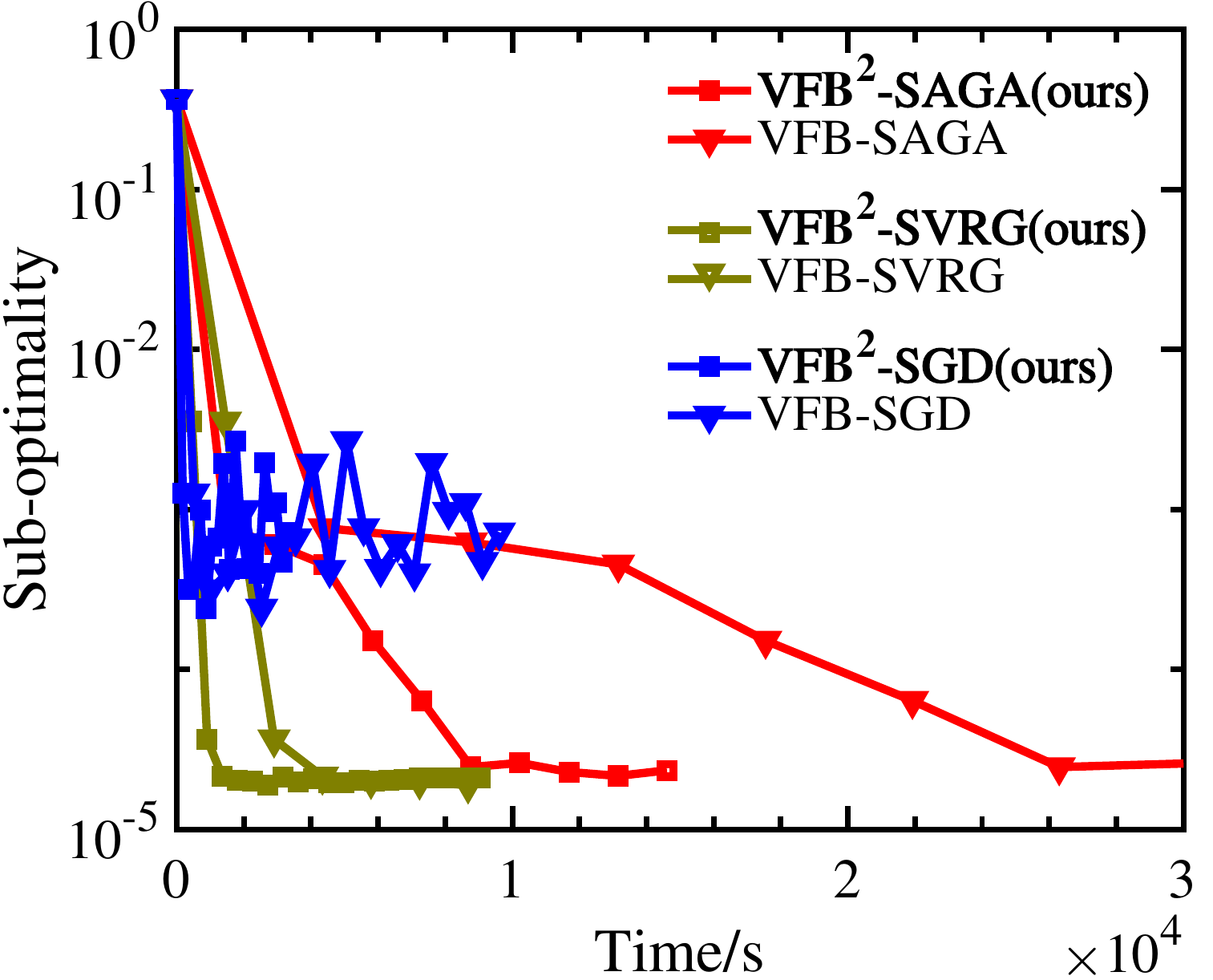}
		\caption{Data: $D_4$}
	\end{subfigure}
	\caption{Results  for solving $\mu$-strongly convex VFL models (Problem \ref{P1}), where the number of epoches (points) denotes how many passes over the dataset the algorithm makes.}
\label{Exp-con}
\end{figure*}
\begin{figure*}[!t]
	\centering
	\begin{subfigure}{0.24\linewidth}
		\includegraphics[width=\linewidth]{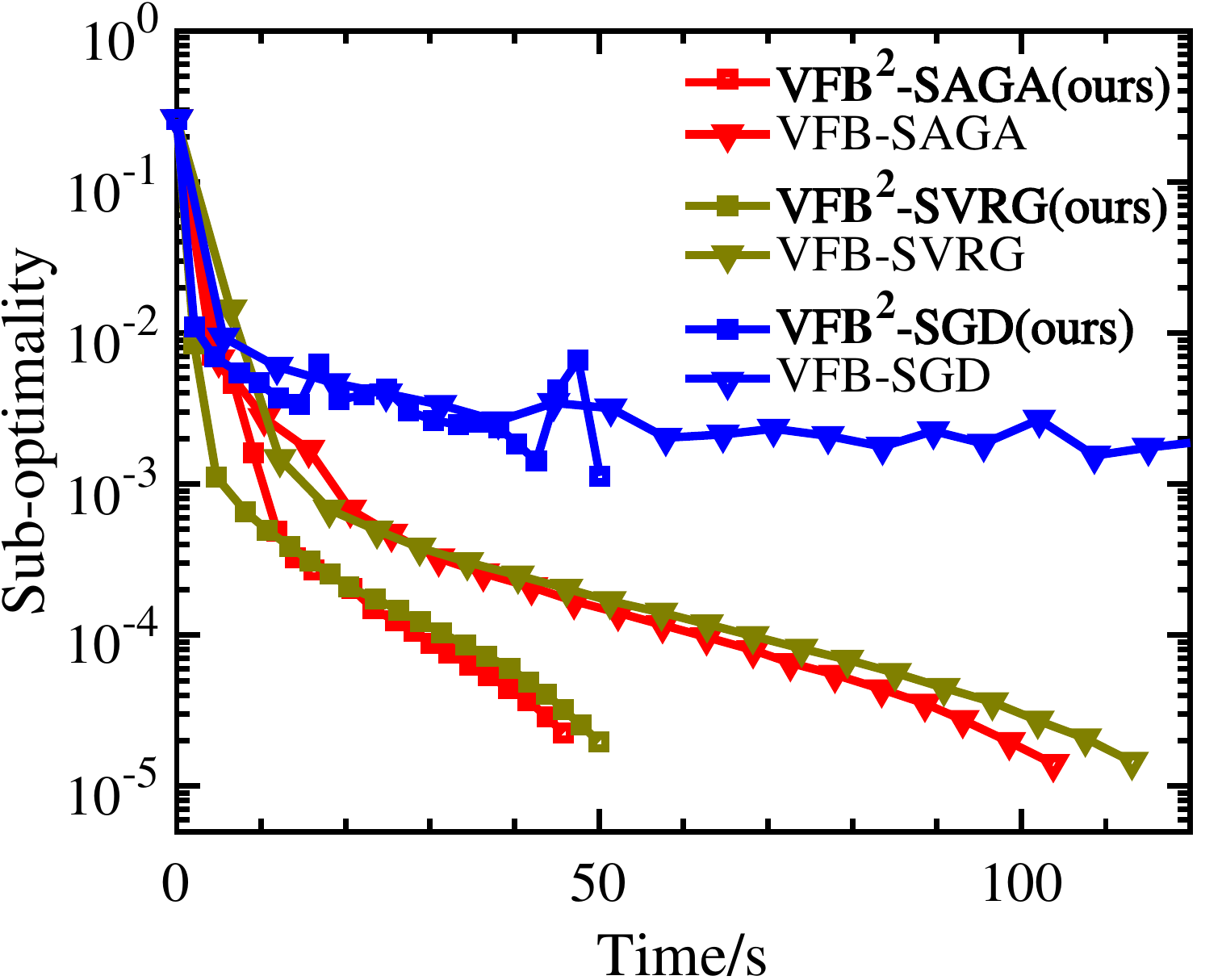}
		\caption{Data: $D_1$}
	\end{subfigure}
\begin{subfigure}{0.24\linewidth}
		\includegraphics[width=\linewidth]{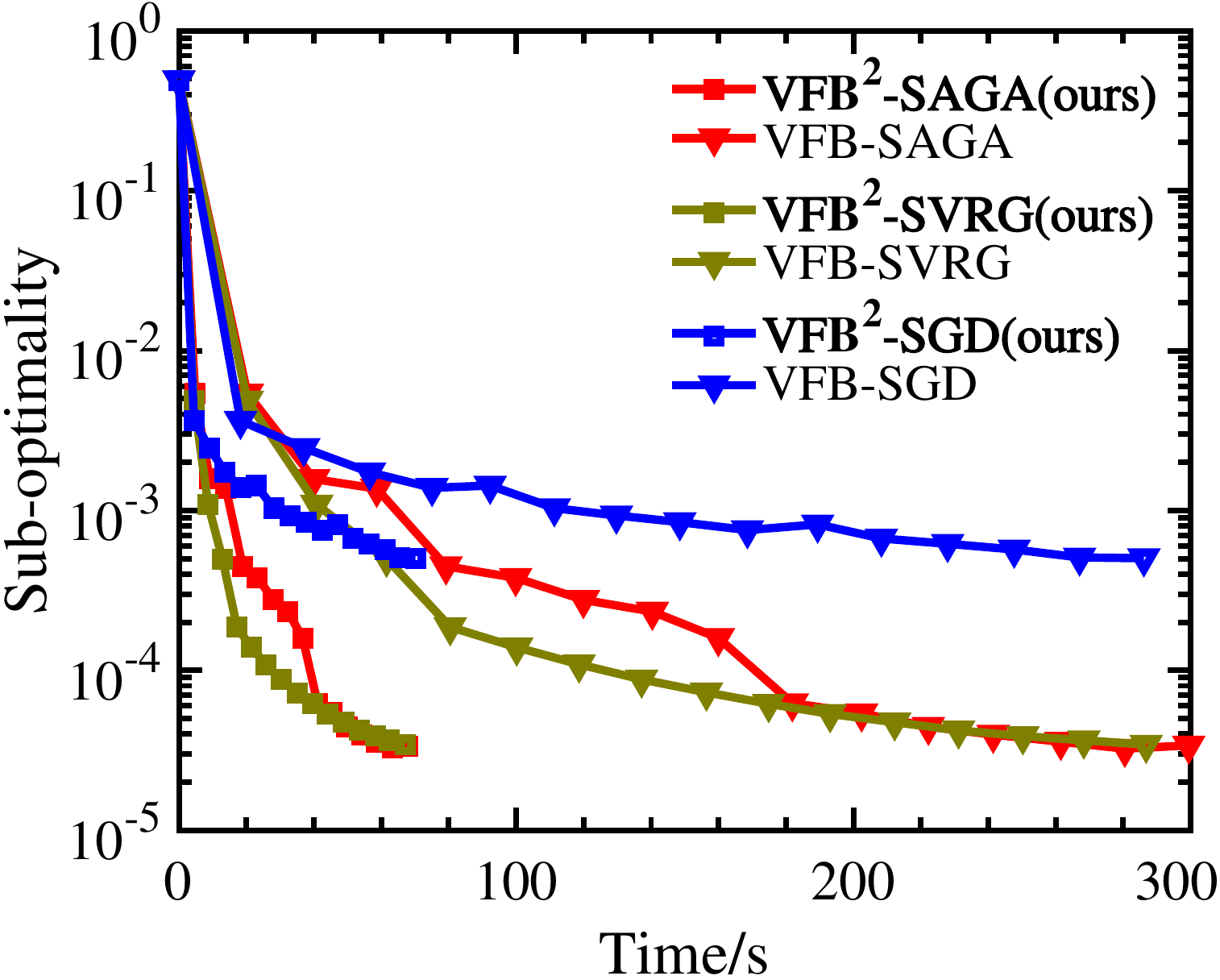}
		\caption{Data: $D_2$}
	\end{subfigure}
\begin{subfigure}{0.24\linewidth}
		\includegraphics[width=\linewidth]{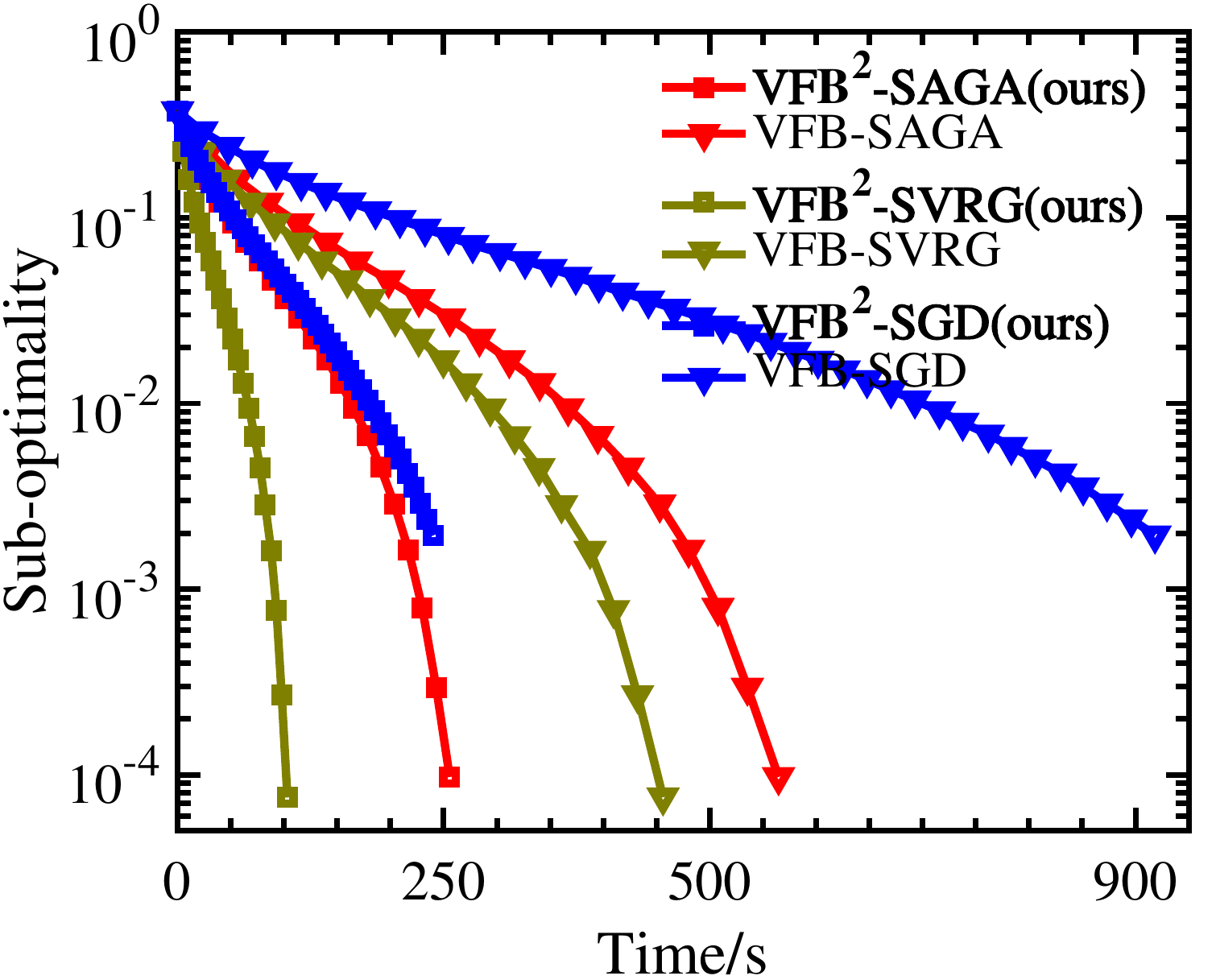}
		\caption{Data: $D_3$}
	\end{subfigure}%
	\begin{subfigure}{0.24\linewidth}
		\includegraphics[width=\linewidth]{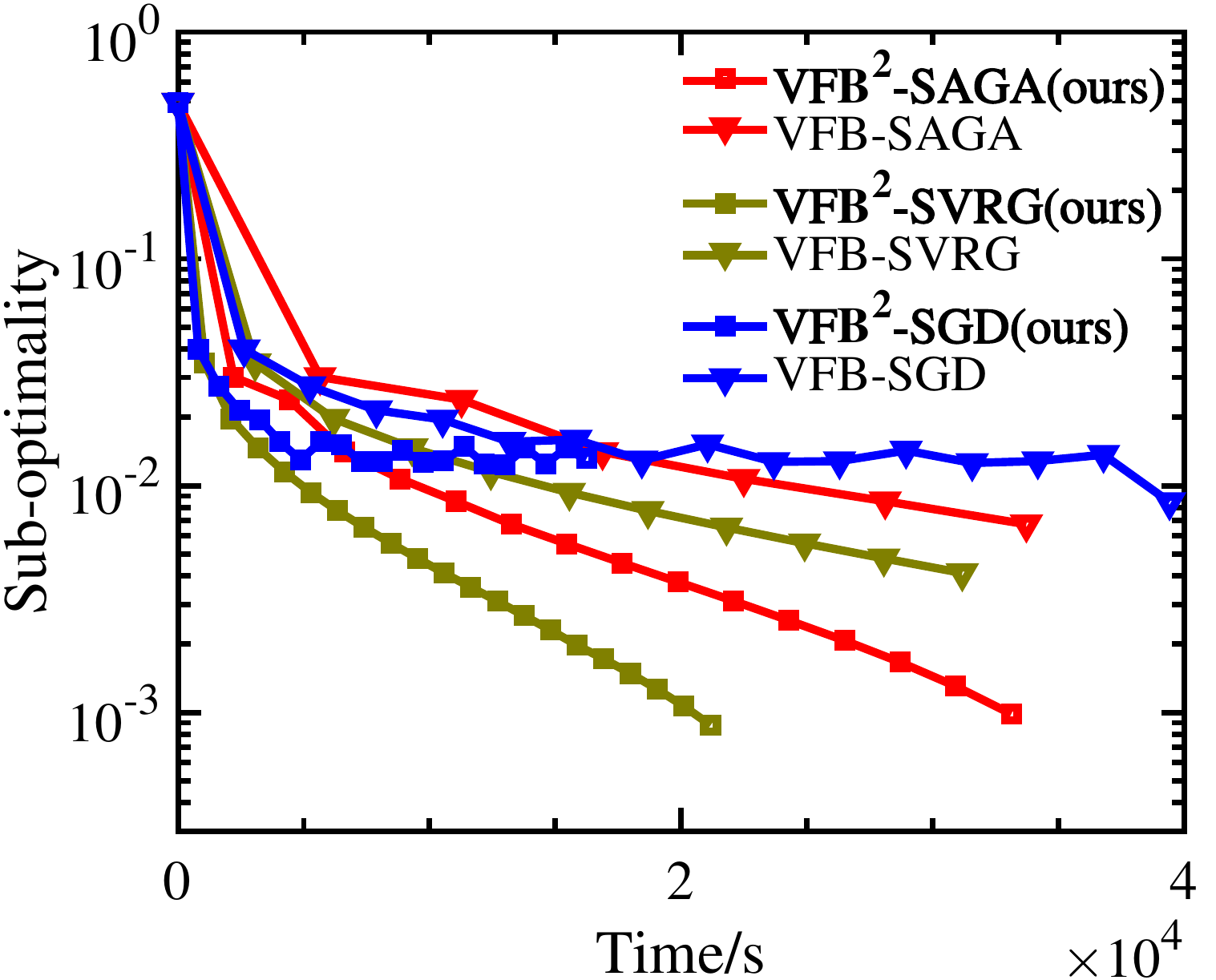}
		\caption{Data: $D_4$}
	\end{subfigure}
	\caption{Results for solving nonconvex VFL models (Problem \ref{P2}), where the number of epoches (points) denotes how many passes over the dataset the algorithm makes.}
\label{Exp-ncon}
\end{figure*}

\noindent {\bf Datasets:} We use four classification datasets summarized in Table~\ref{dataset} for evaluation. Especially, $D_1$ (UCICreditCard) and $D_2$ (GiveMeSomeCredit) are the real financial datsets from the Kaggle website\footnote{\url{https://www.kaggle.com/datasets}}, which can be used to demonstrate the ability to address real-world tasks; $D_3$ (news20) and $D_4$ (webspam) are the large-scale ones from the LIBSVM \cite{chang2011libsvm} website\footnote{\url{https://www.csie.ntu.edu.tw/cjlin/libsvmtools/datasets/}}. Note that we apply one-hot encoding to categorical features of $D_1$ and $D_2$ , thus the number of features become 90 and 92, respectively. \\
\noindent {\bf Problems:} We consider $\ell_2$-norm regularized logistic regression problem for $\mu$-strong convex case
\begin{equation}\label{P1}
\min_{w \in \mathbb{R}^d} f(w):=\frac{1}{n} \sum_{i=1}^{n}  {\text{log}}(1+e^{-y_iw^{\top} x_i}) + \frac{\lambda}{2} \|w\|^2,
\end{equation}
and the nonconvex logistic regression  problem
\begin{equation}\label{P2}
\min_{w \in \mathbb{R}^d} f(w):=\frac{1}{n} \sum_{i=1}^{n}  {\text{log}}(1+e^{-y_iw^{\top} x_i}) + \frac{\lambda}{2} \sum_{i=1}^{d} \frac{w_i^2}{1+w_i^2} \nonumber.
\end{equation}

\subsection{Evaluations of Asynchronous Efficiency and Scalability}
To demonstrate the asynchronous efficiency, we introduce the synchronous counterparts of our algorithms (\emph{i.e.}, synchronous {VF}L algorithms with BUM, denoted as {VF\bf{B}}) for comparison.  When implementing the synchronous algorithms, there is a synthetic straggler party which may be 30\% to 50\% slower than the faster party to simulate the real application scenario with unbalanced computational resource.\\
{\noindent {\bf Asynchronous Efficiency:}} In these experiments, we set $q=8$, $m=3$ and fix the $\gamma$ for algorithms with a same SGD-type but in different parallel fashions.
As  shown in Figs.~\ref{Exp-con} and \ref{Exp-ncon}, the loss v.s. run
time curves demonstrate that our algorithms consistently outperform their synchronous counterparts regarding the efficiency.
\begin{table*}[!t]
\centering
\begin{tabular}{@{}cccccccc@{}}
\toprule
&Algorithm& $D_1$ & $D_2$ & $D_3$ & $D_4$ \\ \midrule
\multirow{3}{*}{Problem  (\ref{P1})}
 & NonF & 81.96\%$\pm$0.25\% & 93.56\%$\pm$0.19\% & 98.29\%$\pm$0.21\% & 92.17\%$\pm$0.12\% \\
  & AFSVRG-VP & 79.35\%$\pm$0.19\% & 93.35\%$\pm$0.18\% & 97.24\%$\pm$0.11\% & 89.17\%$\pm$0.10\% \\
 &{\bf{ Ours}} & 81.96\%$\pm$0.22\% & 93.56\%$\pm$0.20\% & 98.29\%$\pm$0.20\% & 92.17\%$\pm$0.13\% \\
 \midrule
\multirow{3}{*}{Problem (\ref{P2})}
& NonF & 82.03\%$\pm$0.32\% & 93.56\%$\pm$0.25\% & 98.45\%$\pm$0.29\% & 92.71\%$\pm$0.24\% \\
 & AFSVRG-VP & 79.36\%$\pm$0.24\% & 93.35\%$\pm$0.22\% & 97.59\%$\pm$0.13\% & 89.98\%$\pm$0.14\% \\
 & {\bf{ Ours}} & 82.03\%$\pm$0.34\% & 93.56\%$\pm$0.24\% & 98.45\%$\pm$0.33\% & 92.71\%$\pm$0.27\% \\
 \bottomrule
\end{tabular}
\caption{Accuracy of different algorithms to evaluate the losslessness  of our algorithms (10 trials).}
\label{exp-lossless}
\end{table*}

Moreover, from the perspective of  loss v.s. epoch number, we have that algorithms based on SVRG and SAGA have the better convergence rate than that of SGD-based algorithms which is consistent to the theoretical results.\\
{\noindent {\bf Asynchronous Scalability:}} We also consider the asynchronous speedup scalability in terms of the number of total parties $q$. Given a fixed $m$, $q$-parties speedup is defined as
\begin{equation}
  \text{$q$-parties speedup} =\frac{\text{Run time  of using  1 party}}{\text{Run time of using $q$ parties}},
\end{equation}
where run time is defined as time spending on reaching a certain precision of sub-optimality, \ie, $1e^{-3}$ for $D_4$. We implement experiment for {Problem (\ref{P2})}, results of which are shown in Fig.~\ref{Exp-sca}. As depicted in Fig.~\ref{Exp-sca}, our asynchronous algorithms has much better $q$-parties speedup scalability than synchronous ones and can achieve near linear speedup.
\subsection{Evaluation of Losslessness}
To demonstrate the losslessness of our algorithms, we compare VF${ {\textbf{B}}}^2$-SVRG with its non-federated (NonF) counterpart (all data are integrated together for modeling) and ERCR based algorithm but without BUM, \ie, AFSVRG-VP proposed in \cite{gu2020Privacy}. Especially, AFSVRG-VP also uses distributed SGD method but can not optimize the parameters corresponding to passive parties due to lacking labels. When implementing AFSVRG-VP, we assume that only half parties have labels, \ie, parameters corresponding to the features held by the other parties are not optimized. Each comparison is repeated 10 times with $m=3$, $q=8$, and a same stop criterion, \emph{e.g.}, $1e^{-5}$ for $D_1$. As shown  in Table~\ref{exp-lossless}, the accuracy of our algorithms are the same with those of NonF algorithms and are much better than those of AFSVRG-VP, which are consistent to our claims.

\section{Conclusion}
In this paper, we proposed a novel backward updating mechanism for the real VFL system where only one or partial parties have labels for training models. Our new algorithms enable all parties, rather than only active parties, to collaboratively update the model and also guarantee the algorithm convergence, which was not held in other recently proposed ERCR based VFL methods under the real-world setting. Moreover, we proposed a bilevel asynchronous parallel architecture to make ERCR based algorithms with backward updating more efficient in real-world tasks. Three practical SGD-type of algorithms were also proposed with theoretical guarantee.
\bibliography{ref}
\bibliographystyle{aaai21}
\clearpage
\onecolumn
\appendix 
\section*{\LARGE{Supplementary Materials}}
We present the related supplements in following sections.
\section{Explanation of the Bilevel Asynchronous Parallel Architecture }
When $m=1$, we just need to set the number of threads within each party as 1, then Bilevel Asynchronous Parallel Architecture (BAPA) reduces to a parallel architecture with multiple parties.  While, the updates on passive parties rely on the $\vartheta$ received from the only active party. In this case, the BAPA behaves likely (just behaves likely not the same as) the server-worker distributed-memory architecture in \cite{huo2017asynchronous} for there is a communication delay between the active and passive parties. The difference is that in our BAPA with $m=1$ the worker (\ie, passive parties $\ell$ in our BAPA) passively send the local $w_{\mathcal{G}_{\ell}}^{\top} {(x_i)}_{\mathcal{G}_{\ell}}$ to the other parties when $w^Tx_i$ is required instead of just actively sending the local $w_{\mathcal{G}_{\ell}}^{\top} {(x_i)}_{\mathcal{G}_{\ell}}$ to the only server (\ie, active in our BAPA). When $m=q$, then all parties hold labels and the BAPA reduces to the general shared-memory parallel architecture from the perspective of analysis.
\section{Supplements Related to Tree-Structured Communication}
\subsection{The definition and illustration of  totally different tree structures}
First, we present the definition of significantly different tree structures mentioned at step 5 in Algorithm \ref{safer_tree}.
\begin{definition}[Two significantly different tree structures\cite{gu2020federated}]\label{definatt3}
For two tree structures $T_1$ and $T_2$ on all parties $\{1,\cdots,q\}$, they are significantly different if there does not exist a subtree $\widehat{T}_1$ of  $T_1$ and a subtree $\widehat{T}_2$ of  $T_2$ whose size are larger than 1 and smaller than $T_1$ and $T_2$, respectively, such that leaf ($\widehat{T}_1$) = leaf ($\widehat{T}_2$).
\end{definition}
Then we present an illusion of the totally different tree structures in Fig.~\ref{tree_struc}.
\begin{figure}[H]
	\centering	
\begin{subfigure}{0.49\linewidth}
		\includegraphics[width=\linewidth]{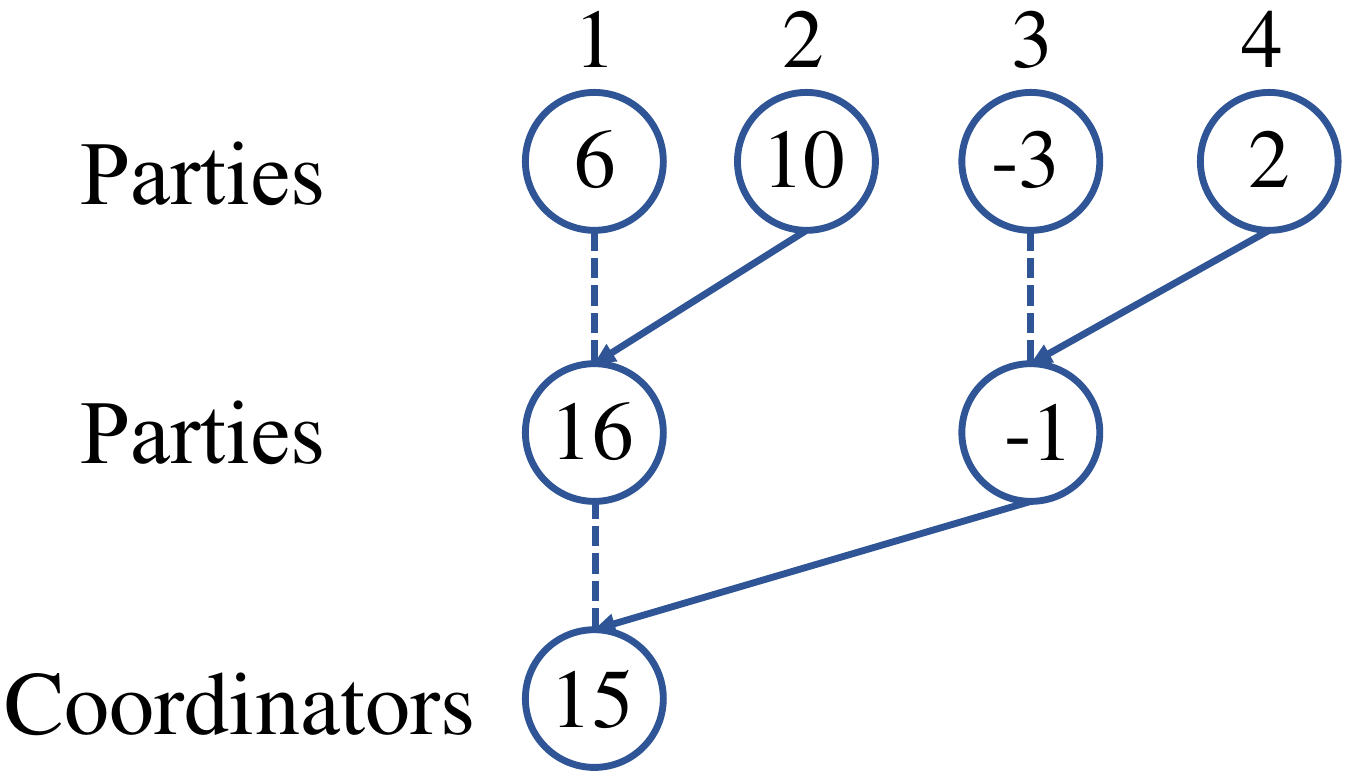}
		\caption{Tree structure $T_1$}
	\end{subfigure}%
	\begin{subfigure}{0.49\linewidth}
		\includegraphics[width=\linewidth]{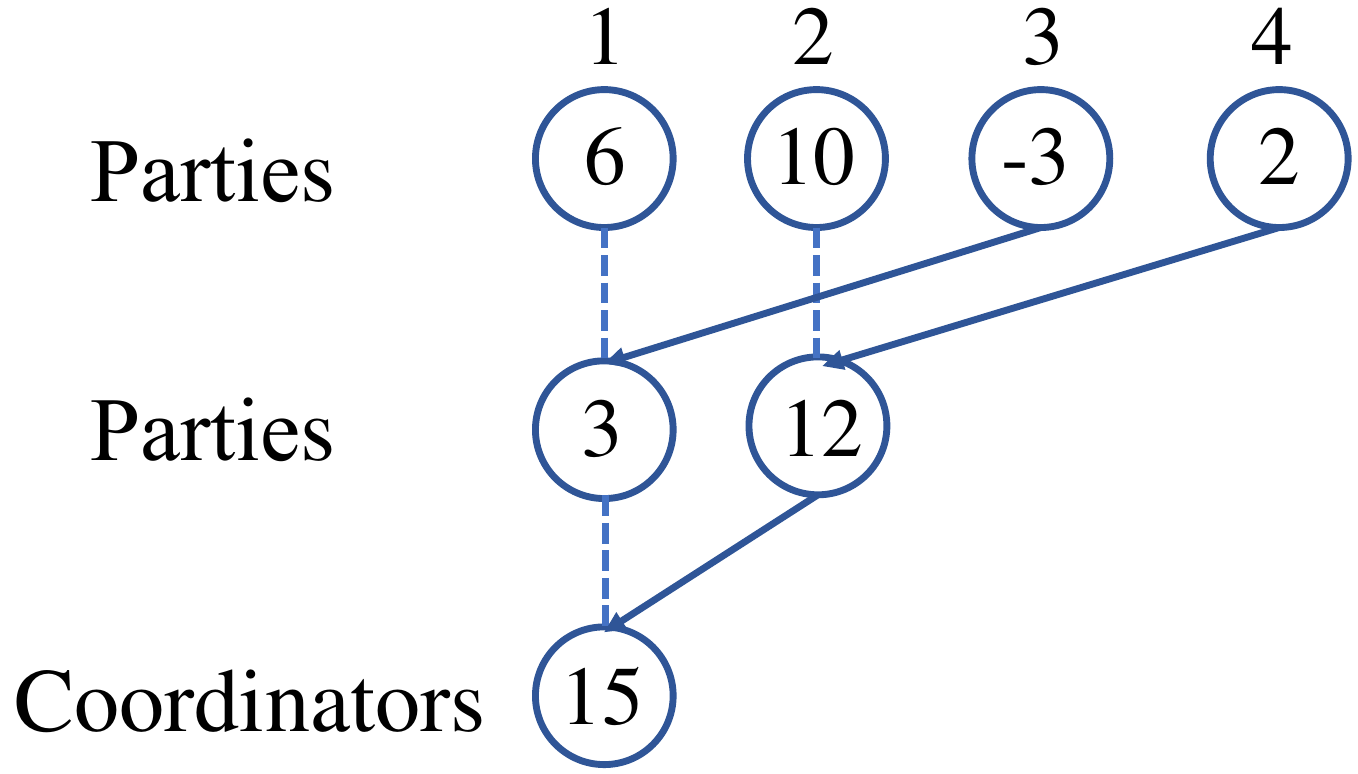}
		\caption{Tree structure $T_2$}
	\end{subfigure}%
	\caption{Illustration of tree-structured communication based on two totally different tree structures $T_1$ and $T_2$.}
\label{tree_struc}
\end{figure}
As depicted in Fig.~\ref{tree_struc} (a), party $1$ aggregates values from parties $1$  and $2$; party $3$ aggregates values from parties $3$ and $4$; and then party $1$, \ie, the aggregator, aggregates these two aggregated values from parties $1$ and $3$.  While, as depicted in Fig.~\ref{tree_struc} (b), party $1$ aggregates values from parties $1$ and $3$; party $2$ aggregates values from parties $2$ and $4$; and then the aggregated values are aggregated from parties $1$ and $2$ to party $1$, \ie, the aggregator. From the aggregation process describe above, it is easily to conclude that aggregation through such significantly different tree structures can prevent the leakage of the random value $\delta_{{\ell}}$ when there are no collusion between parties.
\subsection{An example showing collusion between parties}
Then we present an example to show that collusion between parties can remove the random value  $\delta_{\ell}$ added to $w_{\mathcal{G}_\ell}^{\top} {(x_i)}_{\mathcal{G}_\ell}$. Assume that $\{w_{\mathcal{G}_{\ell'}}^{\top} {(x_i)}_{\mathcal{G}_{\ell '}} + \delta_{\ell'}\}_{{\ell '}=1}^{q}$ are aggregated through tree structure $T_1$ and $\{ \delta_{\ell'}\}_{{\ell '}=1}^{q}$ are aggregated through tree structure $T_2$. In this case, party $3$ knows the value of $w_{\mathcal{G}_{\ell=4}}^{\top} {(x_i)}_{\mathcal{G}_{\ell=4}} + \delta_{\ell=4}$ and  party $2$ knows the value of $\delta_{\ell=4}$. Then if there is collusion between parties 2 and 3,  $\delta_{\ell=4}$  added to party $4$ can be removed from $w_{\mathcal{G}_{\ell=4}}^{\top} {(x_i)}_{\mathcal{G}_{\ell=4}} + \delta_{\ell=4}$.
\subsection{Proof of Lemma \ref{infinite}}
\begin{proof}
  First, we consider the equation $o_i = w_{\mathcal{G}_{\ell }}^{\top} {(x_i)}_{\mathcal{G}_{\ell }}$ with two cases, including $d_{\ell}\geq 2$ and $d_{\ell}=1$. For $\forall d_{\ell}\geq 2$, given an arbitrary non-identity orthogonal matrix $U\in \mathbb{R}^{d_\ell\times d_\ell}$, we have
   \begin{equation}\label{proofinfinite1}
     (w_{\mathcal{G}_{\ell }}^{\top}U^{\top})(U {(x_i)}_{\mathcal{G}_{\ell }}) = w_{\mathcal{G}_{\ell }}^{\top}(U^{\top}U) {(x_i)}_{\mathcal{G}_{\ell }} = w_{\mathcal{G}_{\ell }}^{\top} {(x_i)}_{\mathcal{G}_{\ell }}=o_i
   \end{equation}
 From Eq.~\ref{proofinfinite1}, we have that given an equation $o_i = w_{\mathcal{G}_{\ell }}^{\top} {(x_i)}_{\mathcal{G}_{\ell }}$ with only $o_i$ being known, the solutions corresponding to   $w_{\mathcal{G}_{\ell '}}$ and  ${(x_i)}_{\mathcal{G}_{\ell '}}$ can be represented as $(w_{\mathcal{G}_{\ell '}}^{\top}U^{\top})$ and  $(U {(x_i)}_{\mathcal{G}_{\ell '}})$, respectively. However, $U$ can be an arbitrary different non-identity orthogonal matrix, the solutions are thus infinite. If $d_\ell=1$, give an arbitrary real number $u\neq1$, we have
    \begin{equation}\label{proofinfinite2}
     (w_{\mathcal{G}_{\ell }}^{\top}u)(\frac{1}{u} {(x_i)}_{\mathcal{G}_{\ell }}) = w_{\mathcal{G}_{\ell }}^{\top}(u\frac{1}{u}) {(x_i)}_{\mathcal{G}_{\ell }} =  w_{\mathcal{G}_{\ell }}^{\top} {(x_i)}_{\mathcal{G}_{\ell }}=o_i
   \end{equation}
   Similar to above analysis, we have that the solutions of equation  $o_i = w_{\mathcal{G}_{\ell }}^{\top} {(x_i)}_{\mathcal{G}_{\ell }}$ are infinite when $d_\ell=1$. As for $o_i = \frac{\partial \mathcal{L}\left({w}^{\top} x_{i}, y_{i}\right)}{\partial\left({w}^{\top} x_{i}\right)}$, both ${w}^{\top} x_{i}$ and loss function are unknown , it is thus impossible to exactly infer the $y_{i}$. This completes the proof.
\end{proof}
\section{Detailed Algorithmic Steps of VF{${\textbf{B}}^2$}-SVRG and -SAGA}
In the following, we present the detailed algorithmic steps of VF{${\textbf{B}}^2$}-SVRG and -SAGA.
\subsection{VF{${\textbf{B}}^2$}-SVRG}
The proposed VF${\textbf{B}}^2$-SVRG with an improved convergence rate than VF${\textbf{B}}^2$-SGD is shown in Algorithms~\ref{afsvrg-active} and \ref{afsvrg-passive}. Different from VF${\textbf{B}}^2$-SGD directly using the stochastic gradient for updating, VF${\textbf{B}}^2$-SVRG  adopts the variance reduction technique to control the intrinsic variance of stochastic gradient.  Algorithm~\ref{afsvrg-active} thus computes $\widetilde{v}^{\ell}: = \nabla_{\mathcal{G}_{\ell}} f_{i}(\widehat{w})-\nabla_{\mathcal{G}_{\ell}} f_{i}\left(w^{s}\right)+\nabla_{\mathcal{G}_{\ell}} f\left(w^{s}\right)$. While for Algorithm~\ref{afsvrg-passive}, there is  $\widetilde{v}^{\ell}=\vartheta_1 \cdot (x_i)_{\mathcal{G}_{\ell}} + \nabla g(\widehat{w}_{\mathcal{G}_{\ell }}) - \left(\vartheta_{i,0} \cdot (x_i)_{\mathcal{G}_{\ell}}
  + \nabla g(w_{\mathcal{G}_{\ell}}^{s})\right)
  +\nabla_{\mathcal{G}_{\ell}} f\left(w^{s}\right)$., where $\nabla_{\mathcal{G}_{\ell}} f_{i}\left(w^{s}\right)$ is computed as $\left(\vartheta_2 \cdot (x_i)_{\mathcal{G}_{\ell }}+\nabla_{\mathcal{G}_\ell}g(w_{\mathcal{G}_{\ell}}^{s})\right)$.
\begin{algorithm}[h]
\caption{VF{${\textbf{B}}^2$}-SVRG for active party $\ell$ to actively launch dominated update}\label{afsvrg-active}
\begin{algorithmic}[1]
\REQUIRE {Local data $\{{(x_i)}_{\mathcal{G}_{\ell}},y_i\}_{i=1}^{n}$ stored on the $\ell$-th party, learning rate $\gamma$}.
\STATE Initialize $w_{\mathcal{G}_{\ell}} \in \mathbb{R}^{d_{\ell}}$.\\
\FOR {$s=0, 1, \ldots, S-1$}
\STATE  Compute $\left(w^{s}\right)^{\top} x_{i}$ for $i=1,\cdots,n$ based on Algorithm~\ref{safer_tree}.
  \STATE  Compute $\vartheta_{0,i} = \frac{\partial \mathcal{L}\left(({w}^s)^{\top} x_{i}  y_{i}\right)}{\partial\left(({w}^s)^{\top} x_{i}\right)}$ for $i=1, \cdots, n$ and the full local gradient $\nabla_{\mathcal{G}_{\ell}} f\left(w^{s}\right)=\frac{1}{n} \sum_{i=1}^{n} \nabla_{\mathcal{G}_{\ell}} f_{i}\left( {w}^{s}\right)$ , and then distribute all $\vartheta_0$ to the rest parties.
  \STATE $w_{\mathcal{G}_{\ell}}=w_{\mathcal{G}_{\ell}}^{s}$.\\
  { \bf{Keep doing in parallel (distributed-memory parallel for multiple active parties)}}
  \STATE \ \ Pick an index $i$ randomly from  $\{1,...,n\}$.
  \STATE \ \ Compute $\widehat{w}^{\top} x_{i}$ based on tree-structured communication.
  \STATE \ \ Compute $\vartheta_1 = \frac{\partial \mathcal{L}\left(\widehat{w}^{\top} x_{i}, y_{i}\right)}{\partial\left(\widehat{w}^{\top} x_{i}\right)}$.
  \STATE \ \ Sned $\vartheta_1$ and index $i$ to the rest parties.
  \STATE \ \ Compute $\widetilde{v}^{\ell}=\nabla_{\mathcal{G}_{\ell}} f_{i}(\widehat{w})-\nabla_{\mathcal{G}_{\ell}} f_{i}\left(w^{s}\right)+\nabla_{\mathcal{G}_{\ell}} f\left(w^{s}\right)$.
  \STATE \ \ Update $w_{\mathcal{G}_{\ell}} \leftarrow w_{\mathcal{G}_{\ell}}-\gamma \widetilde{v}^{\ell}$.\\
 \textbf{End parallel}
  \STATE $w_{\mathcal{G}_{\ell}}^{s+1}=w_{\mathcal{G}_{\ell}}$.
  \ENDFOR
\end{algorithmic}
\end{algorithm}
\begin{algorithm}[h]
\caption{VF{${\textbf{B}}^2$}-SVRG for the $\ell$-th party to passively launch collaborative updates.}\label{afsvrg-passive}
\begin{algorithmic}[1]
\REQUIRE {Local data $D^{\ell}$ stored on the $\ell$-th party, learning rate $\gamma$}
\STATE Initialize $w_{\mathcal{G}_{\ell}} \in \mathbb{R}^{d_{\ell}}$ (only performed on passive parties).
  \FOR {$s=0, 1, \ldots, S-1$}
  \STATE Receive all $\vartheta_{0,i}$ from the dominator and use them to compute the full local gradient $\nabla_{\mathcal{G}_{\ell}} f\left(w^{s}\right)=\frac{1}{n} \sum_{i=1}^{n} \nabla_{\mathcal{G}_{\ell}} f_{i}\left( {w}^{s}\right) = \frac{1}{n} \sum_{i=1}^{n} (\vartheta_{0,i} \cdot (x_i)_{\mathcal{G}_{\ell}}
  + \nabla g({w}^s_{\mathcal{G}_{\ell }}))$.
  \STATE $w_{\mathcal{G}_{\ell}}=w_{\mathcal{G}_{\ell}}^{s}$.
  \STATE  { \bf{Keep doing in parallel (shared-memory parallel for multiple threads)}}
  \STATE  \quad Receive $\vartheta_1$ and index $i$ from the dominator.
  \STATE \quad  Compute $\widetilde{v}^{\ell}=\vartheta_1 \cdot (x_i)_{\mathcal{G}_{\ell}}
  + \nabla g((\widehat{w})_{\mathcal{G}_{\ell }})
  -\left(\vartheta_{i,0} \cdot (x_i)_{\mathcal{G}_{\ell}}
  + \nabla g(w_{\mathcal{G}_{\ell}}^{s})\right)
  +\nabla_{\mathcal{G}_{\ell}} f\left(w^{s}\right)$.
  \STATE  \quad Update $w_{\mathcal{G}_{\ell}} \leftarrow w_{\mathcal{G}_{\ell}}-\gamma \widetilde{v}^{\ell}$.
  \STATE {\bf{End Parallel}}
  \ENDFOR
\end{algorithmic}
\end{algorithm}
\subsection{VF{${\textbf{B}}^2$}-SAGA}
VF${\textbf{B}}^2$-SAGA enjoying the same convergence rate with VF${\textbf{B}}^2$-SVRG is shown in Algorithms~\ref{afsaga-active} and \ref{afsaga-passive}. Different from VF${\textbf{B}}^2$-SVRG using ${w}^s$ as the reference gradient, VF${\textbf{B}}^2$-SAGA uses the average of history gradients stored in a table. In Algorithm~\ref{afsaga-active}, there is  $\widetilde{v}^{\ell}=\nabla_{\mathcal{G}_{\ell}} f_{i}(\widehat{w})-\widetilde{\alpha}_{i}^{\ell} + \frac{1}{n} \sum_{j=1}^{n} \widetilde{\alpha}_{j}^{\ell}$. While, in Algorithm~\ref{afsaga-passive}, there is $\widetilde{v}^{\ell} = \vartheta \cdot (x_i)_{\mathcal{G}_{\ell }}
  + \nabla g((\widehat{w})_{\mathcal{G}_{\ell }})
  - \widetilde{\alpha}_{i}^{\ell}
  +\frac{1}{n} \sum_{j=1}^{n} \widetilde{\alpha}_{j}^{\ell}$.
\begin{algorithm}[h]
\caption{VF${\textbf{B}}^2$-SAGA for active party $\ell$ to actively launch dominated update}\label{afsaga-active}
\begin{algorithmic}[1]
\REQUIRE {Local data $\{{(x_i)}_{\mathcal{G}_{\ell}},y_i\}_{i=1}^{n}$ stored on the $\ell$-th party, learning rate $\gamma$}.
\STATE Initialize $w_{\mathcal{G}_{\ell}} \in \mathbb{R}^{d_{\ell}}$.\\
  \STATE Compute the local gradient $\widehat{\alpha}_{i}^{\ell}=\nabla_{\mathcal{G}_{\ell}} f_{i}(\widehat{w})$, for $\forall i \in\{1, \ldots, n\}$ and $\ell = 1,\cdots,q$ through tree-structured communication.
  (this is performed only at the $1$-th global iteration)\\
 { \bf{Keep doing in parallel (distributed-memory parallel for multiple active parties)}}
  \STATE \quad Pick an index $i$ randomly from ${1,...,n}$.
  \STATE \quad Compute $\widehat{w}^{\top} x_{i}=\sum_{\ell^{\prime}=1}^{q}(\widehat{w})_{\mathcal{G}_{\ell^\prime}}^{\top} \left(x_{i}\right)_{\mathcal{G}_{\ell^{\prime}}}$
   based on tree-structured communication.
   \STATE \quad Compute $\vartheta = \frac{\partial \mathcal{L}\left(\widehat{w}^{\top} x_{i}y_{i}\right)}{\partial\left(\widehat{w}^{\top} x_{i}\right)}$
   \STATE  \quad Sent $\vartheta$ and index $i$ to collaborators.
  \STATE \quad Compute $\widetilde{v}^{\ell}=\nabla_{\mathcal{G}_{\ell}} f_{i}(\widehat{w})-\widetilde{\alpha}_{i}^{\ell}+\frac{1}{n} \sum_{j=1}^{n} \widetilde{\alpha}_{j}^{\ell}$.
  \STATE \quad Update $w_{\mathcal{G}_{\ell}} \leftarrow w_{\mathcal{G}_{\ell}}-\gamma \widetilde{v}^{\ell}$.
  \STATE \quad Update $\widetilde{\alpha}_{i}^{\ell} \leftarrow \nabla_{\mathcal{G}_{\ell}} f_{i}(\widehat{w})$.\\
\textbf{End parallel}
  \ENSURE {$w_{\mathcal{G}_\ell}$}
\end{algorithmic}
\end{algorithm}
\begin{algorithm}[h]
\caption{VF{${\textbf{B}}^2$}-SAGA for the $\ell$-th party to passively launch collaborative updates.}\label{afsaga-passive}
\begin{algorithmic}[1]
 \STATE { \bf{Keep doing in parallel (shared-memory parallel for multiple threads)}}
  \STATE  \quad Receive $\vartheta$ and index $i$ from the dominator.
  \STATE \quad Compute $\widetilde{v}^{\ell}=\vartheta \cdot (x_i)_{\mathcal{G}_{\ell }}
  + \nabla_{\mathcal{G}_\ell}g((\widehat{w})_{\mathcal{G}_{\ell }})
  - \widetilde{\alpha}_{i}^{\ell}
  +\frac{1}{n} \sum_{j=1}^{n} \widetilde{\alpha}_{j}^{\ell}$.
  \STATE  \quad Update $w_{\mathcal{G}_{\ell}} \leftarrow w_{\mathcal{G}_{\ell}}-\gamma \widetilde{v}^{\ell}$.
  \STATE  \quad Update $\widetilde{\alpha}_{i}^{\ell} \leftarrow \vartheta \cdot (x_i)_{\mathcal{G}_{\ell }}
  + \nabla_{\mathcal{G}_\ell}g((\widehat{w})_{\mathcal{G}_{\ell }})$.
\STATE {\bf{End parallel}}
\end{algorithmic}
\end{algorithm}
\section{Additional Experiments on Regression Task}
These experiments are conducted on two datasets for regression task: $D_5$ (E2006-tfidf) and $D_6$ (YearPredictitionMSD) from the LIBSVM \cite{chang2011libsvm}. $D_5$ has $16,087$ training samples and $150,306$ features. $D_6$ has $463,715$ training samples and $90$ features. Moreover, we apply the min-max normalization technique to the target variables $y$ of $D_6$.\\
 {\noindent {\bf Problems:}
 We consider $\ell_2$-norm regularized  regression problem for $\mu$-strong convex case
\begin{align}\label{P3}
\vspace{-0.05cm}
\min_{w \in \mathbb{R}^d} f(w):=\frac{1}{n} \sum_{i=1}^{n}  ({w^{\top} x_i}-y_i)^2 + \frac{\lambda}{2} \|w\|^2,
\vspace{-0.05cm}
\end{align}
and the robust linear regression  for nonconvex problem
\begin{align}\label{P4}
\min_{w \in \mathbb{R}^d} f(w):=\frac{1}{n} \sum_{i=1}^{n}  \mathcal{L}(y_i -\left\langle {x_i}, {w} \right\rangle),
\end{align}
where $\mathcal{L} (x) :=\log (\frac{x^2}{2} + 1)$. \\
{\noindent {\bf Asynchronous efficiency:}}
In these experiments, we set $q=12$, $m=2$ and fix the $\gamma$ for algorithms with a same SGD-type but in different parallel fashions.
As  shown in Fig.~\ref{Exp-regression}, the loss v.s. running
time curves demonstrate that our algorithms consistently outperform the corresponding synchronous counterparts in terms of the efficiency. \\
\begin{figure}[h]
	\centering
\begin{subfigure}{0.24\linewidth}
		\includegraphics[width=\linewidth]{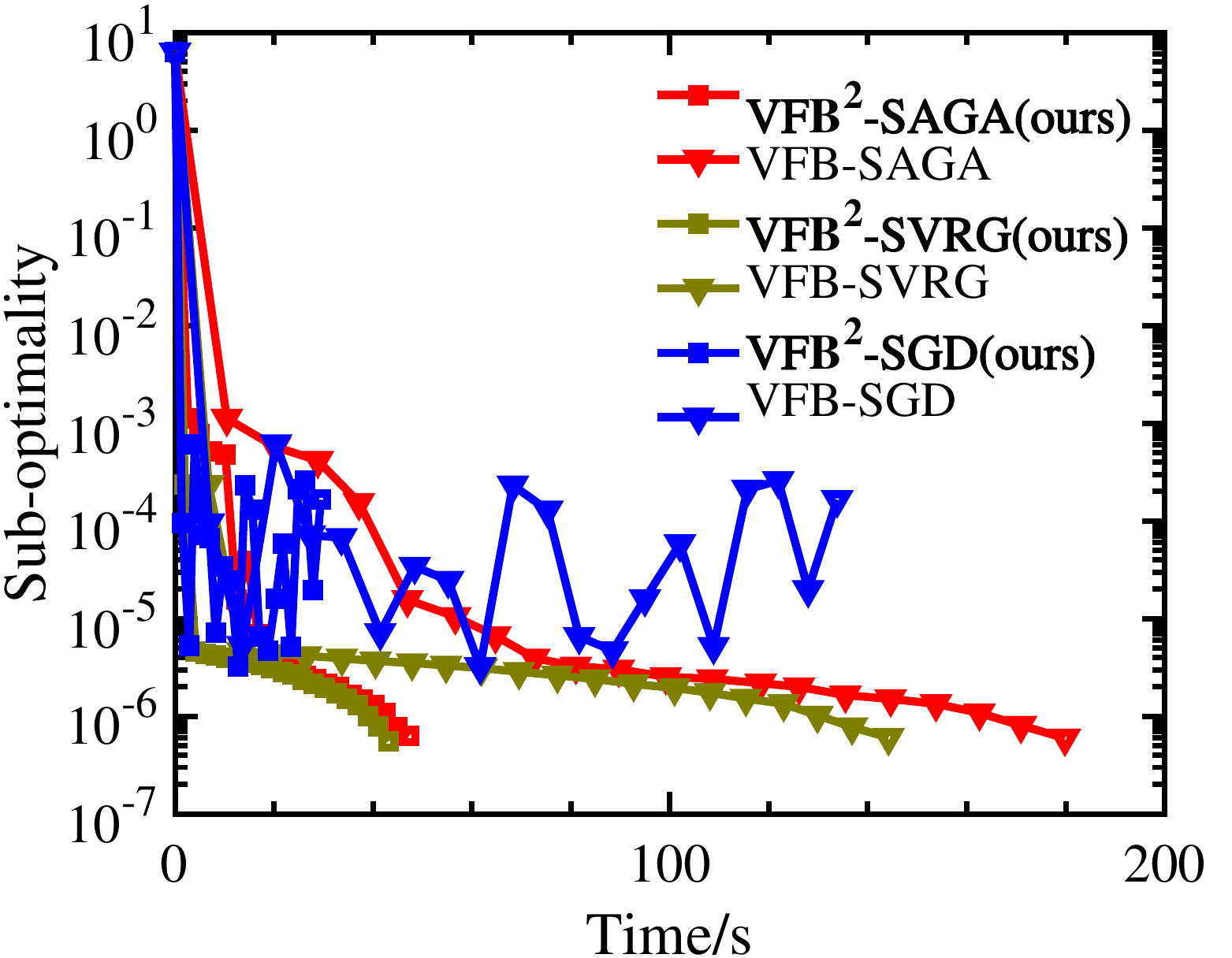}
		\caption{$D_5$ for Problem (\ref{P3})}
	\end{subfigure}
	\begin{subfigure}{0.24\linewidth}
		\includegraphics[width=\linewidth]{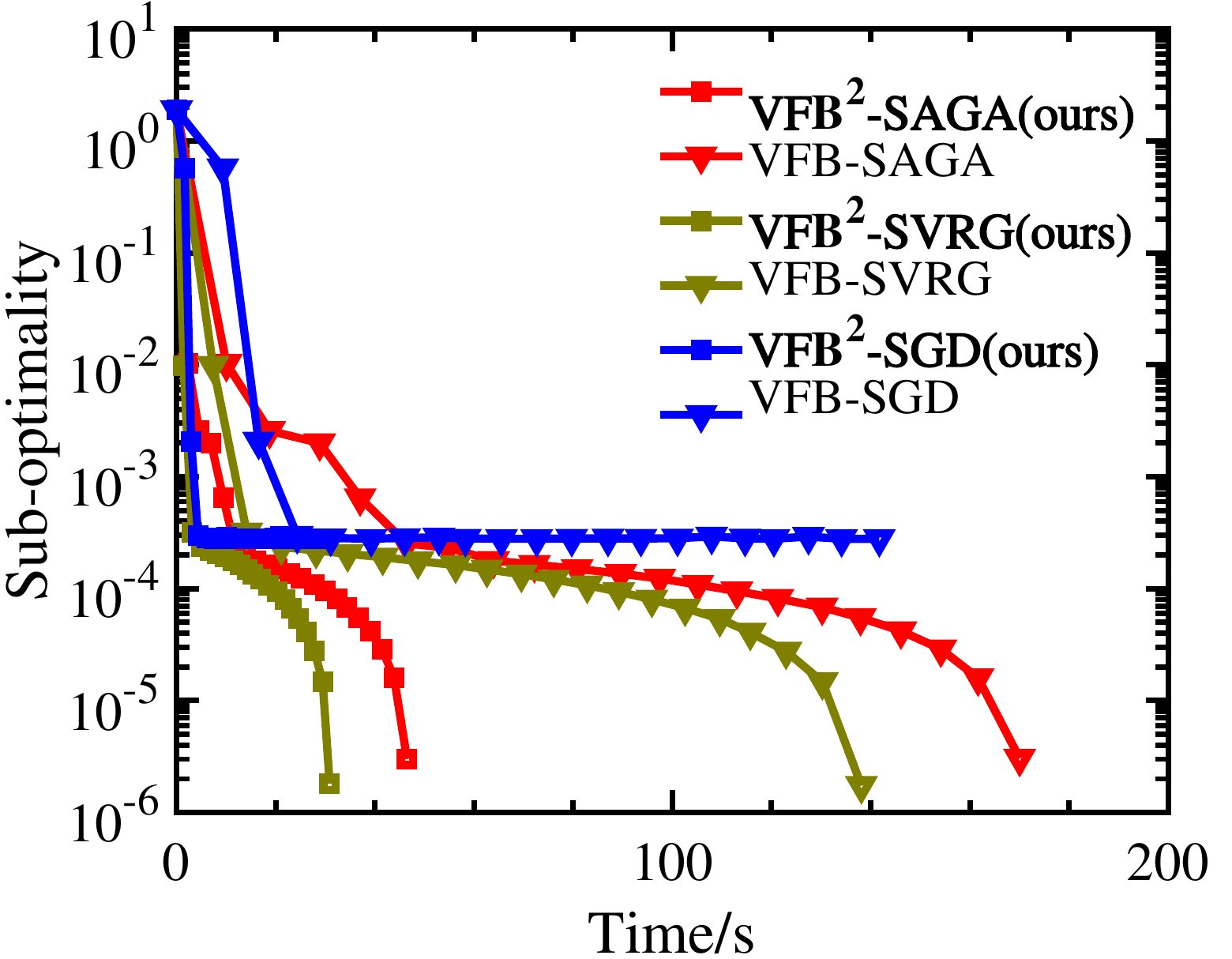}
		\caption{$D_5$ for Problem (\ref{P4})}
	\end{subfigure}
	\begin{subfigure}{0.24\linewidth}
		\includegraphics[width=\linewidth]{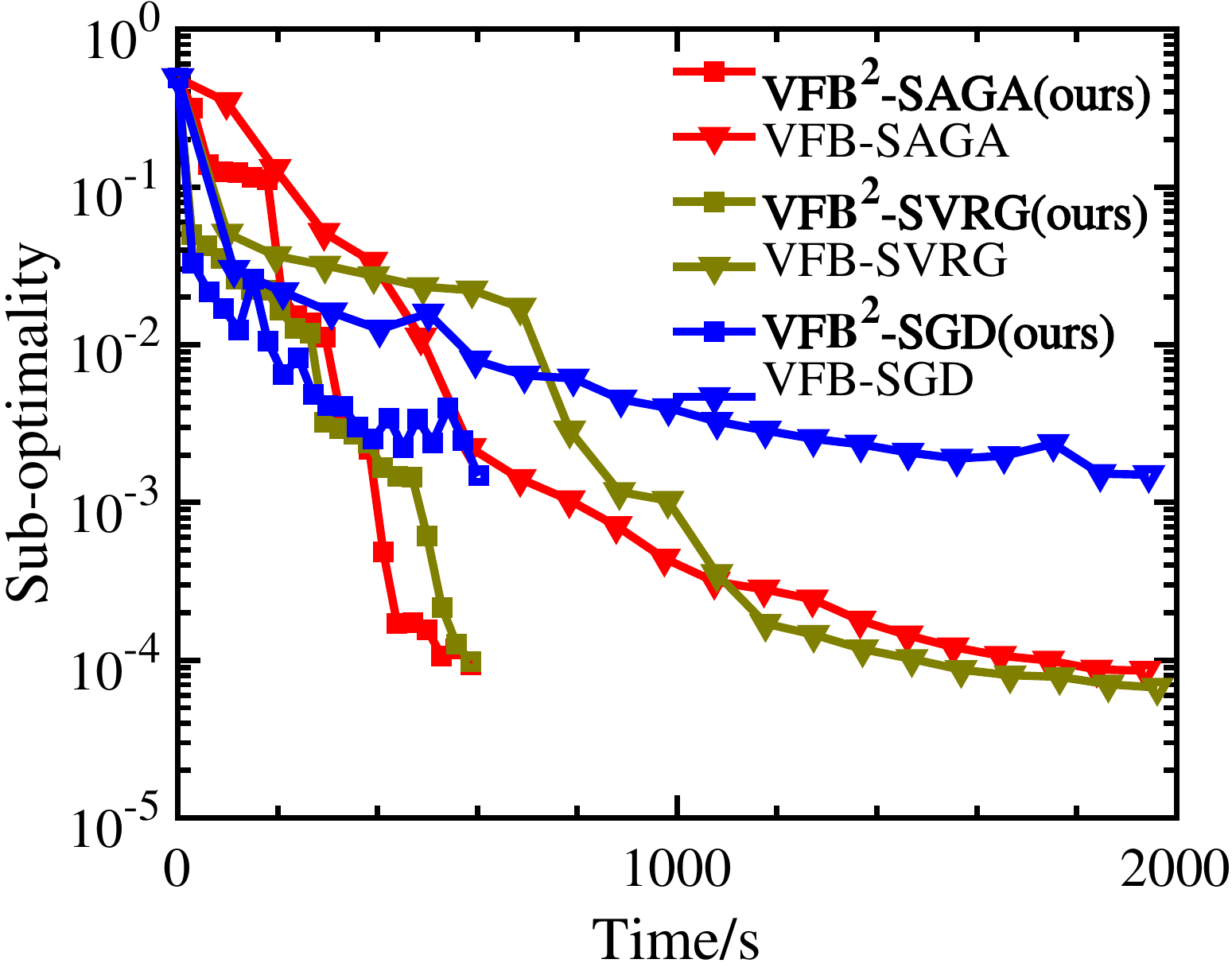}
		\caption{$D_6$ for Problem (\ref{P3})}
	\end{subfigure}
\begin{subfigure}{0.24\linewidth}
		\includegraphics[width=\linewidth]{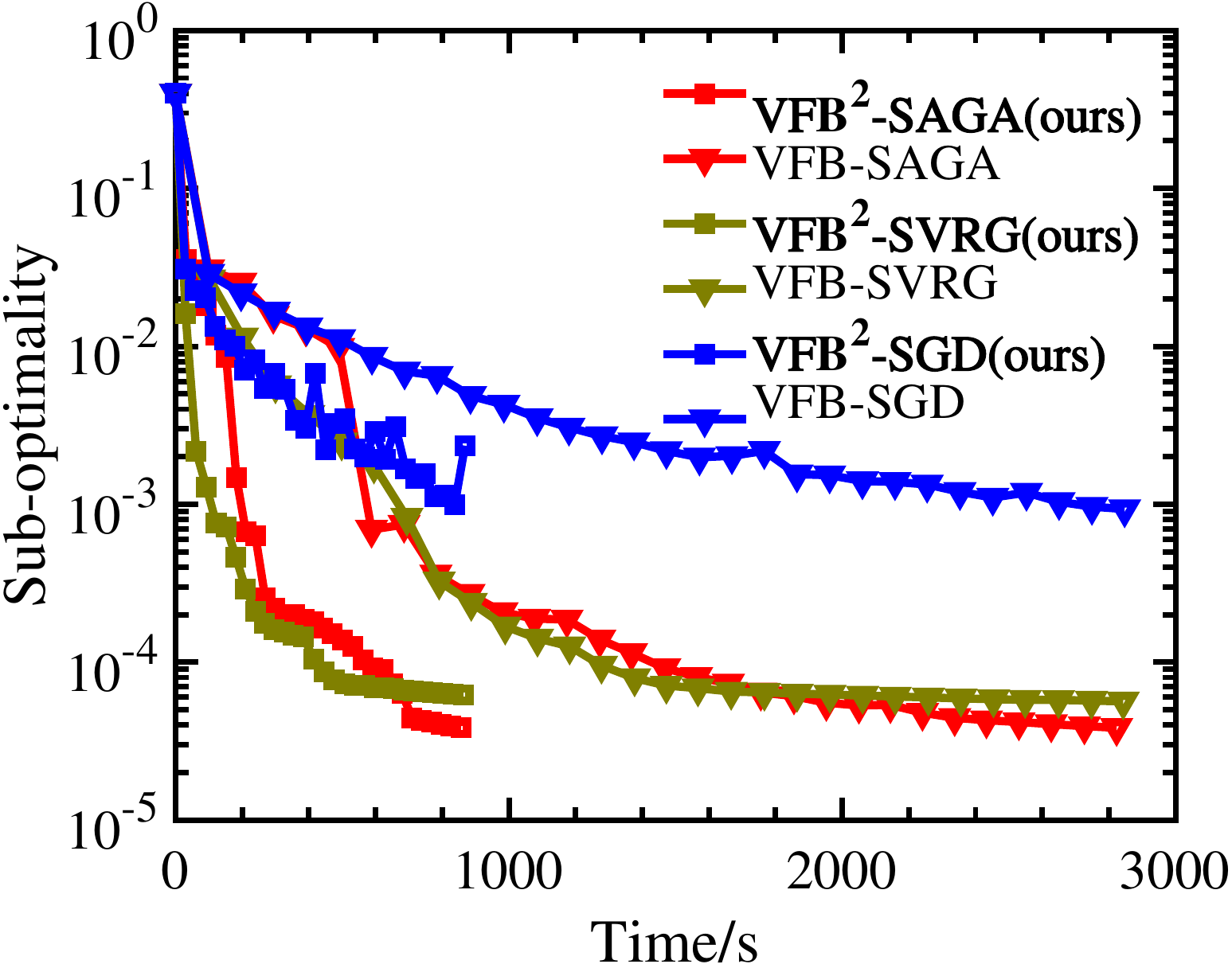}
		\caption{$D_6$ for Problem (\ref{P4})}
	\end{subfigure}%
	\caption{Results for solving regression tasks, where the number of epoches (points) denotes how many passes over the dataset the algorithm makes.}
\label{Exp-regression}
\end{figure}
\noindent{\bf{Evaluations of the losslessness}}
To demonstrate that our algorithms are lossless, we compare them with the corresponding non-federated (NonF) algorithms, \ie, all data were integrated together for modeling. For datasets without testing data, we split the data set into $5$ parts, and use one of them for testing. Moreover, we use the metric root mean square error (RMSE) for evaluation
\begin{align}\label{rmse}
 RMSE=\sqrt{\frac{1}{n}\sum_{i=1}^{n}(\hat{y}-y)^2},
\end{align}
where $\hat{y}$ denotes the prediction value and $y$ is the true value. As shown in Table~\ref{exp-lossless1}, the results of our algorithms are the same with those of NonF algorithms and are much better than those of AFSVRG-VP, which are consistent to our claims.
\begin{table}[!t]
\centering
\begin{tabular}{@{}ccccc@{}}
\toprule
& Algorithm & $D_5$(RMSE)  & $D_6$(RMSE)  \\ \midrule
\multirow{3}{*}{Problem  (\ref{P3})}
 & NonF & 0.389$\pm$0.012 & 0.069$\pm$0.004  \\
 &{{ AFSVRG-VP}} & 0.417$\pm$0.010 & 0.084$\pm$0.003  \\
  &{\bf{ Ours}} & 0.389$\pm$0.013 & 0.069$\pm$0.005  \\
 \midrule
\multirow{3}{*}{Problem (\ref{P4})}
 & NonF & 0.382$\pm$0.014 & 0.068$\pm$0.004  \\
 & {{ AFSVRG-VP}} & 0.415$\pm$0.009 & 0.084$\pm$0.004  \\
  &{\bf{ Ours}} & 0.382$\pm$0.013 & 0.068$\pm$0.005  \\
 \bottomrule
\end{tabular}
\caption{Evaluation of the losslessness  for regression task (10 trials).}
\label{exp-lossless1}
\end{table}
\subsection{Asynchronous scalability in terms of $q$}
W present a more clear illusion of the asynchronous scalability in terms of $q$ shown in Fig.~\ref{Exp-sca-1}.
\begin{figure}[!htb]
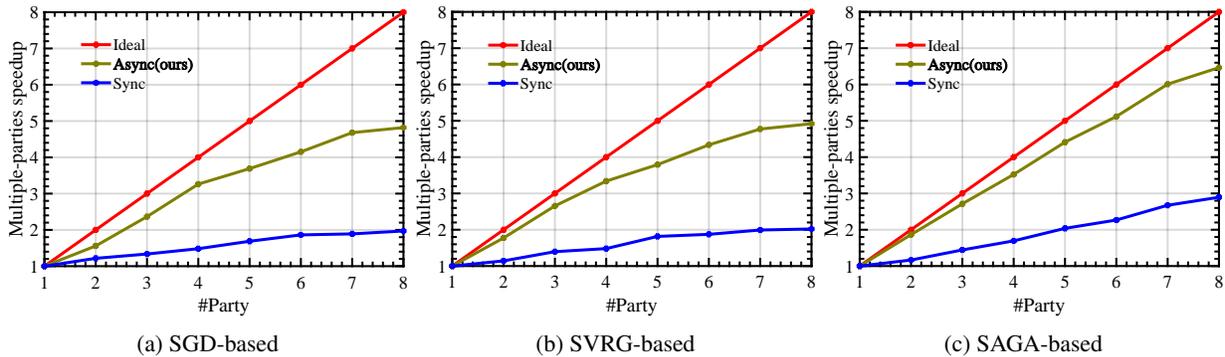

	\centering
	\begin{subfigure}{0.3\linewidth}
		\includegraphics[width=\linewidth]{figs/SGD.pdf}
		\caption{SGD-based}
	\end{subfigure}
\begin{subfigure}{0.3\linewidth}
		\includegraphics[width=\linewidth]{figs/SVRG.pdf}
		\caption{SVRG-based}
	\end{subfigure}
\begin{subfigure}{0.3\linewidth}
		\includegraphics[width=\linewidth]{figs/SAGA.pdf}
		\caption{SAGA-based}
	\end{subfigure}%
	\caption{ $q$-parties speedup scalability  with $m=2$ on  $D_4$.}
\label{Exp-sca-1}
\end{figure}
\section{Preliminaries for Convergence Analysis (corresponding to line 254 in the manuscript)}
In this section, we present some preliminaries which are helpful for readers to understand the analysis.
\\
\noindent{\bf{Globally labeling the iterates:}} As shown in the algorithms, we do not globally label the iterates from different parties. While, how to define the global iteration counter $t$ to label an iterate $w_t$ matters in the convergence analysis. In this paper, we adopts the ``after read'' labeling strategy \cite{leblond2017asaga}, where the global iterate counter is updated as one dominator  finishes computing $\widehat{w}_t^\top x_i$ or as one collaborator  finishes reading local parameters $(\widehat{w}_t)_{\mathcal{G}_{\psi(t)}} $ (this reading operation is performed after having received information from a specific dominator, \emph{e.g.,} step~3 in Algorithm~\ref{AFSGD-P}). It means that $\widehat{w}_t$ on a specific dominated parties is the $t+1$-th fully completed computation of $\widehat{w}^\top x_i$ and $(\widehat{w}_t)_{\mathcal{G}_{\psi(t)}}$ on a collaborative party is the $t+1$-th fully completed read of $({w}_t)_{\mathcal{G}_{\psi(t)}}$. Importantly, such a labeling strategy guarantees that $i_t$ and $\widehat{w}_t$ are independent \cite{leblond2017asaga}, which simplifies the  convergence analyses, especially, for VF{${\textbf{B}}^2$}-SAGA.
\\
\noindent{\bf{Global updating rule:}}  Here we introduce the global updating rule as
\begin{equation}\label{global-up1}
  w_{t+1} = w_t -\gamma U_{\psi(t)}\widetilde{v}_t^{\psi(t)}
\end{equation}
where $\widetilde{v}_t^{\psi(t)}$ has a different definition on different type of roles (dominator or collaborator). Although the definitions of $\widetilde{v}_t^{\psi(t)}$ are different on different type of roles, we will build uniform analyses for them.
\\
\noindent{ \bf{Relationship between $w_t$ and $\widehat{w}_t$:}} For dominators, $\widehat{w}^{T} x_{i}=\sum_{\ell^{\prime}=1}^{q}(\widehat{w})_{\mathcal{G}_{\ell '}}^{T}\left(x_{i}\right)_{\mathcal{G}_{\ell^{\prime}}}$ is obtained based on Algorithm~\ref{safer_tree} in an asynchronous parallel fashion, where $\widehat{w}$ denotes $w$ inconsistently read from different data parties. It means that, vector $(\widehat{w}_t)_{\mathcal{G}_{\ell '}}$ (where $\ell ' \neq \ell$) may be inconsistent to $({w}_t)_{\mathcal{G}_{\ell '}}$, i.e., some blocks of $\widehat{w}_t$ are the same with the ones in $w_t$ (e.g., $({w_t})_{\mathcal{G}_{\ell '}}=(\widehat{w}_t)_{\mathcal{G}_{\ell '}}$), but others are different.  Thus we introduce a set $D(t)$ in Eq.~\ref{Dt1} and the upper bound of its size is introduced in Assumption~\ref{assum4}.
\\
\noindent{ \bf{Relationship between $\bar{w}_t$ and $\widehat{w}_t$:}} For a collaborative party, it use $\vartheta$ received from dominated party to compute $\nabla_{\mathcal{G}_{\psi(t)}} \mathcal{L}$, and we donate  $\vartheta \cdot (x_i)_{\mathcal{G}_{\psi(t)}}$ at global iteration $t$ as $\nabla_{\mathcal{G}_{\psi(t)}} \mathcal{L}(\bar{w}_t)$. Since there is a communication delay between dominator  and collaborators, $\bar{w}_t$ maybe an old $\widehat{w}_u$ ($u\leq t$). To describe the relation between $\bar{w}_t$  and $\widehat{w}_u$, we thus introduce a set $D^\prime(t)$ in Eq.~\ref{Dt2} (when $u = t$, $D^\prime(t)$ denotes an empty set). Meanwhile, we introduce an upper bound to the communication delay in Assumption~\ref{assum4}.
\\
 {\noindent{\bf Introduction of $\widetilde{v}_t$  and  $\widehat{v}_t$}:} In Algorithms~\ref{AFSGD-A} and \ref{AFSGD-P}, we have that for a dominator, there is $\widetilde{v}_t=\widehat{v}_t$. While  for collaborators, there is $\widetilde{v}_t^{\psi(t)}=\vartheta \cdot (x_i)_{\mathcal{G}_\ell} + \nabla_{\mathcal{G}_\ell}g(\widehat{w})$ which can be rewritten as $\widetilde{v}_t^{\psi(t)}=\bar{v}_t^{\psi(t)}+ \nabla_{\mathcal{G}_\ell}g(\widehat{w})-\nabla_{\mathcal{G}_\ell}g(\bar{w})$.
\section{Convergence Analyses for Strongly Convex problems}
 \subsection{Convergence Analysis of Theorem~\ref{thm-sgdconvex}}
\begin{lemma}\label{lem-csgd-1}
For VF{${\textbf{B}}^2$}-SGD,  for  $\forall t$, there is
\begin{equation}\label{lemequ-csgd-1}
  \mathbb{E} ||\widetilde{v}_{t}^{\psi(t)}||^2 \leq \frac{2G}{1-\lambda_{1}}
\end{equation}
where there is $\lambda_{1}=2L_*^2\gamma^2\tau$.
\end{lemma}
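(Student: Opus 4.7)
The plan is to establish the uniform bound by a self-bounding (fixed-point) argument: one first writes $\widetilde{v}_t^{\psi(t)}$ as the true block stochastic gradient $\nabla_{\mathcal{G}_{\psi(t)}} f_{i_t}(w_t)$ plus a small perturbation, and then expresses that perturbation in terms of previously computed increments $\gamma U_{\psi(u)} \widetilde{v}_u^{\psi(u)}$ using Eqs.~\eqref{Dt1} and \eqref{Dt2}. Applying $\|a+b\|^2\le 2\|a\|^2+2\|b\|^2$, the ``true'' part is controlled by Assumption~\ref{assum1}.3 (bounded block gradients) as $\mathbb{E}\|\nabla_{\mathcal{G}_{\psi(t)}} f_{i_t}(w_t)\|^2 \le G$, which will contribute the $2G$ in the numerator.

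Next, I would handle the perturbation separately on the two roles. For a dominator, $\widetilde{v}_t^{\psi(t)} = \nabla_{\mathcal{G}_{\psi(t)}} f_{i_t}(\widehat{w}_t)$, so the error is $\nabla_{\mathcal{G}_{\psi(t)}} f_{i_t}(\widehat{w}_t)-\nabla_{\mathcal{G}_{\psi(t)}} f_{i_t}(w_t)$, bounded by $L_\ell\|\widehat{w}_t-w_t\|$ via Assumption~\ref{assum1}.2. For a collaborator, $\widetilde{v}_t^{\psi(t)} = \nabla_{\mathcal{G}_{\psi(t)}} \mathcal{L}(\bar{w}_t) + \lambda \nabla g((\widehat{w}_t)_{\mathcal{G}_{\psi(t)}})$; after adding and subtracting $\nabla_{\mathcal{G}_{\psi(t)}} f_{i_t}(w_t)$ and grouping by argument, the error decomposes into a loss part controlled by $L_\ell\|\bar{w}_t-w_t\|$ (Assumption~\ref{assum1}.2) and a regularizer part controlled by $L_g\|\widehat{w}_t-w_t\|$ (Assumption~\ref{assum2}). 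All Lipschitz constants are absorbed into $L_*$.

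I would then substitute $\widehat{w}_t-w_t = \gamma\sum_{u\in D(t)} U_{\psi(u)}\widetilde{v}_u^{\psi(u)}$ and $\bar{w}_t-w_t = \gamma\sum_{u\in D(t)\cup D'(t)} U_{\psi(u)}\widetilde{v}_u^{\psi(u)}$ into the Lipschitz bounds and invoke Cauchy--Schwarz, $\|\sum_{u\in S} a_u\|^2 \le |S|\sum_{u\in S}\|a_u\|^2$. Using $|D(t)|\le \tau_1$, $|D'(t)|\le \tau_2$ with $\tau = \max\{\tau_1^2,\tau_2^2,\eta_1^2\}$, all delay-indexed sums collapse to at most $\tau\cdot\sup_{u\le t}\mathbb{E}\|\widetilde{v}_u^{\psi(u)}\|^2$, yielding an inequality of the form
\begin{equation*}
\mathbb{E}\|\widetilde{v}_t^{\psi(t)}\|^2 \;\le\; 2G + 2L_*^2\gamma^2 \tau \cdot \sup_{u\le t}\mathbb{E}\|\widetilde{v}_u^{\psi(u)}\|^2.
\end{equation*}

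The proof then closes by taking the supremum over $t$ (equivalently, by induction on $t$: for small $t$ the delay sets are empty and the bound is trivially $\le 2G$, which is $\le 2G/(1-\lambda_1)$), giving $(1-\lambda_1)\sup_t \mathbb{E}\|\widetilde{v}_t\|^2 \le 2G$ and hence the desired $\mathbb{E}\|\widetilde{v}_t^{\psi(t)}\|^2 \le 2G/(1-\lambda_1)$. The main obstacle will be the collaborator analysis: one must make sure that when splitting $\widetilde{v}_t$ into the loss and regularizer perturbations and applying $\|a+b\|^2\le 2\|a\|^2+2\|b\|^2$ twice, the aggregated Lipschitz contribution is still $2L_*^2$ rather than a larger constant, so that the recursive coefficient matches exactly $\lambda_1=2L_*^2\gamma^2\tau$; this requires grouping $\bar{w}_t$- and $\widehat{w}_t$-perturbations together inside a single squared-norm before applying Cauchy--Schwarz, and using that $D(t)$ and $D'(t)$ together are still bounded by $\sqrt{\tau}$ in cardinality.
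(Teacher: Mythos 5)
Your overall architecture --- write $\widetilde{v}_t^{\psi(t)}$ as a block gradient bounded by $G$ plus a delay-induced perturbation, turn the perturbation into a sum of past increments via Eqs.~(\ref{Dt1})--(\ref{Dt2}), and close the resulting self-bounding recursion by induction (the base case $\mathbb{E}\|\widetilde{v}_0\|^2\le 2G$ and the step $M\le 2G+\lambda_1 M$ are exactly how the paper's geometric-series computation works) --- matches the paper. The gap is in where you center the decomposition, and it is precisely the obstacle you flagged: your fix does not work. Centering at $w_t$ forces \emph{two} perturbation terms for a collaborator (loss mismatch $\bar{w}_t$ vs.\ $w_t$, regularizer mismatch $\widehat{w}_t$ vs.\ $w_t$), and even after grouping them into a single norm you must eventually separate them --- they have different base points, different Lipschitz constants, and different delay sets ($\bar{w}_t-w_t$ spans $D(t)\cup D'(t)$ while $\widehat{w}_t-w_t$ spans only $D(t)$) --- so the triangle inequality followed by squaring costs another factor of $2$. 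Carried out, your route gives a recursion coefficient of at least $4L_*^2\gamma^2\tau$ (and a larger effective delay bound from $|D(t)\cup D'(t)|$), so it proves a uniform bound but not the stated one with $\lambda_1=2L_*^2\gamma^2\tau$.

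The paper avoids this entirely by choosing the base points that make the perturbation a \emph{single} term. For a dominator, $\widetilde{v}_t^{\psi(t)}=\nabla_{\mathcal{G}_{\psi(t)}}f_{i_t}(\widehat{w}_t)$ is already a block gradient evaluated at one point, so Assumption~\ref{assum1}.3 gives $\mathbb{E}\|\widetilde{v}_t^{\psi(t)}\|^2\le G$ with no Lipschitz argument at all. For a collaborator, the key observation is that $\vartheta\cdot(x_i)_{\mathcal{G}_{\psi(t)}}+\nabla_{\mathcal{G}_{\psi(t)}}g((\bar{w}_t)_{\mathcal{G}_{\psi(t)}})$ is exactly $\nabla_{\mathcal{G}_{\psi(t)}}f_{i_t}(\bar{w}_t)$, again a block gradient at a single point and hence $\le G$ in expectation; the algorithm's actual update differs from this only by the regularizer mismatch $\nabla_{\mathcal{G}_{\psi(t)}}g((\widehat{w}_t)_{\mathcal{G}_{\psi(t)}})-\nabla_{\mathcal{G}_{\psi(t)}}g((\bar{w}_t)_{\mathcal{G}_{\psi(t)}})$, controlled by $L_g\gamma\bigl\|\sum_{t'\in D'(t)}\widetilde{v}_{t'}^{\psi(t')}\bigr\|$ via Assumption~\ref{assum2} and Eq.~(\ref{Dt2}). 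One application of $\|a+b\|^2\le 2\|a\|^2+2\|b\|^2$ then yields $2G+2L_*^2\gamma^2\tau_2\sum_{t'\in D'(t)}\mathbb{E}\|\widetilde{v}_{t'}^{\psi(t')}\|^2$, which is the source of the exact constant $\lambda_1=2L_*^2\gamma^2\tau$. You should re-center your collaborator decomposition at $\bar{w}_t$ rather than $w_t$; the rest of your argument then goes through as written.
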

\begin{proof}[\textbf{Proof of Lemma \ref{lem-csgd-1}:}] If the $t$-th global iteration is a collaborative update we have
\begin{eqnarray}\label{csgd-0}
\mathbb{E} ||\widetilde{v}_{t}^{\psi(t)}||^2  &=& \mathbb{E} || \vartheta \cdot\left(x_{i}\right)_{\mathcal{G}_{\psi(t)}}
+ \nabla_{\mathcal{G}_{\psi(t)}} g((\widehat{w}_t)_{\mathcal{G}_{\psi(t)}} )||^2 \nonumber \\
&=&  \mathbb{E} ||\vartheta \cdot\left(x_{i}\right)_{\mathcal{G}_{\psi(t)}}
 + \nabla_{\mathcal{G}_{\psi(t)}} g((\bar{w}_{t})_{\mathcal{G}_{\psi(t)}})
 - \nabla_{\mathcal{G}_{\psi(t)}} g((\bar{w}_{t})_{\mathcal{G}_{\psi(t)}})
 + \nabla_{\mathcal{G}_{\psi(t)}} g((\widehat{w}_t)_{\mathcal{G}_{\psi(t)}}) ||^2 \nonumber \\
 &\stackrel{(a)}{\leq}& 2  \mathbb{E} || \bar{v}_{t}^{\psi(t)}\|^2
+ 2 \mathbb{E}\|\nabla_{\mathcal{G}_{\psi(t)}} g((\bar{w}_{t})_{\mathcal{G}_{\psi(t)}})
- \nabla_{\mathcal{G}_{\psi(t)}} g((\widehat{w}_t)_{\mathcal{G}_{\psi(t)}})||^2
\nonumber \\
&\stackrel{(b)}{\leq}& 2  \mathbb{E} || \bar{v}_{t}^{\psi(t)}\|^2
 + 2{L_{g}^2} \mathbb{E}\|(\bar{w}_{t})_{\mathcal{G}_{\psi(t)}}
 - (\widehat{w}_t)_{\mathcal{G}_{\psi(t)}}||^2
\nonumber \\
&\stackrel{(c)}{=}& 2  \mathbb{E} || \bar{v}_{t}^{\psi(t)}\|^2
+ 2{L_{g}^2}\gamma^2 \mathbb{E}\|\sum_{t^\prime\in D'(t), \psi(t^\prime)=\psi(t)} \widetilde{v}_{t^\prime}^{\psi(t^\prime)}||^2
\nonumber \\
&\stackrel{(d)}{\leq}& 2  \mathbb{E} || \bar{v}_{t}^{\psi(t)}\|^2
+ 2{L_{g}^2}\gamma^2 \tau_2\sum_{t^\prime\in D'(t)} \mathbb{E}\|\widetilde{v}_{t^\prime}^{\psi(t^\prime)}||^2
 \nonumber \\
&\stackrel{(e)}{\leq}& 2 \mathbb{E} || \bar{v}_{t}^{\psi(t)}\|^2
+ 2{L_{*}^2}\gamma^2 \tau_2 \sum_{t^\prime\in D'(t)}\mathbb{E}\|\widetilde{v}_{t^\prime}^{\psi(t^\prime)}||^2
 \nonumber \\
&\stackrel{(f)}{\leq}& 2 G
+ 2{L_{*}^2}\gamma^2 \tau_2 \sum_{t^\prime\in D'(t)}\mathbb{E}\|\widetilde{v}_{t^\prime}^{\psi(t^\prime)}||^2
\end{eqnarray}
where (a) follows from $\|a+b\|^2\leq 2\|a\|^2 + 2\|b\|^2$, (b) follows from Assumption~\ref{assum2}, (c) follows from the Eq.~\ref{Dt2}, (d) follows from Assumption~\ref{assum4} and $\|\sum_{i=1}^{n}a_i\|^2 \leq n \sum_{i=1}^{n} \|a_i\|^2$, (e) follows from definition of $L_{*}$, (f) follows from the definition of $\bar{v}_t$  and Assumption~\ref{assum1}.

If the $t$-th global iteration is a dominated update, there is
\begin{align}\label{csgd-1}
\mathbb{E} ||\widetilde{v}_{t}^{\psi(t)}||^2  = \mathbb{E} ||\widehat{v}_{t}^{\psi(t)}||^2 \leq G
\end{align}
Then for $\forall t$, according to Eqs.~\ref{csgd-0} and \ref{csgd-1}, we have
\begin{align}\label{csgd-2}
\mathbb{E} ||\widetilde{v}_{0}^{\psi(0)}||^2  \stackrel{(a)}{\leq}& G \leq 2G
\nonumber \\
 \mathbb{E} ||\widetilde{v}_{1}^{\psi(1)}||^2  \leq&
  2G + 2{L_{*}^2}\gamma^2 \tau_2 \sum_{t^\prime\in D'(1)}\mathbb{E}\|\widetilde{v}_{t^\prime}^{\psi(t^\prime)}||^2
  \stackrel{(b)}{\leq}
    2G + 2{L_{*}^2}\gamma^2 \tau_2^2 \mathbb{E} ||\widetilde{v}_{0}^{\psi(0)}||^2
  \stackrel{(c)}{=}
  2G\frac{1-k^{ 1 +1}}{1-k}
  \nonumber \\
  ...
  \nonumber \\
  \mathbb{E} ||\widetilde{v}_{t}^{\psi(t)}||^2 \leq &
   2G + 2{L_{*}^2}\gamma^2 \tau_2 \sum_{t^\prime\in D'(t)}\mathbb{E}\|\widetilde{v}_{t^\prime}^{\psi(t^\prime)}||^2
   \leq
   2G + 2{L_{*}^2}\gamma^2 \tau_2^2 (2G\frac{1-k^{t}}{1-k})
     \stackrel{(d)}{=}
    2G\frac{1-k^{t+1}}{1-k}
\end{align}
where (a) follows from that the $0$-th global iteration must be a dominated update, (b) follows from that for all $t' \in D'(t)$, there is $t' \leq t$, (c) follows from that $k:=2{L_{*}^2}\gamma^2 \tau_2^2$, (d) follows from the summation formula of equal ratio sequence. According to Eq.~\ref{csgd-2}, it holds that for $\forall t$ there is
\begin{align}\label{csgd-3}
  \mathbb{E} ||\widetilde{v}_{t}^{\psi(t)}||^2 \leq
    2G\frac{1-k^{t+1}}{1-k} \leq \frac{2G}{1-k}\leq \frac{2G}{1-2{L_{*}^2}\gamma^2 \tau}
\end{align}
where the last inequality follows from the definition of $\tau$.
This completes the proof.
\end{proof}
\begin{lemma}\label{lem-csgd-2}
For all $\forall t$, there is
\begin{equation}\label{1}
 \mathbb{E} \| {v}_{t}^{\psi(t)} - \widetilde{v}_{t}^{\psi(t)} \|^2 \leq2{ L_{{*}}^2  \gamma^2 \tau_1}  \sum_{t' \in D(t)} \mathbb{E} \|   \widetilde{v}^{\psi(t')}_{t'} \|^2
+ 8{ L_{*}^2  \gamma^2 \tau_2  \sum_{t' \in D^\prime(t)}} \mathbb{E} \|   \widetilde{v}^{\psi(t')}_{t'} \|^2
\end{equation}
\end{lemma}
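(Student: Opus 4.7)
The plan is to split on whether the $t$-th global iteration is a dominated update or a collaborative update, apply Lipschitz gradient on $f_i$ (for dominators) and on $\mathcal{L}$ and $g$ separately (for collaborators) to convert the gradient gap into an iterate gap, and then invoke Eqs.~\ref{Dt1} and \ref{Dt2} to rewrite those iterate gaps as sums of past $\widetilde{v}_{t'}^{\psi(t')}$.

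First I would handle the dominator case, where $\widetilde{v}_t^{\psi(t)} = \widehat{v}_t^{\psi(t)} = \nabla_{\mathcal{G}_{\psi(t)}} f_i(\widehat{w}_t)$ and $v_t^{\psi(t)} = \nabla_{\mathcal{G}_{\psi(t)}} f_i(w_t)$. By the block-coordinate Lipschitz gradient assumption together with the standard full Lipschitz bound, $\|\widetilde{v}_t^{\psi(t)} - v_t^{\psi(t)}\|^2 \le L_*^2 \|\widehat{w}_t - w_t\|^2$. Plugging in Eq.~\ref{Dt1} and applying $\|\sum_{u\in D(t)} a_u\|^2 \le |D(t)| \sum_u \|a_u\|^2 \le \tau_1 \sum_u \|a_u\|^2$ (Assumption~\ref{assum4}) gives a bound of the form $L_*^2 \gamma^2 \tau_1 \sum_{t'\in D(t)} \|\widetilde{v}_{t'}^{\psi(t')}\|^2$, which already sits inside the claimed inequality with room to spare.

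Next I would treat the collaborator case, which is the one that generates both terms in the lemma. Here $\widetilde{v}_t^{\psi(t)} = \vartheta \cdot (x_i)_{\mathcal{G}_{\psi(t)}} + \nabla_{\mathcal{G}_{\psi(t)}} g(\widehat{w}_t)$, which equals $\nabla_{\mathcal{G}_{\psi(t)}} \mathcal{L}(\bar{w}_t) + \nabla_{\mathcal{G}_{\psi(t)}} g(\widehat{w}_t)$, whereas $v_t^{\psi(t)} = \nabla_{\mathcal{G}_{\psi(t)}} \mathcal{L}(w_t) + \nabla_{\mathcal{G}_{\psi(t)}} g(w_t)$. I would decompose the difference with $\|a+b\|^2 \le 2\|a\|^2 + 2\|b\|^2$, then apply Lipschitz gradient on $\mathcal{L}$ (bounded by $L_*$) and on $g$ (bounded by $L_g \le L_*$) to obtain $2L_*^2 \|\bar{w}_t - w_t\|^2 + 2L_*^2 \|\widehat{w}_t - w_t\|^2$. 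Writing $\bar{w}_t - w_t = (\bar{w}_t - \widehat{w}_t) + (\widehat{w}_t - w_t)$ and applying $\|a+b\|^2 \le 2\|a\|^2 + 2\|b\|^2$ once more isolates the two delay regimes.

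Finally, substituting Eq.~\ref{Dt1} for $\widehat{w}_t - w_t$ and Eq.~\ref{Dt2} for $\bar{w}_t - \widehat{w}_t$, and applying the Cauchy-Schwarz-type inequality together with the delay bounds $|D(t)| \le \tau_1$ and $|D'(t)| \le \tau_2$, converts each term into the desired sum $\sum_{t'\in D(t)} \|\widetilde{v}_{t'}^{\psi(t')}\|^2$ or $\sum_{t'\in D'(t)} \|\widetilde{v}_{t'}^{\psi(t')}\|^2$ with explicit $L_*^2 \gamma^2 \tau_j$ prefactors. Combining the two cases through a worst-case envelope yields the uniform bound stated in Eq.~(\ref{1}). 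The only mildly delicate bookkeeping step, and the most likely source of off-by-a-constant mistakes, is keeping the $\|a+b\|^2 \le 2\|a\|^2 + 2\|b\|^2$ splittings synchronized with the $|D(t)|$, $|D'(t)|$ bounds so that the collaborator case ends up with precisely the coefficients $2 L_*^2 \gamma^2 \tau_1$ and $8 L_*^2 \gamma^2 \tau_2$ claimed in the lemma rather than looser constants.
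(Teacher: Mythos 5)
Your overall strategy --- Lipschitz bounds to convert gradient gaps into iterate gaps, then Eqs.~(\ref{Dt1}) and (\ref{Dt2}) together with $\|\sum_i a_i\|^2 \le n\sum_i\|a_i\|^2$ and the delay bounds --- is exactly the machinery the paper uses, and your dominator case matches. The discrepancy is in the collaborator case, and it is not merely cosmetic. The paper's first move is to pivot through the intermediate quantity $\widehat v_t^{\psi(t)}$ (the stochastic gradient evaluated at the inconsistent read $\widehat w_t$), splitting $v_t^{\psi(t)} - \widetilde v_t^{\psi(t)} = (v_t^{\psi(t)} - \widehat v_t^{\psi(t)}) + (\widehat v_t^{\psi(t)} - \widetilde v_t^{\psi(t)})$ at the top level: the first piece only sees the gap $w_t - \widehat w_t$ (hence only $D(t)$ and $\tau_1$, with coefficient $2L_*^2\gamma^2\tau_1$), while the second piece only sees the gap $\widehat w_t - \bar w_t$ (hence only $D'(t)$ and $\tau_2$, with coefficient $2\times 4 = 8$). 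Your order of decomposition --- first splitting the gradient into its $\mathcal{L}$- and $g$-parts to get $2L_*^2\|\bar w_t - w_t\|^2 + 2L_*^2\|\widehat w_t - w_t\|^2$, and only then splitting $\bar w_t - w_t$ through $\widehat w_t$ --- counts the $\widehat w_t - w_t$ gap twice and lands on $6L_*^2\gamma^2\tau_1\sum_{t'\in D(t)}\mathbb{E}\|\widetilde v_{t'}^{\psi(t')}\|^2 + 4L_*^2\gamma^2\tau_2\sum_{t'\in D'(t)}\mathbb{E}\|\widetilde v_{t'}^{\psi(t')}\|^2$. Since neither pair of constants dominates the other, this does not imply the inequality as stated, and the constants do matter downstream (they feed into the constant $C$ and the step-size conditions in Theorem~\ref{thm-sgdconvex}). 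You flagged the bookkeeping risk yourself; the fix is precisely to reorder the splits as above. One further small point: you invoke a Lipschitz constant for $\nabla\mathcal{L}$ directly, which is not among the stated assumptions; the paper sidesteps this by writing $\nabla_{\mathcal{G}_{\psi(t)}}\mathcal{L}(\bar w) - \nabla_{\mathcal{G}_{\psi(t)}}\mathcal{L}(\widehat w)$ as $[\nabla_{\mathcal{G}_{\psi(t)}}f(\bar w) - \nabla_{\mathcal{G}_{\psi(t)}}f(\widehat w)] - [\nabla_{\mathcal{G}_{\psi(t)}}g(\bar w) - \nabla_{\mathcal{G}_{\psi(t)}}g(\widehat w)]$ and then applying Assumptions~\ref{assum1} and~\ref{assum2}.
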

\begin{proof}[\textbf{Proof of  Lemma \ref{lem-csgd-2}:}]
First, we give the bound of $ \mathbb{E} \| \widehat{v}_{t}^{\psi(t)} - \widetilde{v}_{t}^{\psi(t)} \|^2$ as follow
\begin{align}\label{csgd-4}
 \mathbb{E} \| \widehat{v}_{t}^{\psi(t)} - \widetilde{v}_{t}^{\psi(t)} \|^2
 &  \stackrel{ (a) }{\leq} \mathbb{E} \| \nabla_{\mathcal{G}_{\psi(t)}} f(\bar{w}_t) - \nabla_{\mathcal{G}_{\psi(t)}} f(\widehat{w}_t) + \nabla_{\mathcal{G}_{\psi(t)}} g((\widehat{w}_t)_{\mathcal{G}_{\psi(t)}}) - \nabla_{\mathcal{G}_{\psi(t)}} g((\bar{w}_t)_{\mathcal{G}_{\psi(t)}}\|^2
\nonumber \\
& \stackrel{ (b) }{\leq} 2\mathbb{E} \| \nabla_{\mathcal{G}_{\psi(t)}} f_{i_t} (\bar{w}_t) - \nabla_{\mathcal{G}_{\psi(t)}} f_{i_t} (\widehat{w}_t) \|^2
+ 2 \mathbb{E} \| \nabla_{\mathcal{G}_{\psi(t)}} g ((\bar{w}_t)_{\mathcal{G}_\psi(t)})
- \nabla_{\mathcal{G}_{\psi(t)}} g ((\widehat{w}_t))_{\mathcal{G}_\psi(t)} \|^2
\nonumber \\
& \stackrel{ (c) }{\leq} 2{L^2} \mathbb{E} \| \bar{w}_t - \widehat{w}_t \|^2
 + 2{L_{g}^2} \mathbb{E} \| (\bar{w}_t)_{\mathcal{G}_\psi(t)} - (\widehat{w}_t)_{\mathcal{G}_\psi(t)} \|^2
\nonumber \\
& \stackrel{ (d) }{=} 2{ L^2 \gamma^2}  \mathbb{E} \|  \sum_{t' \in D'(t)} \textbf{U}_{\psi(t')} \widetilde{v}^{\psi(t')}_{t'} \|^2 + 2{ L_{g}^2 \gamma^2 }  \mathbb{E} \| \sum_{t' \in D^\prime(t), \psi(t^\prime)=\psi(t)} \textbf{U}_{\psi(t')} \widetilde{v}^{\psi(t')}_{t'} \|^2
\nonumber \\
& \stackrel{ (e) }{\leq} 2{ L^2  \gamma^2 \tau_2}  \sum_{t' \in D'(t)} \mathbb{E} \|   \widetilde{v}^{\psi(t')}_{t'} \|^2 + 2{ L_{g}^2  \gamma^2 \tau_2  \sum_{t' \in D^\prime(t)}} \mathbb{E} \|   \widetilde{v}^{\psi(t')}_{t'} \|^2
\nonumber \\
& \stackrel{ (f) }{\leq} 4{ L_{{*}}^2  \gamma^2 \tau_2}  \sum_{t' \in D'(t)} \mathbb{E} \|   \widetilde{v}^{\psi(t')}_{t'} \|^2
\end{align}
where (a) follows from the definition of $\bar{v}_{t}^{\psi(t)}$ and the definitions of $\widetilde{v}_{t}^{\psi(t)}$ for different types of the $t$-th global iteration (i.e., dominated or collaborative), (b) follows from $\|a+b\|^2 \leq 2\|a\|^2 + 2\|b\|^2$, (c) follows from  Assumptions~\ref{assum1} and \ref{assum2}, (d) follows from Eq.~\ref{Dt1}, (e) follows from Assumption~\ref{assum4} and $\| \sum_{i=1}^{n} a_i \|^2 \leq n\sum_{i=1}^{n} \|a_i\|^2$, (f) follows from the definition of $L_{*}$.
 Then we consider the bound of $\mathbb{E} \| v_{t}^{\psi(t)} - \widetilde{v}_{t}^{\psi(t)} \|^2$:
\begin{align}\label{csgd-5}
 \mathbb{E} \| v_{t}^{\psi(t)} - \widetilde{v}_{t}^{\psi(t)} \|^2 & = \mathbb{E} \| v_{t}^{\psi(t)} - \widehat{v}_{t}^{\psi(t)} + \widehat{v}_{t}^{\psi(t)} - \widetilde{v}_{t}^{\psi(t)} \|^2
\nonumber \\
 & \stackrel{ (a) }{\leq} 2\mathbb{E} \| v_{t}^{\psi(t)} - \widehat{v}_{t}^{\psi(t)}\|^2+ 2 \mathbb{E} \| \widehat{v}_{t}^{\psi(t)} - \widetilde{v}_{t}^{\psi(t)} \|^2
\nonumber \\
& \leq 2\mathbb{E} \| \nabla_{\mathcal{G}_{\psi(t)}} f_{i_t} ({w}_t) - \nabla_{\mathcal{G}_{\psi(t)}} f_{i_t} (\widehat{w}_t) \|^2
+ 2 \mathbb{E} \| \widehat{v}_{t}^{\psi(t)} - \widetilde{v}_{t}^{\psi(t)} \|^2
\nonumber \\
& \stackrel{ (b) }{\leq} 2{L^2} \mathbb{E} \| {w}_t - \widehat{w}_t \|^2
 + 2 \mathbb{E} \| \widehat{v}_{t}^{\psi(t)} - \widetilde{v}_{t}^{\psi(t)} \|^2
\nonumber \\
& \stackrel{ (c) }{=} 2{ L^2 \gamma^2}  \mathbb{E} \|  \sum_{t' \in D(t)} \textbf{U}_{\psi(t')} \widetilde{v}^{\psi(t')}_{t'} \|^2
+2 \mathbb{E} \| \widehat{v}_{t}^{\psi(t)} - \widetilde{v}_{t}^{\psi(t)} \|^2
\nonumber \\
& \stackrel{ (d) }{\leq} 2{ L^2  \gamma^2 \tau_1}  \sum_{t' \in D(t)} \mathbb{E} \|   \widetilde{v}^{\psi(t')}_{t'} \|^2
+ 2 \mathbb{E} \| \widehat{v}_{t}^{\psi(t)} - \widetilde{v}_{t}^{\psi(t)} \|^2
\nonumber \\
& \stackrel{ (e) }{\leq} 2{ L_{{*}}^2  \gamma^2 \tau_1}  \sum_{t' \in D(t)} \mathbb{E} \|   \widetilde{v}^{\psi(t')}_{t'} \|^2
+ 8{ L_{*}^2  \gamma^2 \tau_2  \sum_{t' \in D^\prime(t)}} \mathbb{E} \|   \widetilde{v}^{\psi(t')}_{t'} \|^2
\end{align}
where (a) follows from $\|a+b\|^2 \leq 2\|a\|^2 + 2\|b\|^2$, (b) follows from Assumptions~\ref{assum1}, (c) follows from Eq.~\ref{Dt1}, inequalities, (d) follows from Assumptions~\ref{assum4} and $\| \sum_{i=1}^{n} a_i \|^2 \leq n\sum_{i=1}^{n} \|a_i\|^2$, (e) follows from the definition of $L_{*}$ and Eq. \ref{csgd-4}. This completes the proof.
\end{proof}

\begin{lemma}\label{lem-csgd-3}
For VF{${\textbf{B}}^2$}-SGD, we have
\begin{equation*}\label{lemeq-csgd3}
\sum_{u\in K(t)}\mathbb{E}  \|\nabla_{\mathcal{G}_{\psi(u)}} f({w}_{u})\|^2
 \geq
 \frac{1}{2}  \sum_{u\in K(t)} \mathbb{E}\| \nabla_{\mathcal{G}_{\psi(u)}} f({w}_{t})\|^2
  -  L^2 \gamma^2 \eta_1 \sum_{u\in K(t)} \sum_{u' \in \{t,\cdots,u\}} \mathbb{E} \|\widetilde{v}_{u'}^{\psi(u')}\|^2
\end{equation*}
\end{lemma}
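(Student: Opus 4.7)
The plan is to apply the elementary inequality $\|b\|^2 \leq 2\|a\|^2 + 2\|a-b\|^2$, rearranged as $\|a\|^2 \geq \tfrac{1}{2}\|b\|^2 - \|a-b\|^2$, with $a = \nabla_{\mathcal{G}_{\psi(u)}} f(w_u)$ and $b = \nabla_{\mathcal{G}_{\psi(u)}} f(w_t)$ for each $u \in K(t)$. This immediately yields the target structural form, so the remaining work is to bound the residual $\|\nabla_{\mathcal{G}_{\psi(u)}} f(w_u) - \nabla_{\mathcal{G}_{\psi(u)}} f(w_t)\|^2$ in terms of the past update vectors $\widetilde{v}_{u'}^{\psi(u')}$.

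For that residual, I would first use the fact that the block-coordinate slice of a vector has norm no larger than the full vector, so
\[
\|\nabla_{\mathcal{G}_{\psi(u)}} f(w_u) - \nabla_{\mathcal{G}_{\psi(u)}} f(w_t)\|^2 \leq \|\nabla f(w_u) - \nabla f(w_t)\|^2 \leq L^2 \|w_u - w_t\|^2,
\]
where the last step invokes the Lipschitz gradient assumption (Assumption~\ref{assum1}, item~1). Next, telescoping the global update rule (Eq.~\ref{global-up1}) gives $w_u - w_t = -\gamma \sum_{u' = t}^{u-1} U_{\psi(u')} \widetilde{v}_{u'}^{\psi(u')}$, and since the columns of each $U_\ell$ are a subset of the identity, applying Cauchy--Schwarz (or the inequality $\|\sum_{i=1}^{N} a_i\|^2 \leq N \sum_{i=1}^{N}\|a_i\|^2$) together with the definition of $K(t)$ (so that $u - t \leq |K(t)| \leq \eta_1$) produces
\[
\|w_u - w_t\|^2 \leq \gamma^2 \eta_1 \sum_{u' \in \{t, \ldots, u\}} \|\widetilde{v}_{u'}^{\psi(u')}\|^2.
\]

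Finally, taking expectations, combining the two bounds, and summing the resulting inequality over $u \in K(t)$ delivers the claim in exactly the stated form. The only mildly subtle point, and thus the main thing to get right, is the bookkeeping for the indices: ensuring that $|K(t)| \leq \eta_1$ is used to replace the per-iterate factor $(u-t)$ by the uniform $\eta_1$, and that the inner summation range $u' \in \{t,\dots,u\}$ is preserved so the outer sum over $u \in K(t)$ assembles into the double sum appearing on the right-hand side. Everything else is an instance of standard inequalities already used earlier in the paper (as in the proofs of Lemmas~\ref{lem-csgd-1} and~\ref{lem-csgd-2}), so no further machinery beyond Assumption~\ref{assum1} and the bounded-delay setup is needed.
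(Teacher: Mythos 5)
Your proposal is correct and follows essentially the same route as the paper's proof: the rearranged inequality $\|a\|^2 \geq \tfrac{1}{2}\|b\|^2 - \|a-b\|^2$ is exactly the paper's expansion $\|\nabla_{\mathcal{G}_{\psi(u)}} f(w_t)\|^2 \leq 2\|\nabla_{\mathcal{G}_{\psi(u)}} f(w_t)-\nabla_{\mathcal{G}_{\psi(u)}} f(w_u)\|^2 + 2\|\nabla_{\mathcal{G}_{\psi(u)}} f(w_u)\|^2$, and the residual is handled identically via the block-to-full-gradient bound, the Lipschitz assumption, the telescoped update rule, and $\|\sum_i a_i\|^2 \leq N\sum_i\|a_i\|^2$ with $N \leq |K(t)| \leq \eta_1$. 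No gaps.
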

\begin{proof}[\textbf{Proof of  Lemma \ref{lem-csgd-3}:}]
For any $u \in K(t)$, there is
\begin{align}\label{csgd-6}
  \mathbb{E} \|\nabla_{\mathcal{G}_{\psi(u)}} f(w_{t})\|^{2}
 & = \mathbb{E} \| \nabla_{\mathcal{G}_{\psi(u)}} f({w}_{t})
  - \nabla_{\mathcal{G}_{\psi(u)}} f({w}_{u})
   + \nabla_{\mathcal{G}_{\psi(u)}} f({w}_{u})\|^2
 \nonumber \\
 & \stackrel{(a)}{\leq}  2\mathbb{E} \| \nabla_{\mathcal{G}_{\psi(u)}} f({w}_{t})
  - \nabla_{\mathcal{G}_{\psi(u)}} f({w}_{u})\|^2
 +  2 \mathbb{E} \|\nabla_{\mathcal{G}_{\psi(u)}} f({w}_{u})\|^2
  \nonumber \\
 & \leq  2\mathbb{E} \| \nabla f({w}_{t}) - \nabla f({w}_{u})\|^2
 +  2 \mathbb{E} \|\nabla_{\mathcal{G}_{\psi(u)}} f({w}_{u})\|^2
   \nonumber \\
 & \stackrel{(b)}{\leq}  2 L^2 \gamma^2 \mathbb{E} \| {w}_{t} - {w}_{u}\|^2
 +  2 \mathbb{E} \|\nabla_{\mathcal{G}_{\psi(u)}} f({w}_{u})\|^2
    \nonumber \\
 & \stackrel{(c)}{=}  2 L^2 \gamma^2 \mathbb{E} \|\sum_{u' \in \{t,...,u\}}\textbf{U}_{\psi(u')}\widetilde{v}_{u'}^{\psi(u')}\|^2
 +  2 \mathbb{E} \|\nabla_{\mathcal{G}_{\psi(u)}} f({w}_{u})\|^2
     \nonumber \\
 & \stackrel{(d)}{\leq}  2 L^2 \gamma^2 \eta_1 \sum_{u \in \{t,...,u\}} \mathbb{E} \|\widetilde{v}_{u}^{\psi(u)}\|^2
 +  2 \mathbb{E} \|\nabla_{\mathcal{G}_{\psi(u)}} f({w}_{u})\|^2
\end{align}
where (a) follows from $\|a+b\|^2 \leq 2\|a\|^2 + 2\|b\|^2$, (b) follows from Assumptions~\ref{assum1}, (c) follows from Eq.~\ref{global-up1}, (d) follows from The bound of $|K(t)|$ and $\| \sum_{i=1}^{n} a_i \|^2 \leq n\sum_{i=1}^{n} \|a_i\|^2$. According to Eq.~\ref{csgd-6} we have
\begin{align}\label{csgd-7}
 \mathbb{E}  \|\nabla_{\mathcal{G}_{\psi(u)}} f({w}_{u})\|^2
 \geq     \frac{1}{2} \mathbb{E}\| \nabla_{\mathcal{G}_{\psi(u)}} f({w}_{t})\|^2 -  L^2 \gamma^2 \eta_1 \sum_{u' \in \{t,\cdots,u\}} \mathbb{E} \|\widetilde{v}_{u'}^{\psi(u')}\|^2
\end{align}
Summing above equality for all $u \in K(t)$ we obtain the conclusion. This completes the proof.
\end{proof}
\begin{proof}[\textbf{Proof of Theorem \ref{thm-sgdconvex}:}]
 For $\forall u \in K(t)$ we have that
\begin{eqnarray}\label{csgd-8}
&& \mathbb{E} f (w_{u+1})
\\ \nonumber &\stackrel{ (a) }{\leq}&  \mathbb{E} \left ( f (w_{u}) + \langle \nabla f(w_{u}), w_{u+1}-w_{u}  \rangle + \frac{L}{2} \|w_{u+1}-w_{u}   \|^2  \right )
\\ \nonumber &=&  \mathbb{E} \left ( f (w_{u}) -  \gamma \langle \nabla f(w_{u}),
\widetilde{v}^{\psi(u)}_{u}  \rangle + \frac{L\gamma^2}{2} \|  \widehat{v}^{\psi(u)}_{u}  \|^2  \right )
\\ \nonumber &{=}&  \mathbb{E} \left ( f (w_{u}) -  \gamma \langle \nabla f(w_{u}),  \widetilde{v}^{\psi(u)}_{u} + {v}^{\psi(u)}_{u}- {v}^{\psi(u)}_{u} \rangle  + \frac{L \gamma^2}{2} \|  \widetilde{v}^{\psi(u)}_{u}  \|^2  \right )
\\ \nonumber &\stackrel{(b)}{=}&  \mathbb{E}  f (w_{u}) -  \gamma \mathbb{E} \langle \nabla f(w_{u}),  \nabla_{\mathcal{G}_{\psi(u)}} f ({w}_{u}) \rangle  + \frac{L \gamma^2}{2} \mathbb{E} \|  \widetilde{v}^{\psi(u)}_{u}  \|^2  + \gamma \mathbb{E} \langle \nabla f(w_{u}), {v}^{\psi(u)}_{u} - \widetilde{v}^{\psi(u)}_{u} \rangle
 \\ \nonumber &\stackrel{ (c) }{\leq}&  \mathbb{E}  f (w_{u}) -  \gamma \mathbb{E} \|  \nabla_{\mathcal{G}_{\psi(u)}} f ({w}_{u}) \|^2  + \frac{\gamma}{2} \mathbb{E} \| \nabla_{\mathcal{G}_{\psi(u)}} f ({w}_{u}) \|^2 + \frac{L \gamma^2}{2} \mathbb{E} \|  \widetilde{v}^{\psi(u)}_{u}  \|^2 + \frac{\gamma}{2} \mathbb{E} \| \widetilde{v}^{\psi(u)}_{u} - {v}^{\psi(u)}_{u} \|^2
  \\ &\stackrel{(d)}{\leq}& \mathbb{E}  f (w_{u}) -  \frac{\gamma}{2} \mathbb{E} \|  \nabla_{\mathcal{G}_{\psi(u)}} f ({w}_{u}) \|^2  + \frac{L_* \gamma^2}{2} \mathbb{E} \|  \widetilde{v}^{\psi(u)}_{u}  \|^2
  \nonumber \\ &&
  + { L_{{*}}^2  \gamma^3 \tau_1}  \sum_{t' \in D(t)} \mathbb{E} \|   \widetilde{v}^{\psi(t')}_{t'} \|^2
+ 4{ L_{*}^2  \gamma^3 \tau_2  \sum_{t' \in D^\prime(t)}} \mathbb{E} \|   \widetilde{v}^{\psi(t')}_{t'} \|^2 \nonumber
 \end{eqnarray}
where the  inequalities (a) follows form Assumption~\ref{assum2}, (b) follows from that $ {v}^{\psi(u)}_u = \nabla_{\mathcal{G}_{\psi(u)}} f_{i_u} ({w}_u)$ for a specific party, (c) follows from $\langle a,b \rangle\leq \frac{1}{2}(\|a\|^2+\|b\|^2)$, (d) follows from Lemma~\ref{lem-csgd-2} and the definition of $L_*$.
Summing  Eq.~(\ref{csgd-8}) over all $ u \in K(t) $, we obtain
\begin{eqnarray}\label{csgd-9}
&& \mathbb{E} \left[f (w_{t+|K(t)|} - f (w_{t}) \right]
\\ \nonumber
&\leq&
 -\frac{\gamma}{2}\sum_{u \in K(t)}  \mathbb{E} \|  \nabla_{\mathcal{G}_{\psi(u)}} f ({w}_{u}) \|^2
 + \frac{L_* \gamma^2}{2}\sum_{u \in K(t)} \mathbb{E} \|  \widetilde{v}^{\psi(u)}_{ u }\|^2
 \nonumber \\
&+& ( L_{{*}}^2  \gamma^3 \tau_1 + 4L_{*}^2  \gamma^3 \tau_2 ) \sum_{u \in K(t)} \sum_{u' \in D^\prime(u)} \mathbb{E} \|   \widetilde{v}^{\psi(u')}_{u'} \|^2
 \nonumber \\
 &\stackrel{(a)}{\leq}&
 -\frac{\gamma}{2}\left( \frac{1}{2}  \sum_{u\in K(t)} \mathbb{E}\| \nabla_{\mathcal{G}_{\psi(u)}} f({w}_{t})\|^2
  -  L^2 \gamma^2 \eta_1 \sum_{u\in K(t)} \sum_{u' \in \{t,\cdots,u\}} \mathbb{E} \|\widetilde{v}_{u'}^{\psi(u')}\|^2 \right)
 \nonumber \\&&
 + \frac{L_* \gamma^2}{2}\sum_{u \in K(t)}  \mathbb{E} \|\widetilde{v}^{\psi(u)}_{ u }\|^2
+ ( L_{{*}}^2  \gamma^3 \tau_1 + 4L_{*}^2  \gamma^3 \tau_2 ) \tau_2  \sum_{u \in K(t)}  \sum_{u' \in D^\prime(u)} \mathbb{E} \|   \widetilde{v}^{\psi(u')}_{u'} \|^2
 \nonumber \\&=&
 -\frac{\gamma}{4} \sum_{u\in K(t)} \mathbb{E}\| \nabla_{\mathcal{G}_{\psi(u)}} f({w}_{t})\|^2
 +\frac{L^2 \gamma^3 \eta_1}{2} \sum_{u \in K(t)}\sum_{u' \in \{t,...,u\}} \mathbb{E} \|\widetilde{v}_{u'}^{\psi(u')}\|^2
\nonumber \\ &&
 + \frac{L_* \gamma^2}{2}\sum_{u \in K(t)} \mathbb{E} \|\widetilde{v}^{\psi(u)}_{ u }\|^2
+ ( L_{{*}}^2  \gamma^3 \tau_1 + 4L_{*}^2  \gamma^3 \tau_2 )  \sum_{u \in K(t)}  \sum_{u' \in D^\prime(u)} \mathbb{E} \|   \widetilde{v}^{\psi(u')}_{u'} \|^2
 \nonumber \\ &\stackrel{(b)}{\leq}&
 -\frac{\gamma\mu}{2}(f(w_{t}) - f(w^*)) + \underbrace{(\frac{L_*^2 \gamma^3 \tau^{3/2}}{2}
 + \frac{L_*\gamma^2\tau^{1/2}}{2} + 5 { L_{*}^2 \gamma^3 \tau^{3/2}} )\frac{2G}{1-2L_*^2\gamma^2\tau}}_{C} \nonumber
\end{eqnarray}
where (a) follows from Lemma~\ref{lem-csgd-3}, (b) follows from Assumption~\ref{assumc1}. According to Eq.~\ref{csgd-9}, we have
\begin{align}\label{csgd-10}
\mathbb{E} \left[f (w_{t + |K(t)|}) - f(w^{*}) \right]\leq (1-\frac{\gamma\mu}{2})(f(w_{t}) - f(w^*)) + C
\end{align}
Assuming that $\cup_{\kappa \in P(t)}=\{0,1, \ldots, t\}$, applying Eq.~\ref{csgd-10}, we have that
\begin{align}\label{csgd--11}
  &\mathbb{E} \left[f (w_{t}) - f (w^*)) \right]
\\
  &\leq (1-\frac{\gamma\mu}{2})^{v(t)}(f(w_{0}) - f(w^*)) + C\sum_{i=0}^{v(t)}(1-\frac{\gamma\mu}{2})^{i}
  \nonumber \\ & \leq
  (1-\frac{\gamma\mu}{2})^{v(t)}(f(w_{0}) - f(w^*)) + C\frac{2(1-(1-\frac{\gamma\mu}{2})^{v(t)})}{\gamma \mu}
  \nonumber \\ &\leq
 (1-\frac{\gamma\mu}{2})^{v(t)}(f(w_{0}) - f(w^*)) + C\frac{2(1-(1-\frac{\gamma\mu}{2})^{v(t)})}{\gamma \mu}
 \nonumber \\ &\stackrel{(a)}{\leq}
 (1-\frac{\gamma\mu}{2})^{v(t)}(f(w_{0}) - f(w^*)) + (\frac{L_*^2 \gamma^3 \tau^{3/2}}{2}
 + \frac{L_*\gamma^2\tau^{1/2}}{2} + 5 { L_{*}^2 \gamma^3 \tau^{3/2}} )\frac{2G}{1-2L_*^2\gamma^2\tau}\frac{2}{\gamma \mu} \nonumber,
\end{align}
where (a) follows form the definition of $C$. To obtain the $\epsilon$ solution one can choose suitable $\gamma$, such that
\begin{align}\label{csgd-91-1}
1-2L_*^2\gamma^2\tau & >0
\end{align}
\begin{align}\label{csgd-91-2}
  (1-\frac{\gamma\mu}{2})^{v(t)} \left(f(w_{0}) - f(w^*)\right) & \leq \frac{\epsilon}{2}
\end{align}
\begin{align}\label{csgd-91-3}
  (\frac{L_*^2 \gamma^3 \tau^{3/2}}{2}
 + \frac{L_*\gamma^2\tau^{1/2}}{2} + 5 { L_{*}^2 \gamma^3 \tau^{3/2}} )\frac{2G}{1-2L_*^2\gamma^2\tau}\frac{2}{\gamma \mu} & \leq \frac{\epsilon}{2}.
\end{align}
 According to Eq.~\ref{csgd-91-1}, there is $\gamma^2<\frac{1}{2L_*^2\tau}$, which implies that $ \frac{L_*^2 \gamma^3 \tau^{3/2}}{2}
 + \frac{L_*\gamma^2\tau^{1/2}}{2} + 5 { L_{*}^2 \gamma^3 \tau^{3/2}} \leq 6 L_*^2\gamma^3\tau^{3/2}$ (here we assume that $L_*$ can be chosen a value $\geq 1$, this is reasonable from the definition of $L_*$). Thus, we can rewrite Eq.~\ref{csgd-91-3} as
  \begin{align}\label{1}
  6L_*^2\gamma^3\tau^{3/2}\frac{4G}{\mu(1-2L_*^2\gamma^2\tau)} & \leq \frac{\epsilon}{2}.
 \end{align}
 which implies that if $\tau$ is upper bounded, \ie, $\tau\leq {\text {min}}\{\epsilon^{-4/3}, \frac{(GL_*^2)^{2/3}}{\epsilon^2\mu^{2/3}}\}$, we can carefully choose $\gamma\leq \frac{\epsilon\mu^{1/3}}{(G{96L_*^2})^{1/3}}$
 such that Eq.~\ref{csgd-91-3} holds. According to Eq.~\ref{csgd-91-2}, there is
 \begin{align}\label{1}
   \text{log} (\frac{2(f(w_0)-f(w^*))}{\epsilon}) \leq v(t) \text{log}(\frac{1}{1-\frac{\gamma \mu}{2}})
 \end{align}
Because $log(\frac{1}{x}) \geq 1-x$ for $0< x \leq 1 $, we have
 \begin{align}\label{11}
   v(t) \geq \frac{2}{\gamma \mu} log (\frac{2(f(w_0)-f(w^*))}{\epsilon})
   \stackrel{(a)}{\geq} \frac{44(GL^2_*)^{1/3}}{ \mu^{4/3}\epsilon} log (\frac{2(f(w_0)-f(w^*))}{\epsilon})
 \end{align}
 This complets the proof.
 \end{proof}
\subsection{Proof of Theorem~\ref{thm-svrgconvex}}
\begin{lemma}\label{lem-csvrg-1}
For VF{${\textbf{B}}^2$}-SVRG, let $u\in K(t)$ for $\forall t$, we have that
one can get:
\begin{eqnarray}\label{lemeq-csvrg-1}
 \mathbb{E}  \left \|   \widetilde{v}^{\psi(u) }_u \right \|^2
\leq  \frac{18G}{1-2  L_*^2 \gamma^2\tau}
\end{eqnarray}
\end{lemma}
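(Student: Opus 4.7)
The plan is to mirror the structure of the proof of Lemma~\ref{lem-csgd-1}, adapting every step to the SVRG update rule, and exploiting the bounded-gradient Assumption~\ref{assum1}.3 together with the $L_g$-smoothness of $g$ (Assumption~\ref{assum2}). The key observation is that for VF$\mathbf{B}^2$-SVRG the stochastic direction at a dominator is $\widetilde{v}_t^{\psi(t)} = \nabla_{\mathcal{G}_{\psi(t)}} f_{i_t}(\widehat{w}_t) - \nabla_{\mathcal{G}_{\psi(t)}} f_{i_t}(w^s) + \nabla_{\mathcal{G}_{\psi(t)}} f(w^s)$, a sum of three bounded-block-gradient terms, so $\|a+b+c\|^2 \le 3(\|a\|^2+\|b\|^2+\|c\|^2)$ together with Assumption~\ref{assum1}.3 will give $\mathbb{E}\|\widetilde{v}_t^{\psi(t)}\|^2 \le 9G$. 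The factor $18$ (rather than $2$ as in Lemma~\ref{lem-csgd-1}) in the final bound is precisely $2\cdot 9$, which is the hint that the same strategy goes through with constants inflated by this SVRG factor.

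First I will split into the two cases indexed by the role of $\psi(u)$. For a dominated update I will directly apply the triangle inequality and Assumption~\ref{assum1}.3 to obtain $\mathbb{E}\|\widetilde{v}_u^{\psi(u)}\|^2 \le 9G$. For a collaborative update, I will rewrite
\begin{equation*}
\widetilde{v}_u^{\psi(u)} = \bar{v}_u^{\psi(u)} + \bigl(\nabla_{\mathcal{G}_{\psi(u)}} g((\widehat{w}_u)_{\mathcal{G}_{\psi(u)}}) - \nabla_{\mathcal{G}_{\psi(u)}} g((\bar{w}_u)_{\mathcal{G}_{\psi(u)}})\bigr),
\end{equation*}
where $\bar{v}_u^{\psi(u)}$ is the counterpart computed at $\bar{w}_u$. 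Applying $\|a+b\|^2\le 2\|a\|^2+2\|b\|^2$ gives $\mathbb{E}\|\widetilde{v}_u^{\psi(u)}\|^2 \le 2\mathbb{E}\|\bar{v}_u^{\psi(u)}\|^2 + 2 L_g^2\,\mathbb{E}\|(\widehat{w}_u)_{\mathcal{G}_{\psi(u)}}-(\bar{w}_u)_{\mathcal{G}_{\psi(u)}}\|^2$ via Assumption~\ref{assum2}. The first summand is bounded by $2\cdot 9G = 18G$ by the same expansion used in the dominator case.

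Next I will control the second summand by invoking Eq.~\ref{Dt2} to write $(\widehat{w}_u)_{\mathcal{G}_{\psi(u)}} - (\bar{w}_u)_{\mathcal{G}_{\psi(u)}} = -\gamma \sum_{t'\in D'(u),\,\psi(t')=\psi(u)} \widetilde{v}_{t'}^{\psi(t')}$, then apply Cauchy–Schwarz together with Assumption~\ref{assum4} and the fact $L_g\le L_*$ to obtain
\begin{equation*}
\mathbb{E}\|\widetilde{v}_u^{\psi(u)}\|^2 \;\le\; 18G \;+\; 2L_*^2\gamma^2\tau \sum_{t'\in D'(u)} \mathbb{E}\|\widetilde{v}_{t'}^{\psi(t')}\|^2.
\end{equation*}
Combining both cases, the same inequality holds for every $u$ (with $18G$ as a uniform upper bound on the ``free'' term). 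Finally I will close the recursion exactly as in Eq.~\ref{csgd-2}–\ref{csgd-3} of Lemma~\ref{lem-csgd-1}: induct on $u$ to show $\mathbb{E}\|\widetilde{v}_u^{\psi(u)}\|^2 \le 18G \,\frac{1-k^{u+1}}{1-k}$ with $k = 2L_*^2\gamma^2\tau$, and then observe $\frac{1-k^{u+1}}{1-k}\le \frac{1}{1-k}$, yielding the claimed bound $\frac{18G}{1-2L_*^2\gamma^2\tau}$.

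The main obstacle is bookkeeping rather than a genuine mathematical difficulty: I need to be careful that (i) the term $\nabla_{\mathcal{G}_{\psi(u)}} f(w^s)$ in the SVRG direction is a deterministic full gradient evaluated at the snapshot $w^s$ and hence bounded by $G$ under Assumption~\ref{assum1}.3, (ii) the decomposition isolating $\bar{v}_u^{\psi(u)}$ uses the specific form shown in Algorithm~\ref{afsvrg-passive}, and (iii) the delay set $D'(u)$ and the bound $\tau_2^2\le \tau$ are applied correctly so that the geometric recursion terminates. Once the $18G$ constant is identified and the delay recursion is set up in the same pattern as the SGD proof, the remainder is a direct transcription of Eqs.~\ref{csgd-0}–\ref{csgd-3}.
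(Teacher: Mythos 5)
Your proposal is correct and follows essentially the same route as the paper: the paper's own proof splits $\widetilde{v}_u^{\psi(u)}$ into a delay-difference term (bounded by $L_*^2\gamma^2\tau_2\sum_{u'}\mathbb{E}\|\widetilde{v}_{u'}^{\psi(u')}\|^2$ via smoothness and Eq.~\ref{Dt1}) plus a term whose second moment is at most $9G$ by the three-term SVRG decomposition and Assumption~\ref{assum1}.3, and then explicitly defers to the geometric recursion of Lemma~\ref{lem-csgd-1} to obtain $\frac{18G}{1-2L_*^2\gamma^2\tau}$. Your choice of splitting off $\bar{v}_u^{\psi(u)}$ rather than $\widehat{v}_u^{\psi(u)}$ is exactly the decomposition used inside Lemma~\ref{lem-csgd-1} itself, so the two arguments coincide up to which intermediate iterate carries the delay.
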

\begin{proof}[\textbf{Proof of  Lemma~\ref{lem-csvrg-1}:}]
First, we prove the relation between $\mathbb{E}\|\widetilde{v}_u^{\psi(u)}\|^2$ and $\mathbb{E}\|\widehat{v}_u^{\psi(u)}\|^2$.
\begin{eqnarray}\label{svrg_1}
\mathbb{E} ||\widetilde{v}_{u}^{\psi(u)}||^2  &=&  \mathbb{E} || \widetilde{v}^{\psi(u)}_u - \widehat{v}^{\psi(u)}_u +\widehat{v}^{\psi(u)}_u||^2 \nonumber \\
&\stackrel{a}{\leq}&  2 \mathbb{E} ||\widetilde{v}^{\psi(u)}_u - \widehat{v}^{\psi(u)}_u ||^2  +  2 \mathbb{E} ||\widehat{v}^{\psi(u)}_u||^2 \nonumber \\
\end{eqnarray}
where (a) follows from $\|a+b\|^2\leq 2\|a\|^2+ 2\|b\|^2$. The upper bound to $\mathbb{E}   \left \|   \widetilde{v}^{\psi(u)}_u - \widehat{v}^{\psi(u)}_u \right  \|^2$ can be obtained as follows.
\begin{eqnarray}\label{svrg_2}
  \mathbb{E}   \left \|   \widetilde{v}^{\psi(u)}_u - \widehat{v}^{\psi(u)}_u \right  \|^2
 &=&\mathbb{E}   \left \|  \left(\nabla_{\mathcal{G}_\ell} f_i (\widetilde{w}_{u}^s)  - \nabla_{\mathcal{G}_\ell} f_i (\widehat{w}_{u}^s)\right) \right  \|^2 \nonumber
\\ \nonumber &\stackrel{a}{\leq}& {L^2}  \mathbb{E}   \left \| \widetilde{w}_{{u}}^s  - \widehat{w}_{u}^s \right  \|^2
\\ \nonumber &\stackrel{b}{=}& {L^2 \gamma^2}   \mathbb{E}   \left \|  \sum_{u' \in D(u)}    \textbf{U}_{\psi(u')} \widetilde{v}^{\psi(u')}_{u'}  \right  \|^2
\\ \nonumber &\stackrel{c}{\leq} & {\tau_2 L^2 \gamma^2} \mathbb{E}  \sum_{u' \in D(u)}   \left \|    \widetilde{v}^{\psi(u')}_{u'}  \right  \|^2 \nonumber \\
\end{eqnarray}
where (a) follows from Assumption~2, (b) follows from Eq.~\ref{Dt1}, (c) follows from Assumption~\ref{assum4}. Combining  Eqs.~(\ref{svrg_1}) and (\ref{svrg_2}), we have that
\begin{eqnarray}\label{svrg_3}
 && \mathbb{E}  \left \|   \widetilde{v}^{\psi(u)}_u \right \|^2
\\ \nonumber  &\leq &    2 \mathbb{E}   \left \|   \widetilde{v}^{\psi(u)}_u - \widehat{v}^{\psi(u)}_u \right  \|^2
+   2 \mathbb{E} \left  \| \widehat{v}^{\psi(u)}_u \right \|^2
\\ \nonumber &\leq &   2 \tau_2 L^2 \gamma^2 \mathbb{E}  \sum_{u' \in D(u)}   \left \|    \widetilde{v}^{\psi(u')}_{u'}  \right  \|^2
 + 2 \mathbb{E} \left  \| \widehat{v}^{\psi(u)}_u \right \|^2
\end{eqnarray}
Then following the analyses of \ffl \ref{lem-csgd-1} , we have
\begin{eqnarray}\label{svrg_3}
 \mathbb{E}  \left \|   \widetilde{v}^{\psi(u)}_u \right \|^2
\leq  \frac{18G}{1-2  L_*^2 \gamma^2\tau_2^2}
\end{eqnarray}
This completes the proof
\end{proof}
\begin{lemma}\label{lem-csvrg-2}
Given the conditions in Theorem~\ref{thm-svrgconvex}, let $u\in K(t)$, we have that:
\begin{align}\label{svrg-lemeq-1}
  &\mathbb{E}\|\widetilde{v}_{u}^{{\psi(u)}} \|^2   \leq
  \frac{16 L^{2}}{\mu} \mathbb{E}\left(f\left(w_{t}^{s}\right)-f\left(w^{*}\right)\right)+\frac{8 L^{2}}{\mu} \mathbb{E}\left(f\left(w^{s}\right)-f\left(w^{*}\right)\right)+ 8 L^{2} \gamma^{2} \eta_{1} \sum_{u' \in\{t, \ldots, u\}} \mathbb{E}\left\|\widetilde{v}_{u'}^{\psi(u')}\right\|^{2}
  \nonumber \\
   & + 4{ L_{{*}}^2  \gamma^2 \tau_1}  \sum_{u' \in D(u)} \mathbb{E} \|   \widetilde{v}^{\psi(u')}_{u'} \|^2
 + 16 { L_{*}^2  \gamma^2 \tau_2  \sum_{u' \in D^\prime(u)}} \mathbb{E} \|   \widetilde{v}^{\psi(u')}_{u'} \|^2
\end{align}
\end{lemma}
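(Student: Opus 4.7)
The plan is to bound $\mathbb{E}\|\widetilde{v}_u^{\psi(u)}\|^2$ in three conceptual pieces: (i) the gap between the actually used direction $\widetilde{v}_u^{\psi(u)}$ and the ``consistent'' SVRG direction $v_u^{\psi(u)} := \nabla_{\mathcal{G}_{\psi(u)}} f_{i_u}(w_u) - \nabla_{\mathcal{G}_{\psi(u)}} f_{i_u}(w^s) + \nabla_{\mathcal{G}_{\psi(u)}} f(w^s)$; (ii) the SVRG variance of $v_u^{\psi(u)}$ around the true partial gradient $\nabla_{\mathcal{G}_{\psi(u)}} f(w_u)$; and (iii) the drift between $w_u$ and the reference point $w_t$. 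First I would apply $\|a+b\|^2\leq 2\|a\|^2+2\|b\|^2$ to split $\mathbb{E}\|\widetilde{v}_u^{\psi(u)}\|^2\leq 2\mathbb{E}\|\widetilde{v}_u^{\psi(u)}-v_u^{\psi(u)}\|^2+2\mathbb{E}\|v_u^{\psi(u)}\|^2$.

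For the first term, the perturbation $\widetilde{v}_u^{\psi(u)}-v_u^{\psi(u)}$ arises entirely from (a) the inconsistent read $\widehat{w}_u - w_u$, controlled by Eq.~\ref{Dt1}, and (b) the stale-read $\bar{w}_u - \widehat{w}_u$, controlled by Eq.~\ref{Dt2}; the $\nabla_{\mathcal{G}_{\psi(u)}} f_{i_u}(w^s)$ and $\nabla_{\mathcal{G}_{\psi(u)}} f(w^s)$ pieces cancel. Repeating the argument of Lemma~\ref{lem-csgd-2} verbatim (Lipschitz gradient of $f_{i_u}$ for the $\mathcal{L}$ part, Lipschitz gradient of $g$ for the regularizer, plus $\|\sum_{i=1}^{n}a_i\|^2\leq n\sum\|a_i\|^2$ over at most $\tau_1$ or $\tau_2$ summands and the definition of $L_\ast$) produces exactly the last two terms of the stated bound, namely $4L_\ast^2\gamma^2\tau_1\sum_{u'\in D(u)}\mathbb{E}\|\widetilde{v}_{u'}^{\psi(u')}\|^2 + 16L_\ast^2\gamma^2\tau_2\sum_{u'\in D'(u)}\mathbb{E}\|\widetilde{v}_{u'}^{\psi(u')}\|^2$.

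For the second term I use the ``after-read'' labeling so that $i_u$ is independent of $w_u$, which gives $\mathbb{E}_{i_u}[v_u^{\psi(u)}]=\nabla_{\mathcal{G}_{\psi(u)}} f(w_u)$. Splitting $\mathbb{E}\|v_u^{\psi(u)}\|^2\leq 2\mathbb{E}\|v_u^{\psi(u)}-\nabla_{\mathcal{G}_{\psi(u)}}f(w_u)\|^2+2\mathbb{E}\|\nabla_{\mathcal{G}_{\psi(u)}}f(w_u)\|^2$, the variance piece is bounded via $\mathrm{Var}(X)\leq \mathbb{E}\|X\|^2$ and Lipschitz gradient by $L^2\mathbb{E}\|w_u-w^s\|^2$. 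Converting distances into function-value gaps through strong convexity ($\|w-w^\ast\|^2\leq \tfrac{2}{\mu}(f(w)-f(w^\ast))$) and triangle inequality on $\|w_u-w^s\|^2\leq 2\|w_u-w^\ast\|^2+2\|w^s-w^\ast\|^2$, together with smoothness for the $\|\nabla_{\mathcal{G}_{\psi(u)}} f(w_u)\|^2$ term, yields an upper bound of the form $\tfrac{CL^2}{\mu}[(f(w_u)-f(w^\ast))+(f(w^s)-f(w^\ast))]$. Finally I replace $f(w_u)-f(w^\ast)$ by $f(w_t^s)-f(w^\ast)$ plus a gap term: using $\|w_u-w_t\|^2\leq \eta_1\gamma^2\sum_{u'\in\{t,\ldots,u\}}\|\widetilde{v}_{u'}^{\psi(u')}\|^2$ (exactly as in Lemma~\ref{lem-csgd-3}, with $|K(t)|\leq\eta_1$) and a smoothness step $f(w_u)-f(w_t)\lesssim L\|w_u-w_t\|^2$ produces the $8L^2\gamma^2\eta_1\sum_{u'\in\{t,\ldots,u\}}\mathbb{E}\|\widetilde{v}_{u'}^{\psi(u')}\|^2$ term with the $\tfrac{16L^2}{\mu}$ and $\tfrac{8L^2}{\mu}$ coefficients on the function-value gaps.

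The main obstacle I expect is bookkeeping the numerical constants $4$, $8$, $16$ and the mix of $L$, $L_\ast$, and $L^2/\mu$ so that each $\|a+b\|^2\leq 2\|a\|^2+2\|b\|^2$ doubling and each strong-convexity/smoothness transfer compounds into exactly the coefficients stated. In particular, going from the native SVRG variance form (which naturally yields factors of $L$ multiplying function-value gaps) to the strongly-convex form with $L^2/\mu$ requires threading smoothness and strong convexity in the right order, while simultaneously splitting off the $w_t\to w_u$ drift without double-counting it against the asynchrony sums $D(u)$, $D'(u)$ from the first piece.
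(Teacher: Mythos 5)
Your overall architecture matches the paper's: split $\mathbb{E}\|\widetilde{v}_u^{\psi(u)}\|^2\leq 2\mathbb{E}\|\widetilde{v}_u^{\psi(u)}-v_u^{\psi(u)}\|^2+2\mathbb{E}\|v_u^{\psi(u)}\|^2$, dispose of the first term by rerunning Lemma~\ref{lem-csgd-2} (this part is exactly right and yields the $4L_*^2\gamma^2\tau_1$ and $16L_*^2\gamma^2\tau_2$ terms), and then bound $\mathbb{E}\|v_u^{\psi(u)}\|^2$. But your treatment of $\mathbb{E}\|v_u^{\psi(u)}\|^2$ takes the ``native SVRG'' route --- centering the variance at $\nabla_{\mathcal{G}_{\psi(u)}}f(w_u)$ and then converting $\|\nabla_{\mathcal{G}_{\psi(u)}}f(w_u)\|^2$ and the drift into function-value gaps --- and this does not close. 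Two concrete problems. First, the step $f(w_u)-f(w_t)\lesssim L\|w_u-w_t\|^2$ is not a consequence of smoothness: the descent inequality carries the cross term $\langle\nabla f(w_t),w_u-w_t\rangle$, which you cannot absorb without generating a new $\|\nabla f(w_t)\|^2$ term. Second, even granting that step, your constants do not land on the statement: bounding the variance by $L^2\|w_u-w^s\|^2\leq 2L^2\|w_u-w^*\|^2+2L^2\|w^s-w^*\|^2$ already puts a factor $4L^2$ (before the outer doubling) on $\|w^s-w^*\|^2$, which becomes $\tfrac{16L^2}{\mu}\mathbb{E}(f(w^s)-f(w^*))$ rather than the stated $\tfrac{8L^2}{\mu}$, and the extra bare-$L$ contribution from $\|\nabla_{\mathcal{G}_{\psi(u)}}f(w_u)\|^2\leq 2L(f(w_u)-f(w^*))$ pushes the $f(w_t^s)$ coefficient past $\tfrac{16L^2}{\mu}$ as well.

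The paper avoids both issues by never passing through $\nabla f(w_u)$ or through function values of $w_u$. It writes $v_u^{\psi(u)}=\bigl(\nabla_{\mathcal{G}}f_{i}(w_u^s)-\nabla_{\mathcal{G}}f_{i}(w^*)\bigr)+\bigl(\nabla_{\mathcal{G}}f_{i}(w^s)-\nabla_{\mathcal{G}}f_{i}(w^*)-\nabla_{\mathcal{G}}f(w^s)+\nabla_{\mathcal{G}}f(w^*)\bigr)$, applies $\mathbb{E}\|X-\mathbb{E}X\|^2\leq\mathbb{E}\|X\|^2$ only to the second bracket, and uses the Lipschitz gradient to land directly on $2L^2\|w_u^s-w^*\|^2+2L^2\|w^s-w^*\|^2$. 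It then splits only $\|w_u^s-w^*\|^2\leq 2\|w_u^s-w_t^s\|^2+2\|w_t^s-w^*\|^2$, expresses $w_u^s-w_t^s$ via the update rule to obtain the $4L^2\gamma^2\eta_1\sum_{u'\in\{t,\ldots,u\}}$ term, and applies strong convexity \emph{last}, to $\|w_t^s-w^*\|^2$ and $\|w^s-w^*\|^2$ only. Everything stays in squared-distance form until the final step, which is what produces exactly $\tfrac{8L^2}{\mu}$, $\tfrac{4L^2}{\mu}$, and $4L^2\gamma^2\eta_1$ before the outer factor of $2$. You should adopt that centering at $w^*$; the rest of your plan then goes through unchanged.
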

\begin{proof}[\textbf{Proof of  Lemma~\ref{lem-csvrg-2}:}]
Define $\mathbb{E}\|v_{u}^{{\psi(u)}}\|^{2}=\mathbb{E}\left\|\nabla_{\mathcal{G}_{\psi(u)}} f_{i}\left(w_{u}^{s}\right)-\nabla_{\mathcal{G}_{\psi(u)}} f_{i}\left(w^{s}\right)+\nabla_{\mathcal{G}_{\psi(u)}} f\left(w^{s}\right)\right\|^{2}$, we have that $\mathbb{E}\|\widetilde{v}_{u}^{{\psi(u)}}\|^2 = \mathbb{E}\|\widetilde{v}_{u}^{{\psi(u)}} - {v}_{u}^{{\psi(u)}}  + {v}_{u}^{{\psi(u)}}\|^2 \leq 2\mathbb{E}\|\widetilde{v}_{u}^{{\psi(u)}} - {v}_{u}^{{\psi(u)}} \|^2 + 2\mathbb{E}\|{v}_{u}^{{\psi(u)}}\|^2 $. First we give the upper bound to $\mathbb{E}\|{v}_{u}^{{\psi(u)}}\|^2$ as follows.
\begin{align}\label{svrg-8}
&\mathbb{E}\|v_{u}^{{\psi(u)}}\|^{2}
\nonumber\\
&=\mathbb{E}\left\|\nabla_{\mathcal{G}_{\psi(u)}} f_{i}\left(w_{u}^{s}\right)-\nabla_{\mathcal{G}_{\psi(u)}} f_{i}\left(w^{s}\right)+\nabla_{\mathcal{G}_{\psi(u)}} f\left(w^{s}\right)\right\|^{2}
\nonumber\\&
=\mathbb{E}\left\|\nabla_{\mathcal{G}_{\psi(u)}} f_{i}\left(w_{u}^{s}\right)-\nabla_{\mathcal{G}_{\psi(u)}} f_{i}\left(w^{*}\right)-\nabla_{\mathcal{G}_{\psi(u)}} f_{i}\left(w^{s}\right)+\nabla_{\mathcal{G}_{\psi(u)}} f_{i}\left(w^{*}\right)+\nabla_{\mathcal{G}_{\psi(u)}} f\left(w^{s}\right)\right\|^{2}
 \nonumber\\&
\stackrel{(a)}{\leq} 2 \mathbb{E}\left\|\nabla_{\mathcal{G}_{\psi(u)}} f_{i}\left(w_{u}^{s}\right)-\nabla_{\mathcal{G}_{\psi(u)}} f_{i}\left(w^{*}\right)\right\|^{2}+2 \mathbb{E}\left\|\nabla_{\mathcal{G}_{\psi(u)}} f_{i}\left(w^{s}\right)-\nabla_{\mathcal{G}_{\psi(u)}} f_{i}\left(w^{*}\right)-\nabla_{\mathcal{G}_{\psi(u)}} f\left(w^{s}\right)+\nabla_{\mathcal{G}_{\psi(u)}} f\left(w^{*}\right)\right\|^{2}
\nonumber\\&
\leq 2 \mathbb{E}\left\|\nabla_{\mathcal{G}_{\psi(u)}} f_{i}\left(w_{u}^{s}\right)-\nabla_{\mathcal{G}_{\psi(u)}} f_{i}\left(w^{*}\right)\right\|^{2}+2 \mathbb{E}\left\|\nabla_{\mathcal{G}_{\psi(u)}} f_{i}\left(w^{s}\right)-\nabla_{\mathcal{G}_{\psi(u)}} f_{i}\left(w^{*}\right)\right\|^{2}
\nonumber\\&
\stackrel{(b)}{\leq} 2 L^{2} \mathbb{E}\left\|w_{u}^{s}-w^{*}\right\|^{2}+2 L^{2} \mathbb{E}\left\|w^{s}-w^{*}\right\|^{2}
\nonumber\\&
=2 L^{2} \mathbb{E}\left\|w_{u}^{s}-w_{t}^{s}+w_{t}^{s}-w^{*}\right\|^{2}+2 L^{2} \mathbb{E}\left\|w^{s}-w^{*}\right\|^{2} \nonumber\\&
\stackrel{(c)}{\leq} 4 L^{2} \mathbb{E}\left\|w_{u}^{s}-w_{t}^{s}\right\|^{2}+4 L^{2} \mathbb{E}\left\|w_{t}^{s}-w^{*}\right\|^{2}+2 L^{2} \mathbb{E}\left\|w^{s}-w^{*}\right\|^{2} \nonumber\\&
\stackrel{(d)}{=} \quad 4 L^{2} \gamma^{2} \mathbb{E}\left\|\sum_{u' \in\{t, \ldots, u\}} \mathbf{U}_{\psi(u)} \widetilde{v}_{u'}^{\psi(u')}\right\|^{2}+4 L^{2} \mathbb{E}\left\|w_{t}^{s}-w^{*}\right\|^{2}+2 L^{2} \mathbb{E}\left\|w^{s}-w^{*}\right\|^{2} \nonumber\\&
\stackrel{(e)}{\leq} \frac{8 L^{2}}{\mu} \mathbb{E}\left(f\left(w_{t}^{s}\right)-f\left(w^{*}\right)\right)+\frac{4 L^{2}}{\mu} \mathbb{E}\left(f\left(w^{s}\right)-f\left(w^{*}\right)\right)+4 L^{2} \gamma^{2} \eta_{1} \sum_{u' \in\{t, \ldots, u\}} \mathbb{E}\left\|\widetilde{v}_{u'}^{\psi(u')}\right\|^{2},
\end{align}
where (a) and (c) follow from  $\|a+b\|^2 \leq 2\|a\|^2 + 2\|b\|^2$, (b) follows from Assumption~\ref{assum1}, (d) follows from Eq.~\ref{global-up1}, and (e) follows from Assumption~\ref{assumc1}.
Next we give the upper bound of $\mathbb{E}\|\widetilde{v}_{u}^{{\psi(u)}} - {v}_{u}^{{\psi(u)}} \|^2$. Following the proof of Lemma~\ref{lem-csgd-2}, we have
\begin{align}\label{111}
  \mathbb{E}\|\widetilde{v}_{u}^{{\psi(u)}} - {v}_{u}^{{\psi(u)}} \|^2 \leq  2{ L_{{*}}^2  \gamma^2 \tau_1}  \sum_{t' \in D(t)} \mathbb{E} \|   \widetilde{v}^{\psi(t')}_{t'} \|^2
 + 8 { L_{*}^2  \gamma^2 \tau_2  \sum_{t' \in D^\prime(t)}} \mathbb{E} \|   \widetilde{v}^{\psi(t')}_{t'} \|^2
\end{align}
combing above two equalities, we have
\begin{align}\label{112}
  &\mathbb{E}\|\widetilde{v}_{u}^{{\psi(u)}} \|^2   \nonumber \\
  &\leq
  \frac{16 L^{2}}{\mu} \mathbb{E}\left(f\left(w_{t}^{s}\right)-f\left(w^{*}\right)\right)+\frac{8 L^{2}}{\mu} \mathbb{E}\left(f\left(w^{s}\right)-f\left(w^{*}\right)\right)+ 8 L^{2} \gamma^{2} \eta_{1} \sum_{u' \in\{t, \ldots, u\}} \mathbb{E}\left\|\widetilde{v}_{u'}^{\psi(u')}\right\|^{2}
  \nonumber \\
   & + 4{ L_{{*}}^2  \gamma^2 \tau_1}  \sum_{u' \in D(u)} \mathbb{E} \|   \widetilde{v}^{\psi(u')}_{u'} \|^2
 + 16 { L_{*}^2  \gamma^2 \tau_2  \sum_{u' \in D^\prime(u)}} \mathbb{E} \|   \widetilde{v}^{\psi(u')}_{u'} \|^2
\end{align}
This completes the proof.
\end{proof}
\begin{proof}[\textbf{Proof of  Theorem~\ref{thm-svrgconvex}:}]
Similar to Eq.~\ref{csgd-8}, for $u \in K(t)$ at $s$-th outer loop, we have that
\begin{eqnarray}\label{csvrg-5}
&& \mathbb{E} f (w_{u+1}^{s})
\\ \nonumber &\stackrel{ (a) }{\leq}&  \mathbb{E} \left ( f (w_{u}^{s}) + \langle \nabla f(w_{u}^{s}), w_{u+1}^{s}-w_{u}^{s}  \rangle + \frac{L}{2} \|w_{u+1}^{s}-w_{u}^{s}   \|^2  \right )
\\ \nonumber &=&  \mathbb{E} \left ( f (w_{u}^{s}) -  \gamma \langle \nabla f(w_{u}^{s}),
\widetilde{v}^{\psi(u)}_{u}  \rangle + \frac{L\gamma^2}{2} \|  \widehat{v}^{\psi(u)}_{u}  \|^2  \right )
\\ \nonumber &{=}&  \mathbb{E} \left ( f (w_{u}^{s}) -  \gamma \langle \nabla f(w_{u}^{s}),  \widetilde{v}^{\psi(u)}_{u} + {v}^{\psi(u)}_{u}- {v}^{\psi(u)}_{u} \rangle  + \frac{L \gamma^2}{2} \|  \widetilde{v}^{\psi(u)}_{u}  \|^2  \right )
\\ \nonumber &\stackrel{(b)}{=}&  \mathbb{E}  f (w_{u}^{s}) -  \gamma \mathbb{E} \langle \nabla f(w_{u}^{s}),  \nabla_{\mathcal{G}_{\psi(u)}} f ({w}_{u}^{s}) \rangle  + \frac{L \gamma^2}{2} \mathbb{E} \|  \widetilde{v}^{\psi(u)}_{u}  \|^2  + \gamma \mathbb{E} \langle \nabla f(w_{u}^{s}), {v}^{\psi(u)}_{u} - \widetilde{v}^{\psi(u)}_{u} \rangle
 \\ \nonumber &\stackrel{ (c) }{\leq}&  \mathbb{E}  f (w_{u}^{s}) -  \gamma \mathbb{E} \|  \nabla_{\mathcal{G}_{\psi(u)}} f ({w}_{u}^{s}) \|^2  + \frac{\gamma}{2} \mathbb{E} \| \nabla_{\mathcal{G}_{\psi(u)}} f ({w}_{u}^{s}) \|^2 + \frac{L \gamma^2}{2} \mathbb{E} \|  \widetilde{v}^{\psi(u)}_{u}  \|^2 + \frac{\gamma}{2} \mathbb{E} \| \widetilde{v}^{\psi(u)}_{u} - {v}^{\psi(u)}_{u} \|^2
  \\  &\stackrel{(d)}{\leq}&  \mathbb{E}  f (w_{u}^{s}) -  \frac{\gamma}{2} \mathbb{E} \|  \nabla_{\mathcal{G}_{\psi(u)}} f ({w}_{u}^{s}) \|^2  + \frac{L_* \gamma^2}{2} \mathbb{E} \|  \widetilde{v}^{\psi(u)}_{u}  \|^2
   + ( L_{{*}}^2  \gamma^2 \tau_1 + 4  L_{*}^2  \gamma^2 \tau_2)  \sum_{t' \in D^\prime(t)} \mathbb{E} \|   \widetilde{v}^{\psi(t')}_{t'} \|^2
 \end{eqnarray}
Summing  Eq.~(\ref{csvrg-5}) over all $ u \in K(t) $, we obtain
\begin{eqnarray}\label{svrg-6}
&& \mathbb{E} \left[f (w_{t+|K(t)|}^{s} - f (w_{t}^{s}) \right]
\\ \nonumber
&\leq&
 -\frac{\gamma}{2}\sum_{u \in K(t)}  \mathbb{E} \|  \nabla_{\mathcal{G}_{\psi(u)}} f ({w}_{u}^{s}) \|^2
 + \frac{L_* \gamma^2}{2}\sum_{u \in K(t)} \mathbb{E} \|  \widetilde{v}^{\psi(u)}_{ u }\|^2
+ ( L_{{*}}^2  \gamma^2 \tau_1 + 4  L_{*}^2  \gamma^2 \tau_2)  \sum_{t' \in D^\prime(t)} \mathbb{E} \|   \widetilde{v}^{\psi(t')}_{t'} \|^2
\\ \nonumber
 &\stackrel{(a)}{\leq}&
 -\frac{\gamma}{2}\left( \frac{1}{2}  \sum_{u\in K(t)} \mathbb{E}\| \nabla_{\mathcal{G}_{\psi(u)}} f({w}_{t}^{s})\|^2
  -  L^2 \gamma^2 \eta_1 \sum_{u\in K(t)} \sum_{u' \in \{t,\cdots,u\}} \mathbb{E} \|\widetilde{v}_{u'}^{\psi(u')}\|^2 \right)
\\ \nonumber &&
 + \frac{L_* \gamma^2}{2}\sum_{u \in K(t)}  \mathbb{E} \|\widetilde{v}^{\psi(u)}_{ u }\|^2
+ ( L_{{*}}^2  \gamma^2 \tau_1 + 4  L_{*}^2  \gamma^2 \tau_2) \sum_{u \in K(t)}  \sum_{u' \in D^\prime(u)} \mathbb{E} \|   \widetilde{v}^{\psi(u')}_{u'} \|^2
\\ \nonumber &=&
 -\frac{\gamma}{4} \sum_{u\in K(t)} \mathbb{E}\| \nabla_{\mathcal{G}_{\psi(u)}} f({w}_{t}^{s})\|^2
 +\frac{L^2 \gamma^3 \eta_1}{2} \sum_{u \in K(t)}\sum_{u' \in \{t,...,u\}} \mathbb{E} \|\widetilde{v}_{u'}^{\psi(u')}\|^2
\\ \nonumber &&
 + \frac{L_* \gamma^2}{2}\sum_{u \in K(t)} \mathbb{E} \|\widetilde{v}^{\psi(u)}_{ u }\|^2
+ ( L_{{*}}^2  \gamma^2 \tau_1 + 4  L_{*}^2  \gamma^2 \tau_2) \sum_{u \in K(t)}  \sum_{u' \in D^\prime(u)} \mathbb{E} \|   \widetilde{v}^{\psi(u')}_{u'} \|^2
\\ \nonumber &\stackrel{(b)}{\leq}&
 -\frac{\gamma\mu}{2}(f(w_{t}^{s}) - f(w^*))
 + ( L_{{*}}^2  \gamma^2 \tau_1 + 4  L_{*}^2  \gamma^2 \tau_2) \sum_{u \in K(t)}  \sum_{u' \in D^\prime(u)} \mathbb{E} \|   \widetilde{v}^{\psi(u')}_{u'} \|^2
 \\ \nonumber
 && + \underbrace{(\frac{L_*^2 \gamma^3 \tau}{2} + \frac{L_*\gamma^2}{2})}_{C}\sum_{u\in K(t)}
 \biggl( \frac{16 L^{2}}{\mu} \mathbb{E}\left(f\left(w_{t}^{s}\right)-f\left(w^{*}\right)\right)
 +\frac{8 L^{2}}{\mu} \mathbb{E}\left(f\left(w^{s}\right)-f\left(w^{*}\right)\right)
  \nonumber \\
   & & + 8 L^{2} \gamma^{2} \eta_{1} \sum_{u' \in\{t, \ldots, u\}} \mathbb{E}\left\|\widetilde{v}_{u'}^{\psi(u')}\right\|^{2}
   + 4{ L_{{*}}^2  \gamma^2 \tau_1}  \sum_{u' \in D(u)} \mathbb{E} \|   \widetilde{v}^{\psi(u')}_{u'} \|^2
 + 16 { L_{*}^2  \gamma^2 \tau_2  \sum_{u' \in D^\prime(u)}} \mathbb{E} \|   \widetilde{v}^{\psi(u')}_{u'} \|^2 \biggr) \nonumber
\end{eqnarray}
where (a) \ff Lemma~\ref{lem-csgd-3}, (b) \ff Lemma~\ref{lem-csvrg-2}.

Let $e_t^s = \mathbb{E}(f(w_t^s)-f(w^*))$ and $e^s = \mathbb{E}(f(w^s)-f(w^*))$, we have
\begin{align}\label{svrg--1}
& e_{t + |K(t)|}^s
\nonumber \\ &
\leq(1-\frac{\gamma\mu}{2}+ \frac{16L^2\eta_1C}{\mu})e_t^s
  + \frac{8L^2\eta_1C}{\mu} e^s
  + 8 C L^{2} \gamma^{2} \eta_{1}\sum_{u\in K(t)} \sum_{u' \in\{t, \ldots, u\}} \mathbb{E}\left\|\widetilde{v}_{u'}^{\psi(u')}\right\|^{2}
  \nonumber \\
   & +(5 { L_{*}^2 \gamma^3 \tau^{1/2}} + 4C{ L_{{*}}^2}\gamma^2\tau_1 + 16C { L_{*}^2}  \gamma^2 \tau_2 ) \sum_{u\in K(t)}  \sum_{u' \in D^\prime(u)} \mathbb{E} \|   \widetilde{v}^{\psi(u')}_{u'} \|^2
   \nonumber \\
   & \leq(1-\frac{\gamma\mu}{2}+ \frac{16L^2\eta_1C}{\mu})e_t^s
  + \frac{8L^2\eta_1C}{\mu} e^s
  +(28 C L^{2} \gamma^{2}\tau^{3/2}+ 5 { L_{*}^2 \gamma^3 \tau^{3/2}}  )  \frac{18G}{1 - 2 L_{*}^2\gamma^2\tau}
\end{align}
We carefully choose $\gamma$ such that $\frac{\gamma\mu}{2}- \frac{16L^2\eta_1C}{\mu}\stackrel{def}{=}\rho >0$. Assume that $\cup_{\kappa \in P(t)}=\{0,1, \ldots, t\}$, applying above, we have
\begin{align}\label{svrg-111}
 & e_t^s  \nonumber  \\
 & \leq (1-\rho)^{v(t)}e^s + \left( \frac{8L^2\eta_1C}{\mu} e^s
  +(28 C L^{2} \gamma^{2}\tau^{3/2}+ 5 { L_{*}^2 \gamma^3 \tau^{3/2}}  )  \frac{18G}{1 - 2 L_{*}^2\gamma^2\tau}\right)\sum_{i=0}^{v(t)}(1-\rho)^i
  \nonumber \\
  & \leq (1-\rho)^{v(t)}e^s + \left( \frac{8L^2\eta_1C}{\mu} e^s
  +(28 C L^{2} \gamma^{2}\tau^{3/2}+ 5 { L_{*}^2 \gamma^3 \tau^{3/2}}  )  \frac{18G}{1 - 2 L_{*}^2\gamma^2\tau}\right)\frac{1}{\rho}
  \nonumber \\
  &\leq \left((1-\rho)^{v(t)} + \frac{8L^2\eta_1C}{\rho\mu}\right)e^s
  + (28 C L^{2} \gamma^{2}\tau^{3/2}+ 5 { L_{*}^2 \gamma^3 \tau^{3/2}}  )  \frac{18G}{\rho(1 - 2 L_{*}^2\gamma^2\tau)}
\end{align}
Thus, to achieve the accuracy $\epsilon$ of, for VF{${\textbf{B}}^2$}-SVRG, i.e., $\mathbb{E} f\left(w_{S}\right)-f\left(w^{*}\right)\leq \epsilon$, we can carefully choose $\gamma$ such that
\begin{align}\label{saga-111}
  \frac{8L^2\eta_1C}{\rho\mu}& \leq 0.05
  \nonumber \\
  (28 C L^{2} \gamma^{2}\tau^{3/2}+ 5 { L_{*}^2 \gamma^3 \tau^{3/2}}  )  \frac{18G}{\rho(1 - 2 L_{*}^2\gamma^2\tau)} & \leq \frac{\epsilon}{8}
\end{align}
And then let $(1-\rho)^{v(t)}\leq 0.25$, i.e., $v(t)\geq \frac{{\text {log}} 0.25}{{\text {log}}(1-\rho)}$, we have that
\[e^{s+1} \leq 0.75 e^s + \frac{\epsilon}{8}\]
Recursively apply above equality, we have that
\[e^S \leq (0.75)^Se^0 + \frac{\epsilon}{2}\]
Finally, the outer loop number $S$ should satisfy the condition of $S\geq \frac{{\text {log}}\frac{2e^0}{\epsilon}}{{\text {log}}\frac{4}{3}}$ and epoch number $v(t)$ in an outer loop should satisfy $v(t)\geq \frac{{\text {log}} 0.25}{{\text {log}}(1-\rho)}$. This completes the proof.
\end{proof}
\subsection{Proof of Theorem~\ref{thm-sagaconvex}}
First we introduce following notations. $\phi(t)$  denotes the corresponding local time counter on the party $\psi(t)$. Given a local time counter $u$ and $\ell$-th party, $\xi(u,\ell)$ denotes the corresponding global time counter not only satisfying $\phi \left(\xi(u,\ell)\right)=u$ but also $\psi\left(\xi(u,\ell)\right) = \ell$.
\begin{lemma} \label{lem-csaga-1} For VF{${\textbf{B}}^2$}-SAGA,  we have that
\begin{align}\label{csaga-1}
&\mathbb{E} \|  \hat{\alpha}_{i_t}^{t,\psi(t)} -  \nabla_{\mathcal{G}_{\psi(t)}} f_{i_t}(w^{*})  \|^2
  \nonumber \\
&  \leq \frac{1}{n} \sum_{t'=1}^{\phi(t)-1} \sum_{i=1}^n \left ( \frac{1}{n} \left ( 1 -\frac{1}{n} \right )^{\phi(t)-t'-1}  \mathbb{E}  \left \|  \nabla_{\mathcal{G}_{\psi(t)}} f_i(\hat{w}_{{\xi(t',\psi(t))}}) -  \nabla_{\mathcal{G}_{\psi(t)}} f_i(w^{*}) \right \|^2 \right )
\nonumber  \\
&      +  \frac{1}{n}  \sum_{i=1}^n   \left ( 1 -\frac{1}{n} \right )^{\phi(t)-1}  \mathbb{E}  \left \|  \nabla_{\mathcal{G}_{\psi(t)}} f_i({w}_{{0}}) -  \nabla_{\mathcal{G}_{\psi(t)}} f_i(w^{*}) \right \|^2
\\
& \mathbb{E} \left \| {\alpha}_{i_t}^{t,\psi(t)} -  \nabla_{\mathcal{G}_{\psi(t)}} f_{i_t}(w^{*})  \right \|^2
\nonumber \\
&  \leq \frac{1}{n} \sum_{t'=1}^{\phi(t)-1} \sum_{i=1}^n \left ( \frac{1}{n} \left ( 1 -\frac{1}{n} \right )^{\phi(t)-t'-1}  \mathbb{E}  \left \|  \nabla_{\mathcal{G}_{\psi(t)}} f_i({w}_{{\xi(t',\psi(t))}}) -  \nabla_{\mathcal{G}_{\psi(t)}} f_i(w^{*}) \right \|^2 \right )
\nonumber  \\
&      +  \frac{1}{n}  \sum_{i=1}^n   \left ( 1 -\frac{1}{n} \right )^{\phi(t)-1}  \mathbb{E}  \left \|  \nabla_{\mathcal{G}_{\psi(t)}} f_i(\hat{w}_{{0}}) -  \nabla_{\mathcal{G}_{\psi(t)}} f_i(w^{*}) \right \|^2
   \\
  &\mathbb{E}\left \|  {\alpha}_{i_t}^{t,\psi(t)} - \widehat{\alpha}_{i_t}^{t,\psi(t)} \right \|^2
\leq \frac{\tau_1 L^2 \gamma^2}{n} \sum_{t'=1}^{\phi(t)-1}  \sum_{{u} \in D(\xi(t',\psi(t)))} \left ( 1 -\frac{1}{n} \right )^{\phi(t)-t'-1} \mathbb{E}  \left \|       \widetilde{v}^{\psi({u})}_{{u}} \right \|^2
   \\
  &\mathbb{E}\left \|  \widetilde{\alpha}_{i_t}^{t,\psi(t)} - \widehat{\alpha}_{i_t}^{t,\psi(t)} \right \|^2
\leq \frac{4\tau_2 L^2 \gamma^2}{n} \sum_{t'=1}^{\phi(t)-1}  \sum_{{u} \in D'(\xi(t',\psi(t)))} \left ( 1 -\frac{1}{n} \right )^{\phi(t)-t'-1} \mathbb{E}  \left \|       \widetilde{v}^{\psi({u})}_{{u}} \right \|^2
\end{align}
\end{lemma}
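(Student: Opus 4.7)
The plan is to exploit the defining property of SAGA's gradient table: the entry $\alpha_{i_t}^{t,\psi(t)}$ equals the local gradient that was most recently computed for index $i_t$ at party $\psi(t)$, and because the sampling of indices is uniform and independent across iterations, the ``age'' of this entry follows a geometric-type distribution. Specifically, conditioned on the local history, the probability that the entry was last refreshed at local iteration $t' \in \{1,\ldots,\phi(t)-1\}$ is $\frac{1}{n}\left(1-\frac{1}{n}\right)^{\phi(t)-t'-1}$, and the probability that it has never been refreshed (so it still equals the initial value $\nabla_{\mathcal{G}_{\psi(t)}} f_{i_t}(w_0)$ or $\nabla_{\mathcal{G}_{\psi(t)}} f_{i_t}(\hat w_0)$) is $\left(1-\frac{1}{n}\right)^{\phi(t)-1}$. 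Because of the ``after read'' labeling strategy noted in the preliminaries, $i_t$ is independent of the past, so this marginal computation is clean.

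For bounds (1) and (2), I would write $\hat\alpha_{i_t}^{t,\psi(t)} = \nabla_{\mathcal{G}_{\psi(t)}} f_{i_t}(\hat w_{\xi(t',\psi(t))})$ conditional on the refresh time $t'$, subtract $\nabla_{\mathcal{G}_{\psi(t)}} f_{i_t}(w^*)$, square, and then take expectation. The tower property together with marginalizing over the uniformly-sampled index $i$ at the refresh event turns the single term into the double sum $\tfrac{1}{n}\sum_{t'}\sum_i$ with the geometric weights, plus the ``never refreshed'' tail term coming from the initial table. The derivation for $\alpha$ versus $\hat\alpha$ is structurally identical, only the arguments $\hat w$ and $w$ swap.

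For bounds (3) and (4), I would start from the identity $\alpha_{i_t}^{t,\psi(t)} - \hat\alpha_{i_t}^{t,\psi(t)} = \nabla_{\mathcal{G}_{\psi(t)}} f_{i_t}(w_{\xi(t',\psi(t))}) - \nabla_{\mathcal{G}_{\psi(t)}} f_{i_t}(\hat w_{\xi(t',\psi(t))})$ conditional on $t'$, apply the Lipschitz gradient bound from Assumption~\ref{assum1} to get $L^2 \|w - \hat w\|^2$ at time $\xi(t',\psi(t))$, and then substitute Eq.~\ref{Dt1} to express this difference as $\gamma \sum_{u \in D(\xi(t',\psi(t)))} U_{\psi(u)} \widetilde v_u^{\psi(u)}$. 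A Cauchy--Schwarz/Jensen step $\|\sum_u a_u\|^2 \le |D|\sum_u \|a_u\|^2$ together with $|D| \le \tau_1$ from Assumption~\ref{assum4} yields the stated sum-of-squares form. Note that here the ``never refreshed'' initial term vanishes because $w_0 = \hat w_0$. Bound (4) is identical except that we replace $\hat w$ by $\bar w$, use Eq.~\ref{Dt2} with $D'$ and $\tau_2$, and the extra factor of $4$ appears because the difference between $\bar w$ and $w$ must be split via $\|a-b\|^2 \le 2\|a-c\|^2 + 2\|c-b\|^2$ relative to $\hat w$, or equivalently by an extra factor-of-two step in combining delays.

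The main obstacle I anticipate is bookkeeping rather than any essential difficulty: one has to carefully justify that the refresh-time indicator variables are independent of the past residual $\|\nabla f_i(\hat w_{\xi(t',\psi(t))}) - \nabla f_i(w^*)\|^2$ so that expectation can be distributed inside the geometric weights, and one must track the distinction between active-party table updates (which use $\hat w$) and passive-party table updates (which use $\bar w$ for the loss part and $\hat w$ for the regularizer) so that (3) and (4) produce the correct $D(\cdot)$ vs.\ $D'(\cdot)$ index sets and the correct constants $\tau_1$ and $4\tau_2$.
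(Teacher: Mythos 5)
Your proposal matches the paper's proof: the same indicator/geometric-age decomposition of the SAGA table entry (with the $\frac{1}{n}\left(1-\frac{1}{n}\right)^{\phi(t)-t'-1}$ weights justified by independence of the sampled indices, plus the $\left(1-\frac{1}{n}\right)^{\phi(t)-1}$ never-refreshed tail) gives the first two bounds, and the same Lipschitz-plus-delay-set substitution via Eq.~\ref{Dt1}/Eq.~\ref{Dt2} followed by $\|\sum_i a_i\|^2\le n\sum_i\|a_i\|^2$ gives the last two, with the initial term vanishing since $\alpha_i^{0,\ell}=\widehat{\alpha}_i^{0,\ell}$. The only imprecision is your account of the factor $4$ in the fourth bound: in the paper it arises from splitting the entry difference into a loss-gradient part and a regularizer-gradient part, each bounded by $2L^2$ and $2L_g^2$ and absorbed into $4L_*^2$ (exactly the active/passive distinction you flag in your final paragraph), rather than from a triangle inequality relative to $\widehat{w}$.
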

\begin{proof}[\textbf{Proof of Lemma~\ref{lem-csaga-1}:}] Firstly, we have that
\begin{eqnarray}\label{csaga-2}
&& \mathbb{E} \left \| \hat{\alpha}_{i_t}^{t,\psi(t)} -  \nabla_{\mathcal{G}_{\psi(t)}} f_{i_t}({w}_{t})  \right \|^2
= \frac{1}{n} \sum_{i=1}^n \mathbb{E} \left \|  \hat{\alpha}_{i}^{t,\psi(t)} -  \nabla_{\mathcal{G}_{\psi(t)}} f_{i}({w}_{t})  \right \|^2
\\  &  = & \nonumber \frac{1}{n} \sum_{i=1}^n   \mathbb{E} \sum_{t'=0}^{\phi(t) -1} \mathbf{1}_{ \{ \textbf{u}_{i}^u =t' \}} \left \|   \nabla_{\mathcal{G}_\ell} f_i(\hat{w}_{\xi(t',\psi(t))}) -  \nabla_{\mathcal{G}_\ell} f_{i}(w^{*})  \right \|^2
\\  &  = & \nonumber \frac{1}{n} \sum_{t'=0}^{\phi(t)-1} \sum_{i=1}^n  \mathbb{E}  \mathbf{1}_{ \{ \textbf{u}_{i}^u =t' \}} \left \|  \nabla_{\mathcal{G}_\ell} f_i(\hat{w}_{\xi(t',\psi(t))}) -  \nabla_{\mathcal{G}_\ell} f_{i}(w^{*})  \right \|^2
\end{eqnarray}
where  $\textbf{u}_{i}^u$ denote the last iterate  to update the $\widehat{\alpha}_i^{t,\psi(t)}$. Note that, we do not distinguish $\psi(t)$ and $\psi(t')$ because they correspond to the same party. We consider two cases including $t'>0$ and $t'=0$ as follows.

For $t'>0$,  we have that
\begin{eqnarray}\label{csaga-3}
&& \mathbb{E} \left ( \mathbf{1}_{ \{ \textbf{u}_{i}^u =t' \}} \left \|   \nabla_{\mathcal{G}_\ell} f_{i}(\hat{w}_{{\xi(t',\psi(t))}}) -  \nabla_{\mathcal{G}_\ell} f_{i}(w_{t,\psi(t)}) \right \|^2 \right )
\\  &  \stackrel{ (a) }{\leq}  & \nonumber
\mathbb{E} \left ( \mathbf{1}_{ \{ i_{t'} = i \}} \mathbf{1}_{ \{ i_v \neq i, \forall v \ s.t. \ t'+ 1 \leq v \leq \phi(t) -1 \}}  \left \|   \nabla_{\mathcal{G}_\ell} f_{i}(\hat{w}_{{\xi(t',\psi(t))}}) -  \nabla_{\mathcal{G}_\ell} f_{i}(w^{*}) \right \|^2 \right )
\\  &  \stackrel{ (b) }{\leq}  & \nonumber
 P{ \{ i_{t'} = i \}}  P { \{ i_v \neq i, \forall v
 \ s.t. \ t'+ 1 \leq v \leq \phi(t) -1 \}}  \mathbb{E}   \left \|   \nabla_{\mathcal{G}_\ell} f_{i}(\hat{w}_{{\xi(t',\psi(t))}}) -  \nabla_{\mathcal{G}_\ell} f_{i}(w^{*}) \right \|^2
\\  &  \stackrel{ (c) }{\leq}  & \nonumber \frac{1}{n} \left ( 1 -\frac{1}{n} \right )^{\phi(t) -1-t' }  \mathbb{E}  \left \|  \nabla_{\mathcal{G}_\ell} f_{i}(\hat{w}_{{\xi(t',\psi(t))}}) -  \nabla_{\mathcal{G}_\ell} f_{i}(w^{*}) \right \|^2
\end{eqnarray}
where the inequality (a) uses the fact $i_{t'}$ and $i_v$ are independent for $v \neq t'$, the inequality (b) uses the fact that $P{ \{ i_t = i \}} = \frac{1}{n}$ and $P { \{ i_v \neq i\} } =1-\frac{1}{n}$.

For $t'=0$, we have that
\begin{eqnarray}\label{csaga-4}
&& \mathbb{E} \left ( \mathbf{1}_{ \{ \textbf{u}_{i}^u =0 \}}\left \|   \nabla_{\mathcal{G}_{\psi(t)}} f_{i}(\hat{w}_{0}) -  \nabla_{\mathcal{G}_{\psi(t)}} f_{i}(w_{t,\psi(t)}) \right \|^2   \right )
\\  &  \leq & \nonumber
\mathbb{E} \left ( \mathbf{1}_{ \{ i_v \neq i, \forall v \
 s.t. \ 0 \leq v \leq \phi(t)-1 \}} \left \|   \nabla_{\mathcal{G}_{\psi(t)}} f_{i}(\hat{w}_{{0}}) -  \nabla_{\mathcal{G}_{\psi(t)}} f_{i}(w^{*}) \right \|^2 \right )
\\  &  \leq & \nonumber
P { \{ i_v \neq i, \forall v \
s.t. \ 0 \leq v \leq \phi(t-\tau_3)-1 \}} \mathbb{E}  \left \|   \nabla_{\mathcal{G}_{\psi(t)}} f_{i}(\hat{w}_{0}) -  \nabla_{\mathcal{G}_{\psi(t)}} f_{i}(w^{*}) \right \|^2
\\  &  \leq & \nonumber  \left ( 1 -\frac{1}{n} \right )^{\phi(t)}  \mathbb{E}  \left \|   \nabla_{\mathcal{G}_{\psi(t)}} f_{i}(\hat{w}_{{0}}) -  \nabla_{\mathcal{G}_{\psi(t)}} f_{i}(w^{*}) \right \|^2
\end{eqnarray}

Substituting Eqs.~\ref{csaga-4} and \ref{csaga-3} into \ref{csaga-2}, we have:
\begin{eqnarray}\label{csaga-5}
&& \mathbb{E} \left \| \hat{\alpha}_{i_t}^{t,\psi(t)} -  \nabla_{\mathcal{G}_{\psi(t)}} f_{i_t}(w^{*})  \right \|^2\nonumber
\\  &  = & \nonumber
\frac{1}{n} \sum_{t'=0}^{\phi(t)-1}\sum_{i=1}^n  \mathbb{E}  \mathbf{1}_{ \{ \textbf{u}_{i}^u =t' \}} \left \|  \nabla_{\mathcal{G}_{\psi(t)}} f_i(\hat{w}_{{\xi(t',\psi(t))}}) -  \nabla_{\mathcal{G}_{\psi(t)}} f_i(w^{*}) \right \|^2
\\  &  \stackrel{ (a) }{\leq} & \nonumber \frac{1}{n} \sum_{t'=1}^{\phi(t)-1} \sum_{i=1}^n \left ( \frac{1}{n} \left ( 1 -\frac{1}{n} \right )^{\phi(t)-t'-1}  \mathbb{E}  \left \|  \nabla_{\mathcal{G}_{\psi(t)}} f_i(\hat{w}_{{\xi(t',\psi(t))}}) -  \nabla_{\mathcal{G}_{\psi(t)}} f_i(w^{*}) \right \|^2 \right )
\\  &    & \nonumber  +  \frac{1}{n}  \sum_{i=1}^n   \left ( 1 -\frac{1}{n} \right )^{\phi(t)-1}  \mathbb{E}  \left \|  \nabla_{\mathcal{G}_{\psi(t)}} f_i(\hat{w}_{{0}}) -  \nabla_{\mathcal{G}_{\psi(t)}} f_i(w^{*}) \right \|^2
\end{eqnarray}
 Combing the fact that $w$ represents the consistent read and thus $\tau_3=0$ with (\ref{csaga-5}), we have
\begin{eqnarray}\label{csaga-6}
&& \mathbb{E} \left \| {\alpha}_{i_t}^{t,\psi(t)} -  \nabla_{\mathcal{G}_{\psi(t)}} f_{i_t}(w^{*})  \right \|^2
\\  &  \leq & \nonumber \frac{1}{n} \sum_{t'=1}^{\phi(t)-1} \sum_{i=1}^n \left ( \frac{1}{n} \left ( 1 -\frac{1}{n} \right )^{\phi(t)-t'-1}  \mathbb{E}  \left \|  \nabla_{\mathcal{G}_{\psi(t)}} f_i({w}_{{\xi(t',\psi(t))}}) -  \nabla_{\mathcal{G}_{\psi(t)}} f_i(w^{*}) \right \|^2 \right )
\\  &    & \nonumber  +  \frac{1}{n}  \sum_{i=1}^n   \left ( 1 -\frac{1}{n} \right )^{\phi(t)-1}  \mathbb{E}  \left \|  \nabla_{\mathcal{G}_{\psi(t)}} f_i({w}_{{0}}) -  \nabla_{\mathcal{G}_{\psi(t)}} f_i(w^{*}) \right \|^2
\\  &  \leq & \nonumber \frac{L^2}{n} \sum_{t'=1}^{\phi(t)-1}  \left ( 1 -\frac{1}{n} \right )^{\phi(t)-t'-1}  \sigma ({w}_{{\xi(t',\psi(t))}}) + L^{2}\left ( 1 -\frac{1}{n} \right )^{\phi(t)} \sigma ({w}_{0}),
\end{eqnarray}
where $\sigma\left(w_{u}\right)=\mathbb{E}\left\|w_{u}-w^{*}\right\|^{2}$.
Similarly, we have that
\begin{eqnarray}\label{cSAGA-7}
&&  \mathbb{E}\left \|  {\alpha}_{i_t}^{t,\psi(t)} - \widehat{\alpha}_{i_t}^{t,\psi(t)} \right \|^2
=  \frac{1}{n} \sum_{i=1}^n \mathbb{E} \left \| {\alpha}_{i}^{t,\psi(t)} - \widehat{\alpha}_{i}^{t,\psi(t)} \right \|^2
\\ \nonumber  &  = & \frac{1}{n} \sum_{t'=0}^{\phi(t)-1}  \sum_{i=1}^n \mathbb{E} \mathbf{1}_{ \{ \textbf{u}_{i}^u =t'\} }   \left \| {\alpha}_{i}^{t',\psi(t')} - \widehat{\alpha}_{i}^{t',\psi(t')} \right \|^2
\\  &  \stackrel{ (a) }{\leq} & \nonumber \frac{1}{n}\sum_{i=1}^n \sum_{t'=1}^{\phi(t)-1} \left (  \frac{1}{n} \left ( 1 -\frac{1}{n} \right )^{\phi(t)-t'-1}  \mathbb{E}  \left \| {\alpha}_{i}^{t',\psi(t')} - \widehat{\alpha}_{i}^{t',\psi(t')} \right \|^2 \right )
\\  &    & \nonumber  +  \frac{1}{n}\sum_{i=1}^n  \left ( 1 -\frac{1}{n} \right )^{\phi(t)-1}  \mathbb{E}   \left \| {\alpha}_{i}^{0,\ell} - \widehat{\alpha}_{i}^{0,\ell} \right \|^2
\\  & \stackrel{ (b) }{=} & \nonumber \frac{1}{n} \sum_{t'=1}^{\phi(t)-1} \sum_{i=1}^n \left ( \frac{1}{n} \left ( 1 -\frac{1}{n} \right )^{\phi(t)-t'-1}  \mathbb{E}  \left \| {\alpha}_{i}^{t',\psi(t')} - \widehat{\alpha}_{i}^{t',\psi(t)} \right \|^2 \right )
\\  &  = & \nonumber
\frac{1}{n} \sum_{t'=1}^{\phi(t)-1} \left ( 1 -\frac{1}{n} \right )^{\phi(t)-t'-1} \mathbb{E}  \left \|  \nabla_{\mathcal{G}_\ell} f_i(\widehat{w}_{\xi(t',\psi(t))}) -  \nabla_{\mathcal{G}_\ell} f_i(w_{\xi(t',\psi(t))}) \right \|^2
\\  &  \stackrel{ (c) }{\leq} & \nonumber
\frac{L^2}{n} \sum_{t'=1}^{\phi(t)-1} \left ( 1 -\frac{1}{n} \right )^{\phi(t)-t'-1} \mathbb{E}  \left \|  \widehat{w}_{\xi(t',\psi(t))} - w_{\xi(t',\psi(t))} \right \|^2
\\  &  = & \nonumber \frac{L^2 \gamma^2}{n}
\sum_{t'=1}^{\phi(t)-1} \left ( 1 -\frac{1}{n} \right )^{\phi(t)-t'-1} \mathbb{E}  \left \|   \sum_{{u} \in D(\xi(t',\psi(t)))}    \textbf{U}_{\psi({u})} \widetilde{v}^{\psi({u})}_{{u}} \right \|^2
\\  &  \stackrel{ (d) }{\leq} & \nonumber
\frac{\tau_1 L^2 \gamma^2}{n} \sum_{t'=1}^{\phi(t)-1}  \sum_{{u} \in D(\xi(t',\psi(t)))} \left ( 1 -\frac{1}{n} \right )^{\phi(t)-t'-1} \mathbb{E}  \left \|       \widetilde{v}^{\psi({u})}_{{u}} \right \|^2
\end{eqnarray}
where the inequality (a) can be obtained similar to (\ref{csaga-5}) (note that $\widehat{\alpha}_i^{t',\psi(t')}$ is an inconsistent read of ${\alpha}_i^{t',\psi(t')}$ its time interval can be overlapped by that of ${\alpha}_i^{t',\psi(t')}$  ), the equality (b) uses the fact of ${\alpha}_{i}^{0,\ell} = \widehat{\alpha}_{i}^{0,\ell}$, the inequality (c) uses Assumption {3}, and the  inequality (d) uses Assumption~\ref{assum4}. Moreover, we have
\begin{eqnarray}\label{saga-7}
&&  \mathbb{E}\left \|  \widetilde{\alpha}_{i_t}^{t,\psi(t)} - \widehat{\alpha}_{i_t}^{t,\psi(t)} \right \|^2
=  \frac{1}{n} \sum_{i=1}^n \mathbb{E} \left \| \widetilde{\alpha}_{i}^{t,\psi(t)}
- \widehat{\alpha}_{i}^{t,\psi(t)} \right \|^2
\\ \nonumber  &  = & \frac{1}{n} \sum_{t'=0}^{\phi(t)-1}  \sum_{i=1}^n \mathbb{E} \mathbf{1}_{ \{ \textbf{u}_{i}^u =t'\} }   \left \| \widetilde{\alpha}_{i}^{t',\psi(t')}
- \widehat{\alpha}_{i}^{t',\psi(t)} \right \|^2
\\  & \leq& \nonumber
\frac{1}{n}\sum_{i=1}^n \sum_{t'=1}^{\phi(t)-1} \left (  \frac{1}{n} \left ( 1 -\frac{1}{n} \right )^{\phi(t)-t'-1}  \mathbb{E}  \left \| \widetilde{\alpha}_{i}^{t',\psi(t)}
 - \widehat{\alpha}_{i}^{t',\psi(t)} \right \|^2 \right )
+  \frac{1}{n}\sum_{i=1}^n  \left ( 1 -\frac{1}{n} \right )^{\phi(t)-1}  \mathbb{E}   \left \| \widetilde{\alpha}_{i}^{0,\ell}
- \widehat{\alpha}_{i}^{0,\ell} \right \|^2
\\  & = & \nonumber
\frac{1}{n} \sum_{t'=1}^{\phi(t)-1} \sum_{i=1}^n \left ( \frac{1}{n} \left ( 1 -\frac{1}{n} \right )^{\phi(t)-t'-1}  \mathbb{E}  \left \| \widetilde{\alpha}_{i}^{t',\psi(t)}
- \widehat{\alpha}_{i}^{t',\psi(t)} \right \|^2 \right )
 \\  &  \stackrel{(a)}{\leq} & \nonumber
 \frac{1}{n} \sum_{t'=1}^{\phi(t)-1} \left ( 1 -\frac{1}{n} \right )^{\phi(t)-t'-1} \biggl(2 \mathbb{E}   \|  \nabla_{\mathcal{G}_\ell} f_i(\widehat{w}_{\xi(t',\psi(t))})
 - \nabla_{\mathcal{G}_\ell} f_i(\bar{w}_{\xi(t',\psi(t))}) \|^2
\\
 &  & + \nonumber 2\mathbb{E} \|\nabla_{\mathcal{G}_\ell} g(\widehat{w}_{\xi(t',\psi(t))})  - \nabla_{\mathcal{G}_\ell} g(\bar{w}_{\xi(t',\psi(t))})\|^2 \biggr)
 \\  &  \stackrel{(b)}{\leq} & \nonumber
 \frac{1}{n} \sum_{t'=1}^{\phi(t)-1} \left ( 1 -\frac{1}{n} \right )^{\phi(t)-t'-1} \biggl(2 L^2\mathbb{E}   \|  \widehat{w}_{\xi(t',\psi(t))}
 - \bar{w}_{\xi(t',\psi(t))} \|^2
+ 2 L_g^2\mathbb{E} \| \widehat{w}_{\xi(t',\psi(t))}  - \bar{w}_{\xi(t',\psi(t))}\|^2 \biggr)
\\  &  \stackrel{ (c) }{\leq} & \nonumber
\frac{4L_*^2}{n} \sum_{t'=1}^{\phi(t)-1} \left ( 1 -\frac{1}{n} \right )^{\phi(t)-t'-1} \mathbb{E}  \left \|  \widehat{w}_{\xi(t',\psi(t))} - \bar{w}_{\xi(t',\psi(t))} \right \|^2
\\  &  = & \nonumber
\frac{4L_*^2 \gamma^2}{n} \sum_{t'=1}^{\phi(t)-1} \left ( 1 -\frac{1}{n} \right )^{\phi(t)-t'-1} \mathbb{E}  \left \|   \sum_{{u} \in D'(\xi(t',\psi(t)))}    \textbf{U}_{\psi({u})} \widetilde{v}^{\psi({u})}_{{u}} \right \|^2
\\  &  \stackrel{ (d) }{\leq} & \nonumber
\frac{4\tau_2 L_*^2 \gamma^2}{n} \sum_{t'=1}^{\phi(t)-1}  \sum_{{u} \in D'(\xi(t',\psi(t)))} \left ( 1 -\frac{1}{n} \right )^{\phi(t)-t'-1} \mathbb{E}  \left \|       \widetilde{v}^{\psi({u})}_{{u}} \right \|^2
\end{eqnarray}
(a)-(d) can be obtained from the analyses of Lemma \ref{lem-csgd-2}. This completes the proof.
\end{proof}
\begin{lemma}\label{lem-csaga-3}
Given a global iteration number $u$, we let $\left\{\bar{u}_{0}, \bar{u}_{1}, \ldots, \bar{u}_{v(u)-1\}}\right.$ be the all start iteration number for the global time counters from 0 to u. Thus, for VF{${\textbf{B}}^2$}-SAGA, we have that
\begin{align}\label{saga-lem2}
  \mathbb{E}\|v_u^{\psi(u)}\|^2 \leq   4 \frac{L^{2} \eta_{1}}{l} \sum_{k^{\prime}=1}^{v(u)}\left(1-\frac{1}{l}\right)^{v(u)-k^{\prime}} \sigma\left(w_{\bar{u}_{k^{\prime}}}\right)
     + 2 L^{2}\left(1-\frac{1}{l}\right)^{v(u)} \sigma\left(w_{0}\right)+4 L^{2} \sigma\left(w_{\varphi(u)}\right)+8 L^{2} \gamma^{2} \eta_{1}^{2} q G
\end{align}
\end{lemma}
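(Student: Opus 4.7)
The plan is to start from the SAGA representation $v_u^{\psi(u)} = \nabla_{\mathcal{G}_{\psi(u)}} f_{i_u}(w_u) - \alpha_{i_u}^{u,\psi(u)} + \tfrac{1}{n}\sum_{j=1}^n \alpha_j^{u,\psi(u)}$. Since $\nabla f(w^*)=0$, I can replace the full-batch term by $\tfrac{1}{n}\sum_j [\alpha_j^{u,\psi(u)} - \nabla_{\mathcal{G}_{\psi(u)}} f_j(w^*)]$, which is exactly $\mathbb{E}_{i_u}[\alpha_{i_u}^{u,\psi(u)} - \nabla_{\mathcal{G}_{\psi(u)}} f_{i_u}(w^*)]$ conditioned on the past. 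Applying $\mathbb{E}\|X-\mathbb{E}X\|^2 \leq \mathbb{E}\|X\|^2$ with $\|a+b\|^2 \leq 2\|a\|^2+2\|b\|^2$ then reduces the task to bounding the two cleaner quantities $2\mathbb{E}\|\nabla_{\mathcal{G}_{\psi(u)}} f_{i_u}(w_u) - \nabla_{\mathcal{G}_{\psi(u)}} f_{i_u}(w^*)\|^2$ and $2\mathbb{E}\|\alpha_{i_u}^{u,\psi(u)} - \nabla_{\mathcal{G}_{\psi(u)}} f_{i_u}(w^*)\|^2$. This is exactly the standard SAGA variance-decomposition, adapted here to a single block $\mathcal{G}_{\psi(u)}$.

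Next I would dominate the first quantity by $2L^2\sigma(w_u)$ via Assumption~\ref{assum1}(1), and substitute the second inequality of Lemma~\ref{lem-csaga-1} into the second quantity to obtain a weighted sum of $\sigma(w_{\xi(t',\psi(u))})$ over local iterations $t'=1,\ldots,\phi(u)-1$ with geometric weights $(1-1/n)^{\phi(u)-t'-1}$, plus a boundary term $\sigma(w_0)$. At this point the estimate is expressed in the local time index $\phi(u)$ of party $\psi(u)$, whereas the target statement is expressed in global epochs $k'=1,\ldots,v(u)$ anchored at the starts $\bar{u}_{k'}$.

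The main obstacle is the conversion between the two indexings. I would partition the local iterations on party $\psi(u)$ according to the epochs of Definition~\ref{definc2}: each epoch contains at most $\eta_1$ successive global iterations that cover all $q$ coordinates, so a bounded number of local SAGA updates on party $\psi(u)$ fall inside each epoch $k'$ starting at $\bar{u}_{k'}$. For every local iterate $\xi(t',\psi(u))$ inside epoch $k'$ I would use $\sigma(w_{\xi(t',\psi(u))}) \leq 2\sigma(w_{\bar{u}_{k'}}) + 2\mathbb{E}\|w_{\xi(t',\psi(u))} - w_{\bar{u}_{k'}}\|^2$, and expand the drift through the global update rule~\eqref{global-up1} as a sum of at most $\eta_1$ contributions $\gamma U_{\psi(\cdot)} \widetilde{v}$, bounded via $\|\sum_i a_i\|^2 \leq \eta_1\sum_i\|a_i\|^2$. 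Summing the inner geometric series per epoch collapses $(1-1/n)^{\phi(u)-t'-1}$ into $(1-1/l)^{v(u)-k'}$, where $l$ absorbs the (average) number of local updates per epoch, yielding the first two summands of the claimed bound. Analogously, $2L^2\sigma(w_u)$ is converted to $4L^2\sigma(w_{\varphi(u)})$ plus a drift over the current epoch.

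The hard part will be two-fold: first, the careful bookkeeping that tracks how the local per-party weights $(1-1/n)^{\phi(u)-t'-1}$ collapse into the epoch-indexed weights $(1-1/l)^{v(u)-k'}$ with the right constant $l$; second, controlling every drift residual $\mathbb{E}\|w_{\xi(t',\psi(u))}-w_{\bar{u}_{k'}}\|^2$ uniformly by $\gamma^2\eta_1^2 qG$ using Assumption~\ref{assum1}(3) to dominate each $\|\widetilde{v}\|^2$ and the orthogonality of the $q$ coordinate blocks to accumulate across an epoch. Once these two ingredients are in place, combining the per-term estimates and collecting constants gives precisely the four summands stated in the lemma.
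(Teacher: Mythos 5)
Your proposal is correct and follows essentially the same route as the paper: the standard SAGA variance decomposition via $\mathbb{E}\|X-\mathbb{E}X\|^2\leq\mathbb{E}\|X\|^2$ into the Lipschitz term $2L^2\sigma(w_u)$ and the history-table term controlled by Lemma~\ref{lem-csaga-1}, followed by inserting the epoch-start iterates $w_{\varphi(\cdot)}$, bounding the intra-epoch drift through the global update rule with the $\eta_1$ and bounded-gradient assumptions, and collapsing the local geometric weights into the epoch-indexed sum. The only differences are cosmetic bookkeeping (the paper bounds $\sum_{u'}(1-\tfrac{1}{n})^{\phi(u)-u'-1}\leq n$ and re-indexes with $\eta_1$ terms per epoch, rather than averaging into an $l$), so no gap.
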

\begin{proof}[\bf{Proof of Lemma~\ref{lem-csaga-3}}]
W have that
\begin{align}\label{saga-111}
  & \mathbb{E}\|v_u^{\psi(u)}\|^2
  \nonumber \\
  & = \mathbb{E}\|\nabla_{\mathcal{G}_{\psi(t)}} f_i (w^{*}) - \alpha_i^{{\psi(t)}} +  \frac{1}{n} \sum_{i=1}^n \alpha_i^{{\psi(t)}}\|^2
  \nonumber \\
  & = \mathbb{E}\left\|\nabla_{\mathcal{G}_{\psi(u)}} f_{i_{u}}\left(w_{u}\right)-\nabla_{\mathcal{G}_{\psi(u)}} f_{i_{u}}\left(w^{*}\right)-\alpha_{i_{u}}^{u, \ell}+\nabla_{\mathcal{G}_{\psi(u)}} f_{i_{u}}\left(w^{*}\right)+\frac{1}{n} \sum_{i=1}^{n} \alpha_{i}^{u, \ell}-\nabla_{\mathcal{G}_{\psi(u)}} f\left(w^{*}\right)+\nabla_{\mathcal{G}_{\psi(u)}} f\left(w^{*}\right)\right\|^{2}
  \nonumber \\
  & \stackrel{(a)}{\leq}
 2 \mathbb{E}\left\|\nabla_{\mathcal{G}_{\psi(u)}} f_{i_{u}}\left(w^{*}\right)-\alpha_{i_{u}}^{u, \ell}+\frac{1}{n} \sum_{i=1}^{n} \alpha_{i}^{u, \ell}-\nabla_{\mathcal{G}_{\psi(u)}} f\left(w^{*}\right)\right\|^{2}+2 \mathbb{E}\left\|\nabla_{\mathcal{G}_{\psi(u)}} f_{i}\left(w_{u}\right)-\nabla_{\mathcal{G}_{\psi(u)}} f_{i_{t}}\left(w^{*}\right)\right\|^{2}
  \nonumber \\
  &\stackrel{(b)}{\leq}
   2 \mathbb{E}\left\|\alpha_{i_{u}}^{u, \ell}-\nabla_{\mathcal{G}_{\psi(u)}} f_{i_{t}}\left(w^{*}\right)\right\|^{2}+2 \mathbb{E}\left\|\nabla_{\mathcal{G}_{\psi(u)}} f_{i}\left(w_{u}\right)-\nabla_{\mathcal{G}_{\psi(u)}} f_{i_{t}}\left(w^{*}\right)\right\|^{2}
   \nonumber \\
   & \stackrel{(c)}{\leq}
 2 \frac{L^{2}}{n} \sum_{u^{\prime}=1}^{\phi(u)-1}\left(1-\frac{1}{n}\right)^{\phi(u)-u^{\prime}-1} \mathbb{E}\left\|w_{\xi\left(u^{\prime}, \ell\right)}-w^{*}\right\|^{2}+2 L^{2}\left(1-\frac{1}{n}\right)^{\phi(u)} \sigma\left(w_{0}\right)
   \nonumber \\
   & + 2L^2\mathbb{E}\|w_u-w^*\|^2
   \nonumber \\
   & =
   2 \frac{L^{2}}{n} \sum_{u^{\prime}=1}^{\phi(u)-1}\left(1-\frac{1}{n}\right)^{\phi(u)-u^{\prime}-1} \mathbb{E}\left\|w_{\xi\left(u^{\prime}, \ell\right)}-w_{\varphi\left(\xi\left(u^{\prime}, \ell\right)\right)}+w_{\varphi\left(\xi\left(u^{\prime}, \ell\right)\right)}-w^{*}\right\|^{2}
   \nonumber \\ &
   +2 L^{2}\left(1-\frac{1}{n}\right)^{\phi(u)} \sigma\left(w_{0}\right)+2 L^{2} \mathbb{E}\left\|w_{u}-w_{\varphi(u)}+w_{\varphi(u)}-w^{*}\right\|^{2}
   \nonumber \\ &
   \stackrel{(d)}{\leq}
   2 \frac{L^{2}}{n} \sum_{u^{\prime}=1}^{\phi(u)-1}\left(1-\frac{1}{n}\right)^{\phi(u)-u^{\prime}-1} \mathbb{E}\left(2\left\|w_{\xi\left(u^{\prime}, \ell\right)}-w_{\varphi\left(\xi\left(u^{\prime}, \ell\right)\right)}\right\|^{2}+2\left\|w_{\varphi\left(\xi\left(u^{\prime}, \ell\right)\right)}-w^{*}\right\|^{2}\right)
   \nonumber \\ &
   + 2 L^{2}\left(1-\frac{1}{n}\right)^{v(u)} \sigma\left(w_{0}\right)+ 4 L^{2} \mathbb{E}\left\|w_{\varphi(u)}-w^{*}\right\|^{2}+4 L^{2} \gamma^{2} \mathbb{E}\left\|\sum_{v \in\{\varphi(u), \ldots, u\}} \mathbf{U}_{\psi(v)} \widehat{v}_{v}^{\psi(v)}\right\|
   \nonumber \\ & \stackrel{(e)}{\leq}
   2 \frac{L^{2}}{n} \sum_{u^{\prime}=1}^{\phi(u)-1}\left(1-\frac{1}{n}\right)^{\phi(u)-u^{\prime}-1} \mathbb{E}\left(2 \eta_{1} \gamma^{2} \sum_{v \in\left\{\varphi\left(\xi\left(u^{\prime}, \ell\right)\right), \ldots, \xi\left(u^{\prime}, \ell\right)\right\}}\left\|\widehat{v}_{v}^{\psi(v)}\right\|^{2}+2\left\|w_{\varphi\left(\xi\left(u^{\prime}, \ell\right)\right)}-w^{*}\right\|^{2}\right)
   \nonumber \\ & +
   2 L^{2}\left(1-\frac{1}{n}\right)^{v(u)} \sigma\left(w_{0}\right)+4 L^{2} \mathbb{E}\left\|w_{\varphi(u)}-w^{*}\right\|^{2}+4 L^{2} \gamma^{2} \eta_{1} \sum_{v \in\{\varphi(u), \ldots, u\}} \mathbb{E}\left\|\widehat{v}_{v}^{\psi(v)}\right\|^{2}
   \nonumber \\ & \stackrel{(f)}{\leq}
   2 \frac{L^{2}}{n} \sum_{u^{\prime}=1}^{\phi(u)-1}\left(1-\frac{1}{n}\right)^{\phi(u)-u^{\prime}-1} \mathbb{E}\left(2 \eta_{1}^{2} \gamma^{2}  \lambda_{\gamma}G
   +2\left\|w_{\varphi\left(\xi\left(u^{\prime}, \ell\right)\right)}-w^{*}\right\|^{2}\right)
   \nonumber \\ &
   +2 L^{2}\left(1-\frac{1}{n}\right)^{v(u)} \sigma\left(w_{0}\right)+4 L^{2} \mathbb{E}\left\|w_{\varphi(u)}-w^{*}\right\|^{2}+4 L^{2} \gamma^{2} \eta_{1}^{2} \lambda_{\gamma}G
   \nonumber \\ & \stackrel{(g)}{\leq}
   4 \frac{L^{2}}{n} \sum_{u^{\prime}=1}^{\phi(u)-1}\left(1-\frac{1}{n}\right)^{\phi(u)-u^{\prime}-1} \mathbb{E}\left\|w_{\varphi\left(\xi\left(u^{\prime}, \ell\right)\right)}-w^{*}\right\|^{2}
    \nonumber \\ &
    +2 L^{2}\left(1-\frac{1}{n}\right)^{v(u)} \sigma\left(w_{0}\right)+4 L^{2} \mathbb{E}\left\|w_{\varphi(u)}-w^{*}\right\|^{2}+8 L^{2} \gamma^{2} \eta_{1}^{2} \lambda_{\gamma}G
    \nonumber \\ &  \stackrel{(h)}{\leq}
    4 \frac{L^{2} \eta_{1}}{n} \sum_{k^{\prime}=1}^{v(u)}\left(1-\frac{1}{n}\right)^{v(u)-k^{\prime}} \sigma\left(w_{\bar{u}_{k^{\prime}}}\right)
  + 2 L^{2}\left(1-\frac{1}{n}\right)^{v(u)} \sigma\left(w_{0}\right)+4 L^{2} \sigma\left(w_{\varphi(u)}\right)+8 L^{2} \gamma^{2} \eta_{1}^{2} \lambda_{\gamma}G
\end{align}
where (a) and (d) uses $\|\sum_{i=1}^{n}a_i^2\|^2\leq n \sum_{i=1}^{n}\|a_i\|^2$, (b) \ff $\mathbb{E}\|x-\mathbb{E} x\|^{2} \leq \mathbb{E}\|x\|^{2}$, (c) uses Lemma~\ref{lem-csaga-1}, (e) \ff the bound of $|K(t)|$, and (f) follows from Assumption~\ref{assum1}, (g) uses the fact $\sum_{u^{\prime}=1}^{\phi(u)-1}\left(1-\frac{1}{n}\right)^{\phi(u)-u^{\prime}-1}\leq n$.
\end{proof}

\begin{lemma}\label{lem-csaga-2}
For all $\forall $ $\psi (t)$, there are
\begin{equation}\label{csaga-lem2}
 \mathbb{E} ||\widetilde{v}_{t}^{\psi(t)}||^2 \leq \lambda_{\gamma} G
\end{equation}
where $\lambda_{\gamma}= \frac{18}{1 - 72L_{*}^2\gamma^2\tau }$
\end{lemma}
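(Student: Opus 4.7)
The plan is to bound $\mathbb{E}\|\widetilde{v}_t^{\psi(t)}\|^2$ by splitting the asynchronous SAGA direction into a \emph{consistent} SAGA direction (a uniformly bounded quantity by Assumption~\ref{assum1}) plus a \emph{perturbation} that collects all inconsistent-read and communication-delay effects, and then to close a recursion using Lemma~\ref{lem-csaga-1}, exactly in the spirit of Lemmas~\ref{lem-csgd-1} and \ref{lem-csvrg-1}.

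First, I would introduce the consistent SAGA direction
\begin{equation*}
v_t^{\psi(t)} = \nabla_{\mathcal{G}_{\psi(t)}} f_{i_t}(w_t) - \alpha_{i_t}^{t,\psi(t)} + \frac{1}{n}\sum_{j=1}^n \alpha_j^{t,\psi(t)},
\end{equation*}
write $\widetilde{v}_t^{\psi(t)} = v_t^{\psi(t)} + (\widetilde{v}_t^{\psi(t)}-v_t^{\psi(t)})$, and apply $\|a+b\|^2\leq 2\|a\|^2+2\|b\|^2$ to obtain $\mathbb{E}\|\widetilde{v}_t^{\psi(t)}\|^2 \leq 2\mathbb{E}\|v_t^{\psi(t)}\|^2 + 2\mathbb{E}\|\widetilde{v}_t^{\psi(t)}-v_t^{\psi(t)}\|^2$. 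Each of the three summands of $v_t^{\psi(t)}$ is a block-coordinate gradient of some $f_i$ at some iterate, so by the bounded block-coordinate gradient part of Assumption~\ref{assum1} together with $\|a+b+c\|^2 \leq 3(\|a\|^2+\|b\|^2+\|c\|^2)$ and Jensen's inequality on the average, one gets $\mathbb{E}\|v_t^{\psi(t)}\|^2 \leq 9G$. This gives the constant $18$ in the numerator.

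Next I would decompose the perturbation into three pieces,
\begin{equation*}
\widetilde{v}_t^{\psi(t)}-v_t^{\psi(t)} = \bigl(\nabla_{\mathcal{G}_{\psi(t)}} f_{i_t}(\widehat{w}_t)-\nabla_{\mathcal{G}_{\psi(t)}} f_{i_t}(w_t)\bigr) \;+\; \bigl(\alpha_{i_t}^{t,\psi(t)}-\widetilde{\alpha}_{i_t}^{t,\psi(t)}\bigr) \;+\; \frac{1}{n}\sum_{j=1}^n\bigl(\widetilde{\alpha}_j^{t,\psi(t)}-\alpha_j^{t,\psi(t)}\bigr),
\end{equation*}
with the understanding that on a collaborator the first piece further splits into a $\widehat{w}_t$-vs-$\bar{w}_t$ gap (handled by the $L_g$-smoothness of $g$ and Eq.~\ref{Dt2}) and a loss-gradient gap at $\bar{w}_t$ vs.\ $w_t$. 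Apply $\|\sum_{i=1}^3 a_i\|^2\le 3\sum_{i=1}^3\|a_i\|^2$, then bound the first piece by the Lipschitz-gradient argument from Lemma~\ref{lem-csgd-2} (giving $O(L_*^2\gamma^2\tau)\sum_{t'\in D(t)\cup D'(t)}\mathbb{E}\|\widetilde{v}_{t'}\|^2$), and bound the second and third pieces by the third and fourth displayed inequalities in Lemma~\ref{lem-csaga-1} (both of the form $O(L_*^2\gamma^2\tau)\sum_{t'}\mathbb{E}\|\widetilde{v}_{u}\|^2$ with geometric $(1-1/n)^{\cdot}$ weights whose sum is $\leq 1$). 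Collecting all of these contributions, and absorbing the numerical constants coming from the $2$, $3$, and $4$ factors, one arrives at an inequality of the form
\begin{equation*}
\mathbb{E}\|\widetilde{v}_t^{\psi(t)}\|^2 \;\leq\; 18G \;+\; 72\,L_*^2\gamma^2\tau \cdot \sup_{t'<t}\mathbb{E}\|\widetilde{v}_{t'}^{\psi(t')}\|^2.
\end{equation*}

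Finally, I would close the argument by induction on $t$: the base case $t=0$ is a dominated update with empty $D(0)$ and $D'(0)$, for which $\mathbb{E}\|\widetilde{v}_0^{\psi(0)}\|^2 \leq 9G \leq \lambda_\gamma G$; and if the bound $\lambda_\gamma G$ holds for all $t'<t$, then the displayed inequality above yields $\mathbb{E}\|\widetilde{v}_t^{\psi(t)}\|^2 \leq 18G + 72L_*^2\gamma^2\tau \cdot \lambda_\gamma G = \lambda_\gamma G$, where the last identity uses the definition $\lambda_\gamma = 18/(1-72L_*^2\gamma^2\tau)$ and the condition $1-72L_*^2\gamma^2\tau>0$ imposed in Theorem~\ref{thm-sagaconvex}. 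The main obstacle I anticipate is bookkeeping the constant $72$: there are three additive pieces inside the perturbation, each contributes through either Lemma~\ref{lem-csgd-2}-type or Lemma~\ref{lem-csaga-1}-type bounds with its own multiplicative factor, and the dominator/collaborator distinction further introduces a $\bar{w}_t$ vs.\ $\widehat{w}_t$ term that must be folded into the same delay sum before the induction goes through.
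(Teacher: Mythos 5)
Your proposal is correct and follows essentially the same route as the paper: decompose the asynchronous SAGA direction into a bounded reference direction (three block-gradient terms each controlled by $G$, giving the $2\cdot 9G=18G$ numerator) plus delay perturbations bounded via the Lemma~\ref{lem-csgd-2}-style Lipschitz argument and the $\widetilde{\alpha}$/$\widehat{\alpha}$/$\alpha$ bounds of Lemma~\ref{lem-csaga-1}, then close the recursion over the bounded delay sets to absorb the $72L_*^2\gamma^2\tau$ factor into the denominator. The only cosmetic difference is that the paper pivots through $\widehat{v}_t^{\psi(t)}$ (the direction at the inconsistent read) while you pivot through the fully consistent $v_t^{\psi(t)}$, and the paper closes the recursion by invoking the geometric-series argument of Lemmas~\ref{lem-csgd-1} and \ref{lem-csvrg-1} where you use an equivalent induction.
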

\begin{proof}[\bf{Proof of Lemma~\ref{lem-csaga-2}}]
we give the upper bound to $\mathbb{E}    \|  \widetilde{v}^{ {\psi(u)} }_t - \widehat{v}^{{\psi(u)}}_t  \|^2$ as follows.
We have that
\begin{eqnarray}\label{csaga-8}
&&   \mathbb{E} \left \| \widetilde{v}^{ {\psi(u)} }_t - \widehat{v}^{{\psi(u)}}_t \right \|^2
\\  &  = & \nonumber
\mathbb{E} \left \|   \left(\nabla_{\mathcal{G}_{\psi(t)}}\mathcal{L}(\bar{w})
+ \nabla_{\mathcal{G}_{\psi(t)}} g((\widehat{w}_t)_{\mathcal{G}_{\psi(t)}}) \right)
- \nabla_{\mathcal{G}_{\psi(t)}} f(\widehat{w}_t)
- \widehat{\alpha}_{i}^{\psi(u)}   + \widetilde{\alpha}_{i}^{\psi(u)}
+ \frac{1}{n} \sum_{i=1}^n \widehat{\alpha}_{i}^{\psi(u)}   - \frac{1}{n} \sum_{i=1}^n \widetilde{\alpha}_{i}^{\psi(u)}  \right \|^2
\\  &  \stackrel{ (a) }{\leq} & \nonumber
3  \mathbb{E} Q_1
+ 3\mathbb{E} \underbrace{\left \| \widetilde{\alpha}_{i}^{\psi(u)}
+  \widehat{\alpha}_{i}^{\psi(u)} \right \|^2 }_{Q_2}
+ 3\mathbb{E} \underbrace{\left \|  \frac{1}{n} \sum_{i=1}^n \widetilde{\alpha}_{i}^{t,\psi(t)} - \frac{1}{n} \sum_{i=1}^n \widehat{\alpha}_{i}^{t,\psi(t)}  \right \|^2}_{Q_3}
\end{eqnarray}
where $Q_1 = \left \|  \left(\nabla_{\mathcal{G}_{\psi(t)}}\mathcal{L}(\bar{w})
+ \nabla_{\mathcal{G}_{\psi(t)}} g((\widehat{w}_t)_{\mathcal{G}_{\psi(t)}}) \right)
- \nabla_{\mathcal{G}_{\psi(t)}} f(\widehat{w}_t) \right\|$ and  inequality (a) uses $\| \sum_{i=1}^n a_i \|^2 \leq n \sum_{i=1}^n \| a_i \|^2 $.
We will give the upper bounds for the expectations  of $Q_1$, $Q_2$ and $Q_3$  respectively.
\begin{eqnarray}\label{csaga-9}
 \nonumber \mathbb{E} Q_1 &=& \mathbb{E} \left \|  \left(\nabla_{\mathcal{G}_{\psi(t)}}\mathcal{L}(\bar{w})
+ \nabla_{\mathcal{G}_{\psi(t)}} g((\widehat{w}_t)_{\mathcal{G}_{\psi(t)}}) \right)
- \nabla_{\mathcal{G}_{\psi(t)}} f(\widehat{w}_t) \right\|
 \\&\stackrel{(a)}{\leq}&
  \mathbb{E}  ||\nabla_{\mathcal{G}_{\psi(t)}} f(\bar{w}_t) - \nabla_{\mathcal{G}_{\psi(t)}} f(\widehat{w}_t) + \nabla_{\mathcal{G}_{\psi(t)}} g((\widehat{w}_t)_{\mathcal{G}_{\psi(t)}}) - \nabla_{\mathcal{G}_{\psi(t)}} g((\bar{w}_t)_{\mathcal{G}_{\psi(t)}})||^2
  \nonumber \\
 &\stackrel{(b)}{\leq}& 4{ L_{{*}}^2  \gamma^2 \tau_2}  \sum_{t' \in D'(t)} \mathbb{E} \|   \widetilde{v}^{\psi(t')}_{t'} \|^2
\end{eqnarray}
above inequality can be obtained by following the proof of Lemma~\ref{lem-csgd-2}.
\begin{eqnarray}\label{csaga-10}
\mathbb{E} Q_2 &=&   \mathbb{E}\left \| \widetilde {\alpha}_{i_t}^{t,\psi(t)} - \widehat{\alpha}_{i_t}^{t,\psi(t)} \right \|^2
\\ \nonumber &\leq & \frac{4\tau_2 L_*^2 \gamma^2}{n} \sum_{t'=1}^{\phi(t)-1}  \sum_{{u} \in D'(\xi(t',\psi(t)))} \left ( 1 -\frac{1}{n} \right )^{\phi(t)-t'-1} \mathbb{E}  \left \|       \widetilde{v}^{\psi({u})}_{{u}} \right \|^2
\end{eqnarray}
where the inequality uses Lemma \ref{lem-csaga-1}.
\begin{eqnarray}\label{cSAGA-12}
&& \mathbb{E} Q_3 =    \mathbb{E} \left \|  \frac{1}{n} \sum_{i=1}^n \widetilde{\alpha}_{i}^{t,\psi(t)} - \frac{1}{n} \sum_{i=1}^n \widehat{\alpha}_{i}^{t,\psi(t)}   \right \|^2
\\  &  \leq & \nonumber
\frac{1}{n} \sum_{i=1}^n \mathbb{E} \left \|  \widehat{\alpha}_{i}^{t,\psi(t)}  - \widehat{\alpha}_{i}^{t,\psi(t)}   \right \|^2
\\  &  \leq &
\frac{4\tau_2 L_*^2 \gamma^2}{n} \sum_{t'=1}^{\phi(t)-1}  \sum_{{u} \in D'(\xi(t',\psi(t)))} \left ( 1 -\frac{1}{n} \right )^{\phi(t)-t'-1} \mathbb{E}  \left \|       \widetilde{v}^{\psi({u})}_{{u}} \right \|^2 \nonumber
\end{eqnarray}
where  the first inequality uses $\| \sum_{i=1}^n a_i \|^2 \leq n \sum_{i=1}^n \| a_i \|^2 $, the second inequality uses Lemma \ref{lem-csaga-1}. Combining \ref{csaga-9}, \ref{csaga-10}, and \ref{cSAGA-12}, one can obtain:
\begin{eqnarray}\label{cSAGA-13}
&&    \mathbb{E} \left \| \widetilde{v}_{t}^{\psi(t)}  - \widehat{v}_t^{\psi(t)}  \right \|^2
\\  &  \leq & \nonumber    3  \mathbb{E} {Q_1} + 3 \mathbb{E} {Q_2}  + 3\mathbb{E} {Q_3}
\\  &  \leq & \nonumber {12 L_*^2\gamma^2\tau_2 } \sum_{u \in D'(\xi(t',\psi(t)))} \mathbb{E}  ||\widetilde{v}^{\psi(u)}_{u}||^2
+ \frac{24\tau_2 L_*^2 \gamma^2}{n} \sum_{t'=1}^{\phi(t)-1}  \sum_{u \in D'(\xi(t',\psi(t)))} \left ( 1 -\frac{1}{n} \right )^{\phi(t)-t'-1} \mathbb{E}  \left \|       \widetilde{v}^{\psi({u})}_{{u}} \right \|^2
\end{eqnarray}
Combining $\mathbb{E}\| \widetilde{v}_{t}^{\psi(t)}\|^2 \leq 2\mathbb{E} \left \| \widetilde{v}_{t}^{\psi(t)}  - \widehat{v}_t^{\psi(t)}  \right \|^2 + 2\mathbb{E} \left \|\widehat{v}_t^{\psi(t)}  \right \|^2$ with Eq.~\ref{cSAGA-13} and following the analyses of Lemma~\ref{lem-csvrg-1}, we have
\begin{equation}\label{csaga-13}
 \mathbb{E} ||\widetilde{v}_{t}^{\psi(t)}||^2 \leq \frac{18G}{1 - 72L_{*}^2\gamma^2\tau },
\end{equation}
This completes the proof.
\end{proof}
Moreover,
define ${v}^{{\psi(t)}}_t= \nabla_{\mathcal{G}_{\psi(t)}} f_i (w^{*}) - \alpha_i^{{\psi(t)}} +  \frac{1}{n} \sum_{i=1}^n \alpha_i^{{\psi(t)}}$.
And then, we give the upper bound to $\mathbb{E}   \left \|   \widehat{v}^{ {\psi(t)} }_t - {v}^{{\psi(t)}}_t \right  \|^2$ as follows.
We have that
\begin{eqnarray}\label{csaga-14}
&&   \mathbb{E} \left \| \widehat{v}_{t}^{\psi(t)} -  v_{t}^{\psi(t)}  \right \|^2
\\  &  = & \nonumber  \mathbb{E} \left \|  \nabla_{\mathcal{G}_{\psi(t)}} f_{i_t} (\widehat{w}_{t})- \widehat{\alpha}_{i_t}^{t,\psi(t)}  + \frac{1}{n} \sum_{i=1}^n \widehat{\alpha}_{i}^{t,\psi(t)}  - \nabla_{\mathcal{G}_{\psi(t)}} f_{i_t} (w^{*}) + \alpha_{i_t}^{t,\psi(t)} - \frac{1}{n} \sum_{i=1}^n \alpha_{i}^{t,\psi(t)} \right \|^2
\\  &  \stackrel{ (a) }{\leq} & \nonumber  3  \mathbb{E} \underbrace{\left \|  \nabla_{\mathcal{G}_{\psi(t)}} f_{i_t} (\widehat{w}_{t})-  \nabla_{\mathcal{G}_{\psi(t)}} f_{i_t} ({w}_{t}) \right \|^2 }_{Q_4}
+ 3\mathbb{E} \underbrace{\left \| {\alpha}_{i_t}^{t,\psi(t)} - \widehat{\alpha}_{i_t}^{t,\psi(t)} \right \|^2 }_{Q_5}
+ 3\mathbb{E} \underbrace{\left \|  \frac{1}{n} \sum_{i=1}^n \alpha_{i}^{t,\psi(t)} - \frac{1}{n} \sum_{i=1}^n \widehat{\alpha}_{i}^{t,\psi(t)}  \right \|^2}_{Q_6}
\end{eqnarray}
where the  inequality (a) uses $\| \sum_{i=1}^n a_i \|^2 \leq n \sum_{i=1}^n \| a_i \|^2 $.
We will give the upper bounds for the expectations  of $Q_4$, $Q_5$ and $Q_6$  respectively.
\begin{eqnarray}\label{csaga-15}
 \nonumber \mathbb{E} Q_4 &=& \mathbb{E} \left \|   \nabla_{\mathcal{G}_{{\psi(t)}}} f_{i_t} (\widehat{w}_{t})-  \nabla_{\mathcal{G}_{{\psi(t)}}} f_{i_t} ({w}_{t}) \right \|^2 \\
 &\stackrel{(a)}{\leq}&  { L_{{\psi(t)}}^2}
\mathbb{E} ||w^{*} - \widehat{w}_{t}||^2
\nonumber \\
 &=& {L_{{\psi(t)}}^2\gamma^2}  \mathbb{E} ||\sum_{t' \in D(u)}    \textbf{U}_{\psi(t')} \widetilde{v}^{\psi(t')}_{t'} ||^2
 \nonumber \\
 &\stackrel{(b)}{\leq}& { \tau_1 L_*^2\gamma^2 }    \sum_{t' \in D(u)} \mathbb{E} \left[ ||\widetilde{v}^{\psi(t')}_{t'}||^2 \right] \nonumber
\end{eqnarray}
where (a) uses Assumption~2, (b) uses $\| \sum_{i=1}^n a_i \|^2 \leq n \sum_{i=1}^n \| a_i \|^2 $. Similar to the analyses of $Q_2$ and $Q_3$, we have
\begin{eqnarray}\label{csaga-16}
\mathbb{E} Q_5 =   \mathbb{E}\left \|  {\alpha}_{i_t}^{t,\psi(t)} - \widehat{\alpha}_{i_t}^{t,\psi(t)} \right \|^2
 \leq   \frac{\tau_1 L^2 \gamma^2}{n} \sum_{t'=1}^{\phi(t)-1}  \sum_{\widetilde{u} \in D(\xi(t',\psi(t')))} \left ( 1 -\frac{1}{n} \right )^{\phi(t)-t'-1} \mathbb{E}  \left \|       \widetilde{v}^{\psi(\widetilde{u})}_{\widetilde{u}} \right \|^2
\end{eqnarray}
where the inequality uses Lemma \ref{lem-csaga-1}.
\begin{eqnarray}\label{csaga-17}
 &&\mathbb{E} Q_6 =    \mathbb{E} \left \|  \frac{1}{n} \sum_{i=1}^n {\alpha}_{i}^{t,\psi(t)} - \frac{1}{n} \sum_{i=1}^n \widehat{\alpha}_{i}^{t,\psi(t)}   \right \|^2
 \nonumber \\
 &\leq&  \frac{\tau_1 L^2 \gamma^2}{n} \sum_{t'=1}^{\phi(t)-1}  \sum_{\widetilde{u} \in D(\xi(t',\psi(t')))} \left ( 1 -\frac{1}{n} \right )^{\phi(t)-t'-1} \mathbb{E}  \left \|       \widetilde{v}^{\psi(\widetilde{u})}_{\widetilde{u}} \right \|^2 \nonumber
\end{eqnarray}
Based on above formulations, we have
\begin{align}\label{csaga-18}
 \mathbb{E} \left \| \widetilde{v}_{t}^{\psi(t)} \right \|^2 & =  \mathbb{E} \left \| \widetilde{v}_{t}^{\psi(t)} -{v}_{t}^{\psi(t)} + {v}_t^{\psi(t)}  \right \|^2
   \leq   2\mathbb{E} \| \widetilde{v}_{t}^{\psi(t)} -{v}_{t}^{\psi(t)} \|^2+ 2\mathbb{E} \|{v}_t^{\psi(t)}   \|^2
   \nonumber \\
   & \leq 2\left( 2\mathbb{E} \| \widetilde{v}_{u}^{\psi(t)} -\widehat{v}_{u}^{\psi(t)}\|^2 + 2 \mathbb{E} \|\widehat{v}_{u}^{\psi(t)} - {v}_t^{\psi(t)}  \|^2\right) + 2\mathbb{E} \|{v}_t^{\psi(t)}   \|^2
   \nonumber \\
   & \leq  \frac{360L_{*}^2\gamma^2\tau G}{1 - 72L_{*}^2\gamma^2\tau } + 2\mathbb{E} \|{v}_t^{\psi(t)}   \|^2
\end{align}
\begin{proof}[\bf{Proof of Theorem \ref{thm-sagaconvex}}]
First, we upper bound $\mathbb{E}f(w_{t+1}) $ for $t =0,\cdots,S-1$:
\begin{eqnarray}\label{csaga-19}
&& \mathbb{E} \left[f (w_{t+|K(t)|} - f (w_{t}) \right]
\\ \nonumber
&\leq&
 -\frac{\gamma}{2}\sum_{u \in K(t)}  \mathbb{E} \|  \nabla_{\mathcal{G}_{\psi(u)}} f ({w}_{u}) \|^2
 + \frac{L_* \gamma^2}{2}\sum_{u \in K(t)} \mathbb{E} \|  \widetilde{v}^{\psi(u)}_{ u }\|^2
 \nonumber \\
&+& (\frac{\tau_1 L^2 \gamma^3}{n} + \frac{4\tau_2 L^2 \gamma^3}{n})\sum_{u\in K(t)}\sum_{t'=1}^{\phi(u)-1}  \sum_{{u'} \in D'(\xi(t',\psi(t)))} \left ( 1 -\frac{1}{n} \right )^{\phi(u)-t'-1} \mathbb{E}  \left \|       \widetilde{v}^{\psi({u'})}_{{u'}} \right \|^2
\\ \nonumber
 &\stackrel{(a)}{\leq}&
 -\frac{\gamma}{2}\left( \frac{1}{2}  \sum_{u\in K(t)} \mathbb{E}\| \nabla_{\mathcal{G}_{\psi(u)}} f({w}_{t})\|^2
  -  L^2 \gamma^2 \eta_1 \sum_{u\in K(t)} \sum_{u' \in \{t,\cdots,u\}} \mathbb{E} \|\widetilde{v}_{u'}^{\psi(u')}\|^2 \right)
\\ \nonumber &&
 + \frac{L_* \gamma^2}{2}\sum_{u \in K(t)}  \mathbb{E} \|\widetilde{v}^{\psi(u)}_{ u }\|^2
+  \frac{5\tau^{1/2} L^2 \gamma^3}{n} \sum_{u \in K(t)} \sum_{t'=1}^{\phi(u)-1}  \sum_{{u'} \in D'(\xi(t',\psi(t)))} \left ( 1 -\frac{1}{n} \right )^{\phi(u)-t'-1} \mathbb{E}  \left \|       \widetilde{v}^{\psi({u'})}_{{u'}} \right \|^2
\\ \nonumber &=&
 -\frac{\gamma}{4} \sum_{u\in K(t)} \mathbb{E}\| \nabla_{\mathcal{G}_{\psi(u)}} f({w}_{t}^{s})\|^2
 +\frac{L^2 \gamma^3 \eta_1}{2} \sum_{u \in K(t)}\sum_{u' \in \{t,...,u\}} \mathbb{E} \|\widetilde{v}_{u'}^{\psi(u')}\|^2
\\ \nonumber &&
 + \frac{L_* \gamma^2}{2}\sum_{u \in K(t)} \mathbb{E} \|\widetilde{v}^{\psi(u)}_{ u }\|^2
+ \frac{5\tau^{1/2} L^2 \gamma^3}{n}\sum_{u \in K(t)}\sum_{t'=1}^{\phi(u)-1}  \sum_{{u'} \in D'(\xi(t',\psi(t)))} \left ( 1 -\frac{1}{n} \right )^{\phi(u)-t'-1} \mathbb{E}  \left \|       \widetilde{v}^{\psi({u'})}_{{u'}} \right \|^2
\\ \nonumber &\stackrel{(b)}{\leq}&
 -\frac{\gamma}{4}  \mathbb{E}\| \nabla f({w}_{t})\|^2
  + {(\frac{L_*^2 \gamma^3 \tau}{2} + \frac{L_*\gamma^2}{2})}\sum_{u\in K(t)}
 \mathbb{E} \|   \widetilde{v}^{\psi(u)}_{u} \|^2
 \\ \nonumber &+&
 \frac{5\tau^{1/2}L^2 \gamma^3}{n}\sum_{u \in K(t)}\sum_{t'=1}^{\phi(u)-1}  \sum_{{u'} \in D'(\xi(t',\psi(t)))} \left ( 1 -\frac{1}{n} \right )^{\phi(u)-t'-1} \mathbb{E}  \left \|       \widetilde{v}^{\psi({u'})}_{{u'}} \right \|^2
 \\ \nonumber &\stackrel{(c)}{\leq}&
 -\frac{\gamma}{4}  \mathbb{E}\| \nabla f({w}_{t})\|^2
 + 5 { L_{*}^2 \gamma^3 \tau^{1/2}} \sum_{u \in K(t)}  \sum_{u' \in D^\prime(u)} \mathbb{E} \|   \widetilde{v}^{\psi(u')}_{u'} \|^2
 \nonumber \\
  && (\frac{L_*^2 \gamma^3 \tau}{2} + \frac{L_*\gamma^2}{2})\sum_{u\in K(t)}(180L_{*}^2\gamma^2\tau \frac{18G}{1 - 72L_{*}^2\gamma^2\tau } + 2\mathbb{E} \|{v}_t^{\psi(t)}   \|^2)
     \\ \nonumber &\stackrel{(d)}{\leq}&
-\frac{\gamma}{4}  \mathbb{E}\| \nabla f({w}_{t})\|^2
 + \left(2 { L_{*}^2 \gamma^3 \tau^{1/2}\eta_1\tau_2} + (\frac{L_*^2 \gamma^3 \tau}{2} + \frac{L_*\gamma^2}{2})\eta_1180L_{*}^2\gamma^2\tau\right)\frac{18G}{1 - 72L_{*}^2\gamma^2\tau }
 \nonumber \\
  && (\frac{L_*^2 \gamma^3 \tau}{2} + \frac{L_*\gamma^2}{2})\sum_{u\in K(t)}
  2\biggl(  4 \frac{L^{2} \eta_{1}}{n} \sum_{k^{\prime}=1}^{v(u)}\left(1-\frac{1}{n}\right)^{v(u)-k^{\prime}} \sigma\left(w_{\bar{u}_{k^{\prime}}}\right)
  \nonumber \\
  && + 2 L^{2}\left(1-\frac{1}{n}\right)^{v(u)} \sigma\left(w_{0}\right)
  +4 L^{2} \sigma\left(w_{\varphi(u)}\right)
  + 8 L^{2} \gamma^{2} \eta_{1}^{2} \lambda_{\gamma}G\biggr)
       \\ \nonumber &\stackrel{(e)}{\leq}&
 -\frac{\gamma\mu}{4}e(w_t) - \frac{\gamma\mu}{4}\sigma(w_t)
 + ({L_*^2 \gamma^3 \tau}+ {L_*\gamma^2})
 \eta_1\left( 2 L^{2}\left(1-\frac{1}{n}\right)^{v(u)} \sigma\left(w_{0}\right)
 +4 L^{2} \sigma\left(w_{t}\right)\right)
 \nonumber \\
  &&
   \left(2 { L_{*}^2 \gamma^3 \tau^{1/2}\eta_1\tau_2} + ({L_*^2 \gamma^3 \tau} + {L_*\gamma^2})\eta_1180L_{*}^2\gamma^2\tau + 8 L^{2} \gamma^{2} \eta_{1}^{2}\right)\frac{18G}{1 - 72L_{*}^2\gamma^2\tau }
  \nonumber \\
  && +   4 ({L_*^2 \gamma^3 \tau}+ {L_*\gamma^2})
   \frac{L^{2} \eta_{1}^2}{n} \sum_{k^{\prime}=1}^{v(u)}\left(1-\frac{1}{n}\right)^{v(u)-k^{\prime}} \sigma\left(w_{\bar{u}_{k^{\prime}}}\right)
\end{eqnarray}
where (a) \ff Eq.~\ref{csvrg-5}, (b) uses Lemma \ref{lem-csaga-3}, (c)  \ff Eq. \ref{csaga-18},  (d) \ff Lemma \ref{lem-csaga-3}, (e) \ff Assumption \ref{assumc1}. Thus, we have
\begin{align}\label{csaga-20}
  & e(w_{t+|K(t)|})
  \nonumber \\
   & \leq (1-\frac{\gamma\mu}{4})e(w_t) + \underbrace{2L^2\eta_1({L_*^2 \gamma^3 \tau}+ {L_*\gamma^2})}_{c_1} \left( \left(1-\frac{1}{n}\right)^{v(u)} \sigma\left(w_{0}\right)
 + 2\sigma\left(w_{t}\right)\right)
 \nonumber \\ &
+  \underbrace{ \left(2 { L_{*}^2 \gamma^3 \tau^{1/2}\eta_1\tau_2} + ({L_*^2 \gamma^3 \tau} + {L_*\gamma^2})\eta_1180L_{*}^2\gamma^2\tau + 8 L^{2} \gamma^{2} \eta_{1}^{2}\right)\frac{18G}{1 - 72L_{*}^2\gamma^2\tau }}_{c_0}
  \nonumber \\ &
  + \underbrace{ 4 ({L_*^2 \gamma^3 \tau}+ {L_*\gamma^2})
   \frac{L^{2} \eta_{1}^2}{n} }_{c_2} \sum_{k^{\prime}=1}^{v(u)}\left(1-\frac{1}{n}\right)^{v(u)-k^{\prime}} \sigma\left(w_{\bar{u}_{k^{\prime}}}\right) - \frac{\gamma\mu^2}{4}\sigma(w_t)
   \nonumber \\ &
   = (1-\frac{\gamma\mu}{4})e(w_t)
   + \left(- \frac{\gamma\mu^2}{4}+ 2c_1 + c_2\right)\sigma(w_t)
   + c_1 (1-\frac{1}{n})^{v(t)}\sigma(w_0)
   \nonumber \\
   & + c_2\sum_{k^{\prime}=1}^{v(u)}\left(1-\frac{1}{n}\right)^{v(u)-k^{\prime}} \sigma\left(w_{\bar{u}_{k^{\prime}}}\right) + c_0
\end{align}
where $\left\{\bar{u}_{0}, \bar{u}_{1}, \ldots, \bar{u}_{v(u)-1}\right\}$ are the all start time counters for the global time counters from 0 to $u$.

We define the Lyapunov function as$\mathcal{L}_{t}=\sum_{k=0}^{v(t)} \rho^{v(t)-k} e\left(w_{\bar{u}_{k}}\right)$ where $\rho \in\left(1-\frac{1}{n}, 1\right)$, we have that
\begin{align}\label{csaga-21}
 & L_{t+|K(t)|}  \\
 & =
 \rho^{v(t)+1}e(w_0) + \sum_{k=0}^{v(t)}\rho^{v(t)-k}e(w_{\bar{u}_{k+1}})
 \nonumber \\
 &\stackrel{(a)}{\leq}
 \rho^{v(t)+1}e(w_0)  + \sum_{k=0}^{v(t)}\rho^{v(t)-k}\biggl[
 (1-\frac{\gamma\mu}{4})e(w_{\bar{u}_{k}})
   + \left(- \frac{\gamma\mu^2}{4}+ 2c_1 + c_2\right)\sigma(w_{\bar{u}_{k}})
   \nonumber \\
   &+ c_1 (1-\frac{1}{n})^{v(t)}\sigma(w_0) + c_2\sum_{k^{\prime}=1}^{v(u)}\left(1-\frac{1}{n}\right)^{v(u)-k^{\prime}} \sigma\left(w_{\bar{u}_{k^{\prime}}}\right) + c_0\biggr]
 \nonumber \\
 &=
 \rho^{v(t)+1}e(w_0)  +  (1-\frac{\gamma\mu}{4})L_t + \sum_{k=0}^{v(t)}\rho^{v(t)-k}\biggl[
\left(- \frac{\gamma\mu^2}{4}+ 2c_1 + c_2\right)\sigma(w_{\bar{u}_{k}})
   \\
   &+ c_1 (1-\frac{1}{n})^{v(t)}\sigma(w_0) + c_2\sum_{k^{\prime}=1}^{v(u)}\left(1-\frac{1}{n}\right)^{v(u)-k^{\prime}} \sigma\left(w_{\bar{u}_{k^{\prime}}}\right) \biggr] +  \sum_{k=0}^{v(t)}\rho^{v(t)-k}c_0
   \nonumber \\
 &\stackrel{(b)}{\leq}
 \rho^{v(t)+1}e(w_0)  +  (1-\frac{\gamma\mu}{4})L_t + \left(- \frac{\gamma\mu^2}{4}+ 2c_1 + c_2\right)\sigma(w_{\bar{u}_{k}}) + \frac{c_0}{1-\rho}
    \nonumber \\
 &\stackrel{(c)}{\leq}
 \rho^{v(t)+1}e(w_0)  +  (1-\frac{\gamma\mu}{4})L_t - \left(\frac{\gamma\mu^2}{4}- 2c_1 -c_2\right)\frac{2}{L}e(w_{\bar{u}_{k}}) + \frac{c_0}{1-\rho} \nonumber
\end{align}
where (a) \ffe \ref{csaga-20}, (b) holds by approximately choosing $\gamma$ such that the terms related to $\sigma\left(w_{\pi_{k}}\right)(k=0, \cdots, v(t)-1)$ are negative, because the signs related to the lowest orders of $\sigma\left(w_{\bar{u}_{k}}\right)(k=0, \cdots, v(t)-1)$ are negative. In the following we give the detailed analysis of choosing a suitable $\gamma$ such that terms related to  $\sigma\left(w_{\bar{u}_{k}}\right)(k=0, \cdots, v(t)-1)$ are negative. We first consider $k=0$. Assume that  $C(\sigma(w_0))$ is the coefficient term of $\sigma(w_0)$ in \ffe \ref{csaga-21} , we have that
\begin{align}\label{csaga-22}
& C(\sigma(w_0))
\nonumber \\
& = \rho^{v(t)}\left(-\frac{\gamma \mu^{2}}{4}+2 c_{1}+c_{2}\right)+c_{1} \sum_{k=0}^{v(t)} \rho^{v(t)-k}\left(1-\frac{1}{n}\right)^{k}
\nonumber \\
& = \rho^{v(t)}\left(-\frac{\gamma \mu^{2}}{4}+2 c_{1}+c_{2}+c_{1} \sum_{k=0}^{v(t)}\left(\frac{1-\frac{1}{n}}{\rho}\right)^{k}\right)
\nonumber \\
& \leq \rho^{v(t)}\left(-\frac{\gamma \mu^{2}}{4}+2 c_{1}+c_{2}+c_{1} \frac{1}{1-\frac{1-\frac{1}{n}}{\rho}}\right)
\nonumber \\
& = \rho^{v(t)}\left(-\frac{\gamma \mu^{2}}{4}+c_{2}+c_{1} \left(2 + \frac{1}{1-\frac{1-\frac{1}{n}}{\rho}}\right)\right)
\end{align}
Based on Eq.~\ref{csaga-22}, we can carefully choose $\gamma$ such that $-\frac{\gamma \mu^{2}}{4}+c_{2}+c_{1} \left(2 + \frac{1}{1-\frac{1-\frac{1}{n}}{\rho}}\right)\leq0$.

Assume that $C(\sigma(w_{\bar{u}_k}))$ is the coefficient term of $\sigma(w_{\bar{u}_k})$ $(k=1,\cdots, v(t)-1)$ in the big square brackets of \ffe77, we have that
\begin{align}\label{csaga-23}
& C(\sigma(w_{\bar{u}_{k}}))
\nonumber \\
& = \rho^{v(t)-k}\left(-\frac{\gamma \mu^{2}}{4}+2 c_{1}+c_{2}\right)
+c_{2} \sum_{i=k+1}^{v(t)-1} \rho^{v(t)-i}\left(1-\frac{1}{n}\right)^{i-k}
\nonumber \\
& = \rho^{v(t)-k}\left(-\frac{\gamma \mu^{2}}{4}+2 c_{1}+c_{2}
+c_{2} \sum_{i=k+1}^{v(t)-1} \rho^{k-i}\left(1-\frac{1}{n}\right)^{i-k}\right)
\nonumber \\
& \leq \rho^{v(t)-k}\left(-\frac{\gamma \mu^{2}}{4}+2 c_{1}+c_{2}
+c_{2} \sum_{i=k+1}^{v(t)-1} \left(\frac{1-\frac{1}{n}}{\rho}\right)^{i-k}\right)
\nonumber \\
& = \rho^{v(t)-k}\left(-\frac{\gamma \mu^{2}}{4} + 2c_{1} + c_2\left(1 + \frac{1}{1-\frac{1-\frac{1}{n}}{\rho}}\right)\right)
\end{align}
Based on Eq.~\ref{csaga-23}, we can carefully choose $\gamma$ such that $-\frac{\gamma \mu^{2}}{4} + 2c_{1} + c_2\left(1 + \frac{1}{1-\frac{1-\frac{1}{n}}{\rho}}\right)\leq 0$.

Thus, based on Eq.~\ref{csaga-21}, we have that
\begin{align}\label{csaga-24}
 & \left(\frac{\gamma \mu^{2}}{4}-2 c_{1}-c_{2}\right) \frac{2}{L} e\left(w_{\bar{u}_{k}}\right)
 \nonumber \\
  & \leq \left(\frac{\gamma \mu^{2}}{4}-2 c_{1}-c_{2}\right) \frac{2}{L} e\left(w_{\bar{u}_{k}}\right)+\mathcal{L}_{t+|K(t)|}
  \nonumber \\
  &\stackrel{(a)}{\leq}  \rho^{v(t)+1} e\left(w_{0}\right)+\left(1-\frac{\gamma \mu}{4}\right) \mathcal{L}_{t}+\frac{c_{0}}{1-\rho}
    \nonumber \\
  &\stackrel{(b)}{\leq}
  \left(1-\frac{\gamma \mu}{4}\right)^{v(t)+1} \mathcal{L}_{0}+\rho^{v(t)+1} e\left(w_{0}\right) \sum_{k=0}^{v(t)+1}\left(\frac{1-\frac{\gamma \mu}{4}}{\rho}\right)^{k}+\frac{c_{0}}{1-\rho} \sum_{k=0}^{v(t)}\left(1-\frac{\gamma \mu}{4}\right)^{k}
  \nonumber \\
  &\leq \left(1-\frac{\gamma \mu}{4}\right)^{v(t)+1} e\left(w_{0}\right)+\rho^{v(t)+1} e\left(w_{0}\right) \frac{1}{1-\frac{1-\frac{\gamma \mu}{4}}{\rho}}+\frac{c_{0}}{1-\rho} \frac{4}{\gamma \mu}
  \nonumber\\
 &\stackrel{(c)}{\leq}
 \frac{2 \rho-1+\frac{\gamma \mu}{4}}{\rho-1+\frac{\gamma \mu}{4}} \rho^{v(t)+1} e\left(w_{0}\right)+\frac{c_{0}}{1-\rho} \frac{4}{\gamma \mu}
\end{align}
where (a) \ffe\ref{csaga-21}, (b) holds by using Eq.~\ref{csaga-21} recursively, (c) uses the fact that $1-\frac{\gamma\mu}{4}\leq \rho$
According to Eq. \ref{csaga-24}, we have that
\[e\left(w_{{u_0}_{k}}\right) \leq \frac{2 \rho-1+\frac{\gamma \mu}{4}}{\left(\rho-1 |+\frac{\gamma \mu}{4}\right)\left(\frac{\gamma \mu^{2}}{4}-2 c_{1}-c_{2}\right)} \rho^{v(t)+1} e\left(w_{0}\right)+\frac{4 c_{0}}{\gamma \mu(1-\rho)\left(\frac{\gamma \mu^{2}}{4}-2 c_{1}-c_{2}\right)}\]

Thus, under , to obtain the accuracy $\epsilon$ of Problem~\ref{P} for VF{${\textbf{B}}^2$}-SAGA, we can carefully choose $\gamma$ such that
\begin{align}\label{csaga-25}
1-72L^2\gamma^2\tau>0& \\
  \frac{4 c_{0}}{\gamma \mu(1-\rho)\left(\frac{\gamma \mu^{2}}{4}-2 c_{1}-c_{2}\right)} \leq \frac{\epsilon}{2} &  \\
0<1-\frac{\gamma \mu}{4}<1 &\\
-\frac{\gamma \mu^{2}}{4}+2 c_{1}+c_{2}\left(1+\frac{1}{1-\frac{1-\frac{1}{n}}{\rho}}\right) \leq 0 &\\
-\frac{\gamma \mu^{2}}{4}+c_{2}+c_{1}\left(2+\frac{1}{1-\frac{1-\frac{1}{n}}{\rho}}\right) \leq 0&\\
\end{align}
and let $\frac{2 \rho-1+\frac{\gamma \mu}{4}}{\left(\rho-1+\frac{\gamma \mu}{4}\right)\left(\frac{2 \mu^{2}}{4}-2 c_{1}-c_{2}\right)} \rho^{v(t)+1} e\left(w_{0}\right) \leq \frac{\epsilon}{2}$, we have that
\begin{align}\label{csaga-26}
v(t) \geq \frac{\log \frac{2\left(2 \rho-1+\frac{\gamma \mu}{4}\right) e\left(w_{0}\right)}{\epsilon\left(\rho-1+\frac{\gamma \mu}{4}\right)\left(\frac{\gamma \mu^{2}}{4}-2 c_{1}-c_{2}\right)}}{\log \frac{1}{\rho}}\
\end{align}
This completes the proof.
\end{proof}
\section{Convergence Analyses of Nonconvex Problems}
\subsection{Proof of Theorem~\ref{thm-sgdnonconvex}}
\begin{lemma}\label{lem-ncsgd-1}
For $\forall t$ (whether the $t$-th global iteration is a dominated or collaborative update), there is
\begin{equation}\label{lemeq-ncsgd-1}
 \sum_{t=0}^{S-1} \mathbb{E} ||\widetilde{v}_{t}^{\psi(t)}||^2 \leq \frac{4}{1-\lambda_{1}}\sum_{t=0}^{S-1}\mathbb{E} || \widehat{v}_{t}^{\psi(t)}\|^2,
\end{equation}
where $S$ denotes the total number of iterations, $\lambda_{1}=6L_*^2\gamma^2\tau$.
\end{lemma}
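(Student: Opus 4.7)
My plan is to treat the two cases $\psi(t)$ is a dominator versus a collaborator uniformly, by expressing every $\widetilde v_t^{\psi(t)}$ relative to $\widehat v_t^{\psi(t)}$ and absorbing the communication-delay error into the sum on the left.

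First, I would split $\widetilde v_t^{\psi(t)}$ into $\widehat v_t^{\psi(t)}$ plus an error. For a dominated update $\widetilde v_t^{\psi(t)}=\widehat v_t^{\psi(t)}$ trivially. For a collaborative update, the only difference between the two is that the collaborator replaces $\nabla_{\mathcal{G}_\ell}\mathcal{L}(\widehat w_t)$ by $\nabla_{\mathcal{G}_\ell}\mathcal{L}(\bar w_t)$, so
\begin{equation*}
\widetilde v_t^{\psi(t)}-\widehat v_t^{\psi(t)} = \nabla_{\mathcal{G}_{\psi(t)}}\mathcal{L}(\bar w_t)-\nabla_{\mathcal{G}_{\psi(t)}}\mathcal{L}(\widehat w_t).
\end{equation*}
Writing $\mathcal{L}=f_i-\lambda g$ and applying Assumptions~\ref{assum1} and \ref{assum2} blockwise, together with $\|a+b\|^2\le 2\|a\|^2+2\|b\|^2$, gives $\mathbb{E}\|\widetilde v_t^{\psi(t)}-\widehat v_t^{\psi(t)}\|^2 \le 2L_*^2\,\mathbb{E}\|\bar w_t-\widehat w_t\|^2$. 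I would then invoke Eq.~\ref{Dt2} and Assumption~\ref{assum4} (with $\|\sum_{i}a_i\|^2\le n\sum_i\|a_i\|^2$) to get
\begin{equation*}
\mathbb{E}\|\widetilde v_t^{\psi(t)}-\widehat v_t^{\psi(t)}\|^2 \le 2L_*^2\gamma^2\tau_2\!\sum_{t'\in D'(t)}\mathbb{E}\|\widetilde v_{t'}^{\psi(t')}\|^2.
\end{equation*}

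Next, $\|\widetilde v_t^{\psi(t)}\|^2\le 2\|\widehat v_t^{\psi(t)}\|^2+2\|\widetilde v_t^{\psi(t)}-\widehat v_t^{\psi(t)}\|^2$ unifies the two cases. Summing over $t=0,\dots,S-1$ and swapping the double sum (each index $t'$ appears in at most $\tau_2$ sets $D'(t)$), I obtain
\begin{equation*}
\sum_{t=0}^{S-1}\mathbb{E}\|\widetilde v_t^{\psi(t)}\|^2 \le 2\sum_{t=0}^{S-1}\mathbb{E}\|\widehat v_t^{\psi(t)}\|^2 + 4L_*^2\gamma^2\tau_2^{2}\sum_{t=0}^{S-1}\mathbb{E}\|\widetilde v_t^{\psi(t)}\|^2.
\end{equation*}
Using $\tau\ge \tau_2^2$, absorbing the self-referential term on the right into the left, and loosening constants to match the stated form yields
\begin{equation*}
(1-\lambda_1)\sum_{t=0}^{S-1}\mathbb{E}\|\widetilde v_t^{\psi(t)}\|^2 \le 4\sum_{t=0}^{S-1}\mathbb{E}\|\widehat v_t^{\psi(t)}\|^2
\end{equation*}
with $\lambda_1=6L_*^2\gamma^2\tau$, which gives the claim after dividing by $1-\lambda_1$ (implicitly assumed positive by the step-size condition stated in Theorem~\ref{thm-sgdnonconvex}).

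The main obstacle I anticipate is the bookkeeping of the double summation: one must ensure that the reindexing $\sum_{t}\sum_{t'\in D'(t)}\le \tau_2\sum_{t'}$ is clean despite $D'(t)$ being an ordered subset of past iterations whose composition depends on the party $\psi(t)$. Handling this rigorously requires observing that the communication-delay window of any fixed iterate $t'$ can persist in at most $\tau_2$ subsequent global counters, which is exactly what Assumption~\ref{assum4} guarantees. Once that swap is justified, the rest is algebra and the self-referential absorption is standard.
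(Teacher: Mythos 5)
Your proof is correct and follows essentially the same route as the paper: bound the gap between $\widetilde{v}_t^{\psi(t)}$ and $\widehat{v}_t^{\psi(t)}$ via Lipschitzness and Eq.~\ref{Dt2}, sum over $t$, re-index the double sum using the bounded delay, and absorb the self-referential term. The only difference is that you decompose $\widetilde{v}_t^{\psi(t)}-\widehat{v}_t^{\psi(t)}$ directly as $\nabla_{\mathcal{G}_{\psi(t)}}\mathcal{L}(\bar{w}_t)-\nabla_{\mathcal{G}_{\psi(t)}}\mathcal{L}(\widehat{w}_t)$ (exploiting that the regularizer term cancels, both being evaluated at $\widehat{w}$), whereas the paper passes through the intermediate $\bar{v}_t^{\psi(t)}$ with two successive applications of $\|a+b\|^2\le 2\|a\|^2+2\|b\|^2$; your one-step version yields the tighter constants $2$ and $4L_*^2\gamma^2\tau$, which you correctly loosen to the stated $4$ and $\lambda_1=6L_*^2\gamma^2\tau$ under $1-\lambda_1>0$.
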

\begin{proof}[\textbf{Proof of Lemma \ref{lem-ncsgd-1}:}] First, when the $t$-th global iteration corresponds to collaborative update, we have
\begin{eqnarray}\label{sgd-1}
\mathbb{E} ||\widetilde{v}_{t}^{\psi(t)}||^2  &=& \mathbb{E} || \vartheta \cdot\left(x_{i}\right)_{\mathcal{G}_{\psi(t)}}
+ \nabla_{\mathcal{G}_{\psi(t)}} g((\widehat{w}_t)_{\mathcal{G}_{\psi(t)}} )||^2 \nonumber \\
&=&  \mathbb{E} ||\vartheta \cdot\left(x_{i}\right)_{\mathcal{G}_{\psi(t)}}
 + \nabla_{\mathcal{G}_{\psi(t)}} g((\bar{w}_{t})_{\mathcal{G}_{\psi(t)}})
 - \nabla_{\mathcal{G}_{\psi(t)}} g((\bar{w}_{t})_{\mathcal{G}_{\psi(t)}})
 + \nabla_{\mathcal{G}_{\psi(t)}} g((\widehat{w}_t)_{\mathcal{G}_{\psi(t)}}) ||^2 \nonumber \\
 &\stackrel{(a)}{\leq}& 2  \mathbb{E} || \bar{v}_{t}^{\psi(t)}\|^2
+ 2 \mathbb{E}\|\nabla_{\mathcal{G}_{\psi(t)}} g((\bar{w}_{t})_{\mathcal{G}_{\psi(t)}})
- \nabla_{\mathcal{G}_{\psi(t)}} g((\widehat{w}_t)_{\mathcal{G}_{\psi(t)}})||^2
\nonumber \\
&\stackrel{(b)}{\leq}& 2  \mathbb{E} || \bar{v}_{t}^{\psi(t)}\|^2
 + 2{L_{g}^2} \mathbb{E}\|(\bar{w}_{t})_{\mathcal{G}_{\psi(t)}}
 - (\widehat{w}_t)_{\mathcal{G}_{\psi(t)}}||^2
\nonumber \\
&\stackrel{(c)}{=}& 2  \mathbb{E} || \bar{v}_{t}^{\psi(t)}\|^2
+ 2{L_{g}^2}\gamma^2 \mathbb{E}\|\sum_{t^\prime\in D'(t), \psi(t^\prime)=\psi(t)} \widetilde{v}_{t^\prime}^{\psi(t^\prime)}||^2
\nonumber \\
&\stackrel{(d)}{\leq}& 2  \mathbb{E} || \bar{v}_{t}^{\psi(t)}\|^2
+ 2{L_{g}^2}\gamma^2 \tau_2\sum_{t^\prime\in D'(t)} \mathbb{E}\|\widetilde{v}_{t^\prime}^{\psi(t^\prime)}||^2
 \nonumber \\
&\stackrel{(e)}{\leq}& 2 \mathbb{E} || \bar{v}_{t}^{\psi(t)}\|^2
+ 2{L_{*}^2}\gamma^2 \tau_2 \sum_{t^\prime\in D'(t)}\mathbb{E}\|\widetilde{v}_{t^\prime}^{\psi(t^\prime)}||^2
\end{eqnarray}
where (a) follows from $\|a+b\|^2\leq 2\|a\|^2 + 2\|b\|^2$, (b) follows from Assumption~\ref{assum2}, (c) follows from the Eq.~\ref{Dt2}, (d) follows from Assumption~\ref{assum4} and $\|\sum_{i=1}^{n}a_i\|^2 \leq n \sum_{i=1}^{n} \|a_i\|^2$, (e) follows from definition of $L_{*}$. Then we bound the  $ \mathbb{E} || \bar{v}_{t}^{\psi(t)}\|^2$ as follow
\begin{align}\label{sgd-13}
 \mathbb{E} || \bar{v}_{t}^{\psi(t)}\|^2
 & =  \mathbb{E} || \bar{v}_{t}^{\psi(t)} - \widehat{v}_{t}^{\psi(t)}
 + \widehat{v}_{t}^{\psi(t)}\|^2
 \nonumber \\
 & \stackrel{a}{\leq} 2 \mathbb{E}\|\nabla_{\mathcal{G}_{\psi(t)}} f_{i_t}(\bar{w}_{t})
 - \nabla_{\mathcal{G}_{\psi(t)}} f_{i_t}(\widehat{w}_t)\|^2
 + 2 \mathbb{E} \|\widehat{v}_{t}^{\psi(t)}\|^2
 \nonumber \\
 & \stackrel{(b)}{\leq} 2{L_{\psi(t)}^2} \mathbb{E}\|\bar{w}_{t}
 - \widehat{w}_t||^2
 + 2 \mathbb{E} \|\widehat{v}_{t}^{\psi(t)}\|^2
  \nonumber \\
 & \stackrel{(c)}{\leq} 2{L_{\psi(t)}^2}\gamma^2 \mathbb{E}\|\sum_{t^\prime\in D'(t)} \widetilde{v}_{t^\prime}^{\psi(t^\prime)}||^2
 + 2\mathbb{E} \|\widehat{v}_{t}^{\psi(t)}\|^2
 \nonumber \\
 & \stackrel{(d)}{\leq} 2{L_{*}^2}\gamma^2 \tau_2 \sum_{t^\prime\in D'(t)}\mathbb{E}\|\widetilde{v}_{t^\prime}^{\psi(t^\prime)}||^2
 + 2\mathbb{E} \|\widehat{v}_{t}^{\psi(t)}\|^2
\end{align}
where (a) follows from $\|a+b\|^2\leq 2\|a\|^2 + 2\|b\|^2$, (b) follows from  Assumption~\ref{assum2}, (c) follows from the Eq.~\ref{Dt2}, (d) follows from the definition of $L_*$, Assumption~\ref{assum4}, and $ \|\sum_{i=1}^{n}a_i\|^2 \leq n \sum_{i=1}^{n} \|a_i\|^2$. Combining Eqs.~\ref{sgd-1} and \ref{sgd-13} we have
\begin{align}\label{sgd-14}
  \mathbb{E} ||\widetilde{v}_{t}^{\psi(t)}||^2 & \leq 4 \mathbb{E} || \widehat{v}_{t}^{\psi(t)}\|^2
+ 6{L_{*}^2}\gamma^2 \tau_2 \sum_{t^\prime\in D'(t)}\mathbb{E}\|\widetilde{v}_{t^\prime}^{\psi(t^\prime)}||^2
\end{align}
Summing Eq.~(\ref{sgd-14}) for all iterations (assume the number of total iterations is $S$ ), there is
\begin{eqnarray}\label{sgd-2}
\sum_{t=0}^{S-1}\mathbb{E} ||\widetilde{v}_{t}^{\psi(t)}||^2 & \leq & 4 \sum_{t=0}^{S-1} \mathbb{E} || \widehat{v}_{t}^{\psi(t)}\|^2
+ 6{L_{*}^2}\gamma^2 \tau_2 \sum_{t=0}^{S-1} \sum_{t^\prime\in D'(t)}\mathbb{E}\|\widetilde{v}_{t^\prime}^{\psi(t^\prime)}||^2
\nonumber \\
&\stackrel{(a)}{\leq}& 4 \sum_{t=0}^{S-1} \mathbb{E} || \widehat{v}_{t}^{\psi(t)}\|^2
+ 6{L_{*}^2}\gamma^2 \tau \sum_{t=0}^{S-1} \mathbb{E}\|\widetilde{v}_{t}^{\psi(t)}||^2,
\end{eqnarray}
where (a) follows from Assumption~\ref{assum4}. When the $t$-th global iteration corresponds to dominated update, it is obviously that
\begin{equation}\label{sgd-3}
 \sum_{t=0}^{S-1} \mathbb{E}  \| \widetilde{v}^{\psi(t)}_{t} \|^2 = \sum_{t=0}^{S-1} \mathbb{E}  \| \widehat{v}^{\psi(t)}_{t} \|^2 < 4 \sum_{t=0}^{S-1} \mathbb{E} || \widehat{v}_{t}^{\psi(t)}\|^2
+ 6{L_{*}^2}\gamma^2 \tau \sum_{t=0}^{S-1} \mathbb{E}\|\widetilde{v}_{t}^{\psi(t)}||^2
\end{equation}
 Combining Eqs.~{\ref{sgd-2}} and {\ref{sgd-3}}, there is
\begin{equation}\label{sgd-4}
 \sum_{t=0}^{S-1} \mathbb{E}  \| \widetilde{v}^{\psi(t)}_{t} \|^2 < 4 \sum_{t=0}^{S-1} \mathbb{E} || \widehat{v}_{t}^{\psi(t)}\|^2
+ 6{L_{*}^2}\gamma^2 \tau \sum_{t=0}^{S-1} \mathbb{E}\|\widetilde{v}_{t}^{\psi(t)}||^2
\end{equation}
whether the $t$-th global iteration corresponds to collaborative update or dominated one, which implies that if $1-\lambda_1 > 0$ there is
\begin{equation}\label{sgd-5}
 \sum_{t=0}^{S-1} \mathbb{E}  \| \widetilde{v}^{\psi(t)}_{t} \|^2 < \frac{4}{1-6{L_{*}^2}\gamma^2 \tau} \sum_{t=0}^{S-1} \mathbb{E} || \widehat{v}_{t}^{\psi(t)}\|^2 =  \frac{4}{1-\lambda_1} \sum_{t=0}^{S-1} \mathbb{E} || \widehat{v}_{t}^{\psi(t)}\|^2
\end{equation}
where $\lambda_1 = 6{L_{*}^2}\gamma^2 \tau$, this completes the proof.
\end{proof}
\begin{lemma}\label{lem-ncsgd-2}
For $\forall t$ (whether the $t$-th global iteration is a dominated or collaborative update), there is
\begin{equation}\label{lemeq-ncsgd-2}
 \mathbb{E} \| v_{t}^{\psi(t)} - \widetilde{v}_{t}^{\psi(t)} \|^2 \leq 2{ L_{{*}}^2  \gamma^2 \tau_1}  \sum_{t' \in D(t)} \mathbb{E} \|   \widetilde{v}^{\psi(t')}_{t'} \|^2
 + 8 { L_{*}^2  \gamma^2 \tau_2  \sum_{t' \in D^\prime(t)}} \mathbb{E} \|   \widetilde{v}^{\psi(t')}_{t'} \|^2.
\end{equation}
\end{lemma}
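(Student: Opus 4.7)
The statement is structurally identical to Lemma~\ref{lem-csgd-2} in the strongly convex section: it only concerns the deviation between the ``true'' block stochastic gradient $v_t^{\psi(t)}=\nabla_{\mathcal{G}_{\psi(t)}}f_{i_t}(w_t)$, the inconsistent-read version $\widehat v_t^{\psi(t)}=\nabla_{\mathcal{G}_{\psi(t)}}f_{i_t}(\widehat w_t)$, and the actually-used direction $\widetilde v_t^{\psi(t)}$, none of which requires (strong) convexity of $f$. The plan is therefore to transplant the argument of Lemma~\ref{lem-csgd-2} verbatim, relying only on Assumptions~\ref{assum1}, \ref{assum2}, and \ref{assum4}, together with the representations of $\widehat w_t-w_t$ in Eq.~\ref{Dt1} and of $\bar w_t-\widehat w_t$ in Eq.~\ref{Dt2}.

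The first step is to introduce $\widehat v_t^{\psi(t)}$ as a bridge and split with the elementary inequality $\|a+b\|^2\le 2\|a\|^2+2\|b\|^2$:
\begin{equation*}
\mathbb{E}\|v_t^{\psi(t)}-\widetilde v_t^{\psi(t)}\|^2\;\le\;2\,\mathbb{E}\|v_t^{\psi(t)}-\widehat v_t^{\psi(t)}\|^2+2\,\mathbb{E}\|\widehat v_t^{\psi(t)}-\widetilde v_t^{\psi(t)}\|^2.
\end{equation*}
For the first piece I would apply the block-coordinate Lipschitz property (Assumption~\ref{assum1}) to get $\|v_t^{\psi(t)}-\widehat v_t^{\psi(t)}\|^2\le L^2\|w_t-\widehat w_t\|^2$, then substitute the identity $\widehat w_t-w_t=\gamma\sum_{u\in D(t)}U_{\psi(u)}\widetilde v_u^{\psi(u)}$ from Eq.~\ref{Dt1}, and finally apply $\|\sum_{i=1}^n a_i\|^2\le n\sum_{i=1}^n\|a_i\|^2$ together with the bound $|D(t)|\le\tau_1$ from Assumption~\ref{assum4}. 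This yields the first summand $2L_*^2\gamma^2\tau_1\sum_{t'\in D(t)}\mathbb{E}\|\widetilde v_{t'}^{\psi(t')}\|^2$.

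For the second piece, I would treat the collaborative-update case (for dominated updates $\widetilde v_t=\widehat v_t$ and the term vanishes, so the bound holds trivially). Using the definitions $\widetilde v_t^{\psi(t)}=\vartheta\cdot(x_{i_t})_{\mathcal{G}_{\psi(t)}}+\nabla_{\mathcal{G}_{\psi(t)}}g((\widehat w_t)_{\mathcal{G}_{\psi(t)}})$ and $\widehat v_t^{\psi(t)}=\nabla_{\mathcal{G}_{\psi(t)}}\mathcal{L}(\widehat w_t)+\nabla_{\mathcal{G}_{\psi(t)}}g((\widehat w_t)_{\mathcal{G}_{\psi(t)}})$, the regularizer cancels and what remains is the mismatch in the loss-term evaluation point, i.e.\ $\nabla_{\mathcal{G}_{\psi(t)}}\mathcal{L}(\bar w_t)-\nabla_{\mathcal{G}_{\psi(t)}}\mathcal{L}(\widehat w_t)$. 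Splitting this again with $\|a+b\|^2\le 2\|a\|^2+2\|b\|^2$ into an $f$-gradient piece and a $g$-gradient piece (as done in Eq.~\ref{csgd-4}) and invoking Assumptions~\ref{assum1} and~\ref{assum2} gives a bound in $(L^2+L_g^2)\|\bar w_t-\widehat w_t\|^2$, which I then control by Eq.~\ref{Dt2} and the delay bound $\tau_2$, producing $8L_*^2\gamma^2\tau_2\sum_{t'\in D'(t)}\mathbb{E}\|\widetilde v_{t'}^{\psi(t')}\|^2$ after absorbing constants through $L_*=\max\{L,\{L_\ell\},L_g\}$.

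There is no real obstacle here: the argument is a mechanical re-run of Lemma~\ref{lem-csgd-2}, and the only point requiring any care is to verify that the decomposition of $\widetilde v_t-\widehat v_t$ for collaborators really does reduce to a difference of $\mathcal{L}$-gradients (so that the Lipschitz constants $L$ and $L_g$ both enter and are jointly majorised by $L_*$), and to note that the dominated-update case is absorbed trivially because then $\widetilde v_t=\widehat v_t$ and the second summand is identically zero while the first is still produced by the $w_t\to\widehat w_t$ gap. Summing these two pieces yields exactly the claimed inequality.
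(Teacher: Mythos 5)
Your proposal is correct and follows essentially the same route as the paper's own proof: the same split through $\widehat{v}_t^{\psi(t)}$, the same use of Assumptions~\ref{assum1}, \ref{assum2}, \ref{assum4} together with Eqs.~\ref{Dt1} and \ref{Dt2}, and the same bookkeeping of the factors $2$ and $8$ via $L_*$. The only cosmetic difference is that the paper treats the dominated and collaborative cases uniformly through the definition of $\widetilde{v}_t^{\psi(t)}$ rather than noting separately that the second summand vanishes for dominators, but this changes nothing of substance.
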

\begin{proof}[\textbf{Proof of  Lemma \ref{lem-ncsgd-2}:}]
First, we give the bound of $ \mathbb{E} \| \widehat{v}_{t}^{\psi(t)} - \widetilde{v}_{t}^{\psi(t)} \|^2$ as follow
\begin{align}\label{2-1}
 \mathbb{E} \| \widehat{v}_{t}^{\psi(t)} - \widetilde{v}_{t}^{\psi(t)} \|^2
 &  \stackrel{ (a) }{\leq} \mathbb{E} \| \nabla_{\mathcal{G}_{\psi(t)}} f(\bar{w}_t) - \nabla_{\mathcal{G}_{\psi(t)}} f(\widehat{w}_t) + \nabla_{\mathcal{G}_{\psi(t)}} g((\widehat{w}_t)_{\mathcal{G}_{\psi(t)}}) - \nabla_{\mathcal{G}_{\psi(t)}} g((\bar{w}_t)_{\mathcal{G}_{\psi(t)}}\|^2
\nonumber \\
& \stackrel{ (c) }{\leq} 2\mathbb{E} \| \nabla_{\mathcal{G}_{\psi(t)}} f_{i_t} (\bar{w}_t) - \nabla_{\mathcal{G}_{\psi(t)}} f_{i_t} (\widehat{w}_t) \|^2
+ 2 \mathbb{E} \| \nabla_{\mathcal{G}_{\psi(t)}} g ((\bar{w}_t)_{\mathcal{G}_\psi(t)})
- \nabla_{\mathcal{G}_{\psi(t)}} g ((\widehat{w}_t))_{\mathcal{G}_\psi(t)} \|^2
\nonumber \\
& \stackrel{ (d) }{\leq} 2{L^2} \mathbb{E} \| \bar{w}_t - \widehat{w}_t \|^2
 + 2{L_{g}^2} \mathbb{E} \| (\bar{w}_t)_{\mathcal{G}_\psi(t)} - (\widehat{w}_t)_{\mathcal{G}_\psi(t)} \|^2
\nonumber \\
& \stackrel{ (e) }{=} 2{ L^2 \gamma^2}  \mathbb{E} \|  \sum_{t' \in D'(t)} \textbf{U}_{\psi(t')} \widetilde{v}^{\psi(t')}_{t'} \|^2 + 2{ L_{g}^2 \gamma^2 }  \mathbb{E} \| \sum_{t' \in D^\prime(t), \psi(t^\prime)=\psi(t)} \textbf{U}_{\psi(t')} \widetilde{v}^{\psi(t')}_{t'} \|^2
\nonumber \\ & \stackrel{ (f) }{\leq} 2{ L^2  \gamma^2 \tau_2}  \sum_{t' \in D'(t)} \mathbb{E} \|   \widetilde{v}^{\psi(t')}_{t'} \|^2 + 2{ L_{g}^2  \gamma^2 \tau_2  \sum_{t' \in D^\prime(t)}} \mathbb{E} \|   \widetilde{v}^{\psi(t')}_{t'} \|^2
\nonumber \\ & \stackrel{ (g) }{\leq} 4{ L_{{*}}^2  \gamma^2 \tau_2}  \sum_{t' \in D'(t)} \mathbb{E} \|   \widetilde{v}^{\psi(t')}_{t'} \|^2
\end{align}
where (a) follows from the definition of $\bar{v}_{t}^{\psi(t)}$ and the definitions of $\widetilde{v}_{t}^{\psi(t)}$ in different type of updates (dominated or collaborative one), (b) follows from $\|a+b\|^2 \leq 2\|a\|^2 + 2\|b\|^2$, (c) follows from the definition of $\bar{v}_{t}^{\psi(t)}$ and $\widehat{v}_{t}^{\psi(t)}$, (d) follows from Assumptions~\ref{assum2}, (e) follows from Eqs.~\ref{Dt1} and \ref{Dt2}, (f) follows from Assumptions~\ref{assum4} to \ref{assum4} and $\| \sum_{i=1}^{n} a_i \|^2 \leq n\sum_{i=1}^{n} \|a_i\|^2$, (g) follows from the definition of $L_{*}$.  Then we consider the bound 
\begin{align}\label{2}
 \mathbb{E} \| v_{t}^{\psi(t)} - \widetilde{v}_{t}^{\psi(t)} \|^2 & = \mathbb{E} \| v_{t}^{\psi(t)} - \widehat{v}_{t}^{\psi(t)} + \widehat{v}_{t}^{\psi(t)} - \widetilde{v}_{t}^{\psi(t)} \|^2
\nonumber \\
 & \stackrel{ (a) }{\leq} 2\mathbb{E} \| v_{t}^{\psi(t)} - \widehat{v}_{t}^{\psi(t)}\|^2+ 2 \mathbb{E} \| \widehat{v}_{t}^{\psi(t)} - \widetilde{v}_{t}^{\psi(t)} \|^2
\nonumber \\
& \leq 2\mathbb{E} \| \nabla_{\mathcal{G}_{\psi(t)}} f_{i_t} ({w}_t) - \nabla_{\mathcal{G}_{\psi(t)}} f_{i_t} (\widehat{w}_t) \|^2
+ 2 \mathbb{E} \| \widehat{v}_{t}^{\psi(t)} - \widetilde{v}_{t}^{\psi(t)} \|^2
\nonumber \\
& \stackrel{ (b) }{\leq} 2{L^2} \mathbb{E} \| {w}_t - \widehat{w}_t \|^2
 + 2 \mathbb{E} \| \widehat{v}_{t}^{\psi(t)} - \widetilde{v}_{t}^{\psi(t)} \|^2
\nonumber \\
& \stackrel{ (c) }{=} 2{ L^2 \gamma^2}  \mathbb{E} \|  \sum_{t' \in D(t)} \textbf{U}_{\psi(t')} \widetilde{v}^{\psi(t')}_{t'} \|^2
+2 \mathbb{E} \| \widehat{v}_{t}^{\psi(t)} - \widetilde{v}_{t}^{\psi(t)} \|^2
\nonumber \\
& \stackrel{ (d) }{\leq} 2{ L^2  \gamma^2 \tau_1}  \sum_{t' \in D(t)} \mathbb{E} \|   \widetilde{v}^{\psi(t')}_{t'} \|^2
+ 2 \mathbb{E} \| \widehat{v}_{t}^{\psi(t)} - \widetilde{v}_{t}^{\psi(t)} \|^2
\nonumber \\
& \stackrel{ (e) }{\leq} 2{ L_{{*}}^2  \gamma^2 \tau_1}  \sum_{t' \in D(t)} \mathbb{E} \|   \widetilde{v}^{\psi(t')}_{t'} \|^2
+ 8{ L_{*}^2  \gamma^2 \tau_2  \sum_{t' \in D^\prime(t)}} \mathbb{E} \|   \widetilde{v}^{\psi(t')}_{t'} \|^2
\end{align}
where (a) follows from $\|a+b\|^2 \leq 2\|a\|^2 + 2\|b\|^2$, (b) follows from Assumptions~\ref{assum2}, (c) follows from Eqs.~\ref{Dt1} and \ref{Dt2}, inequalities, (d) follows from Assumptions~\ref{assum4} to \ref{assum4} and $\| \sum_{i=1}^{n} a_i \|^2 \leq n\sum_{i=1}^{n} \|a_i\|^2$, (e) follows from the definition of $L_{*}$ and Eq. \ref{2-1}. This completes the proof.
\end{proof}
\begin{lemma}\label{lem-ncsgd-3}
Assuming $S=qc$, where $c>0$ is an integer, we have
\begin{align}\label{lemeq-ncsgd-3}
\sum_{t \in \mathcal{A}(S)}\mathbb{E} \| \nabla f({w}_{t})\|^2
    \stackrel{(a)}{\leq} 2 L^2 \gamma^2 \eta_2^2 \sum_{u=0}^{S-1} \mathbb{E} \|\widetilde{v}_{u}^{\psi(u)}\|^2 + 2 \sum_{u=0}^{S-1} \mathbb{E} \|\nabla_{\mathcal{G}_{\psi(u)}} f({w}_{u})\|^2
\end{align}
\end{lemma}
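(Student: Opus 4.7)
The plan is to bound each $\|\nabla f(w_t)\|^2$ for a dominated iteration $t \in \mathcal{A}(S)$ by expressing it as a sum of block-coordinate gradients over the $q$ iterations in $K'(t)$, and then leverage Lipschitz smoothness together with the $\eta_2$-completion bound on $K'(t)$ to control the drift between $w_t$ and $w_u$ for $u \in K'(t)$.

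First, since $\{\psi(u)\}_{u \in K'(t)} = \{1,\ldots,q\}$ by the very definition of $K'(t)$, the block decomposition gives
\begin{equation*}
\|\nabla f(w_t)\|^2 = \sum_{u \in K'(t)} \|\nabla_{\mathcal{G}_{\psi(u)}} f(w_t)\|^2.
\end{equation*}
For each $u \in K'(t)$ I insert $\pm \nabla_{\mathcal{G}_{\psi(u)}} f(w_u)$, apply $\|a+b\|^2 \le 2\|a\|^2 + 2\|b\|^2$, and then invoke Assumption~\ref{assum1} to obtain
\begin{equation*}
\|\nabla_{\mathcal{G}_{\psi(u)}} f(w_t)\|^2 \le 2 L^2 \|w_t - w_u\|^2 + 2 \|\nabla_{\mathcal{G}_{\psi(u)}} f(w_u)\|^2.
\end{equation*}

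Next, the global update rule in Eq.~\ref{global-up1} telescopes to $w_u - w_t = -\gamma \sum_{u'=t}^{u-1} U_{\psi(u')} \widetilde{v}_{u'}^{\psi(u')}$. Since every $u \in K'(t)$ satisfies $u - t \le \eta_2$ by the completion-time bound on $K'(t)$, Cauchy--Schwarz yields $\|w_t - w_u\|^2 \le \gamma^2 \eta_2 \sum_{u'=t}^{u-1} \|\widetilde{v}_{u'}^{\psi(u')}\|^2$. Plugging this back and summing over $t \in \mathcal{A}(S)$ leaves one $\|\nabla_{\mathcal{G}_{\psi(u)}} f(w_u)\|^2$ piece and one triple sum in $\widetilde{v}$. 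For the former, the $K'(t)$'s partition $\{0,1,\ldots,S-1\}$ by Definition~\ref{definnc2} and the hypothesis $S=qc$, so it collapses exactly to $\sum_{u=0}^{S-1}\|\nabla_{\mathcal{G}_{\psi(u)}} f(w_u)\|^2$. For the latter, I swap the order of summation: a fixed $u'$ is charged precisely when $t \in \mathcal{A}(S)$ and $u \in K'(t)$ satisfy $t \le u' < u \le t + \eta_2$, which restricts $t$ to a window of length $\eta_2$, and combined with the pairwise disjointness of the $K'(t)$'s this caps the multiplicity of $u'$ by $\eta_2$, extracting the remaining factor of $\eta_2$ and assembling into the claimed bound.

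The main obstacle is the combinatorial counting in the last step: one must carefully exploit both the $\eta_2$-completion bound on $K'(t)$ and the pairwise disjointness of $K'(t)$ across $t \in \mathcal{A}(S)$ in order to certify that no index $u'$ is charged more than $\eta_2$ times in the triple sum, so that the resulting coefficient is exactly $2L^2\gamma^2\eta_2^2$. Once that multiplicity bound is secured, taking expectations preserves every deterministic inequality and the remaining manipulations are routine applications of Lipschitz smoothness, the telescoping update rule, and the triangle inequality.
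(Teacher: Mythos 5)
Your proposal follows essentially the same route as the paper's proof: decompose $\|\nabla f(w_t)\|^2$ into the block gradients indexed by $K'(t)$, insert $\pm\nabla_{\mathcal{G}_{\psi(u)}}f(w_u)$ and use Lipschitz smoothness plus the telescoped update rule with the $\eta_2$ bound to control the drift, then use the fact that $\cup_{t\in\mathcal{A}(S)}K'(t)=\{0,\dots,S-1\}$ to collapse the gradient terms and a multiplicity-$\eta_2$ counting argument to collect the $\widetilde{v}$ terms into $2L^2\gamma^2\eta_2^2\sum_{u=0}^{S-1}\mathbb{E}\|\widetilde{v}_u^{\psi(u)}\|^2$. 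The counting step you flag as the main obstacle is handled in the paper at the same level of brevity (it is simply attributed to the bound on $|K'(t)|$ and the definition of $\mathcal{A}(S)$), so your argument matches the paper's in both structure and rigor.
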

\begin{proof}[\textbf{Proof of  Lemma \ref{lem-ncsgd-3}:}]
Given $t$ denotes a global iteration number, if the $t$-th global iteration is a dominated one, then for any $t' \in K'(t)$, there is
\begin{align}\label{17}
 \mathbb{E} \| \nabla_{\mathcal{G}_{\psi(t')}} f({w}_{t})\|^2
 & = \mathbb{E} \| \nabla_{\mathcal{G}_{\psi(t')}} f({w}_{t}) - \nabla_{\mathcal{G}_{\psi(t')}} f({w}_{t'}) + \nabla_{\mathcal{G}_{\psi(t')}} f({w}_{t'})\|^2
 \nonumber \\
 & \stackrel{(a)}{\leq}  2\mathbb{E} \| \nabla_{\mathcal{G}_{\psi(t')}} f({w}_{t}) - \nabla_{\mathcal{G}_{\psi(t')}} f({w}_{t'})\|^2
 +  2 \mathbb{E} \|\nabla_{\mathcal{G}_{\psi(t')}} f({w}_{t'})\|^2
  \nonumber \\
 & \leq  2\mathbb{E} \| \nabla f({w}_{t}) - \nabla f({w}_{t'})\|^2
 +  2 \mathbb{E} \|\nabla_{\mathcal{G}_{\psi(t')}} f({w}_{t'})\|^2
   \nonumber \\
 & \stackrel{(b)}{\leq}  2 L^2 \gamma^2 \mathbb{E} \| {w}_{t} - {w}_{t'}\|^2
 +  2 \mathbb{E} \|\nabla_{\mathcal{G}_{\psi(t')}} f({w}_{t'})\|^2
    \nonumber \\
 & \stackrel{(c)}{=}  2 L^2 \gamma^2 \mathbb{E} \|\sum_{u \in \{t,...,t'\}}\textbf{U}_{\psi(u)}\widetilde{v}_{u}^{\psi(u)}\|^2
 +  2 \mathbb{E} \|\nabla_{\mathcal{G}_{\psi(t')}} f({w}_{t'})\|^2
     \nonumber \\
 & \stackrel{(d)}{\leq}  2 L^2 \gamma^2 \eta_2 \sum_{u \in \{t,...,t'\}} \mathbb{E} \|\widetilde{v}_{u}^{\psi(u)}\|^2
 +  2 \mathbb{E} \|\nabla_{\mathcal{G}_{\psi(t')}} f({w}_{t'})\|^2
\end{align}
where (a) follows from $\|a+b\|^2 \leq 2\|a\|^2 + 2\|b\|^2$, (b) follows from Assumptions~\ref{assum2}, (c) follows from Eq.~\ref{Dt1} , (d) follows from the bound of $|K'(t)|$ and $\| \sum_{i=1}^{n} a_i \|^2 \leq n\sum_{i=1}^{n} \|a_i\|^2$. Summing above for $t' \in K'(t)$ and all $t \in \mathcal{A}(S)$ we have
\begin{align}\label{189}
\sum_{t \in \mathcal{A}(S)}\sum_{t' \in K'(t)}\mathbb{E} \| \nabla_{\mathcal{G}_{\psi(t')}} f({w}_{t})\|^2
   &\leq  2 L^2 \gamma^2 \eta_2 \sum_{t \in \mathcal{A}(S)} \sum_{t' \in K'(t)} \sum_{u \in \{t,...,t'\}} \mathbb{E} \|\widetilde{v}_{u}^{\psi(u)}\|^2
   \nonumber \\
    & +  2 \sum_{t \in \mathcal{A}(S)} \sum_{t' \in K'(t)} \mathbb{E} \|\nabla_{\mathcal{G}_{\psi(t')}} f({w}_{t'})\|^2
    \nonumber \\
    & \stackrel{(a)}{\leq} 2 L_*^2 \gamma^2 \eta_2^2 \sum_{t=0}^{S-1} \mathbb{E} \|\widetilde{v}_{t}^{\psi(t)}\|^2 + 2 \sum_{t=0}^{S-1} \mathbb{E} \|\nabla_{\mathcal{G}_{\psi(t)}} f({w}_{t})\|^2
\end{align}
where (a) follows from the bound of $|K'(t)|$ and the definition of $\mathcal{A}(S)$. For $\forall u \in \mathcal{A}(S)$ there is
\begin{align}\label{111}
 \mathbb{E} \|\nabla f({w}_{u})\|^2& = \mathbb{E} \|\sum_{u'\in K'(u)}\nabla_{\mathcal{G}_{\psi(u')}} f({w}_{u})\|^2
  \stackrel{(a)} {=} \sum_{u'\in K'(u)} \mathbb{E} \|\nabla_{\mathcal{G}_{\psi(u')}} f({w}_{u})\|^2
\end{align}
where (a) follows from the  orthogonality between all coordinates. Combing Eqs.~\ref{189} and \ref{111}, there is
\begin{align}\label{188}
\sum_{u \in \mathcal{A}(S)}\mathbb{E} \| \nabla f({w}_{u})\|^2
    {\leq} 2 L_*^2 \gamma^2 \eta_2^2 \sum_{t=0}^{S-1} \mathbb{E} \|\widetilde{v}_{t}^{\psi(t)}\|^2 + 2 \sum_{t=0}^{S-1} \mathbb{E} \|\nabla_{\mathcal{G}_{\psi(t)}} f({w}_{t})\|^2
\end{align}
This completes the proof.
\end{proof}

\begin{proof}[\textbf{Proof of Theorem \ref{thm-sgdnonconvex}:}]
 For $\forall t$ denotes a global iteration, we have that
\begin{eqnarray}\label{ncsgd-5}
&& \mathbb{E} f (w_{t+1})
\\ \nonumber &\stackrel{ (a) }{\leq}&  \mathbb{E} \left ( f (w_{t}) + \langle \nabla f(w_{t}), w_{t+1}-w_{t}  \rangle + \frac{L}{2} \|w_{t+1}-w_{t}   \|^2  \right )
\\ \nonumber &=&  \mathbb{E} \left ( f (w_{t}) -  \gamma \langle \nabla f(w_{t}),  \widetilde{v}^{\psi(t)}_t  \rangle + \frac{L\gamma^2}{2} \|  \widehat{v}^{\psi(t)}_t  \|^2  \right )
\\ \nonumber &{=}&  \mathbb{E} \left ( f (w_{t}) -  \gamma \langle \nabla f(w_{t}),  \widetilde{v}^{\psi(t)}_t + {v}^{\psi(t)}_t- {v}^{\psi(t)}_t \rangle  + \frac{L \gamma^2}{2} \|  \widetilde{v}^{\psi(t)}_t  \|^2  \right )
\\ \nonumber &\stackrel{(b)}{=}&  \mathbb{E}  f (w_{t}) -  \gamma \mathbb{E} \langle \nabla f(w_{t}),  \nabla_{\mathcal{G}_{\psi(t)}} f ({w}_t) \rangle  + \frac{L \gamma^2}{2} \mathbb{E} \|  \widetilde{v}^{\psi(t)}_t  \|^2  + \gamma \mathbb{E} \langle \nabla f(w_{t}), {v}^{\psi(t)}_t - \widetilde{v}^{\psi(t)}_t \rangle
 \\ \nonumber &\stackrel{ (c) }{\leq}&  \mathbb{E}  f (w_{t}) -  \gamma \mathbb{E} \|  \nabla_{\mathcal{G}_{\psi(t)}} f ({w}_t) \|^2  + \frac{\gamma}{2} \mathbb{E} \| \nabla_{\mathcal{G}_{\psi(t)}} f ({w}_t) \|^2 + \frac{L \gamma^2}{2} \mathbb{E} \|  \widetilde{v}^{\psi(t)}_t  \|^2 + \frac{\gamma}{2} \mathbb{E} \| \widetilde{v}^{\psi(t)}_t - {v}^{\psi(t)}_t \|^2
  \\  &\stackrel{(d)}{\leq}&  \mathbb{E}  f (w_{t}) -  \frac{\gamma}{2} \mathbb{E} \|  \nabla_{\mathcal{G}_{\psi(t)}} f ({w}_t) \|^2  + \frac{L_* \gamma^2}{2} \mathbb{E} \|  \widetilde{v}^{\psi(t)}_t  \|^2 \nonumber \\
  & &+  { L_{*}^2  \gamma^3  \tau_1} \sum_{t' \in D(t)} \mathbb{E} \|   \widetilde{v}^{\psi(t')}_{t'} \|^2
  + 4{ L_{*}^2  \gamma^3}  \tau_2 \sum_{t' \in D^\prime(t)} \mathbb{E} \|   \widetilde{v}^{\psi(t')}_{t'} \|^2 \nonumber
 \end{eqnarray}
where the  inequalities (a) follows form Assumption~\ref{assum1}, (b) follows from that $ {v}^{\psi(t)}_t = \nabla_{\mathcal{G}_{\psi(t)}} f_{i_t} ({w}_t)$ for a specific party, (c) follows from $\langle a,b \rangle\leq \frac{1}{2}(\|a\|^2+\|b\|^2)$, (d) follows from Lemma~\ref{lem-ncsgd-2} and the definition of $L_*$.
Summing  Eq.~(\ref{ncsgd-5}) over all $ 0\leq t \leq S-1 $, we obtain
\begin{eqnarray}\label{ncsgd-6}
&& \frac{\gamma}{2} \sum_{t=0}^{S-1} \mathbb{E} \|  \nabla_{\mathcal{G}_{\psi(t)}} f ({w}_t) \|^2
\\ \nonumber &\leq&  \sum_{t=0}^{S-1}\mathbb{E} \left[f (w_{t}) - f (w_{u+1}) \right]  +  {L_{*}^2  \gamma^3}\tau_1 \sum_{t=0}^{S-1} \sum_{t' \in D(t)} \mathbb{E} \|   \widetilde{v}^{\psi(t')}_{t'} \|^2
\\ \nonumber &&
+ 4 { L_{*}^2  \gamma^3}  \tau_2  \sum_{t=0}^{S-1}  \sum_{t' \in D^\prime(t)} \mathbb{E} \|   \widetilde{v}^{\psi(t')}_{t'} \|^2
+ \frac{L_* \gamma^2}{2} \sum_{t=0}^{S-1}  \mathbb{E} \|  \widetilde{v}^{\psi(t)}_t  \|^2
\end{eqnarray}
Note that $\nabla_{\mathcal{G}_{\psi(t)}} f ({w}_t)$ is the gradient of coordinate $\psi(t)$, while to obtain the global convergence rate it is necessary to focus on the gradient of all coordinates \emph{i.e.,} $\nabla  f ({w}_t)$.
Combining Eq.~\ref{ncsgd-6} with Lemma~\ref{lem-ncsgd-3}, there is
\begin{align}\label{18}
\sum_{u \in \mathcal{A}(S)}\mathbb{E} \| \nabla f({w}_{u})\|^2
    &\stackrel{(a)}{\leq}  \sum_{t=0}^{S-1} \frac{4\mathbb{E} \left[ f (w_{t}) - f (w_{t+1}) \right]}{\gamma}
     + {2L_* \gamma} \sum_{t=0}^{S-1}  \mathbb{E} \|  \widetilde{v}^{\psi(t)}_t  \|^2
     \nonumber \\
    & + 2 L_*^2 \gamma^2 (2\tau_1^2 + 8\tau_2^2+\eta_2^2) \sum_{t=0}^{S-1}   \mathbb{E} \|   \widetilde{v}^{\psi(t)}_{t} \|^2
    \nonumber \\
    & \stackrel{(b)}{\leq}  \frac{4\mathbb{E} \left[ f (w^0) - f (w^*) \right]}{\gamma}
    +  (22 L_*^2 \gamma^2 \tau + 2L_*\gamma) \sum_{t=0}^{S-1} \mathbb{E} \|   \widetilde{v}^{\psi(t)}_{t} \|^2
        \nonumber \\
    & \stackrel{(c)}{\leq}  \frac{4\mathbb{E} \left[ f (w^0) - f (w^*) \right]}{\gamma}
    +  \sum_{t=0}^{S-1} (22 L_*^2 \gamma^2 \tau + 2L_*\gamma) \frac{4}{1-6L_*^2\gamma^2\tau} {G}
\end{align}
where (a) follows from Assumptions~\ref{assum4} and \ref{assum4}, (b) follows from the definition of $\tau$, (c) follows from  Lemma~\ref{lem-ncsgd-1} and . Which implies that
\begin{align}\label{sgd-8}
 \frac{1}{S} \sum_{u \in \mathcal{A}(S)}\mathbb{E} \| \nabla f({w}_{u})\|^2
\leq \frac{  4 \mathbb{E}\left[ f (w^0) - f (w^*) \right] }{S\gamma}  +  (22 L_*^2 \gamma^2 \tau + 2L_*\gamma) \frac{4}{1-6L_*^2\gamma^2\tau} {G}
\end{align}
Note that for $\forall$ $S=qc$, where $c$ is an integer, there is  $|\mathcal{A}(S)| = \frac{S}{q}$, and then we have
\begin{align}\label{sgd-113}
 \frac{1}{|\mathcal{A}(S)|} \sum_{u \in \mathcal{A}(S)}\mathbb{E} \| \nabla f({w}_{u})\|^2
\leq \frac{4\mathbb{E}\left[ f (w^0) - f (w^*) \right] }{|\mathcal{A}(S)|\gamma}
 +  (22 L_*^2 \gamma^2 \tau + 2L_*\gamma) \frac{4}{1-6L_*^2\gamma^2\tau} {G_1},
\end{align}
where $G_1=qG$.
To obtain the $\epsilon$-first-order stationary solution one can choose suitable $\gamma$, such that
\begin{align}\label{ncsgd-91-1}
1-6L_*^2\gamma^2\tau & >0
\end{align}
\begin{align}\label{ncsgd-91-2}
  \frac{4\mathbb{E}\left[ f (w^0) - f (w^*) \right] }{|\mathcal{A}(S)|\gamma} & \leq \frac{\epsilon}{2}
\end{align}
\begin{align}\label{ncsgd-91-3}
  (22 L_*^2 \gamma^2 \tau + 2L_*\gamma) \frac{4}{1-6L_*^2\gamma^2\tau }{G_1} & \leq \frac{\epsilon}{2}
\end{align}
which implies that if $\tau$ is upper bounded, i.e. $\tau \leq \frac{512qG}{3\epsilon^2}$ (one can obtain this by combining Eqs. \ref{ncsgd-91-1} and \ref{ncsgd-91-3}, and assuming Eq. \ref{ncsgd-91-2} holds), we can carefully choose the stepsize as
 \[ \gamma = \frac{\epsilon}{32L_*G}\]
 and if the total epoches number (i.e., $v'(S)$) of $S$ global iterations denoted as $T$ satisfying
\begin{equation}\label{sgd-10}
T\geq \frac{256L_*qG\mathbb{E}(f(w_0)-f(w^*))}{\epsilon^2}
\end{equation}
the $\epsilon$-first-order stationary solution is obtained:
\begin{align}\label{sgd-9}
 \frac{1}{T} \sum_{t=0}^{T-1}\mathbb{E} \| \nabla f({w}_{t})\|^2  \leq   \epsilon
\end{align}
 this completes the proof.
\end{proof}
\subsection{Proof of Theorem~\ref{thm-svrgnonconvex}}
\begin{lemma}\label{lem-ncsvrg-1}
For all outer loop $s = 1,\cdots,S$ we define $\mathcal{A'}(s)$ as all epoches during this outer loop, there is
\begin{equation}\label{ncsvrg-lem1}
 \sum\limits_{u\in \mathcal{A'}(s)} \sum\limits_{t\in K'(u)} \mathbb{E} ||\widetilde{v}_{t}^{\psi(t)}||^2 \leq \lambda_{\gamma}  \sum\limits_{u\in \mathcal{A'}(s)} \sum\limits_{t\in K'(u)} \mathbb{E} || {v}_{t}^{\psi(t)}\|^2,
\end{equation}
where $\lambda_{\gamma}=\frac{2}{1 - 20 L_{*}^2\gamma^2\tau}  > 0$.
\end{lemma}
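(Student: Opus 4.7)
The plan is to bound $\mathbb{E}\|\widetilde{v}_t^{\psi(t)}\|^2$ iteration-wise in terms of $\mathbb{E}\|v_t^{\psi(t)}\|^2$ plus a ``delay remainder'' involving past $\widetilde{v}$'s, and then to sum over $t$ in the outer loop so that the delay remainder can be absorbed on the left-hand side. This mirrors the strategy used for the SGD case in Lemma \ref{lem-ncsgd-1} and is standard for asynchronous VR analyses.

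First I would write $\widetilde{v}_t^{\psi(t)} = (\widetilde{v}_t^{\psi(t)} - \widehat{v}_t^{\psi(t)}) + (\widehat{v}_t^{\psi(t)} - v_t^{\psi(t)}) + v_t^{\psi(t)}$ and apply $\|a+b+c\|^2 \leq 3(\|a\|^2 + \|b\|^2 + \|c\|^2)$, reducing the task to bounding the two ``consistency gaps.'' For $\mathbb{E}\|\widetilde{v}_t^{\psi(t)} - \widehat{v}_t^{\psi(t)}\|^2$, recall that the collaborator's $\widetilde{v}$ uses $\bar{w}_t$ for the $\mathcal{L}$-part and $\widehat{w}_t$ for the $g$-part, while $\widehat{v}$ would use $\widehat{w}_t$ throughout; exactly as in Lemma \ref{lem-ncsgd-2} (Eq.~\ref{2-1}), combining Assumption \ref{assum2}, Eq.~\ref{Dt2}, and Assumption \ref{assum4} gives an upper bound of the form $4L_*^2\gamma^2 \tau_2 \sum_{t' \in D'(t)} \mathbb{E}\|\widetilde{v}^{\psi(t')}_{t'}\|^2$. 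For $\mathbb{E}\|\widehat{v}_t^{\psi(t)} - v_t^{\psi(t)}\|^2$, the difference is that the $f_i$-terms are evaluated at $\widehat{w}_t$ rather than $w_t$ (the snapshot $w^s$ agrees since it is fixed throughout the outer loop); Assumption \ref{assum1} and Eq.~\ref{Dt1} then yield a bound of the form $2L_*^2\gamma^2\tau_1 \sum_{t' \in D(t)} \mathbb{E}\|\widetilde{v}^{\psi(t')}_{t'}\|^2$.

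Next I would sum the per-iteration inequality over $t \in K'(u)$ and $u \in \mathcal{A}'(s)$. The delay sums $\sum_t \sum_{t' \in D(t)} \|\widetilde{v}_{t'}^{\psi(t')}\|^2$ and $\sum_t \sum_{t' \in D'(t)} \|\widetilde{v}_{t'}^{\psi(t')}\|^2$ can each be re-indexed by swapping the order of summation: every past iterate $t'$ appears in at most $\tau_1$ (respectively $\tau_2$) of the $D(t)$ (respectively $D'(t)$) sets, so these double sums are bounded by $\tau \sum_{t} \mathbb{E}\|\widetilde{v}_t^{\psi(t)}\|^2$ where $\tau = \max\{\tau_1^2, \tau_2^2, \eta_2^2\}$. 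Collecting the numerical constants from the two gap bounds produces an overall factor of $20 L_*^2\gamma^2 \tau$ on the delay term when combined with the factor $3$ from the triangle inequality; rearranging gives
\begin{equation*}
 (1 - 20L_*^2\gamma^2 \tau) \sum_{u\in \mathcal{A'}(s)} \sum_{t\in K'(u)} \mathbb{E} \|\widetilde{v}_{t}^{\psi(t)}\|^2 \leq 2 \sum_{u\in \mathcal{A'}(s)} \sum_{t\in K'(u)} \mathbb{E} \|{v}_{t}^{\psi(t)}\|^2,
\end{equation*}
which, under the hypothesis $1 - 20L_*^2\gamma^2\tau > 0$ (implied by the theorem's bound on $\tau$), divides through to give the claimed $\lambda_\gamma = \frac{2}{1-20L_*^2\gamma^2\tau}$.

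The main obstacle is tracking the numerical constants so that the coefficient on the delay sum comes out to exactly $20$ rather than some larger number. In particular, the triangle-inequality factor ($3$ vs $2$), the asymmetric bounds on the $\bar{w}$-gap and the $\widehat{w}$-gap (coefficients $4$ and $2$), and whether one chooses $\max$ or $\tau_1+\tau_2$ to consolidate the delay bounds all affect the final constant. One may also need to be careful that the dominator's iterations, for which $\widetilde{v}_t = \widehat{v}_t$, satisfy the same bound trivially, so summing over the full outer loop is legitimate. Once these bookkeeping issues are resolved the inequality follows directly from the per-iteration inequalities and the index swap.
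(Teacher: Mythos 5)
Your overall strategy is exactly the paper's: bound $\mathbb{E}\|\widetilde{v}_t^{\psi(t)}\|^2$ per iteration by $\mathbb{E}\|v_t^{\psi(t)}\|^2$ plus a delay remainder over $D(t)$ and $D'(t)$, sum over the outer loop, swap the order of summation so each past iterate is counted at most $\tau_1$ (resp.\ $\tau_2$) times, and divide through by $1-cL_*^2\gamma^2\tau$. The one substantive problem is that the decomposition you chose does not produce the constants in the lemma, and your own arithmetic in the penultimate paragraph is inconsistent: a three-way split with the factor $3$ puts a $3$ (not a $2$) in front of $\sum\mathbb{E}\|v_t^{\psi(t)}\|^2$, and combined with your gap bounds $4L_*^2\gamma^2\tau_2\sum_{D'(t)}$ and $2L_*^2\gamma^2\tau_1\sum_{D(t)}$ it yields a delay coefficient of $3(2\tau_1^2+4\tau_2^2)L_*^2\gamma^2\le 18L_*^2\gamma^2\tau$, not $20$. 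So your route gives $\lambda_\gamma = \frac{3}{1-18L_*^2\gamma^2\tau}$, which is a valid bound of the same form but not the stated one; since the constant $20$ is what ties into the condition $\tau < n^{2\alpha}/(20m_0^2)$ in Theorem~\ref{thm-svrgnonconvex}, the discrepancy is not cosmetic.

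The paper gets $2$ and $20$ by a \emph{nested} two-term decomposition rather than a flat three-term one: first $\mathbb{E}\|\widetilde{v}_t\|^2 \le 2\mathbb{E}\|\widetilde{v}_t - v_t\|^2 + 2\mathbb{E}\|v_t\|^2$, and then Lemma~\ref{lem-csgd-2} (itself proved by writing $\widetilde{v}_t - v_t = (\widetilde{v}_t-\widehat{v}_t)+(\widehat{v}_t-v_t)$ and applying the two-term inequality again) supplies
$\mathbb{E}\|\widetilde{v}_t - v_t\|^2 \le 2L_*^2\gamma^2\tau_1\sum_{t'\in D(t)}\mathbb{E}\|\widetilde{v}_{t'}^{\psi(t')}\|^2 + 8L_*^2\gamma^2\tau_2\sum_{t'\in D'(t)}\mathbb{E}\|\widetilde{v}_{t'}^{\psi(t')}\|^2$.
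Multiplying by the outer $2$ and performing your index swap gives $(4\tau_1^2+16\tau_2^2)L_*^2\gamma^2\le 20L_*^2\gamma^2\tau$ on the delay term and exactly $2$ on $\sum\mathbb{E}\|v_t\|^2$. Replace your three-way split with this nested split and the rest of your argument (including the observation that dominated iterations satisfy the bound trivially since $\widetilde{v}_t=\widehat{v}_t$ there) goes through verbatim.
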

\begin{proof}[\textbf{Proof of  Lemma~\ref{lem-ncsvrg-1}:}]
First, we have
\begin{eqnarray}\label{ncsvrg-1}
\mathbb{E} ||\widetilde{v}_{t}^{{\psi(t)}}||^2
&=&  \mathbb{E} || \widetilde{v}^{{\psi(t)}}_t - {v}^{{\psi(t)}}_t + {v}^{{\psi(t)}}_t||^2  \\
&\stackrel{(a)}{\leq}&  2 \mathbb{E} ||\widetilde{v}^{{\psi(t)}}_t - {v}^{{\psi(t)}}_t ||^2
+  2 \mathbb{E} ||{v}^{{\psi(t)}}_t||^2
\nonumber \\
&\stackrel{(b)}{\leq}&
2\left( 2{ L_{{*}}^2  \gamma^2 \tau_1}  \sum_{t' \in D(t)} \mathbb{E} \|   \widetilde{v}^{\psi(t')}_{t'} \|^2
 + 8 { L_{*}^2  \gamma^2 \tau_2  \sum_{t' \in D^\prime(t)}} \mathbb{E} \|   \widetilde{v}^{\psi(t')}_{t'} \|^2 \right) + 2 \mathbb{E} ||{v}^{{\psi(t)}}_t||^2 \nonumber
\end{eqnarray}
where (a) follows from $\|a+b\|^2\leq 2\|a\|^2+ 2\|b\|^2$, (b) follows from Lemma~\ref{lem-csgd-2}.
Summing Eq.\ref{ncsvrg-1} over an outer loop $s$, then there is
\begin{equation}\label{ncsvrg-5}
    \sum\limits_{u\in \mathcal{A'}(s)} \sum\limits_{t\in K'(u)}\mathbb{E} ||\widetilde{v}_{t}^{\psi(t)}||^2 \stackrel{(a)}
    {\leq}
    \frac{2}{1-20L_*^2\gamma^2\tau}  \sum\limits_{u\in \mathcal{A'}(s)} \sum\limits_{t\in K'(u)} \mathbb{E} ||{v}^{{\psi(t)}}_t ||^2
\end{equation}
where (a) uses Assumptions~\ref{assum4}, \ref{assum4} and the definition of $\tau$.
Thus, if  $\lambda_{\gamma}=\frac{2}{1 - 20 L_{*}^2\gamma^2\tau}  > 0$, then $\mathbb{E}||\widetilde{v}_t^{{\psi(t)}}||^2$ is upper bounded:
\begin{eqnarray}\label{ncsvrg-6}
 \sum\limits_{u\in \mathcal{A'}(s)} \sum\limits_{t\in K'(u)} \mathbb{E} ||\widetilde{v}^{{\psi(t)}}_t||^2
&\leq& \lambda_{\gamma}  \sum\limits_{u\in \mathcal{A'}(s)} \sum\limits_{t\in K'(u)}\mathbb{E} ||{v}^{{\psi(t)}}_t ||^2
\end{eqnarray}
This completes the proof
\end{proof}
\begin{proof}[\textbf{Proof of  Theorem~\ref{thm-svrgnonconvex}:}]
Similar to the proof of Theorem~\ref{thm-sgdnonconvex}, we first apply Lemma~\ref{lem-csgd-3} to an epoch (or an outer loop) $s$, and there is
\begin{align}\label{ncsvrg-7}
 \sum\limits_{u\in \mathcal{A'}(s)} \mathbb{E} \| \nabla f({w}_{u})\|^2
    \stackrel{(a)}{\leq}
    2 L_*^2 \gamma^2 \tau_1^2  \sum\limits_{u\in \mathcal{A'}(s)} \sum\limits_{t\in K'(u)} \mathbb{E} \|\widetilde{v}_{t}^{\psi(t)}\|^2
    + 2  \sum\limits_{u\in \mathcal{A'}(s)} \sum\limits_{t\in K'(u)} \mathbb{E} \|\nabla_{\mathcal{G}_{\psi(t)}} f({w}_{t})\|^2
\end{align}
Summing Eq.~\ref{ncsvrg-7} over outer loops $1, \cdots, S$ we have
\begin{align}\label{ncsvrg-8}
\sum_{s=1}^{S}  \sum\limits_{u\in \mathcal{A'}(s)} \sum\limits_{t\in K'(u)}  \mathbb{E} \| \nabla_{\mathcal{G}_{\psi(u_t)}} f({w}_{u_0}^s)\|^2
    {\leq}
   & 2 L_*^2 \gamma^2 \tau_1^2 \sum_{s=1}^{S}  \sum\limits_{u\in \mathcal{A'}(s)} \sum\limits_{t\in K'(u)} \mathbb{E} \|\widetilde{v}_{u_t}^{\psi(u_t)}\|^2
    \nonumber \\
     & + 2 \sum_{s=1}^{S}  \sum\limits_{u\in \mathcal{A'}(s)} \sum\limits_{t\in K'(u)} \mathbb{E} \|\nabla_{\mathcal{G}_{\psi(u_t)}} f({w}_{u_t}^s)\|^2
\end{align}
Then we bound R.H.S. as follow.
First,  we consider the bound of $\mathbb{E}  \| {v}^{{\psi(t)}}_t \|^2$, and  definite
\begin{equation}\label{ncsvrg-9}
\zeta_{t}^{s}=\nabla_{\mathcal{G}_{\psi(t)}} f_{i_t}\left({w}_{t}^{s}\right)-\nabla_{\mathcal{G}_{\psi(t)}} f_{i_t}\left({w}^s\right)
\end{equation}
where $w_t^s$ denotes $w_t$ at outer loop $s$. From the definition of $v_t^{\psi(t)}$ one can get:
\begin{equation}\label{ncsvrg-10}
\begin{array}{l}{\mathbb{E}\left\|v_{t}^{{\psi(t)}}\right\|^{2}
=\mathbb{E}\left\|\zeta_{t}^{s}+\nabla_{\mathcal{G}_{\psi(t)}} f\left({w}^s\right)\right\|^{2}} \\
 {=\mathbb{E}\left\|\zeta_{t}^{s}+\nabla_{\mathcal{G}_{\psi(t)}} f\left({w}^s\right)-\nabla_{\mathcal{G}_{\psi(t)}} f\left(w_{t}^{s}\right)+\nabla_{\mathcal{G}_{\psi(t)}} f\left(w_{t}^{s}\right)\right\|^{2} } \\
 {\stackrel{(a)}{\leq} 2 \mathbb{E}\left\|\nabla_{\mathcal{G}_{\psi(t)}} f\left(w_{t}^{s}\right)\right\|^{2}+2 \mathbb{E}\left\|\zeta_{t}^{s}-\mathbb{E}\zeta_{t}^{s}\right\|^{2}} \\ {=2 \mathbb{E}\left\|\nabla_{\mathcal{G}_{\psi(t)}} f\left(w_{t}^{s}\right)\right\|^{2}+2 \mathbb{E}\left\|\left(\nabla_{\mathcal{G}_{\psi(t)}} f_{i_t}\left(w_{t}^{s}\right)-\nabla_{\mathcal{G}_{\psi(t)}} f_{i_t}\left({w}^s\right)-\mathbb{E}\zeta_{t}^{s}\right)\right\|^{2}}\end{array}
\end{equation}
where  (a) follows from $\|a+b\|^2\leq 2\|a\|^2+ 2\|b\|^2$, and $\mathbb{E}\left[\zeta_{t}^{s}\right]=\nabla f\left(w_{t}^{s}\right)- \nabla f\left({w}^s\right)$. From the above equality, we have
\begin{equation}\label{ncsvrg-11}
\begin{array}{l}{\mathbb{E}\left\|v_{t}^{{\psi(t)}}\right\|^{2}} \\ {\stackrel{(a)}{\leq} 2 \mathbb{E}\left\|\nabla_{\mathcal{G}_{\psi(t)}} f\left(w_{t}^{s}\right)\right\|^{2}+{2} \mathbb{E}\left\|\nabla_{\mathcal{G}_{\psi(t)}} f_{i_t}\left(w_{t}^{s}\right)-\nabla_{\mathcal{G}_{\psi(t)}} f_{i_t}\left({w}^{s}\right)\right\|^{2}} \\ {\stackrel{(b)}{\leq}2 \mathbb{E}\left\|\nabla_{\mathcal{G}_{\psi(t)}} f\left(w_{t}^{s}\right)\right\|^{2}+{2 L_*^{2}} \mathbb{E}\left\|w_{t}^{s}-{w}^{s}\right\|^{2}}\end{array}
\end{equation}
where (a) follows from that $\mathbb{E}\|\zeta-\mathbb{E}[\zeta]\|^2\leq \mathbb{E}\|\zeta\|^2$, (b) follows from Assumption~\ref{assum2}. We define $\widetilde{\nabla}_{\mathcal{G}_{\psi(t)}}^s  = \nabla_{\mathcal{G}_{\psi(t)}} \mathcal{L} (\bar{w}_{t}^s) + \nabla_{\mathcal{G}_{\psi(t)}} g ((\widehat{w}_t^s))_{\mathcal{G}_\psi(t)}
= \vartheta_1 \cdot (x_i)_{\mathcal{G}_{\psi(t)}}
+ \nabla_{\mathcal{G}_{\psi(t)}} g ((\widehat{w}_t^s))_{\mathcal{G}_\psi(t)} $ when the $t$-th global iteration denotes a collaborative update, while $\widetilde{\nabla}_{\mathcal{G}_{\psi(t)}}^s = \nabla_{\mathcal{G}_{\psi(t)} } f(\widehat{w}_t)$ if a  dominated update.
Then we derive the upper bound of $\mathbb{E} ||w_{t+1}^{s} - {w}^s ||^2 $
\begin{align}\label{ncsvrg-12}
\mathbb{E}   ||w^s_{t+1} - w^s||^2 &
= \mathbb{E}  ||w^s_{t+1}- w^s_t + w^s_t - w^s||^2
 \nonumber \\
&= \mathbb{E} ||w^s_{t+1}-w^s_t ||^2 + \mathbb{E}||w^s_t - w^s||^2 - 2\mathbb{E}\left<w^s_{t+1}- w^s_t, w^s_t-w^s\right>
\nonumber \\
&= \gamma^2 \mathbb{E} ||\widetilde{v}^{{\psi(t)}}_t ||^2 + \mathbb{E} ||w^s_t - w^s||^2
- 2\gamma \mathbb{E}\left< \widetilde{\nabla}_{\mathcal{G}_{\psi(t)}}^s , w^s_t - w^s\right>
\nonumber \\
&\stackrel{(a)}{\leq}  \gamma^2\mathbb{E} ||\widetilde{v}^{{\psi(t)}}_t||^2
 +    \mathbb{E} ||w^s_t - w^s||^2 + 2\gamma \mathbb{E}\left[  \frac{1}{2\beta_t}||\widetilde{\nabla}_{\mathcal{G}_{\psi(t)}}^s ||^2 + \frac{\beta_t}{2}||w^s_t - w^s||^2  \right]
 \nonumber \\
&= \gamma^2\mathbb{E} ||\widetilde{v}^{{\psi(t)}}_t||^2
+ \frac{\gamma}{\beta_t} \mathbb{E} ||\widetilde{\nabla}_{\mathcal{G}_{\psi(t)}}^s ||^2 + (1+ \gamma \beta_t)  \mathbb{E} ||w^s_t - w^s||^2
\end{align}
where  (a) follows from Yong-Equation. For $\forall t\in K'(u)$, where $u \in \mathcal{A'}(s)$ there is
\begin{align}\label{ncsvrg-13}
\mathbb{E} f(w^s_{t+1})
& \stackrel{(a)}{\leq}\mathbb{E}\left[ f(w^s_t) + \left< \nabla f(w^s_t) , w^s_{t+1}-w^s_t \right> + \frac{L}{2} ||  w^s_{t+1}- w^s_t||^2 \right]
\nonumber \\
&= \mathbb{E} f(w^s_t) - \gamma\mathbb{E}\left< \nabla f(w^s_t),  \widetilde{\nabla}_{\mathcal{G}_{\psi(t)}}^s  \right>  + \frac{\gamma^2 L}{2} \mathbb{E} ||\widetilde{v}^{{\psi(t)}}_t||^2
\nonumber \\
&\stackrel{(b)}{=} \mathbb{E} f(w^s_t) - \frac{\gamma}{2} \mathbb{E}\biggl[  ||\nabla_{\mathcal{G}_{\psi(t)}} f(w^s_t)||^2 +  || \widetilde{\nabla}_{\mathcal{G}_{\psi(t)}}^s ||^2 \nonumber \\
&- ||\nabla_{\mathcal{G}_{\psi(t)}} f(w^s_t) -  \widetilde{\nabla}_{\mathcal{G}_{\psi(t)}}^s  ||^2   \biggr] + \frac{\gamma^2 L_{*}}{2} \mathbb{E} ||\widetilde{v}^{{\psi(t)}}_t||^2
\end{align}
where the (a) follows from Assumption~2\ref{assum1}, (b) follows form $ \left<a,b\right>=\|a\|^2+\|b\|^2-\|a-b\|^2$. Next, we give the upper bound of the term $\mathbb{E}||\nabla_{\mathcal{G}_{\psi(t)}} f(w^s_t) -  \widetilde{\nabla}_{\mathcal{G}_{\psi(t)}}^s  ||^2$ :
 \begin{eqnarray}\label{ncsvrg-14}
\mathbb{E}||\nabla_{\mathcal{G}_{\psi(t)}} f(w^s_t) -  \widetilde{\nabla}_{\mathcal{G}_{\psi(t)}}^s  ||^2 \leq 2{L_{{*}}^2  \gamma^2 \tau_1}  \sum_{u' \in D(t)} \mathbb{E} \|   \widetilde{v}^{\psi(u')}_{u'} \|^2
+ 8{ L_{{*}}^2  \gamma^2 \tau_2}  \sum_{u' \in D'(t)} \mathbb{E} \|   \widetilde{v}^{\psi(u')}_{u'} \|^2.
\end{eqnarray}
Above result can be obtained by applying Lemma~\ref{lem-csgd-2} with $v_{t}^{\psi(t)}$ and $\widetilde{v}_{t}^{\psi(t)}$ defined in SVRG-based algorithm. From (\ref{ncsvrg-13}) and (\ref{ncsvrg-14}), it is easy to derive the following inequality:
\begin{eqnarray}\label{ncsvrg-15}
\mathbb{E} f(w_{t+1}^{s})  &\leq& \mathbb{E}f(w^s_t) - \frac{\gamma}{2} \mathbb{E}
||\nabla_{\mathcal{G}_{\psi(t)}} f(w^s_t)||^2  -  \frac{\gamma}{2} \mathbb{E} || \widetilde{\nabla}_{\mathcal{G}_{\psi(t)}}^s ||^2 + \frac{\gamma^2L}{2} \mathbb{E} || \widetilde{v}^{{\psi(t)}}_t||^2
\nonumber \\
&+& {{L_{*}^2\gamma^3 }}  \left(\tau_1 \sum_{t' \in D(t)} \mathbb{E} \|   \widetilde{v}^{\psi(u')}_{u'} \|^2 r
+  4\tau_2 \sum_{t' \in D^\prime(t)} \mathbb{E} \|   \widetilde{v}^{\psi(t')}_{t'} \|^2 \right)
\end{eqnarray}
Similar to many convergence analyses of nonconvex optimization, we define the Lyapunov function as
\begin{eqnarray}\label{ncsvrg-16}
R_t^{s} = \mathbb{E}\left[ f(w^s_t) + c_t ||w^s_t - w^s||^2 \right],
\end{eqnarray}
then there is
\begin{align}\label{ncsvrg-17}
R_{t+1}^{s}
&= \mathbb{E}\left[ f(w_{t+1}^{s}) + c_{t+1} ||w^s_{t+1}- w^s ||^2 \right] \nonumber \\
&\stackrel{(a)}{\leq}  \mathbb{E}f(w^s_t) - \frac{\gamma}{2} \mathbb{E}
||\nabla_{\mathcal{G}_{\psi(t)}} f(w^s_t)||^2  -  \frac{\gamma}{2} \mathbb{E} || \widetilde{\nabla}_{\mathcal{G}_{\psi(t)}}^s ||^2 + \frac{\gamma^2L}{2} \mathbb{E} || \widetilde{v}^{{\psi(t)}}_t||^2
\nonumber \\
&+ {{L_{*}^2\gamma^3 }}  \left( \tau_1 \sum_{t' \in D(t)} \mathbb{E} \|   \widetilde{v}^{\psi(t')}_{t'} \|^2 +  4 \tau_2 \sum_{t' \in D^\prime(t)} \mathbb{E} \|   \widetilde{v}^{\psi(t')}_{t'} \|^2  \right)
\nonumber \\
&+ c_{t+1} \left[ \gamma^2\mathbb{E} ||\widetilde{v}^{{\psi(t)}}_t||^2    + \frac{\gamma}{\beta_t} \mathbb{E} ||\widetilde{\nabla}_{\mathcal{G}_{\psi(t)}}^s ||^2 + (1+ \gamma \beta_t)  \mathbb{E} ||w^s_t - w^s||^2  \right]
 \nonumber \\
&\stackrel{(b)}{\leq} \mathbb{E} f(w^s_t)  - \frac{\gamma}{2 } \mathbb{E} ||\nabla_{\mathcal{G}_{\psi(t)}} f(w^s_t)||^2  - (\frac{\gamma}{2 } - \frac{c_{t+1}\gamma}{ \beta_t})\mathbb{E} || \widetilde{\nabla}_{\mathcal{G}_{\psi(t)}}^s||^2
\nonumber \\
&  +   {{L_*^2\gamma^3\tau_1 }}   \sum_{t' \in D(t)} \mathbb{E} ||\widetilde{v}^{\psi(t')}_{t'}||^2 + (\frac{\gamma^2L}{2 }
+ c_{t+1}\gamma^2)  \mathbb{E} ||\widetilde{v}^{{\psi(t)}}_t||^2
\nonumber \\
&+  4 L_*^2\gamma^3\tau_2 \sum_{t' \in D^\prime(t)} \mathbb{E} \|   \widetilde{v}^{\psi(t')}_{t'} \|^2 +  c_{t+1}(1+ \gamma\beta_t)  \mathbb{E}  ||w^s_t - w^s||^2 \nonumber \\
\end{align}
where (a) follow from Eqs.~(\ref{ncsvrg-12}) and (\ref{ncsvrg-15}). Summing this over an outer loop $s$ one can obtain:
\begin{align}\label{ncsvrg-18}
 & \sum\limits_{u\in \mathcal{A'}(s)} \sum\limits_{t\in K'(u)}R_{u_t+1}^{s} \\
&=
\sum\limits_{u\in \mathcal{A'}(s)} \sum\limits_{t\in K'(u)}\left( \mathbb{E} f(w^s_{u_t})
 - \frac{\gamma}{2 } \mathbb{E} ||\nabla_{\mathcal{G}_{\psi({u_t})}} f(w^s_{u_t})||^2
 - (\frac{\gamma}{2 } - \frac{c_{{u_t}+1}\gamma}{ \beta_{u_t}})\mathbb{E} || \widetilde{\nabla}_{\mathcal{G}_{\psi({u_t})}}^s||^2\right)
\nonumber \\
 & \quad +  \sum\limits_{u\in \mathcal{A'}(s)} \sum\limits_{t\in K'(u)} {{L_*^2\gamma^3\tau_1 }}   \sum_{t' \in D({u_t})} \mathbb{E} ||\widetilde{v}^{\psi(t')}_{t'}||^2
 +  \sum\limits_{u\in \mathcal{A'}(s)} \sum\limits_{t\in K'(u)}(\frac{\gamma^2L}{2 } + c_{{u_t}+1}\gamma^2)  \mathbb{E} ||\widetilde{v}^{{\psi({u_t})}}_{u_t}||^2
 \nonumber \\
& \quad +   \sum\limits_{u\in \mathcal{A'}(s)} \sum\limits_{t\in K'(u)}4 L_*^2\gamma^3\tau_2 \sum_{t' \in D^\prime({u_t})} \mathbb{E} \|   \widetilde{v}^{\psi(t')}_{t'} \|^2
+   \sum\limits_{u\in \mathcal{A'}(s)} \sum\limits_{{t}\in K'(u)}c_{{u_t}+1}(1+ \gamma\beta_{u_t})  \mathbb{E}  ||w^s_{u_t} - w^s||^2 \nonumber
\end{align}
Summing above inequality over all outer loops $s=1,\cdots,S$ and reorganize it we have
\begin{align}\label{ncsvrg-19}
&\sum\limits_{s=1}^{S} \sum\limits_{u\in \mathcal{A'}(s)} \sum\limits_{t\in K'(u)}R_{{{u_t}}+1}^{s}
\\
&= \sum\limits_{s=1}^{S} \sum\limits_{u\in \mathcal{A'}(s)} \sum\limits_{t\in K'(u)}
\left( \mathbb{E} f(w^s_{{u_t}})
- \frac{\gamma}{2 } \mathbb{E}  ||\nabla_{\mathcal{G}_{\psi({{u_t}})}} f(w^s_{{u_t}})||^2
+ c_{{{u_t}}+1}(1+ \gamma\beta_{{u_t}})  \mathbb{E}  ||w^s_{{u_t}} - w^s||^2 \right)
\nonumber \\
&  + \sum\limits_{s=1}^{S} \sum\limits_{u\in \mathcal{A'}(s)} \sum\limits_{t\in K'(u)}\left(   {{L_*^2\gamma^3\tau_1 }}   \sum_{t' \in D({u_t})} \mathbb{E} ||\widetilde{v}^{\psi(t')}_{t'}||^2
+ (\frac{\gamma^2L}{2 } + c_{{{u_t}}+1}\gamma^2)  \mathbb{E} ||\widetilde{v}^{{\psi({u_t})}}_{{u_t}}||^2\right)
 \nonumber \\
& - \sum\limits_{s=1}^{S} \sum\limits_{u\in \mathcal{A'}(s)} \sum\limits_{t\in K'(u)} (\frac{\gamma}{2 }
- \frac{c_{{{u_t}}+1}\gamma}{ \beta_{{u_t}}})\mathbb{E} || \widetilde{\nabla}_{\mathcal{G}_{\psi({{u_t}})}}^s ||^2
+ \sum\limits_{s=1}^{S}\sum\limits_{t=0 }^{|v(s)|-1}4L_*^2\gamma^3\tau_2 \sum_{t' \in D^\prime({{u_t}})} \mathbb{E} \|   \widetilde{v}^{\psi(t')}_{t'} \|^2
\nonumber \\
& \stackrel{(a)}{\leq} \sum\limits_{s=1}^{S} \sum\limits_{u\in \mathcal{A'}(s)} \sum\limits_{t\in K'(u)} \left( \mathbb{E} f(w^s_{{u_t}})
- \frac{\gamma}{2 } \mathbb{E}  ||\nabla_{\mathcal{G}_{\psi({{u_t}})}} f(w^s_{{u_t}})||^2
+ c_{{{u_t}}+1}(1+ \gamma\beta_{{u_t}})  \mathbb{E}  ||w^s_{{u_t}} - w^s||^2 \right)
 \nonumber \\
&  + \sum\limits_{s=1}^{S} \sum\limits_{u\in \mathcal{A'}(s)} \sum\limits_{t\in K'(u)}\left(   {{L_*^2\gamma^3(\tau_1^2 + 4\tau_2^2) }} \mathbb{E} ||\widetilde{v}^{\psi({u_t})}_{{u_t}}||^2  + (\frac{\gamma^2L_{*}}{2 }
+ c_{{u_t}+1}\gamma^2)  \mathbb{E} ||\widetilde{v}^{\psi({u_t})}_{{u_t}}||^2\right)
\nonumber \\
&\stackrel{(b)}{\leq}\sum\limits_{s=1}^{S} \sum\limits_{u\in \mathcal{A'}(s)} \sum\limits_{t\in K'(u)}\left( \mathbb{E} f(w^s_{{u_t}})  - \frac{\gamma}{2 } \mathbb{E}  ||\nabla_{\mathcal{G}_{\psi({{u_t}})}} f(w^s_{{u_t}})||^2 + c_{{{u_t}}+1}(1+ \gamma\beta_{{u_t}})  \mathbb{E}  ||w^s_{{u_t}} - w^s||^2 \right)
\nonumber \\
&  +  \lambda_{\gamma}\sum\limits_{s=1}^{S} \sum\limits_{u\in \mathcal{A'}(s)} \sum\limits_{t\in K'(u)}\left(   {{5L_*^2\gamma^3 }} \tau  + \frac{\gamma^2L_{*}}{2 } + c_{{{u_t}}+1}\gamma^2  \right)\mathbb{E} ||{v}^{\psi({{u_t}})}_{{u_t}} ||^2
\end{align}
where (a) follows from Assumptions~\ref{assum4} to \ref{assum4} and assuming $\frac{1}{2} \geq \frac{c_{{{u_t}}+1}}{\beta_{{u_t}}}$, (b) follows from Lemma~\ref{lem-ncsvrg-1}. Denote $ {{10L_*^2\gamma^3 }} \tau  + {\gamma^2L_{*}} + 2c_{{{u_t}}+1}\gamma^2$ as $\lambda_{{u_t}}$, we have
\begin{align}\label{ncsvrg-20}
&\sum\limits_{s=1}^{S} \sum\limits_{u\in \mathcal{A'}(s)} \sum\limits_{t\in K'(u)}
R_{{{u_t}}+1}^s \\
&\stackrel{(a)}{\leq}
\sum\limits_{s=1}^{S} \sum\limits_{u\in \mathcal{A'}(s)} \sum\limits_{t\in K'(u)}
\left( \mathbb{E} f(w^s_{{u_t}})  - \frac{\gamma}{2 } \mathbb{E}  ||\nabla_{\mathcal{G}_{\psi({{u_t}})}} f(w^s_{{u_t}})||^2 + c_{{{u_t}}+1}(1+ \gamma\beta_{{u_t}})  \mathbb{E}  ||w^s_{{u_t}} - w^s||^2 \right)
\nonumber \\
&  +  \lambda_{\gamma}\sum\limits_{s=1}^{S} \sum\limits_{u\in \mathcal{A'}(s)} \sum\limits_{t\in K'(u)}
 \lambda_{{u_t}}
\left(  \mathbb{E}\left\|\nabla_{\mathcal{G}_{\psi({{u_t}})}} f\left(w_{{{u_t}}}^{s}\right)\right\|^{2}+{ L_{*}^{2}} \mathbb{E}\left\|w_{{{u_t}}}^{s}-{w}^{s}\right\|^{2} \right)  \nonumber \\
&{=}\sum\limits_{s=1}^{S} \sum\limits_{u\in \mathcal{A'}(s)} \sum\limits_{t\in K'(u)}
\left( \mathbb{E} f(w^s_{{u_t}})+ (c_{{{u_t}}+1}(1+ \gamma\beta_{{u_t}})+\lambda_{\gamma}\lambda_{{{u_t}}}L_{*}^2 ) \mathbb{E}  ||w^s_{{u_t}} - w^s||^2 \right)  \nonumber \\
&  -  \sum\limits_{s=1}^{S} \sum\limits_{u\in \mathcal{A'}(s)} \sum\limits_{t\in K'(u)}
(\frac{\gamma}{2 } -\lambda_{\gamma}\lambda_{u_t})\mathbb{E}  ||\nabla_{\mathcal{G}_{\psi({{u_t}})}} f(w^s_{{u_t}})||^2 \nonumber
\end{align}
where (a) follows from Eq.~\ref{ncsvrg-11} and the definitions of $\lambda_{u_t}$ and $L_*$. Then we return to Eq.~\ref{ncsvrg-8}:
 \begin{align}\label{ncsvrg-133}
&\sum_{s=1}^{S}  \sum\limits_{u\in \mathcal{A'}(s)} \sum\limits_{t\in K'(u)}  \mathbb{E} \| \nabla_{\mathcal{G}_{\psi(u_t)}} f({w}_{u_0}^s)\|^2 \\
    &\stackrel{(a)}{\leq} 2 \lambda_{\gamma}L_*^2 \gamma^2 \tau_1^2 \sum\limits_{s=1}^{S} \sum\limits_{u\in \mathcal{A'}(s)} \sum\limits_{t\in K'(u)} \mathbb{E} \|{v}_{{u_t}}^{\psi({u_t})}\|^2
     + 2 \sum\limits_{s=1}^{S} \sum\limits_{u\in \mathcal{A'}(s)} \sum\limits_{t\in K'(u)} \mathbb{E} \|\nabla_{\mathcal{G}_{\psi({u_t})}} f({w}_{{u_t}})\|^2
     \nonumber \\
     & \stackrel{(b)}{\leq} 4 \lambda_{\gamma}L_*^4 \gamma^2 \tau \sum\limits_{s=1}^{S} \sum\limits_{u\in \mathcal{A'}(s)} \sum\limits_{t\in K'(u)} \mathbb{E}\left\|w_{{u_t}}^{s}-{w}^{s}\right\|^{2}
     \nonumber \\
     & + (2 + 4 \lambda_{\gamma}L_*^2 \gamma^2 \tau) \sum\limits_{s=1}^{S} \sum\limits_{u\in \mathcal{A'}(s)} \sum\limits_{t\in K'(u)} \mathbb{E} \|\nabla_{\mathcal{G}_{\psi({u_t})}} f({w}_{{u_t}}^s)\|^2
\end{align}
where (a) follows from  Lemma \ref{lem-ncsvrg-1}, (b) follows from Eq.~\ref{ncsvrg-11} and $u_0$ denotes the start iteration during epoch $u$. This implies that
 \begin{align}\label{ncsvrg-22}
&\frac{\frac{\gamma}{2 } -\lambda_{\gamma}\lambda_{u_t}}{2 + 4 \lambda_{\gamma}L_*^4 \gamma^2 \tau}
\sum_{s=1}^{S}  \sum\limits_{u\in \mathcal{A'}(s)} \sum\limits_{t\in K'(u)}  \mathbb{E} \| \nabla_{\mathcal{G}_{\psi(u_t)}} f({w}_{u_0}^s)\|^2
\nonumber \\
 & \stackrel{(a)}{\leq} L_*^2 (\frac{\gamma}{2 } -\lambda_{\gamma}\lambda_{u_t} )
  \sum\limits_{s=1}^{S} \sum\limits_{u\in \mathcal{A'}(s)} \sum\limits_{t\in K'(u)} \mathbb{E}\left\|w_{{u_t}}^{s}-{w}^{s}\right\|^{2}
 + (\frac{\gamma}{2 } -\lambda_{\gamma}\lambda_{u_t}) \sum\limits_{s=1}^{S} \sum\limits_{u\in \mathcal{A'}(s)} \sum\limits_{t\in K'(u)} \mathbb{E} \|\nabla_{\mathcal{G}_{\psi({u_t})}} f({w}_{{u_t}}^s)\|^2
\end{align}
where (a) follows from the definition of $L_*$. Combining Eq.~\ref{ncsvrg-22} with \ref{ncsvrg-20} we have
 \begin{align}\label{ncsvrg-23}
& \frac{\frac{\gamma}{2 } -\lambda_{\gamma}\lambda_{u_t}}{2 + 4 \lambda_{\gamma}L_*^4 \gamma^2 \tau}
\sum_{s=1}^{S}  \sum\limits_{u\in \mathcal{A'}(s)} \sum\limits_{t\in K'(u)}  \mathbb{E} \| \nabla_{\mathcal{G}_{\psi(u_t)}} f({w}_{u_0}^s)\|^2
\nonumber \\
& \stackrel{(a)}{\leq} L_*^2 (\frac{\gamma}{2 } -\lambda_{\gamma}\lambda_{u_t} )  \sum\limits_{s=1}^{S} \sum\limits_{u\in \mathcal{A'}(s)} \sum\limits_{t\in K'(u)}  \mathbb{E}\left\|w_{{u_t}}^{s}-{w}^{s}\right\|^{2}
+  \sum\limits_{s=1}^{S} \sum\limits_{u\in \mathcal{A'}(s)} \sum\limits_{t\in K'(u)} R_{{{u_t}}+1}^s
\nonumber \\ &
+  \sum\limits_{s=1}^{S} \sum\limits_{u\in \mathcal{A'}(s)} \sum\limits_{t\in K'(u)} \left(\mathbb{E} f(w^s_{{u_t}})
+ (c_{{{u_t}}+1}(1+ \gamma\beta_{t})+\lambda_{\gamma}\lambda_{{t}}L_{*}^2 ) \mathbb{E}  ||w^s_{{u_t}} - w^s||^2 \right)
\nonumber \\
 & =  \sum\limits_{s=1}^{S} \sum\limits_{u\in \mathcal{A'}(s)} \sum\limits_{t\in K'(u)} R_{{{u_t}}+1}^s
 + \sum\limits_{s=1}^{S} \sum\limits_{u\in \mathcal{A'}(s)} \sum\limits_{t\in K'(u)}
  \left(\mathbb{E} f(w^s_{{u_t}}) + (c_{{{u_t}}+1}(1+ \gamma\beta_{{u_t}})+\frac{\gamma}{2}L_{*}^2 )
 \mathbb{E}  ||w^s_{{u_t}} - w^s||^2\right)
\end{align}
Rearrange Eq.~\ref{ncsvrg-23} we have
\begin{align}\label{ncsvrg-24}
\sum\limits_{s=1}^{S} \sum\limits_{u\in \mathcal{A'}(s)} \sum\limits_{t\in K'(u)}R_{{{u_t}}+1}^s
&\leq
\sum\limits_{s=1}^{S} \sum\limits_{u\in \mathcal{A'}(s)} \sum\limits_{t\in K'(u)} \left(\mathbb{E} f(w^s_{{u_t}}) + (c_{{{u_t}}+1}(1+ \gamma\beta_{{u_t}})+\frac{\gamma}{2}L_{*}^2 ) \mathbb{E}  ||w^s_{{u_t}} - w^s||^2 \right)
\nonumber \\
& - \frac{\frac{\gamma}{2 } -\lambda_{\gamma}\lambda_{u_t}}{2 + 4 \lambda_{\gamma}L_*^4 \gamma^2 \tau}\sum\limits_{s=1}^{S} \sum\limits_{u\in \mathcal{A'}(s)} \mathbb{E} \| \nabla f({w}_{u})\|^2
\nonumber \\
&= \sum\limits_{s=1}^{S} \sum\limits_{u\in \mathcal{A'}(s)} \sum\limits_{t\in K'(u)}R_{{{u_t}}}^s
 -  \sum_{s=1}^{S}  \sum\limits_{u\in \mathcal{A'}(s)} \sum\limits_{t\in K'(u)}   \Gamma_{u_t}   \mathbb{E} \| \nabla_{\mathcal{G}_{\psi(u_t)}} f({w}_{u_0}^s)\|^2
\end{align}
where
\begin{align}\label{ncsvrg-25}
c_{{u_t}} & = c_{{u_t}+1}(1+ \gamma \beta_{u_t}) +  \frac{\gamma}{2}L_*^2
\end{align}
and
\begin{eqnarray}\label{ncsvrg-26}
 \Gamma_{{u_t}} = \frac{ \frac{\gamma}{2} - \frac{2}{1 - 20L_{*}^2\gamma^2\tau}( 10L_{*}^2 \gamma^3 \tau  + {\gamma^2L_{*}} + 2c_{{u_t}+1}\gamma^2)} {2 + 4 \lambda_{\gamma}L_*^4 \gamma^2 \tau}
\end{eqnarray}
Denote the subscript of the last iteration in $s$-th outer loop as $c_{\bar{s}}$ and set it as  0, and set \[{w}^{s+1} = w^{s}_{{ {u_t}=\bar{s}}}\] then there is
\[R_{{ {u_t}=\bar{s}}}^{s} =  \mathbb{E}  f(w_{{ {u_t}=\bar{s}}}^{s})= \mathbb{E} f({w}^{s+1}).\]
 Applying these to \ref{ncsvrg-20} we can get,
\begin{eqnarray}\label{ncsvrg-27}
\sum_{s=1}^{S}  \sum\limits_{u\in \mathcal{A'}(s)} \sum\limits_{t\in K'(u)}   \mathbb{E} \| \nabla_{\mathcal{G}_{\psi(u_t)}} f({w}_{u_0}^s)\|^2
 \leq \frac{\mathbb{E}\left[  f( w^{s})  -  f(w^{s+1}) \right] }{ \Gamma_*}
\end{eqnarray}
where $\Gamma_* = min \{\Gamma_{u_t}\}$, $u_0$ denotes the start iteration during epoch $u$. Using the update rule of VF{${\textbf{B}}^2$}-SVRG
 and summing up all outer loops, and defining $w_0$ as initial point and $w^*$ as optimal solution, we have the final inequality:
\begin{eqnarray}\label{ncsvrg-28}
\sum_{s=1}^{S}  \sum\limits_{u\in \mathcal{A'}(s)} \sum\limits_{t\in K'(u)}   \mathbb{E} \| \nabla_{\mathcal{G}_{\psi(u_t)}} f({w}_{u_0}^s)\|^2
 \leq \frac{\mathbb{E}\left[  f( w_{0})  -  f(w^{*}) \right] }{\Gamma_*}
\end{eqnarray}
since $\sum\limits_{t\in K'(u)}   \mathbb{E} \| \nabla_{\mathcal{G}_{\psi(u_t)}} f({w}_{u_0}^s)\|^2 = \mathbb{E} \| \nabla f({w}_{u_0}^s)\|^2 $, we have
\begin{eqnarray}\label{ncsvrg-28}
\frac{1}{T}\sum_{s=1}^{S}  \sum\limits_{u\in \mathcal{A'}(s)}   \mathbb{E} \| \nabla f({w}_{u_0}^s)\|^2
 \leq \frac{\mathbb{E}\left[  f( w_{0})  -  f(w^{*}) \right] }{T \Gamma_*}
\end{eqnarray}
where $T$ denotes the total number of epoches, $u_0$ denotes the start iteration during epoch $u$.

To prove Theorem~\ref{thm-svrgnonconvex}, set $\{c_{u_t}^s\}_{u_t=\bar{s}} = 0$, $ \gamma = \frac{m_0}{L_{*}n^\alpha}$, $\beta_t = \beta = {2L_{*}}$, where  $0<m_0<1$, and $0<\alpha<1$. And there is
\begin{eqnarray}\label{ncsvrg-29}
\theta &= & {\gamma \beta} =  \frac{2m_0}{n^{{\alpha}}}
\end{eqnarray}
From the recurrence formula of $c_t$, we have:
\begin{eqnarray}\label{ncsvrg-30}
c_0& =& \frac{\gamma L_*^2}{2} \frac{(1+\theta)^{{N} } - 1}{\theta} \nonumber \\
&=& \frac{m_0L_*^2}{2L_*n^\alpha} \frac{n^\alpha}{2m_0} \left( (1+\theta)^{{N}} -1 \right) \nonumber \\
&\stackrel{(a)}{\leq}& \frac{L_*}{4}  \left( (1+\theta)^\frac{1}{\theta} -1 \right) \nonumber \\
&\stackrel{(b)}{\leq} & \frac{L_*}{4}(e-1)
\end{eqnarray}
where (a) follow form ${{N}} \leq \lfloor  \frac{n^{{\alpha}}}{2m_0}  \rfloor$, (b) follows from that $(1+\frac{1}{l})^l$ is increasing for $l>0$, and $\lim\limits_{l\rightarrow  \infty}(1 + \frac{1}{l})^l = e$. Since $e-1<2$, there is $c_0 \leq \frac{L_*}{2}$ which satisfies $c_t \leq \frac{\beta}{2}=L_*$ (used in \ref{ncsvrg-19}). Therefore, $c_t$ is decreasing with respect to $t$, and $c_0$ is also upper bounded.
\begin{eqnarray}
\Gamma_* &=& \min_t \Gamma_t \nonumber \\
&\stackrel{(a)}{\geq} & \frac{ \frac{\gamma}{2} - \frac{2}{1 - 20L_{*}^2\gamma^2\tau}( 10L_{*}^2 \gamma^3 \tau  + {\gamma^2L_{*}} + 2c_{0}\gamma^2)} {2 + 4 \lambda_{\gamma}L_*^4 \gamma^2 \tau}
 \nonumber \\
& =&\frac{ \frac{\gamma}{2} - \frac{2n^{2\alpha}}{n^{2\alpha} - 20m_0^2\tau}
( \frac{10m_0^2\tau}{n^{2\alpha}}  + \frac{2m_0}{n^\alpha})\gamma}
{2 + \frac{8L_*^2m_0^2\tau}{n^{2\alpha}-20m_0^2\tau}}
\nonumber \\
&\stackrel{(b)}{\geq}&  \frac{\left( \frac{1}{2}-(20m_0^2\tau + 4m_0)\right)\gamma}{2 + {8L_*^2m_0^2\tau}}
\nonumber \\
&\stackrel{(c)}{\geq}& \frac{\sigma }{L_{*}n^{\alpha}}
\end{eqnarray}
where (a) follows from $c_0=max\{c_t\}$, (b) follow form $n^{{\alpha}} \leq n^{2\alpha} - 20m_0^2 \tau$ (we assume $n \geq \frac{1 + \sqrt{1+80m_0^2\tau}}{2}$, this is easy to satisfy when $n$ is large) and $n^\alpha > 1$,
 (c) follows from that if $\frac{1}{2} > 20m_0^2\tau + 4m_0$ and  $\sigma$ is a small value which is independent of $n$.

Above all, if $\tau < \text{min} \{\frac{n^{2\alpha}}{20m_0^2},\frac{1-8m_0}{40m_0^2} \} $ (where $\tau < \frac{n^{2\alpha}}{20m_0^2}$  denotes $\lambda_\gamma >0$), where $1-8m_0>0$, and $N$, satisfies $N \leq \lfloor  \frac{n^{{\alpha}}}{2m_0}  \rfloor$  we have the conclusion:
\begin{eqnarray} \label{ncsvrg-final}
\frac{1}{T}\sum\limits_{s=1}^{S}\sum\limits_{t=0}^{N-1}\mathbb{E}  ||\nabla f(w^s_{t})||^2  \leq \frac{L_{*}n^{\alpha}\mathbb{E}\left[  f( w_{0})  -  f( w^{*}) \right] }{T \sigma }
\end{eqnarray}
where $T$ denotes the total number of epoches.
Let R.H.S. of \ref{ncsvrg-final} $\leq \epsilon$, one can obtain that
\begin{eqnarray} \label{ncsvrg-final2}
T\geq \frac{L_{*}n^{\alpha}\mathbb{E}\left[  f( w_{0})  -  f( w^{*}) \right] }{\epsilon \sigma }
\end{eqnarray}
This completes the proof.
\end{proof}
\subsection{Proof of Theorem~\ref{thm-saganonconvex}}
\begin{lemma}\label{lem-ncsaga-2}
For all $\forall $ $\psi (t)$, there are
\begin{equation}\label{lemeq-ncsaga-2}
 \sum\limits_{t=0}^{S-1} \mathbb{E} ||\widetilde{v}_{t}^{\psi(t)}||^2 \leq \lambda_{\gamma} \sum\limits_{t=0}^{S-1} \mathbb{E} || {v}_{t}^{\psi(t)}\|^2,
\end{equation}
where  $\lambda_\gamma = \frac{2}{1 - 180L_{*}^2\gamma^2\tau }> 0$.
\end{lemma}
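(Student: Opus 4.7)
The plan is to mimic the proof of Lemma~\ref{lem-ncsvrg-1}, but replace the SVRG-specific decomposition of $\widetilde{v}_t^{\psi(t)} - v_t^{\psi(t)}$ with the SAGA-specific one that was already derived in the strongly convex SAGA analysis (Eqs.~\ref{csaga-8}--\ref{cSAGA-13} and Eqs.~\ref{csaga-14}--\ref{csaga-17}). First, split
\begin{equation}
  \mathbb{E}\|\widetilde{v}_t^{\psi(t)}\|^2 \le 2\,\mathbb{E}\|\widetilde{v}_t^{\psi(t)} - v_t^{\psi(t)}\|^2 + 2\,\mathbb{E}\|v_t^{\psi(t)}\|^2,
\end{equation}
so it suffices to control the first term by a fraction of $\sum_t \mathbb{E}\|\widetilde{v}_t^{\psi(t)}\|^2$ and absorb it back into the left-hand side.

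Next, I would further split $\widetilde{v}_t^{\psi(t)} - v_t^{\psi(t)} = (\widetilde{v}_t^{\psi(t)} - \widehat{v}_t^{\psi(t)}) + (\widehat{v}_t^{\psi(t)} - v_t^{\psi(t)})$ and apply $\|a+b\|^2 \le 2\|a\|^2 + 2\|b\|^2$. The first piece is exactly the bound developed for SAGA in Eq.~\ref{cSAGA-13}, which, after using the communication-delay representation~\ref{Dt2} together with Assumption~\ref{assum2} and the SAGA stored-gradient bookkeeping from Lemma~\ref{lem-csaga-1}, yields a sum of the form
\begin{equation}
  \mathbb{E}\|\widetilde{v}_t^{\psi(t)} - \widehat{v}_t^{\psi(t)}\|^2 \le C_1\, L_*^2\gamma^2\tau_2 \sum_{u\in D'(t)} \mathbb{E}\|\widetilde{v}_u^{\psi(u)}\|^2 + \frac{C_2\, L_*^2\gamma^2\tau_2}{n}\sum_{t'=1}^{\phi(t)-1} \sum_{u\in D'(\xi(t',\psi(t)))} \bigl(1-\tfrac{1}{n}\bigr)^{\phi(t)-t'-1} \mathbb{E}\|\widetilde{v}_u^{\psi(u)}\|^2.
\end{equation}
The second piece is controlled analogously, using the inconsistent-read representation~\ref{Dt1}, Assumption~\ref{assum1}, and again Lemma~\ref{lem-csaga-1} to bound $\|\alpha_{i_t}^{t,\psi(t)} - \widehat{\alpha}_{i_t}^{t,\psi(t)}\|^2$ and $\|\tfrac{1}{n}\sum_i(\alpha_i^{t,\psi(t)} - \widehat{\alpha}_i^{t,\psi(t)})\|^2$, producing terms with $\tau_1$ in place of $\tau_2$.

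Then I would sum over $t = 0,\dots,S-1$ and swap the order of summation. Each $\mathbb{E}\|\widetilde{v}_u^{\psi(u)}\|^2$ appears at most $\tau_1$ times in the $D(\cdot)$ sums and $\tau_2$ times in the $D'(\cdot)$ sums (Assumption~\ref{assum4}), while the inner geometric-weighted sum $\sum_{t'} (1-\tfrac{1}{n})^{\phi(t)-t'-1}$ collapses under $\sum_t$ to a factor bounded by $n$ times another $\tau$, absorbing the $1/n$ prefactor. Collecting constants, the coefficient in front of $\sum_t \mathbb{E}\|\widetilde v_t^{\psi(t)}\|^2$ on the right-hand side works out to $180 L_*^2 \gamma^2 \tau$ with $\tau = \max\{\tau_1^2, \tau_2^2, \eta_2^2\}$, which is exactly what appears in $\lambda_\gamma$. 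Rearranging under the assumption $1 - 180 L_*^2 \gamma^2 \tau > 0$ gives the stated bound with $\lambda_\gamma = \frac{2}{1 - 180 L_*^2 \gamma^2\tau}$.

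The main obstacle I anticipate is the careful bookkeeping for the SAGA history terms: unlike SVRG, where the correction involves a single reference point $w^s$, SAGA's $\alpha_i$ table introduces an additional double summation indexed by local time counters $\phi(t)$ and the $\xi(t',\psi(t))$ map, and one must verify that the geometric decay $(1-1/n)^{\phi(t)-t'-1}$ combined with Assumption~\ref{assum4} leads to exactly the constant $180$ (and not something larger that would spoil the factor of $2$ in the numerator of $\lambda_\gamma$). Beyond that arithmetic, every step reduces to techniques already used in Lemma~\ref{lem-ncsvrg-1} and Lemma~\ref{lem-csaga-2}.
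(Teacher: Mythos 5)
Your proposal follows essentially the same route as the paper's own proof: the split $\mathbb{E}\|\widetilde{v}_t^{\psi(t)}\|^2 \le 2\mathbb{E}\|\widetilde{v}_t^{\psi(t)} - v_t^{\psi(t)}\|^2 + 2\mathbb{E}\|v_t^{\psi(t)}\|^2$, the further decomposition through $\widehat{v}_t^{\psi(t)}$ with the $Q_1$--$Q_6$ bounds from Lemma~\ref{lem-csaga-1} and the delay representations, summation over $t$ with Assumption~\ref{assum4}, and the collection of constants into $36\tau_1^2 + 144\tau_2^2 \le 180\tau$ before absorbing into the left-hand side. The approach and the final constant match; the only difference is that you cite the strongly convex SAGA derivations while the paper re-derives them in the nonconvex section.
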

\begin{proof}[\bf{Proof of Lemma~\ref{lem-ncsaga-2}}]
First,
we give the upper bound to $\mathbb{E}    \|  \widetilde{v}^{ {\psi(u)} }_t - \widehat{v}^{{\psi(u)}}_t  \|^2$ as follows.
We have that
\begin{eqnarray}\label{ncsaga-8}
&&   \mathbb{E} \left \| \widetilde{v}^{ {\psi(u)} }_t - \widehat{v}^{{\psi(u)}}_t \right \|^2
\\  &  = & \nonumber
\mathbb{E} \left \|   \left(\nabla_{\mathcal{G}_{\psi(t)}}\mathcal{L}(\bar{w})
+ \nabla_{\mathcal{G}_{\psi(t)}} g((\widehat{w}_t)_{\mathcal{G}_{\psi(t)}}) \right)
- \nabla_{\mathcal{G}_{\psi(t)}} f(\widehat{w}_t)
- \widehat{\alpha}_{i}^{\psi(u)}   + \widetilde{\alpha}_{i}^{\psi(u)}
+ \frac{1}{n} \sum_{i=1}^n \widehat{\alpha}_{i}^{\psi(u)}   - \frac{1}{n} \sum_{i=1}^n \widetilde{\alpha}_{i}^{\psi(u)}  \right \|^2
\\  &  \stackrel{ (a) }{\leq} & \nonumber
3  \mathbb{E} Q_1
+ 3\mathbb{E} \underbrace{\left \| \widetilde{\alpha}_{i}^{\psi(u)}
+  \widehat{\alpha}_{i}^{\psi(u)} \right \|^2 }_{Q_2}
+ 3\mathbb{E} \underbrace{\left \|  \frac{1}{n} \sum_{i=1}^n \widetilde{\alpha}_{i}^{t,\psi(t)} - \frac{1}{n} \sum_{i=1}^n \widehat{\alpha}_{i}^{t,\psi(t)}  \right \|^2}_{Q_3}
\end{eqnarray}
where $Q_1 = \left \|  \left(\nabla_{\mathcal{G}_{\psi(t)}}\mathcal{L}(\bar{w})
+ \nabla_{\mathcal{G}_{\psi(t)}} g((\widehat{w}_t)_{\mathcal{G}_{\psi(t)}}) \right)
- \nabla_{\mathcal{G}_{\psi(t)}} f(\widehat{w}_t) \right\|$ and  inequality (a) uses $\| \sum_{i=1}^n a_i \|^2 \leq n \sum_{i=1}^n \| a_i \|^2 $.
We will give the upper bounds for the expectations  of $Q_1$, $Q_2$ and $Q_3$  respectively.
\begin{eqnarray}\label{ncsaga-9}
 \nonumber \mathbb{E} Q_1 &=& \mathbb{E} \left \|  \left(\nabla_{\mathcal{G}_{\psi(t)}}\mathcal{L}(\bar{w})
+ \nabla_{\mathcal{G}_{\psi(t)}} g((\widehat{w}_t)_{\mathcal{G}_{\psi(t)}}) \right)
- \nabla_{\mathcal{G}_{\psi(t)}} f(\widehat{w}_t) \right\|
 \\&\stackrel{(a)}{\leq}&
  \mathbb{E}  ||\nabla_{\mathcal{G}_{\psi(t)}} f(\bar{w}_t) - \nabla_{\mathcal{G}_{\psi(t)}} f(\widehat{w}_t) + \nabla_{\mathcal{G}_{\psi(t)}} g((\widehat{w}_t)_{\mathcal{G}_{\psi(t)}}) - \nabla_{\mathcal{G}_{\psi(t)}} g((\bar{w}_t)_{\mathcal{G}_{\psi(t)}})||^2
  \nonumber \\
 &\stackrel{(b)}{\leq}& 4{ L_{{*}}^2  \gamma^2 \tau_2}  \sum_{t' \in D'(t)} \mathbb{E} \|   \widetilde{v}^{\psi(t')}_{t'} \|^2
\end{eqnarray}
above inequality can be obtained by following the proof of Lemma~\ref{lem-csgd-2}.
\begin{eqnarray}\label{ncsaga-10}
\mathbb{E} Q_2 &=&   \mathbb{E}\left \| \widetilde {\alpha}_{i_t}^{t,\psi(t)} - \widehat{\alpha}_{i_t}^{t,\psi(t)} \right \|^2
\\ \nonumber &\leq & \frac{4\tau_2 L_*^2 \gamma^2}{n} \sum_{t'=1}^{\phi(t)-1}  \sum_{{u} \in D'(\xi(t',\psi(t)))} \left ( 1 -\frac{1}{n} \right )^{\phi(t)-t'-1} \mathbb{E}  \left \|       \widetilde{v}^{\psi({u})}_{{u}} \right \|^2
\end{eqnarray}
where the inequality uses Lemma \ref{lem-csaga-1}.
\begin{eqnarray}\label{ncsaga-11}
&& \mathbb{E} Q_3 =    \mathbb{E} \left \|  \frac{1}{n} \sum_{i=1}^n \widetilde{\alpha}_{i}^{t,\psi(t)} - \frac{1}{n} \sum_{i=1}^n \widehat{\alpha}_{i}^{t,\psi(t)}   \right \|^2
\\  &  \leq & \nonumber
\frac{1}{n} \sum_{i=1}^n \mathbb{E} \left \|  \widehat{\alpha}_{i}^{t,\psi(t)}  - \widehat{\alpha}_{i}^{t,\psi(t)}   \right \|^2
\\  &  \leq &
\frac{4\tau_2 L_*^2 \gamma^2}{n} \sum_{t'=1}^{\phi(t)-1}  \sum_{{u} \in D'(\xi(t',\psi(t)))} \left ( 1 -\frac{1}{n} \right )^{\phi(t)-t'-1} \mathbb{E}  \left \|       \widetilde{v}^{\psi({u})}_{{u}} \right \|^2 \nonumber
\end{eqnarray}
where  the first inequality uses $\| \sum_{i=1}^n a_i \|^2 \leq n \sum_{i=1}^n \| a_i \|^2 $, the second inequality uses Lemma \ref{lem-csaga-1}. Combining \ref{ncsaga-9}, \ref{ncsaga-10}, and \ref{ncsaga-11}, one can obtain:
\begin{eqnarray}\label{SAGA-13}
&&    \mathbb{E} \left \| \widetilde{v}_{t}^{\psi(t)}  - \widehat{v}_t^{\psi(t)}  \right \|^2
\\  &  \leq & \nonumber    3  \mathbb{E} {Q_1} + 3 \mathbb{E} {Q_2}  + 3\mathbb{E} {Q_3}
\\  &  \leq & \nonumber {12 L_*^2\gamma^2\tau_2 } \sum_{u \in D'(\xi(t',\psi(t)))} \mathbb{E}  ||\widetilde{v}^{\psi(u)}_{u}||^2
+ \frac{24\tau_2 L_*^2 \gamma^2}{n} \sum_{t'=1}^{\phi(t)-1}  \sum_{u \in D'(\xi(t',\psi(t)))} \left ( 1 -\frac{1}{n} \right )^{\phi(t)-t'-1} \mathbb{E}  \left \|       \widetilde{v}^{\psi({u})}_{{u}} \right \|^2
\end{eqnarray}
Summing above equality over all iterations, we have
\begin{eqnarray}\label{SAGA-14}
&&   \sum_{t=0}^{S-1} \mathbb{E} \left \| \widetilde{v}_{t}^{\psi(t)}  - \widehat{v}_t^{\psi(t)}  \right \|^2
\\  &  \leq & \nonumber
{12 L_*^2\gamma^2\tau_2^2 } \sum_{t=0}^{S-1} \mathbb{E}  ||\widetilde{v}^{\psi(t)}_{t}||^2
+ \frac{24\tau_2^2 L_*^2 \gamma^2}{n} \sum_{t=0}^{S-1} \sum_{t'=1}^{\phi(t)-1}  \left ( 1 -\frac{1}{n} \right )^{\phi(t)-t'-1} \mathbb{E}  \left \|       \widetilde{v}^{\psi(t')}_{t'} \right \|^2
\\ &  \leq & \nonumber
 {36 L_*^2\gamma^2\tau_2^2 } \sum_{t=0}^{S-1} \mathbb{E}  ||\widetilde{v}^{\psi(t)}_{t}||^2
\end{eqnarray}
Define ${v}^{{\psi(t)}}_t= \nabla_{\mathcal{G}_{\psi(t)}} f_i (w_{t}) - \alpha_i^{{\psi(t)}} +  \frac{1}{n} \sum_{i=1}^n \alpha_i^{{\psi(t)}}$.
And then, we give the upper bound to $\mathbb{E}   \left \|   \widehat{v}^{ {\psi(t)} }_t - {v}^{{\psi(t)}}_t \right  \|^2$ as follows.
We have that
\begin{eqnarray}\label{ncsaga-15}
&&   \mathbb{E} \left \| \widehat{v}_{t}^{\psi(t)} -  v_{t}^{\psi(t)}  \right \|^2
\\  &  = & \nonumber  \mathbb{E} \left \|  \nabla_{\mathcal{G}_{\psi(t)}} f_{i_t} (\widehat{w}_{t})- \widehat{\alpha}_{i_t}^{t,\psi(t)}  + \frac{1}{n} \sum_{i=1}^n \widehat{\alpha}_{i}^{t,\psi(t)}  - \nabla_{\mathcal{G}_{\psi(t)}} f_{i_t} (w_{t}) + \alpha_{i_t}^{t,\psi(t)} - \frac{1}{n} \sum_{i=1}^n \alpha_{i}^{t,\psi(t)} \right \|^2
\\  &  \stackrel{ (a) }{\leq} & \nonumber  3  \mathbb{E} \underbrace{\left \|  \nabla_{\mathcal{G}_{\psi(t)}} f_{i_t} (\widehat{w}_{t})-  \nabla_{\mathcal{G}_{\psi(t)}} f_{i_t} ({w}_{t}) \right \|^2 }_{Q_4}
+ 3\mathbb{E} \underbrace{\left \| {\alpha}_{i_t}^{t,\psi(t)} - \widehat{\alpha}_{i_t}^{t,\psi(t)} \right \|^2 }_{Q_5}
+ 3\mathbb{E} \underbrace{\left \|  \frac{1}{n} \sum_{i=1}^n \alpha_{i}^{t,\psi(t)} - \frac{1}{n} \sum_{i=1}^n \widehat{\alpha}_{i}^{t,\psi(t)}  \right \|^2}_{Q_6}
\end{eqnarray}
where the  inequality (a) uses $\| \sum_{i=1}^n a_i \|^2 \leq n \sum_{i=1}^n \| a_i \|^2 $.
We will give the upper bounds for the expectations  of $Q_4$, $Q_5$ and $Q_6$  respectively.
\begin{eqnarray}\label{ncsaga-16}
 \mathbb{E} Q_4 &=& \mathbb{E} \left \|   \nabla_{\mathcal{G}_{{\psi(t)}}} f_{i_t} (\widehat{w}_{t})-  \nabla_{\mathcal{G}_{{\psi(t)}}} f_{i_t} ({w}_{t}) \right \|^2 \\
 &\stackrel{(a)}{\leq}&  { L_{{\psi(t)}}^2}
\mathbb{E} \left[ ||w_{t} - \widehat{w}_{t}||^2 \right]
\nonumber \\
 &=& {L_{{\psi(t)}}^2\gamma^2}  \mathbb{E} \left[ ||\sum_{t' \in D(u,i_t)}    \textbf{U}_{\psi(t')} \widetilde{v}^{\psi(t')}_{t'} ||^2 \right]
 \nonumber \\
 &\stackrel{(b)}{\leq}& { \tau_1 L_*^2\gamma^2 }    \sum_{t' \in D(u,i_t)} \mathbb{E} \left[ ||\widetilde{v}^{\psi(t')}_{t'}||^2 \right] \nonumber
\end{eqnarray}
where (a) uses Assumption~\ref{assum2}, (b) uses $\| \sum_{i=1}^n a_i \|^2 \leq n \sum_{i=1}^n \| a_i \|^2 $. Similar to the analyses of $Q_2$ and $Q_3$, we have
\begin{eqnarray}\label{ncsaga-17}
\mathbb{E} Q_5 =   \mathbb{E}\left \|  {\alpha}_{i_t}^{t,\psi(t)} - \widehat{\alpha}_{i_t}^{t,\psi(t)} \right \|^2
 \leq   \frac{\tau_1 L^2 \gamma^2}{n} \sum_{t'=1}^{\phi(t)-1}  \sum_{\widetilde{u} \in D(\xi(t',\psi(t')))} \left ( 1 -\frac{1}{n} \right )^{\phi(t)-t'-1} \mathbb{E}  \left \|       \widetilde{v}^{\psi(\widetilde{u})}_{\widetilde{u}} \right \|^2
\end{eqnarray}
where the inequality uses Lemma \ref{lem-csaga-1}.
\begin{eqnarray}\label{ncsaga-18}
 &&\mathbb{E} Q_6 =    \mathbb{E} \left \|  \frac{1}{n} \sum_{i=1}^n {\alpha}_{i}^{t,\psi(t)} - \frac{1}{n} \sum_{i=1}^n \widehat{\alpha}_{i}^{t,\psi(t)}   \right \|^2
 \nonumber \\
 &\leq&  \frac{\tau_1 L^2 \gamma^2}{n} \sum_{t'=1}^{\phi(t)-1}  \sum_{\widetilde{u} \in D(\xi(t',\psi(t')))} \left ( 1 -\frac{1}{n} \right )^{\phi(t)-t'-1} \mathbb{E}  \left \|       \widetilde{v}^{\psi(\widetilde{u})}_{\widetilde{u}} \right \|^2 \nonumber
\end{eqnarray}
Summing above inequality for $t=0,\cdots,S-1$ and follow the analyses of Eq.~\ref{ncsaga-15} one can have
\begin{eqnarray}\label{ncsaga-19}
&&   \sum_{t=0}^{S-1} \mathbb{E} \left \| \widetilde{v}_{u}^{\psi(t)}  - \widehat{v}_t^{\psi(t)}  \right \|^2
\\  &  \leq & \nonumber
{3 L_*^2\gamma^2\tau_1^2 } \sum_{t=0}^{S-1} \mathbb{E}  ||\widetilde{v}^{\psi(t')}_{t'}||^2
+ \frac{6\tau_1 L^2 \gamma^2}{n} \sum_{t=0}^{S-1} \sum_{t'=1}^{\phi(t)-1}  \left ( 1 -\frac{1}{n} \right )^{\phi(t)-t'-1} \mathbb{E}  \left \|       \widetilde{v}^{\psi(\widetilde{u})}_{\widetilde{u}} \right \|^2
\\ &  \leq & \nonumber {9 L_*^2\gamma^2\tau_1^2 } \sum_{t=0}^{S-1} \mathbb{E}  ||\widetilde{v}^{\psi(t')}_{t'}||^2
\end{eqnarray}
Based on above formulations, we have
\begin{align}\label{ncsaga-20}
   \sum_{t=0}^{S-1} \mathbb{E} \left \| \widetilde{v}_{u}^{\psi(t)}  - {v}_t^{\psi(t)}  \right \|^2 & = \sum_{t=0}^{S-1} \mathbb{E} \left \| \widetilde{v}_{u}^{\psi(t)} -\widehat{v}_{u}^{\psi(t)} + \widehat{v}_{u}^{\psi(t)} - {v}_t^{\psi(t)}  \right \|^2
   \nonumber \\
    &\leq \sum_{t=0}^{S-1} \left( 2\mathbb{E} \| \widetilde{v}_{u}^{\psi(t)} -\widehat{v}_{u}^{\psi(t)}\|^2 + 2 \mathbb{E} \|\widehat{v}_{u}^{\psi(t)} - {v}_t^{\psi(t)}  \|^2\right)
       \nonumber \\
    &\stackrel{(a)}{\leq} {18 L_*^2\gamma^2\tau_1^2 } \sum_{t=0}^{S-1} \mathbb{E}  ||\widetilde{v}^{\psi(t)}_{t}||^2 +  {72 L_*^2\gamma^2\tau_2^2 } \sum_{t=0}^{S-1} \mathbb{E}  ||\widetilde{v}^{\psi(t)}_{t}||^2
\end{align}
then we have
\begin{align}\label{ncsaga-21}
   \sum_{t=0}^{S-1} \mathbb{E} \left \| \widetilde{v}_{t}^{\psi(t)} \right \|^2 & = \sum_{t=0}^{S-1} \mathbb{E} \left \| \widetilde{v}_{t}^{\psi(t)} -{v}_{t}^{\psi(t)} + {v}_t^{\psi(t)}  \right \|^2
   \leq  \sum_{t=0}^{S-1} \left( 2\mathbb{E} \| \widetilde{v}_{t}^{\psi(t)} -{v}_{t}^{\psi(t)} \|^2+ 2\mathbb{E} \|{v}_t^{\psi(t)}   \|^2 \right)
   \nonumber \\
   & \leq { L_*^2\gamma^2(36\tau_1^2 + 144\tau_2^2) } \sum_{t=0}^{S-1} \mathbb{E}  ||\widetilde{v}^{\psi(t')}_{t'}||^2  + 2\sum_{t=0}^{S-1}{v}_t^{\psi(t)}
\end{align}
which implies that
if $\lambda_\gamma = \frac{2}{1 - 180\eta_2L_{*}^2\gamma^2\tau }> 0$, we hae
 \begin{equation}\label{ncsaga-22}
 \sum_{t=1}^{S} \mathbb{E} ||\widetilde{v}_{t}^{{\psi(t)}}||^2 \leq \lambda_{\gamma}\sum_{t=1}^{S} \mathbb{E} || {v}_{t}^{{\psi(t)}}\|^2,
\end{equation}
This completes the proof.
\end{proof}

Similar to the proof of Theorem~\ref{thm-sgdnonconvex}, we first apply Lemma~\ref{lem-csgd-3} to all $S$ iterations and there is
\begin{align}\label{ncsaga-23}
\sum_{t \in \mathcal{A}(S)}\mathbb{E} \| \nabla f({w}_{t})\|^2
    \stackrel{(a)}{\leq} 2 L_*^2 \gamma^2 \eta_2^2 \sum_{t=0}^{S-1} \mathbb{E} \|\widetilde{v}_{t}^{\psi(t)}\|^2 + 2 \sum_{t=0}^{S-1} \mathbb{E} \|\nabla_{\mathcal{G}_{\psi(t)}} f({w}_{t})\|^2
\end{align}
Then, we give the upper bound to $\mathbb{E}  \| {v}^{{\psi(t)}}_{t} \|^2$ as follows. We definite:
\begin{equation}\label{ncsaga-24}
\zeta_{t}^{{\psi(t)}}=\nabla_{\mathcal{G}_{{\psi(t)}}}f_{i_t}(w_{t})-{\alpha}_{i_t}^{t,\psi(t)}
\end{equation}
and use the definition of $v_t^{\psi(t)}$ to get
\begin{equation}\label{ncsaga-25}
\begin{array}{l}{\mathbb{E}\left\|v_{t}^{{\psi(t)}}\right\|^{2}=\mathbb{E}\left\|\zeta_{t}^{{\psi(t)}}+ \frac{1}{n}\sum_{i=1}^{n}\alpha_i^{t,\psi(t)}\right\|^{2}} \\ {=\mathbb{E}\left\|\zeta_{t}^{{\psi(t)}}+\frac{1}{n}\sum_{i=1}^{n}\alpha_i^{t,\psi(t)}-\nabla_{\mathcal{G}_{{\psi(t)}}}f({w_{t}})+ \nabla_{\mathcal{G}_{{\psi(t)}}}f({w_{t}})\right\|^{2} }
\\ { \stackrel{(a)}{\leq}2 \mathbb{E}\left\|\nabla_{\mathcal{G}_{{\psi(t)}}}f({w_{t}})\right\|^{2}+2 \mathbb{E}\left\|\zeta_{t}^{{\psi(t)}}-\mathbb{E}\left[\zeta_{t}^{{\psi(t)}}\right]\right\|^{2}} \\ { \stackrel{(b)}{\leq} 2 \mathbb{E}\left\|\nabla_{\mathcal{G}_{{\psi(t)}}}f({w_{t}})\right\|^{2}+2 \mathbb{E}\left\|\zeta_{t}^{{\psi(t)}}\right\|^{2}}\end{array}
\end{equation}
where (a) follows from $\|a+b\|^2 \leq 2\|a\|^2+ 2\|b\|^2$ and $\mathbb{E}\left[\zeta_{t}^{{\psi(t)}}\right]=\nabla_{\mathcal{G}_{{\psi(t)}}} f\left(w_{t}\right)- \frac{1}{n}\sum_{i=1}^{n}\alpha_i^{t,\psi(t)}$, (b) follows from $\mathbb{E}\left\|\zeta_{t}^{{\psi(t)}}-\mathbb{E}\left[\zeta_{t}^{{\psi(t)}}\right]\right\|^{2}\leq \mathbb{E} \|\zeta_t^{{\psi(t)}}\|^2$, we have
\begin{align}\label{ncsaga-26}
&{\mathbb{E}\left\|v_{t}^{{\psi(t)}}\right\|^{2}}
\nonumber \\
&\leq 2 \mathbb{E}\left\|\nabla_{\mathcal{G}_{{\psi(t)}}} f\left(w_{t}\right)\right\|^{2}+{2} \mathbb{E}\left\|{\alpha}_{i_t}^{t,\psi(t)}-\nabla_{\mathcal{G}_{{\psi(t)}}}f_{i_t}({w_t})\right\|^{2}
\nonumber \\
&\stackrel{(a)}{\leq} 2\mathbb{E}\left\|\nabla_{\mathcal{G}_{{\psi(t)}}} f\left(w_{t}\right)\right\|^{2}+  2\left ( 1 -\frac{1}{n} \right )^{\phi(t)-1}  \mathbb{E}  \left \|  \nabla_{\mathcal{G}_{\psi(t)}} f_i({w}_{{0}}) -  \nabla_{\mathcal{G}_{\psi(t)}} f_i(w_{t}) \right \|^2
\nonumber\\   &\quad + 2\frac{1}{n} \sum_{t'=1}^{\phi(t)-1} \left ( 1 -\frac{1}{n} \right )^{\phi(t)-t'-1} \mathbb{E}  \left \|  \nabla_{\mathcal{G}_{\psi(t)}} f_i({w}_{{\xi(t',\psi(t))}}) - \nabla_{\mathcal{G}_{\psi(t)}} f_i(w_{t}) \right \|^2 \nonumber \\
&\stackrel{(b)}{\leq} 2\mathbb{E}\left\|\nabla_{\mathcal{G}_{{\psi(t)}}} f\left(w_{t}\right)\right\|^{2}
+  2L^2\left ( 1 -\frac{1}{n} \right )^{\phi(t)-1}  \mathbb{E}  \left \|{w}_{{0}} -  w_{t} \right \|^2
\nonumber\\
&\quad + 2L^2\frac{1}{n} \sum_{t'=1}^{\phi(t)-1} \left ( 1 -\frac{1}{n} \right )^{\phi(t)-t'-1} \mathbb{E}  \left \|  {w}_{{\xi(t',\psi(t))}} - w_{t} \right \|^2 \nonumber \\
&\stackrel{(c)}{\leq} 2\mathbb{E}\left\|\nabla_{\mathcal{G}_{{\psi(t)}}} f\left(w_{t}\right)\right\|^{2}
+  2L^2\left ( 1 -\frac{1}{n} \right )^{\phi(t)-1}  \mathbb{E}  \left \|{w}_{{0}} -  w_{t} \right \|^2
\nonumber\\
 &\quad + 2L^2\frac{1}{n} \sum_{t'=1}^{\phi(t)-1} \left ( 1 -\frac{1}{n} \right )^{\phi(t)-t'-1} \left(2\mathbb{E}   \|  {w}_{{\xi(t',\psi(t))}} - w_0\|^2 + 2\mathbb{E}\| w_0- w_{t} \|^2\right)
  \nonumber \\
& \stackrel{(d)}{\leq}2\mathbb{E}\left\|\nabla_{\mathcal{G}_{{\psi(t)}}} f\left(w_{t}\right)\right\|^{2}
+ 6L^2\mathbb{E}\| w_0- w_{t} \|^2
+ 4L^2\frac{1}{n} \sum_{t'=1}^{\phi(t)-1} \left ( 1 -\frac{1}{n} \right )^{\phi(t)-t'-1} \mathbb{E}   \|  {w}_{{\xi(t',\psi(t))}} - w_0\|^2
\end{align}
where (a) follows from Lemma \ref{lem-csaga-1}, (b) follows from Assumption 2, (c) follows from $\|a+b\|^2\leq 2\|a\|^2 + 2\|b\|^2$, (d) follows from $\sum_{t'=1}^{\phi(t)-1} \frac{1}{n}\left ( 1 -\frac{1}{n} \right )^{\phi(t)-t'-1}<1$ and $\left ( 1 -\frac{1}{n} \right )^{\phi(t)-1}<1$.
Moreover, there is
\begin{align}\label{ncsaga-27}
  \sum_{t=0}^{S-1}\frac{1}{n} \sum_{t'=1}^{\phi(t)-1} \left ( 1 -\frac{1}{n} \right )^{\phi(t)-t'-1} \mathbb{E}   \|  {w}_{{\xi(t',\psi(t))}} - w_0\|^2 & \leq \sum_{t=0}^{S-1} \mathbb{E}   \|  {w}_{t} - w_0\|^2
\end{align}
which follows from that $\frac{1}{n} \sum_{t'=1}^{\phi(t)-1} \left ( 1 -\frac{1}{n} \right )^{\phi(t)-t'-1} \leq 1$. As for $\mathbb{E}\| w_0- w_{t} \|^2 $ there is
\begin{align}\label{ncsaga-28}
  \mathbb{E}\| w_0- w_{t+1} \|^2 & = \mathbb{E}\| w_0- w_t + w_t -w_{t+1} \|^2
  \nonumber \\
  & =  \mathbb{E}\| w_0- w_t\|^2 + \mathbb{E}\|w_t -w_{t+1} \|^2 -2\mathbb{E}\left<w_0- w_t ,w_t -w_{t+1}\right>
    \nonumber \\
  & =  \mathbb{E}\| w_0- w_t\|^2 + \mathbb{E}\|w_t -w_{t+1} \|^2 -2\gamma\mathbb{E}\left<w_0- w_t ,\widetilde{\nabla}_{\mathcal{G}_{{\psi(t)}}}^s\right>
  \nonumber \\
  & \leq \mathbb{E}\| w_0- w_t\|^2 + \mathbb{E}\|w_t -w_{t+1} \|^2 +2\gamma(\frac{1}{2\beta_t}\mathbb{E}\|\widetilde{\nabla}_{\mathcal{G}_{{\psi(t)}}}^s\|^2 + \frac{\beta_t}{2}\mathbb{E}\| w_0- w_t\|^2)
 \nonumber \\
  & =(1 + \gamma \beta_t) \mathbb{E}\| w_0- w_t\|^2 + \gamma^2\mathbb{E}\|w_t -w_{t+1} \|^2 + \frac{\gamma}{\beta_t}\mathbb{E}\|\widetilde{\nabla}_{\mathcal{G}_{{\psi(t)}}}^s\|^2
\end{align}

\begin{proof}[\bf{Proof of Theorem \ref{thm-saganonconvex}}]
First, we upper bound $\mathbb{E}f(w_{t+1}) $ for $t =0,\cdots,S-1$:
\begin{align}\label{ncsaga-29}
\mathbb{E} f(w_{t+1})
& \stackrel{(a)}{\leq}\mathbb{E}\left[ f(w_t) + \left< \nabla f(w_t) , w_{t+1}-w_t \right> + \frac{L}{2} ||  w_{t+1}- w_t||^2 \right]
 \nonumber \\
&= \mathbb{E} f(w_t) - \gamma\mathbb{E}\left< \nabla f(w_t),  \widetilde{\nabla}_{\mathcal{G}_{{\psi(t)}},i_t}  \right>  + \frac{\gamma^2 L}{2} \mathbb{E} ||\widetilde{v}^{{{\psi(t)}}}_t||^2
\nonumber \\
&\stackrel{(b)}{=} \mathbb{E} f(w_t) - \frac{\gamma}{2} \mathbb{E}\biggl[  ||\nabla_{\mathcal{G}_{{\psi(t)}}} f(w_t)||^2 +  || \widetilde{\nabla}_{\mathcal{G}_{{\psi(t)}}} ||^2
\nonumber \\
&- ||\nabla_{\mathcal{G}_{{\psi(t)}}} f(w_t) -  \widetilde{\nabla}_{\mathcal{G}_{{\psi(t)}}} ||^2   \biggr] + \frac{\gamma^2 L_{*}}{2} \mathbb{E} ||\widetilde{v}^{{{\psi(t)}}}_t||^2
\end{align}
where the (a) follows from Assumption~2, (b) follows form $ \left<a,b\right>=\|a\|^2+\|b\|^2-\|a-b\|^2$.
 Next, we give the upper bound of the term $\mathbb{E}||\nabla_{\mathcal{G}_{{\psi(t)}}} f(w_t) -  \widetilde{\nabla}_{\mathcal{G}_{{\psi(t)}}}  ||^2$ :
 \begin{eqnarray}\label{ncsaga-30}
\mathbb{E}||\nabla_{\mathcal{G}_{\psi(t)}} f(w_t) -  \widetilde{\nabla}_{\mathcal{G}_{\psi(t)}}^s  ||^2 \leq 2{L_{{*}}^2  \gamma^2 \tau_1}  \sum_{u' \in D(t)} \mathbb{E} \|   \widetilde{v}^{\psi(u')}_{u'} \|^2
+ 8{ L_{{*}}^2  \gamma^2 \tau_2}  \sum_{u' \in D'(t)} \mathbb{E} \|   \widetilde{v}^{\psi(u')}_{u'} \|^2.
\end{eqnarray}
Above result can be obtained by following the analyses of  Lemma~\ref{lem-csgd-2}. From Eqs.~(\ref{ncsaga-29}) and (\ref{ncsaga-30}), it is easy to derive the following inequality:
\begin{eqnarray}\label{ncsaga-31}
\mathbb{E} f(w_{t+1}^{s})  &\leq& \mathbb{E}f(w_t) - \frac{\gamma}{2} \mathbb{E}
||\nabla_{\mathcal{G}_{{\psi(t)}}} f(w_t)||^2  -  \frac{\gamma}{2} \mathbb{E} || \widetilde{\nabla}_{\mathcal{G}_{{\psi(t)}},i_t}^s ||^2 + \frac{\gamma^2L}{2} \mathbb{E} || \widetilde{v}^{{{\psi(t)}}}_t||^2   \nonumber \\
&+& {{L_{*}^2\gamma^3 }}  \left(\tau_1 \sum_{t' \in D(u)} \mathbb{E} \|   \widetilde{v}^{\psi(t')}_{t'} \|^2 +  4\tau_2 \sum_{t' \in D^\prime(u)} \mathbb{E} \|   \widetilde{v}^{\psi(t')}_{t'} \|^2 \right)
\end{eqnarray}
Here, we define a Lyapunov function:
\begin{eqnarray}\label{ncsaga-32}
R_t = \mathbb{E} f(w_{t}) + c_t \mathbb{E} \|w_0-w_t\|^2  \,.
\end{eqnarray}
From the definition of  Lyapunov function,  and (\ref{ncsaga-32}):
\begin{align}\label{ncsaga-33}
R_{t+1}
=& \mathbb{E}\left[ f(w_{t+1}) + c_{t+1}\mathbb{E}  \|w_0-w_{t+1}\|^2  \right]
\nonumber \\
\leq& \mathbb{E}f(w_t) - \frac{\gamma}{2} \mathbb{E}
||\nabla_{\mathcal{G}_{{\psi(t)}}} f(w_t)||^2  -  \frac{\gamma}{2} \mathbb{E} || \widetilde{\nabla}_{\mathcal{G}_{{\psi(t)}},i_t}^s ||^2 + \frac{\gamma^2L}{2} \mathbb{E} || \widetilde{v}^{{{\psi(t)}}}_t||^2
\nonumber \\
&+ {{L_{*}^2\gamma^3 }}  \left(\tau_1 \sum_{t' \in D(u)} \mathbb{E} \|   \widetilde{v}^{\psi(t')}_{t'} \|^2 +  4 \tau_2 \sum_{t' \in D^\prime(u)} \mathbb{E} \|   \widetilde{v}^{\psi(t')}_{t'} \|^2 \right)+ c_{t+1} \mathbb{E} \|w_0-w_{t+1}\|^2
 \nonumber \\
\stackrel{(a)}{\leq}& \mathbb{E}f(w_t) - \frac{\gamma}{2} \mathbb{E}
||\nabla_{\mathcal{G}_{{\psi(t)}}} f(w_t)||^2  -  \frac{\gamma}{2} \mathbb{E} || \widetilde{\nabla}_{\mathcal{G}_{{\psi(t)}}} ||^2 + \frac{\gamma^2L}{2} \mathbb{E} || \widetilde{v}^{{{\psi(t)}}}_t||^2
\nonumber \\
&+ {{L_{*}^2\gamma^3 }}  \left(\tau_1 \sum_{t' \in D(u)} \mathbb{E} \|   \widetilde{v}^{\psi(t')}_{t'} \|^2 +  4\tau_2 \sum_{t' \in D^\prime(u)} \mathbb{E} \|   \widetilde{v}^{\psi(t')}_{t'} \|^2 \right)
\nonumber \\
&+ c_{t+1}\left((1+{\gamma\beta_t})\mathbb{E} \|w_0-w_{t}\|^2
 +\frac{\gamma }{\beta_t}\mathbb{E} \|\widetilde{\nabla}_{\mathcal{G}_{{\psi(t)}}}\|^2  + \gamma^2\mathbb{E} \|\widetilde{v}^{{\psi(t)}}_t  \|^2\right)
\end{align}
where (a) follows from Eq.~\ref{ncsaga-28}.
Summing above inequality for all iterations then we have that:
\begin{align}\label{ncsaga-34}
\sum\limits_{t=0}^{S-1}R_{{{t}}+1}
 &= \sum\limits_{t=0}^{S-1}\left( \mathbb{E} f(w_{{t}})  - \frac{\gamma}{2 } \mathbb{E} ||\nabla_{\mathcal{G}_{\psi({t})}} f(w_{{t}})||^2
 - (\frac{\gamma}{2 }-\frac{\gamma c_{{t}+1} }{\beta_{t}})\mathbb{E} || \widetilde{\nabla}_{\mathcal{G}_{{\psi(t)}}}||^2 \right)
 \nonumber \\
& \quad +  \sum\limits_{t=0}^{S-1}\biggl( {{L_{*}^2\gamma^3 (\tau_1^2 +4 \tau_2^2)}}   \mathbb{E} \|   \widetilde{v}^{\psi(t)}_{t} \|^2
+ \frac{\gamma^2L}{2} \mathbb{E} || \widetilde{v}^{{{\psi(t)}}}_t||^2
 \nonumber\\
&\quad + c_{{t}+1}\gamma^2\mathbb{E}\|\widetilde{v}^{{\psi(t)}}_{t}  \|^2
 + c_{{t}+1}(1+{\gamma\beta_{t}})\mathbb{E} \|w_0-w_{t}\|^2\biggr)
\nonumber\\
&\stackrel{(a)}{\leq} \sum\limits_{t=0}^{S-1}\left( \mathbb{E} f(w_{{t}})  - \frac{\gamma}{2 } \mathbb{E} ||\nabla_{\mathcal{G}_{\psi({t})}} f(w_{{t}})||^2
 - (\frac{\gamma}{2 }-\frac{\gamma c_{{t}+1} }{\beta_{t}})\mathbb{E} || \widetilde{\nabla}_{\mathcal{G}_{{\psi(t)}}}||^2 \right)
 \nonumber \\
& \quad +  \sum\limits_{t=0}^{S-1}\biggl( \left( {5{L_{*}^2\gamma^3 \tau +  \frac{\gamma^2L}{2} + c_{{t}+1}\gamma^2 }}\right)   \mathbb{E} \|   \widetilde{v}^{\psi(t)}_{t} \|^2
 + c_{{t}+1}(1+{\gamma\beta_{t}})\mathbb{E} \|w_0-w_{t}\|^2\biggr)
\nonumber\\
&\stackrel{(b)}{\leq} \sum\limits_{t=0}^{S-1}\left( \mathbb{E} f(w_{{t}})
 - \frac{\gamma}{2 } \mathbb{E} ||\nabla_{\mathcal{G}_{\psi({t})}} f(w_{{t}})||^2
  - (\frac{\gamma}{2 }-\frac{\gamma c_{{t}+1} }{\beta_{t}})\mathbb{E} || \widetilde{\nabla}_{\mathcal{G}_{{\psi(t)}}}||^2 \right)
  \nonumber \\
&\quad + \sum\limits_{t=1}^{S}\lambda_{\gamma}\lambda_{{t}} \left( \mathbb{E}\left\|\nabla_{\mathcal{G}_{\psi(u_t)}} f\left(w_{{t}}\right)\right\|^{2}
+ 5L_{*}^2 \|w_0-w_{t}\|^2\right)+ \sum\limits_{t=0}^{S-1}c_{{t}+1}(1+{\gamma\beta_{t}})\mathbb{E} \|w_0-w_{t}\|^2
\nonumber\\
&= \sum\limits_{t=0}^{S-1} \left( \mathbb{E} f(w_{{t}}) +\left(c_{{t}+1} (1+{\gamma\beta})
+ 5L_{*}^2\lambda_{\gamma}\lambda_{{t}}\right) \|w_0-w_{t}\|^2 \right)
\nonumber \\
& \quad - \sum\limits_{t=0}^{S-1} (\frac{\gamma}{2}-\lambda_{\gamma}\lambda_{{t}}) \mathbb{E}\left\|\nabla_{\mathcal{G}_{\psi({t})}} f\left(w_{{t}}\right)\right\|^{2}
- \sum\limits_{t=0}^{S-1} (\frac{\gamma}{2 }-\frac{\gamma c_{{t}+1} }{\beta_{t}})\mathbb{E} || \widetilde{\nabla}_{\mathcal{G}_{{\psi(t)}}}||^2
\nonumber\\
&\stackrel{(c)}{\leq} \sum\limits_{t=0}^{S-1} \left( \mathbb{E} f(w_{{t}}) +\left(c_{{t}+1} (1+{\gamma\beta})+ 5L_{*}^2\lambda_{\gamma}\lambda_{{t}}\right)\|w_0-w_{t}\|^2 \right)
 - \sum\limits_{t=0}^{S-1} (\frac{\gamma}{2}-\lambda_{\gamma}\lambda_{{t}}) \mathbb{E}\left\|\nabla_{\mathcal{G}_{\psi({t})}} f\left(w_{{t}}\right)\right\|^{2}
\end{align}
where (a) follow from the definition of $\tau$, (b) uses Eqs.~\ref{ncsaga-26} and \ref{ncsaga-27},  $\lambda_{t} ={ 10 L_{*}^2 \gamma^3}  \tau  + {\gamma^2L} + 2c_{{t}+1}\gamma^2 $, and the definition of $L_{*}$, (b) follows from assuming $\frac{\gamma}{2 }-\frac{\gamma c_{t+1} }{\beta_{t}}>0$.

Similar to the proof of Theorem~\ref{thm-svrgnonconvex}, we have
\begin{align}\label{ncsaga-35}
&\sum_{u \in \mathcal{A}(S)}\sum\limits_{t\in K'(u)}  \mathbb{E} \| \nabla_{\mathcal{G}_{\psi(t)}} f({w}_{{u_0}})\|^2
 \\
    &\stackrel{(a)}{\leq} 2 \lambda_{\gamma}L_*^2 \gamma^2 \tau_1^2 \sum_{u \in \mathcal{A}(S)}\sum\limits_{t\in K'(u)}  \mathbb{E} \|{v}_{t}^{\psi(t)}\|^2
     + 2 \sum_{u \in \mathcal{A}(S)}\sum\limits_{t\in K'(u)}  \mathbb{E} \|\nabla_{\mathcal{G}_{\psi(t)}} f({w}_{t})\|^2
     \nonumber \\
     & \stackrel{(b)}{\leq} 20 \lambda_{\gamma}L_*^4 \gamma^2 \tau \sum_{u \in \mathcal{A}(S)}\sum\limits_{t\in K'(u)} \mathbb{E}\|w_0-w_{t}\|^2
      + (2 + 4 \lambda_{\gamma}L_*^2 \gamma^2 \tau) \sum_{u \in \mathcal{A}(S)}\sum\limits_{t\in K'(u)}   \mathbb{E} \|\nabla_{\mathcal{G}_{\psi(t)}} f({w}_{t})\|^2 \nonumber
\end{align}
where ${u_0}$ denotes the start global iteration of epoch $u$, (a) follows from  Eq. \ref{ncsaga-23}, (b) follows from Eqs.~\ref{ncsaga-26} and \ref{ncsaga-27}. This implies that
 \begin{align}\label{ncsaga-36}
&\frac{\frac{\gamma}{2 } -\lambda_{\gamma}\lambda_{t}}{2 + 4 \lambda_{\gamma}L_*^4 \gamma^2 \tau}\sum_{u \in \mathcal{A}(S)}\sum\limits_{t\in K'(u)}  \mathbb{E} \| \nabla_{\mathcal{G}_{\psi(t)}} f({w}_{{u_0}})\|^2
 \\
     & \stackrel{(a)}{\leq} 5L_*^2 (\frac{\gamma}{2 } -\lambda_{\gamma}\lambda_{t} ) \sum_{u \in \mathcal{A}(S)}\sum\limits_{t\in K'(u)}  \mathbb{E}\|w_0-w_{t}\|^2
 + (\frac{\gamma}{2 } -\lambda_{\gamma}\lambda_{t}) \sum_{u \in \mathcal{A}(S)}\sum\limits_{t\in K'(u)}   \mathbb{E} \|\nabla_{\mathcal{G}_{\psi(t)}} f({w}_{t})\|^2 \nonumber
\end{align}
where (a) follows from the definition of $L_*$. Combining Eq.~\ref{ncsaga-36} with \ref{ncsaga-34} we have
 \begin{align}\label{saga-30}
& \frac{\frac{\gamma}{2 } -\lambda_{\gamma}\lambda_{t}}{2 + 4 \lambda_{\gamma}L_*^4 \gamma^2 \tau} \sum_{u \in \mathcal{A}(S)}\sum\limits_{t\in K'(u)}  \mathbb{E} \| \nabla_{\mathcal{G}_{\psi(t)}} f({w}_{{u_0}})\|^2
\nonumber \\
     & \stackrel{(a)}{\leq} 5L_*^2 (\frac{\gamma}{2 } -\lambda_{\gamma}\lambda_{t} ) \sum_{u \in \mathcal{A}(S)}\sum\limits_{t\in K'(u)} \mathbb{E} \|w_0-w_{t}\|^2
      + \sum_{u \in \mathcal{A}(S)}\sum\limits_{t\in K'(u)} R_{{t}+1}
      \nonumber \\
   &  +  \sum_{u \in \mathcal{A}(S)}\sum\limits_{t\in K'(u)}  \left( \mathbb{E} f(w_{{t}}) +\left(c_{{t}+1} (1+{\gamma\beta})
     + 5L_{*}^2\lambda_{\gamma}\lambda_{{t}}\right)\mathbb{E} \|w_0-w_{t}\|^2 \right)
     \nonumber \\
     & = \sum_{u \in \mathcal{A}(S)}\sum\limits_{t\in K'(u)} R_{{t}+1}
     + \sum_{u \in \mathcal{A}(S)}\sum\limits_{t\in K'(u)}  \left( \mathbb{E} f(w_{{t}}) +\left(c_{{t}+1}(1+{\gamma\beta})
     + \frac{5}{2}{\gamma L_*^2}\right)\mathbb{E} \|w_0-w_{t}\|^2 \right)
\end{align}
Rearrange Eq.~\ref{saga-30} we have
\begin{align}\label{saga-31}
\sum_{u \in \mathcal{A}(S)}\sum\limits_{t\in K'(u)} R_{{t}+1}
&\leq
 \sum_{u \in \mathcal{A}(S)}\sum\limits_{t\in K'(u)}  \left( \mathbb{E} f(w_{{t}}) +\left(c_{{t}+1} (1+{\gamma\beta})
 + \frac{5}{2}{\gamma L_*^2}\right)\mathbb{E} \|w_0-w_{t}\|^2 \right)
\nonumber \\
& - \frac{\frac{\gamma}{2 } -\lambda_{\gamma}\lambda_{t}}{2 + 4 \lambda_{\gamma}L_*^4 \gamma^2 \tau}\sum_{u \in \mathcal{A}(S)}\sum\limits_{t\in K'(u)} \mathbb{E} \| \nabla_{\mathcal{G}_{\psi(u')}} f({w}_{{u_0}})\|^2
\nonumber \\
&=\sum_{u \in \mathcal{A}(S)}\sum\limits_{t\in K'(u)} R_{{t}}
 - \sum_{u \in \mathcal{A}(S)}\sum\limits_{t\in K'(u)} \Gamma_t \mathbb{E} \| \nabla_{\mathcal{G}_{\psi(t)}} f({w}_{{u_0}})\|^2
\end{align}
where
\begin{align}\label{saga-32}
c_{t} & = c_{{t}+1}\left(1 + \gamma \beta_{t}\right)
+ \frac{5}{2}{\gamma L_*^2}
\end{align}
 and
\begin{eqnarray}\label{saga-33}
 \Gamma_{t} = \frac{ \frac{\gamma}{2} - \frac{2}{1 - 180L_{*}^2\gamma^2\tau}( 10L_{*}^2 \gamma^3 \tau  + {\gamma^2L_{*}} + 2c_{t+1}\gamma^2)} {2 + 4 \lambda_{\gamma}L_*^4 \gamma^2 \tau}
\end{eqnarray}
Let $\bar{S}$ be the subscript of the final global iteration, and one can set $\{ c_{t}\}_{t=\bar{S}} = 0$,  define $w_0$ as initial point and $w^*$ as optimal solution, we have
\begin{eqnarray}\label{SAGA-34}
\frac{1}{S}\sum_{u \in \mathcal{A}(S)}\sum\limits_{t\in K'(u)} \Gamma_t \mathbb{E} \| \nabla_{\mathcal{G}_{\psi(t)}} f({w}_{{u_0}})\|^2
 \leq \frac{\mathbb{E}\left[  f( w_0)  -  f(w^{*}) \right] }{S \Gamma_*}
\end{eqnarray}
where $\bar{t}$ denotes the start global iteration of epoch $t$ and use $\Gamma_* = min \{\Gamma_t\}$.

To prove Theorem~\ref{thm-saganonconvex}, set $\{c_{t}\}_{t=S-1} = 0$,
$ \gamma = \frac{m_0}{L_{*}n^\alpha}$, $\beta_t = \beta = {4L_{*}}$, where  $0<m_0<1$, and $0<\alpha<1$. And there is
\begin{align}\label{saga-35}
\theta = \gamma \beta_{t}= \frac{4m_0}{n^{{\alpha}}}
\end{align}
 Then following the analysis of Eq. \ref{ncsvrg-30}, we have that the total epoch number $T$ should satisfy $T\leq \lfloor  \frac{n^{{\alpha}}}{4m_0}  \rfloor$.
\begin{eqnarray}
\Gamma_* &=& \min_t \Gamma_t \nonumber \\
&\stackrel{(a)}{\geq} & \frac{ \frac{\gamma}{2} - \frac{2}{1 - 180L_{*}^2\gamma^2\tau}( 10L_{*}^2 \gamma^3 \tau  + {\gamma^2L_{*}} + 2c_{0}\gamma^2)} {2 + 4 \lambda_{\gamma}L_*^4 \gamma^2 \tau}
 \nonumber \\
& =&\frac{ \frac{\gamma}{2} - \frac{2n^{2\alpha}}{n^{2\alpha} - 180m_0^2\tau}
( \frac{10m_0^2\tau}{n^{2\alpha}}  + \frac{5m_0}{n^\alpha})\gamma}
{2 + \frac{8L_*^2m_0^2\tau}{n^{2\alpha}-180m_0^2\tau}}
\nonumber \\
&\stackrel{(b)}{\geq}&  \frac{\left( \frac{1}{2}-(20m_0^2\tau + 10m_0)\right)\gamma}{2 + {8L_*^2m_0^2\tau}}
\nonumber \\
&\stackrel{(c)}{\geq}& \frac{\sigma }{L_{*}n^{\alpha}}
\end{eqnarray}
where (a) follows from $c_0=max\{c_t\}$, (b) follow form $n^{{\alpha}} \leq n^{2\alpha} - 180m_0^2 \tau$ (which is satisfied when $n \geq \frac{1 + \sqrt{1+720m_0^2\tau}}{2}$, this is easy to satisfy when $n$ is large) and $n^\alpha > 1$, (c) follow from that if $\frac{1}{2} > 20m_0^2\tau + 10m_0$ and  $\sigma$ is a small value which is independent of $n$.

Based on above analyses, we have the conclusion:
\begin{eqnarray} \label{ncsaga-final}
\frac{1}{T}\sum\limits_{t=0}^{T-1}\mathbb{E}  ||\nabla f(w_{t_0})||^2  \leq \frac{L_{*}n^{\alpha}\mathbb{E}\left[  f( w_{0})  -  f( w^{*}) \right] }{T \sigma }
\end{eqnarray}
where, $T$ denotes the number of total epoches, $t_0$ is the start iteration of epoch $t$.
This completes the proof.
\end{proof}
\end{document}